\documentclass[11pt]{article} 

\pdfoutput=1

\usepackage{etoolbox}
\newtoggle{arxiv}
\toggletrue{arxiv}


\usepackage[utf8]{inputenc}
\usepackage{mathrsfs}
\usepackage{microtype}
\usepackage{subfigure}
\usepackage{booktabs} 
\usepackage{longtable,makecell}
\usepackage{amsmath, amsfonts, amsthm, amssymb, graphicx}
\usepackage{mathtools,verbatim}
\usepackage{enumitem}
\usepackage{bm}
\usepackage{thm-restate}
\usepackage{natbib}
\usepackage{xspace,upgreek}
\usepackage[dvipsnames]{xcolor}

\usepackage{color-edits}

\addauthor{df}{ForestGreen}
\addauthor{ms}{orange}
\addauthor{kevin}{ForestGreen}

\usepackage[noend]{algpseudocode}
\usepackage{algorithm}

\iftoggle{arxiv}
{
	\usepackage[scaled=.90]{helvet}
	
	\usepackage[vmargin=1.00in,hmargin=1.00in,centering,letterpaper]{geometry}
	\usepackage{fancyhdr, lastpage}

	\setlength{\headsep}{.10in}
	\setlength{\headheight}{15pt}
	\cfoot{}
	\lfoot{}
	\rfoot{\sc Page\ \thepage\ of\ \protect\pageref{LastPage}}

}

\usepackage{hyperref}
\usepackage{cleveref}
\hypersetup{colorlinks,citecolor=blue}
\newtoggle{upperbound}
\togglefalse{upperbound}



\newcommand{\poly}{\mathrm{poly}}

\setlength{\parindent}{2em}

\numberwithin{equation}{section}

\newcommand{\fro}{\mathrm{F}}
\newcommand{\op}{\mathrm{op}}




{
 \theoremstyle{plain}
      \newtheorem{asm}{Assumption}
}
\creflabelformat{asm}{#2#1#3}

\theoremstyle{plain}
\newtheorem{thm}{Theorem}
\newtheorem{lem}{Lemma}[section]
\newtheorem{cor}{Corollary}
\newtheorem{claim}[lem]{Claim}

\newtheorem{prop}[thm]{Proposition}
\theoremstyle{definition}
\newtheorem{defn}{Definition}[section]

\newtheorem{rem}{Remark}[section]

\newtheorem{protocol}{Protocol}[section]

\renewcommand{\Pr}{\mathbb{P}}
\newcommand{\Exp}{\mathbb{E}}
\newcommand{\Var}{\mathrm{Var}}

\newcommand{\KL}{\mathrm{KL}}

\newcommand{\rmd}{\mathrm{d}}


\newcommand{\tr}{\mathrm{tr}}

\newcommand{\diag}{\mathrm{diag}}


\newcommand{\dist}{\mathrm{dist}}



\newcommand{\R}{\mathbb{R}}
\newcommand{\I}{\mathbb{I}}



\DeclareMathOperator*{\argmax}{arg\,max}
\DeclareMathOperator*{\argmin}{arg\,min}


\def\ddefloop#1{\ifx\ddefloop#1\else\ddef{#1}\expandafter\ddefloop\fi}
\def\ddef#1{\expandafter\def\csname bb#1\endcsname{\ensuremath{\mathbb{#1}}}}
\ddefloop ABCDEFGHIJKLMNOPQRSTUVWXYZ\ddefloop

\def\ddefloop#1{\ifx\ddefloop#1\else\ddef{#1}\expandafter\ddefloop\fi}
\def\ddef#1{\expandafter\def\csname fr#1\endcsname{\ensuremath{\mathfrak{#1}}}}
\ddefloop ABCDEFGHIJKLMNOPQRSTUVWXYZ\ddefloop
\def\ddefloop#1{\ifx\ddefloop#1\else\ddef{#1}\expandafter\ddefloop\fi}
\def\ddef#1{\expandafter\def\csname scr#1\endcsname{\ensuremath{\mathscr{#1}}}}
\ddefloop ABCDEFGHIJKLMNOPQRSTUVWXYZ\ddefloop
\def\ddefloop#1{\ifx\ddefloop#1\else\ddef{#1}\expandafter\ddefloop\fi}
\def\ddef#1{\expandafter\def\csname b#1\endcsname{\ensuremath{\mathbf{#1}}}}
\ddefloop ABCDEFGHIJKLMNOPQRSTUVWXYZ\ddefloop
\def\ddef#1{\expandafter\def\csname c#1\endcsname{\ensuremath{\mathcal{#1}}}}
\ddefloop ABCDEFGHIJKLMNOPQRSTUVWXYZ\ddefloop
\def\ddef#1{\expandafter\def\csname h#1\endcsname{\ensuremath{\widehat{#1}}}}
\ddefloop ABCDEFGHIJKLMNOPQRSTUVWXYZ\ddefloop
\def\ddef#1{\expandafter\def\csname t#1\endcsname{\ensuremath{\widetilde{#1}}}}
\ddefloop ABCDEFGHIJKLMNOPQRSTUVWXYZ\ddefloop

\def\ddefloop#1{\ifx\ddefloop#1\else\ddef{#1}\expandafter\ddefloop\fi}
\def\ddef#1{\expandafter\def\csname mat#1\endcsname{\ensuremath{\mathbf{#1}}}}
\ddefloop abcdefgijklmnopqrstuvwxyzABCDEFGHIJKLMNOPQRSTUVWXYZ\ddefloop


\newcommand{\force}{\textsc{Force}\xspace}

\newcommand{\Vhat}{\widehat{V}}
\newcommand{\Vst}{V^\star}
\newcommand{\Vpi}{V^\pi}

\newcommand{\Qst}{Q^\star}

\newcommand{\pist}{\pi^\star}
\newcommand{\pihat}{\widehat{\pi}}

\newcommand{\wpi}{w^\pi}

\newcommand{\Qpi}{Q^\pi}

\newcommand{\simplex}{\bigtriangleup}

\newcommand{\bLamexp}{\bLambda_{\mathrm{exp}}}


\newcommand{\Delmin}{\Delta_{\min}}

\newcommand{\Pitil}{\widetilde{\Pi}}

\newcommand{\wst}{\bw^\star}

\newcommand{\cOtil}{\widetilde{\cO}}

\newcommand{\pitilst}{\widetilde{\pi}^{\star}}

\newcommand{\frakD}{\mathfrak{D}}

\newcommand{\atil}{\widetilde{a}}

\newcommand{\pitil}{\widetilde{\pi}}


\newcommand{\zst}{\bm{z}^\star}

\newcommand{\algname}{\textsc{Pedel}\xspace}
\newcommand{\tsum}{{\textstyle \sum}}

\newcommand{\inner}[2]{\langle #1, #2 \rangle}
\newcommand{\innerb}[2]{\langle #1, #2 \rangle}


\newcommand{\bx}{\bm{x}}

\newcommand{\Ball}{\mathcal{B}}
\newcommand{\bSigma}{\bm{\Sigma}}

\newcommand{\bvtil}{\widetilde{\bv}}

\newcommand{\bLambda}{\bm{\Lambda}}
\newcommand{\bv}{\bm{v}}
\newcommand{\bphi}{\bm{\phi}}

\newcommand{\btheta}{\bm{\theta}}

\newcommand{\bw}{\bm{w}}

\newcommand{\Fclass}{\mathscr{F}}

\newcommand{\bLamtil}{\widetilde{\bm{\Lambda}}}
\newcommand{\bthetast}{\btheta_{\star}}
\newcommand{\bthetahat}{\widehat{\btheta}}
\newcommand{\bu}{\bm{u}}

\newcommand{\bmu}{\bm{\mu}}

\newcommand{\bwtil}{\widetilde{\bw}}

\newcommand{\xst}{\bx^\star}

\newcommand{\bphihat}{\widehat{\bphi}}

\newcommand{\be}{\bm{e}}
\newcommand{\sbar}{\bar{s}}

\newcommand{\bphitil}{\widetilde{\bphi}}

\newcommand{\bern}{\mathrm{Bernoulli}}

\newcommand{\regmin}{\textsc{RegMin}\xspace}
\newcommand{\piexp}{\pi_{\mathrm{exp}}}

\newcommand{\cThat}{\widehat{\cT}}

\newcommand{\bLamhat}{\widehat{\bLambda}}

\newcommand{\lamminst}{\lambda_{\min}^\star}

\newcommand{\Nst}{N^\star}

\newcommand{\bGamma}{\bm{\Gamma}}

\newcommand{\piw}{\pi^{\bw}}
\newcommand{\piu}{\pi^{\bu}}
\newcommand{\piwst}{\pi^{\wst}}
\newcommand{\lammin}{\lambda_{\min}}

\newcommand{\nur}{\nu}
\newcommand{\bGamhat}{\widehat{\bGamma}}
\newcommand{\optcov}{\textsc{OptCov}\xspace}
\newcommand{\psdmat}{\mathbb{S}_{+}}
\newcommand{\fmin}{f_{\min}}
\newcommand{\ihat}{\widehat{i}}
\newcommand{\Gopt}{\mathsf{XY}_{\mathrm{opt}}}
\newcommand{\Gopts}{\widetilde{\mathsf{XY}}_{\mathrm{opt}}}
\newcommand{\LSE}{\mathrm{LogSumExp}}
\newcommand{\bLambar}{\bar{\bLambda}}
\newcommand{\bLamst}{\bLambda^\star}
\newcommand{\Ktil}{\widetilde{K}}

\newcommand{\fwregret}{\textsc{FWRegret}\xspace}

\newcommand{\bSigtil}{\widetilde{\bSigma}}
\newcommand{\bSigbar}{\bar{\bSigma}}
\newcommand{\bSighat}{\widehat{\bSigma}}

\newcommand{\bOmega}{\bm{\Omega}}

\newcommand{\lamun}{\underline{\lambda}}
\newcommand{\condcov}{\textsc{ConditionedCov}\xspace}
\newcommand{\cEest}{\cE_{\mathrm{est}}}
\newcommand{\cEexp}{\cE_{\mathrm{exp}}}
\newcommand{\Delbar}{\bar{\Delta}}

\newcommand{\epsexp}{\epsilon_{\mathrm{exp}}}

\newcommand{\Pieps}{\Pi_{\epsilon}}
\newcommand{\iotaalg}{\iota_0}

\newcommand{\Delbarmin}{\bar{\Delta}_{\min}}
\newcommand{\cAtil}{\widetilde{\cA}}
\newcommand{\pitilw}{\pitil^{\bw}}
\newcommand{\by}{\bm{y}}
\newcommand{\gamphi}{\gamma_{\Phi}}
\newcommand{\etatil}{\widetilde{\eta}}
\newcommand{\moca}{\textsc{Moca}\xspace}

\usepackage{wrapfig}

\newlength\tindent
\setlength{\tindent}{\parindent}
\setlength{\parindent}{15pt}

\parskip=.2em

\newcommand{\awcomment}[1]{{\color{red}[AW: #1]}}

\newcommand{\finalrev}[1]{{#1}}
\newcommand{\nipscrnew}[1]{{#1}}
\newcommand{\awarxiv}[1]{}
\allowdisplaybreaks
\setcounter{tocdepth}{2}

\makeatletter
\providecommand\theHALG@line{\thealgorithm.\arabic{ALG@line}}
\makeatother

\title{Instance-Dependent Near-Optimal Policy Identification in Linear MDPs via Online Experiment Design}

\author{Andrew Wagenmaker\footnote{University of Washington, Seattle. Email: \texttt{ajwagen@cs.washington.edu}} \and Kevin Jamieson\footnote{University of Washington, Seattle. Email: \texttt{jamieson@cs.washington.edu}}}
\date{}

\begin{document}

\maketitle

\begin{abstract}

While much progress has been made in understanding the minimax sample complexity of reinforcement learning (RL)---the complexity of learning on the ``worst-case'' instance---such measures of complexity often do not capture the true difficulty of learning. In practice, on an ``easy'' instance, we might hope to achieve a complexity far better than that achievable on the worst-case instance. In this work we seek to understand the ``instance-dependent'' complexity of learning near-optimal policies (PAC RL) in the setting of RL with linear function approximation. We propose an algorithm, \algname, which achieves a fine-grained instance-dependent measure of complexity, the first of its kind in the RL with function approximation setting, thereby capturing the difficulty of learning on each particular problem instance. 
Through an explicit example, we show that \algname yields provable gains over low-regret, minimax-optimal algorithms and that such algorithms are unable to hit the instance-optimal rate.
Our approach relies on a novel online experiment design-based procedure which focuses the exploration budget on the ``directions'' most relevant to learning a near-optimal policy, and may be of independent interest.

\end{abstract}


\setcounter{footnote}{0}
\section{Introduction}

In the PAC (Probably Approximately Correct) reinforcement learning (RL) setting, an agent is tasked with exploring an unknown environment in order to learn a policy which maximizes the amount of reward collected. In general, we are interested in learning such a policy using as few interactions with the environment (as small sample complexity) as possible. We might hope that the number of samples needed would scale with the difficulty of identifying a near-optimal policy in our particular environment. For example, in a ``hard'' environment, we would expect that more samples might be required, while in an ``easy'' environment, fewer samples may be needed. 

The RL community has tended to focus on developing algorithms which have near-optimal \emph{worst-case} sample complexity---sample complexities that are only guaranteed to be optimal on ``hard'' instances. Such algorithms typically have complexities which scale, for example, as $\cO(\poly(d,H)/\epsilon^2)$, for $d$ the dimensionality of the environment, $H$ the horizon, and $\epsilon$ the desired level of optimality. While we may be able to show this complexity is optimal on a hard instance, it is unable to distinguish between ``hard'' and ``easy'' problems. The scaling is identical for two environments as long as the dimensionality and horizon of each are the same---no consideration is given to the actual difficulty of the problem---and we therefore have no guarantee that our algorithm is solving the problem with complexity scaling as the actual difficulty. Indeed, as recent work has shown \citep{wagenmaker2021beyond}, this is not simply an analysis issue: worst-case optimal algorithms can be very suboptimal on ``easy'' instances. 

Towards developing algorithms which overcome this, we might instead consider the \emph{instance-dependent} difficulty---the hardness of solving a particular problem instance---and hope to obtain a sample complexity scaling with this instance-dependent difficulty, thereby guaranteeing that we solve ``easy'' problems using only a small number of samples, but still obtain the worst-case optimal rate on ``hard'' problems. While progress has been made in understanding the instance-dependent complexity of learning in RL, the results are largely limited to environments with a finite number of states and actions. In practice, real-world RL problems often involve large (even infinite) state-spaces and, in order to solve such problems, we must generalize across states. To handle such settings, the RL community has turned to function approximation-based methods, which allow for provable learning in large state-space environments. However, while worst-case optimal results have been shown, little is understood on the instance-dependent complexity of learning in these settings.

In this work we aim to bridge this gap. We consider, in particular, the linear MDP setting, and develop an algorithm which provably learns a near-optimal policy with sample complexity scaling as the difficulty of each individual instance. Furthermore, by comparing to our instance-dependent measure of complexity, we show that low-regret algorithms are provably suboptimal for PAC RL in function approximation settings. Our algorithm relies on a novel online experiment design-based procedure---adapting classical techniques from linear experiment design to settings where \emph{navigation} is required to measure a particular covariate---which may be of independent interest.

\subsection{Contributions}
Our contributions are as follows:
\begin{itemize}[leftmargin=*]
\item We propose an algorithm, \algname, which learns an $\epsilon$-optimal policy with instance-dependent sample complexity scaling as (up to $H$ factors):
\begin{align*}
\sum_{h=1}^H  \inf_{\piexp} \max_{\pi \in \Pi} \frac{ \| \bphi_{\pi,h} \|_{\bLambda_{\piexp,h}^{-1}}^2}{(\Vst_0 - V_0^\pi)^2 \vee \epsilon^2} \cdot \Big (d + \log \frac{1}{\delta} \Big )
\end{align*}
for $\bphi_{\pi,h}$ the ``average feature vector'' of policy $\pi$, $\bLambda_{\piexp,h}$ the expected covariance of the policy $\piexp$, and $\Vst_0 - V_0^\pi$ the ``policy gap''. We show that \algname also has worst-case optimal dimension-dependence---its sample complexity never exceeds $\cOtil(d^2 H^6/\epsilon^2)$---but that on ``easy'' instances it achieves complexity much smaller than the worst-case optimal rate. 
\item It is well-known that low-regret algorithms achieve the worst-case optimal rate for PAC RL. We construct an explicit example, however, where the instance-dependent complexity of \algname improves on the complexity of \emph{any} low-regret algorithm by a factor of the dimensionality, providing the first evidence that low-regret algorithms are provably suboptimal on ``easy'' instances for PAC RL in function approximation settings. 
\item We develop a general experiment design-based approach to exploration in MDPs, which allows us to focus our exploration in the directions most relevant to learning near-optimal policies. Our approach is based on the key observation that, while solving an experiment design in an MDP would require knowledge of the MDP dynamics, we can approximately solve one without knowledge of the dynamics by running a regret minimization algorithm on a carefully chosen reward function, inducing the correct exploration. \iftoggle{arxiv}{We apply our experiment design approach to efficiently explore our MDP so as to identify near-optimal policies, but show that it can also be used to collect observations minimizing much more general experiment design objective functions.}{}
\end{itemize}

\section{Related Work}
The sample complexity of RL has been studied for decades \citep{kearns1998finite,brafman2002r,kakade2003sample}. The two primary problems considered are the regret minimization problem (where the goal is to obtain large online reward) and the PAC policy identification problem (where the goal is to find a near-optimal policy using as few samples as possible), which is the focus of this work.
In the tabular RL setting, the question of obtaining worst-case optimal algorithms is nearly closed \citep{dann2015sample,dann2019policy,menard2020fast,zhang2020reinforcement}. As such, in this section we focus primarily on results in the RL with function approximation literature, as well as results on instance-dependent RL.

\paragraph{Sample-Efficient RL with Linear Function Approximation.}
To generalize beyond MDPs with a finite number of states and acions, the RL community has considered \emph{function approximation}, replacing the tabular model with more powerful settings that allow for generalization across states. Such settings have been considered in classical works \citep{baird1995residual,bradtke1996linear,sutton1999policy,melo2007q}, yet these works do not provide polynomial sample complexities. More recently, there has been intense interest in obtaining polynomial complexities for general function classes \citep{jiang2017contextual,du2021bilinear,jin2021bellman,foster2021statistical}, and, in particular, linear function classes \citep{yang2019sample,jin2020provably,wang2019optimism,du2019good,zanette2020frequentist,zanette2020learning,ayoub2020model,jia2020model,weisz2021exponential,zhou2020nearly,zhou2021provably,zhang2021variance,wang2021exponential}. 

In the linear MDP setting, the state-of-the-art in PAC RL is the work of \cite{wagenmaker2022reward}, which proposes a computationally efficient algorithm achieving a complexity of $\cO(d^2 H^5/\epsilon^2)$ for the more general reward-free RL problem, and shows a matching lower bound of $\Omega(d^2 H^2/\epsilon^2)$ for the PAC RL problem. While this result obtains tight dimension-dependence, it is still worst-case, and offers no insight on the instance-dependent complexity. Other works of note in this category are \citep{jin2020provably,zanette2020learning,zhou2020nearly}, which establish regret guarantees in the setting of linear MDPs and the related setting of linear mixture MDPs. \cite{jin2020provably} and \cite{zanette2020learning} obtain regret guarantees of $\cO(\sqrt{d^3 H^4 K})$ and $\cO(\sqrt{d^2 H^4 K})$, respectively, though the approach of \cite{zanette2020learning} is computationally inefficient. Via an online-to-batch conversion \citep{jin2018q}, these algorithms achieve PAC complexities of $\cO(d^3 H^4/\epsilon^2)$ and $\cO(d^2 H^4/\epsilon^2)$. In the setting of linear mixture MDPs, \cite{zhou2020nearly} show a regret bound of $\cO(\sqrt{d^2 H^3 K})$ and a matching lower bound, yielding the first provably tight and computationally efficient algorithms for RL with function approximation.

\paragraph{Instance-Dependent RL.}
Much of the recent work on instance-dependent RL has focused on the tabular setting. \cite{ok2018exploration} provide an algorithm which achieves asymptotically optimal instance-dependent regret, yet it is computationally inefficient. \cite{simchowitz2019non} show that standard optimistic algorithms achieve regret bounded as $\cO(\sum_{s,a,h} \frac{\log K}{\Delta_h(s,a)})$, for $\Delta_h(s,a)$ the \emph{value-function gap}, a result later refined by \citep{xu2021fine,dann2021beyond}. Obtaining instance-dependent guarantees for policy identification has proved more difficult, yet a variety of results do exist \citep{zanette2019generative,jonsson2020planning,marjani2020best,marjani2021navigating}. In the tabular setting, the most comparable work to ours is that of \cite{wagenmaker2021beyond}, which propose a refined instance-dependent measure of complexity, the \emph{gap-visitation complexity}, and show that it is possible to learn an $\epsilon$-optimal policy with complexity scaling as the gap-visitation complexity. While the gap-visitation complexity is shown to be tight in certain settings, no general lower-bounds exist. Towards obtaining sharp guarantees, \cite{tirinzoni2022near} show that in the simpler setting of deterministic MDPs, a quantity similar in spirit to the gap-visitation complexity is tight, providing matching upper and lower bounds.

In the setting of RL with function approximation, to our knowledge, only two existing works obtain guarantees that would be considered ``instance-dependent''. \cite{wagenmaker2021first} show a ``first-order'' regret bound of $\cO(\sqrt{d^3 H^3 \Vst_0 K})$, where $\Vst_0$ is the value of the \emph{optimal} policy on the particular MDP under consideration. \cite{he2020logarithmic} show that standard optimistic algorithms achieve regret guarantees of $\cO(\frac{d^3 H^5 \log K}{\Delta_{\min}})$ and $\cO(\frac{d^2 H^5 \log^3 K}{\Delta_{\min}})$ in the settings of linear MDPs and linear mixture MDPs, respectively, for $\Delta_{\min}$ the minimum value-function gap. While both these works do obtain instance-dependent results, the instance-dependence is rather coarse, depending on only a single parameter ($\Vst_0$ or $\Delta_{\min}$)---our goal will instead be to obtain more refined instance-dependent guarantees.

\nipscrnew{\paragraph{Experiment Design in Sequential Environments.}
Experiment design is a well-developed subfield of statistics, and a full survey is beyond the scope of this work (see \cite{pukelsheim2006optimal} for an overview). We highlight several works on experiment design in sequential environments that are particularly relevant. First, the work of \cite{fiez2019sequential} achieves the instance-optimal rate for best-arm identification in linear bandits and relies on an adaptive experiment design-based.
The approach of \cite{fiez2019sequential}, as well as the related work of \cite{soare2014best},  provides inspiration for our algorithm---in some sense \algname can be seen as a generalization of the \textsc{Rage} algorithm to problems with horizon greater than 1. 
\finalrev{Concurrent to our work, the \textsc{Rage} algorithm was extended to the setting of contextual bandits in \cite{li2022instance}, achieving the instance-optimal rate.}
Second, the work of \cite{wagenmaker2021task} provides an experiment design-based algorithm in the setting of linear dynamical systems, and show that it hits the optimal instance-dependent rate for learning in such systems. While their results are somewhat more general, they specialize to the problem of identifying a near-optimal controller for the LQR problem---thereby solving the PAC RL problem optimally in the special case of quadratic losses and linear dynamical systems. It is not clear, however, if their approach generalizes beyond linear dynamical systems. Finally, while the current work was in preparation, \cite{mutny2022active} proposed an approach to solving experiment design problems in MDPs. To our knowledge, this is the only existing work that directly considers the problem of experiment design in MDPs. However, they make the simplifying assumption that the transition dynamics are \emph{known}, which essentially reduces their problem to a computational one---in contrast, our approach handles the much more difficult setting of unknown dynamics, and shows that efficient experiment design is possible even in this more difficult setting. \finalrev{We remark as well that our online experiment design approach is somewhat related to several existing algorithms \citep{hazan2019provably,zahavy2021reward}.}
}


\section{Preliminaries}
We let $\| \bphi \|_{\bA}^2 = \bphi^\top \bA \bphi$, $\| \cdot \|_\op$ denote the matrix operator norm (matrix 2-norm), and $\| \cdot \|_{\fro}$ denote the Frobenius norm. \iftoggle{arxiv}{Given some norm $\| \cdot \|$, $\| \cdot \|_*$ denotes the dual norm. $\mathbb{S}^d_+$ denotes the set of PSD matrices in $\R^{d \times d}$.}{} $\cOtil(\cdot)$ hides absolute constants and log factors of the arguments. $\lesssim$ denotes inequality up to constants. $\Exp_\pi$ and $\Pr_\pi$ denote the expectation and probability measure induced by playing some policy $\pi$ in our MDP. We let $\bphi_{h,\tau} := \bphi(s_{h,\tau},a_{h,\tau})$ denote the feature vector encountered at step $h$ of episode $\tau$ (and similarly define $r_{h,\tau}$).

\paragraph{Markov Decision Processes.}
In this work, we study episodic, finite-horizon, time inhomogeneous Markov Decision Processes (MDPs), denoted by a tuple, $\cM = (\cS,\cA,H,\{P_h\}_{h=1}^H,\{\nur_h \}_{h=1}^H)$. We let $\cS$ denote the state space, $\cA$ the action space, $H$ the horizon, $\{P_h\}_{h=1}^H$ the transition kernel, and $\{\nur_h \}_{h=1}^H$ the reward distribution, where $P_h( \cdot |s,a) \in \simplex_{\cS}$ denotes the distribution over the next state when playing action $a$ in state $s$ at step $h$, and $\nur_h(s,a) \in \simplex_{[0,1]}$ denotes the corresponding distribution over reward. We overload notation and let $\nur_h(s,a)$ also refer to the expected reward. We assume that every episode starts in state $s_1$, and that $\{P_h\}_{h=1}^H$ and $\{\nur_h\}_{h=1}^H$ are initially unknown. 

Let $\pi = \{ \pi_h \}_{h=1}^H$ denote a policy mapping states to distributions over actions, $\pi_h : \cS \rightarrow \simplex_\cA$. When $\pi$ is deterministic, we let $\pi_h(s)$ denote the action policy $\pi$ takes at $(s,h)$. An episode begins at state $s_1$. The agent takes action $a_1 \sim \pi_1(s_1)$, transitions to state $s_2 \sim P_1(\cdot | s_1,a_1)$, and receives reward $r_1(s_1,a_1) \sim \nur_1(s_1,a_1)$. In $s_2$, the agent chooses a new action $a_2 \sim \pi_2(s_2)$, and the process repeats. After $H$ steps, the episode terminates, and the agent restarts at $s_1$. 

In general, we are interested in learning policies that collect a large amount of reward. We can quantify the performance of a policy in terms of the \emph{value function}. In particular, the $Q$-value function, $\Qpi_h(s,a)$, denotes the expected reward that will be obtained if we are in state $s$ at step $h$, play action $a$, and then play policy $\pi$ for the remainder of the episode. Formally, $\Qpi_h(s,a) := \Exp_\pi  [ \tsum_{h'=h}^H r_{h'}(s_{h'},a_{h'}) | s_h = s, a_h = a  ]$.
The value function is similarly defined as $\Vpi_h(s) := \Exp_\pi  [ \sum_{h'=h}^H r_{h'}(s_{h'},a_{h'}) | s_h = s ]$. For deterministic policies, $\Vpi_h(s) = \Qpi_h(s,\pi_h(s))$. We denote the optimal $Q$-value function by $\Qst_h(s,a) = \sup_\pi \Qpi_h(s,a)$ and the optimal value function by $\Vst_h(s) = \sup_\pi \Vpi_h(s)$, where the suprema is taken over all policies, both deterministic and stochastic. We define the \emph{value of a policy} as $\Vpi_0 = \Vpi_1(s_1)$---the expected reward policy $\pi$ achieves over an entire episode---and say a policy $\pi$ is optimal if $\Vpi_0 = \Vst_0$. For some set of policies $\Pi$ (which may not contain an optimal policy), we let $\Vst_0(\Pi) := \max_{\pi \in \Pi} \Vpi_0$. We also define the minimum policy gap as:
\begin{align*}
\Delmin^\Pi := \begin{cases} \Vst_0(\Pi) - \max_{\pi \in \Pi : \Vpi_0 < \Vst_0(\Pi)} \Vpi_0 & |\{ \pi \in \Pi : \Vpi_0 = \Vst_0(\Pi) \}| = 1 \\
0 & \text{o.w.} 
\end{cases}
\end{align*}
That is, $\Delmin^\Pi$ is the gap between the values of the best policy and next best policy in $\Pi$ if the best policy in $\Pi$ is unique, and otherwise 0.

\paragraph{PAC Reinforcement Learning.}
In PAC RL, the goal is to identify some policy $\pihat$ using as few episodes as possible, such that, with probability at least $1-\delta$,
\begin{align*}
\Vst_0 - V_0^{\pihat} \le \epsilon.
\end{align*}
We say that such a policy is $\epsilon$-optimal, and an algorithm with such a guarantee on every environment and reward function is $(\epsilon,\delta)$-PAC. We will also refer to this problem as ``policy identification''.

\subsection{Linear MDPs}
In this work, we are interested in the setting where the state space could be infinite, and the learner must generalize across states. In particular, we consider the linear MDP model defined as follows.

\begin{defn}[Linear MDPs \citep{jin2020provably}]\label{defn:linear_mdp}
We say that an MDP is a $d$-\emph{dimensional linear MDP}, if there exists some (known) feature map $\bphi(s,a) : \cS \times \cA \rightarrow \R^d$, $H$ (unknown) signed vector-valued measures $\bmu_h \in \R^d$ over $\cS$, and $H$ (unknown) reward vectors $\btheta_h \in \R^d$, such that:
\begin{align*}
P_h(\cdot | s,a) = \innerb{\bphi(s,a)}{\bmu_h(\cdot)}, \quad \Exp[\nur_h(s,a)] = \innerb{\bphi(s,a)}{\btheta_h}. 
\end{align*}
We will assume $\| \bphi(s,a) \|_2 \le 1$ for all $s,a$; and for all $h$, $\| |\bmu_h|(\cS) \|_2 = \| \int_{s \in \cS} | \rmd \bmu_h(s) | \|_2  \le \sqrt{d}$ and $\| \btheta_h \|_2 \le \sqrt{d}$.
\end{defn}

Linear MDPs encompass, for example, tabular MDPs, but can also model more complex settings, such as feature spaces corresponding to the simplex \citep{jin2020provably}, or the linear bandit problem. Critically, linear MDPs allow for infinite state-spaces, as well as generalization across states---rather than learning the behavior in particular states, we can learn in the $d$-dimensional ambient space.
Note that the standard definition of linear MDPs, for example as given in \cite{jin2020provably}, assumes the rewards are deterministic, while we assume the rewards are random but that their means are linear. We still assume, however, that the random rewards, $r_h(s,a)$, are contained in $[0,1]$ almost surely. 

For a given policy $\pi$, we define the \emph{feature-visitation} at step $h$, the expected feature vector policy $\pi$ encounters at step $h$, as $\bphi_{\pi,h} := \Exp_{\pi}[\bphi(s_h,a_h)]$. 
Note that this is a direct generalization of state-visitations in tabular RL---if our MDP is in fact tabular, $[\bphi_{\pi,h}]_{s,a} = \Pr_\pi[s_h = s, a_h = a]$, so the feature visitation vector corresponds directly to the state visitations. 
Note also that we can write the value of a policy as $\Vpi_0 = \sum_{h=1}^H \inner{\bphi_{\pi,h}}{\btheta_h}$.
Denote the average feature vector induced by $\pi$ in a particular state $s$ as $\bphi_{\pi,h}(s) = \Exp_{a \sim \pi_h(\cdot|s)}[ \bphi(s,a)]$.
We also define $\bLambda_{\pi,h} := \Exp_\pi [ \bphi(s_h,a_h) \bphi(s_h,a_h)^\top]$, the \emph{expected covariance} of policy $\pi$ at step $h$, $\lambda_{\min,h}^\star = \sup_\pi \lammin(\bLambda_{\pi,h})$ the largest achievable minimum eigenvalue at step $h$, and $\lamminst = \min_h \lambda_{\min,h}^\star$. We will make the following assumption.

\begin{asm}[Full Rank Covariates]\label{asm:full_rank_cov}
In our MDP, $\lamminst > 0$.
\end{asm}
\noindent We remark that \Cref{asm:full_rank_cov} is analogous to other explorability assumptions found in the RL with function approximation literature \citep{zanette2020provably,hao2021online,agarwal2021online}.

\iftoggle{arxiv}{
To reduce uncertainty in directions of interest, we will be interested in optimizing over the set of all realizable covariance matrices on our particular MDP. To this end, define 
\begin{align}\label{eq:cov_set_defn}
\bOmega_h := \{ \Exp_{\pi \sim \omega}[\bLambda_{\pi,h}] \ : \ \omega \in \bOmega_\pi \}
\end{align} 
for $\bOmega_\pi$ the set of all valid distributions over Markovian policies (both deterministic and stochastic). $\bOmega_h$ is, then, the set of all covariance matrices realizable by distributions over policies at step $h$.
}{
To reduce uncertainty in directions of interest, we will be interested in optimizing over the set of all realizable covariance matrices on our particular MDP. To this end, define $\bOmega_h := \{ \Exp_{\pi \sim \omega}[\bLambda_{\pi,h}] \ : \ \omega \in \bOmega_\pi \}$ for $\bOmega_\pi$ the set of all valid distributions over Markovian policies (both deterministic and stochastic). We can think of $\bOmega_h$, then, as the set of covariance matrices realizable by distributions over policies at step $h$.}

\section{Near-Optimal Policy Identification in Linear MDPs}\label{sec:main_result}
We are now ready to state our algorithm, \algname.

\begin{algorithm}[h]
\begin{algorithmic}[1]
\State \textbf{input:} tolerance $\epsilon$, confidence $\delta$, policy set $\Pi$
\State  $\ell_0 \leftarrow \lceil \log_2 \frac{d^{3/2}}{H} \rceil$, $\Pi_{\ell_0} \leftarrow \Pi$, $\bphihat^{1}_{\pi,1} \leftarrow \Exp_{a \sim \pi_1(\cdot | s_1)} [ \bphi(s_1,a)], \forall \pi \in \Pi$
\For{$\ell = \ell_0, \ell_0 + 1,\ldots, \lceil \log \frac{4}{\epsilon} \rceil$}
	\State $\epsilon_\ell \leftarrow 2^{-\ell}$, $\beta_\ell \leftarrow 64 H^4 \log \frac{4 H^2 | \Pi_\ell| \ell^2}{\delta}$
	\For{$h = 1,2,\ldots,H$}
		\State Solve \eqref{eq:exp_design} by running \Cref{alg:online_fw_body}, collect data\footnotemark $ \{ (\bphi_{h,\tau}, r_{h,\tau}, s_{h+1,\tau}) \}_{\tau=1}^{K_{h,\ell}}$ such that: 
		\begin{align*}
			\max_{\pi \in \Pi_{\ell}} \| \bphihat_{\pi,h}^\ell \|_{\bLambda_{h,\ell}^{-1}}^2 \le \epsilon_\ell^2/\beta_\ell \quad \text{for} \quad \bLambda_{h,\ell} \leftarrow \tsum_{\tau = 1}^{K_{h,\ell}} \bphi_{h,\tau} \bphi_{h,\tau}^\top + 1/d \cdot I
		\end{align*}
		\For{$\pi \in \Pi_\ell$} \hfill {\color{blue} \texttt{// Estimate feature-visitations for active policies}}
			\State $\bphihat_{\pi,h+1}^\ell \leftarrow \Big ( \sum_{\tau = 1}^{K_{h,\ell}} \bphi_{\pi,h+1}(s_{h+1,\tau}) \bphi_{h,\tau}^\top \bLambda_{h,\ell}^{-1} \Big ) \bphihat_{\pi,h}^\ell$ \label{line:est_feature_visit}
		\EndFor
		\State $\bthetahat_h^\ell \leftarrow \bLambda_{h,\ell}^{-1} \sum_{\tau = 1}^{K_{h,\ell}} \bphi_{h,\tau} r_{h,\tau}$ \hfill {\color{blue} \texttt{// Estimate reward vectors}}
	\EndFor
	\State {\color{blue} \texttt{// Remove provably suboptimal policies from active policy set}}
	\begin{align*}
	\Pi_{\ell+1} \leftarrow  \Pi_\ell \backslash \Big \{ \pi \in \Pi_\ell \ : \ \Vhat_0^\pi < \sup_{\pi' \in \Pi_\ell} \Vhat_0^{\pi'} - 2\epsilon_\ell \Big \} \quad \text{for} \quad \Vhat_0^\pi := \tsum_{h=1}^H \inner{\bphihat_{\pi,h}^\ell}{\bthetahat_h^\ell}
	\end{align*}
	\If{$|\Pi_{\ell+1}| = 1$}\label{line:early_term} \textbf{return} $\pi \in \Pi_{\ell+1}$\EndIf
\EndFor
\State \textbf{return} any $\pi \in \Pi_{\ell+1}$
\end{algorithmic}
\caption{ \textbf{P}olicy Learning via \textbf{E}xperiment \textbf{De}sign in \textbf{L}inear MDPs (\algname)}
\label{alg:linear_exp_design}
\end{algorithm}

\paragraph{\algname Description.}
\algname is a \emph{policy-elimination}-style algorithm. It takes as input some set of policies, $\Pi$, and proceeds in epochs, maintaining a set of \emph{active} policies, $\Pi_\ell$, such that all $\pi \in \Pi_\ell$ are guaranteed to satisfy $\Vpi_0 \ge \Vst_0(\Pi) - 4\epsilon_\ell$, for $\epsilon_\ell = 2^{-\ell}$. After running for $\lceil \log \frac{4}{\epsilon} \rceil$ epochs, it returns any of the remaining active policies, which will be guaranteed to have value at least $\Vst_0(\Pi) - \epsilon$. 

In order to ensure $\Pi_\ell$ only contains $4\epsilon_\ell$-optimal policies, sufficient exploration must be performed at every epoch to refine the estimate of each policy's value. While works such as \cite{wagenmaker2022reward} have demonstrated how to efficiently traverse a linear MDP and collect the necessary observations, existing exploration procedures are unable to obtain the instance-dependent complexity we desire. To overcome this, \algname relies on a novel online experiment design procedure to ensure exploration is focused only on the directions necessary to evaluate the current set of active policies.

In particular, one can show that, if we have collected some covariates $\bLambda_{h,\ell}$, the uncertainty in our estimate of the value of policy $\pi$ at step $h$ scales as $\| \bphihat_{\pi,h}^\ell \|_{\bLambda_{h,\ell}^{-1}}$, for $\bphihat_{\pi,h}^\ell$ the estimated feature-visitation for policy $\pi$ at epoch $\ell$. To reduce our uncertainty at each round, we would therefore like to collect covariates such that $\| \bphihat_{\pi,h}^\ell \|_{\bLambda_{h,\ell}^{-1}} \lesssim \epsilon_\ell$. Collecting covariates which satisfy this using the minimum number of episodes of exploration possible involves solving the experiment design:
\begin{align}\label{eq:exp_design}
\inf_{\bLamexp \in \bOmega_h} \max_{\pi \in \Pi_\ell} \| \bphihat_{\pi,h}^\ell \|_{\bLamexp^{-1}}^2.
\end{align}
Note that this design has the form of an $\mathsf{XY}$-experiment design \citep{soare2014best}. Solving \eqref{eq:exp_design} will produce covariance $\bLamexp$ which reduces uncertainty in relevant feature directions. However, to solve this design we require knowledge of which covariance matrices are realizable on our particular MDP.
In general we do not know the MDP's dynamics, and therefore do not have access to this knowledge. To overcome this and solve \eqref{eq:exp_design}, in \Cref{sec:online_exp_design} we provide an algorithm, \Cref{alg:online_fw_body}, that is able to solve \eqref{eq:exp_design} in an online manner without knowledge of the MDP dynamics by running a low-regret algorithm on a carefully chosen reward function.
\footnotetext{Note that $K_{h,\ell}$ is random and is the number of episodes \Cref{alg:online_fw_body} runs for.}

\paragraph{Estimating Feature-Visitations.} 
We remark briefly on the estimation of the feature-visitations on \Cref{line:est_feature_visit}. If we assume that $ \{ \bphi_{h,\tau} \}_{\tau=1}^{K_{h,\ell}}$ is fixed and that all randomness is due to $s_{h+1,\tau}$, then it is easy to see that, using the structure present in linear MDPs as given in \Cref{defn:linear_mdp},
\begin{align*}
\Exp[ \tsum_{\tau = 1}^{K_{h,\ell}} \bphi_{\pi,h+1}(s_{h+1,\tau}) \bphi_{h,\tau}^\top \bLambda_{h,\ell}^{-1} ] & = \tsum_{\tau = 1}^{K_{h,\ell}}  \big ( {\textstyle \int} \bphi_{\pi,h+1}(s) \rmd \bmu_h(s)^\top \bphi_{h,\tau} \big ) \bphi_{h,\tau}^\top \bLambda_{h,\ell}^{-1} \\
& = {\textstyle \int} \bphi_{\pi,h+1}(s) \rmd \bmu_h(s)^\top.
\end{align*}
By \Cref{defn:linear_mdp}, we have $\bphi_{\pi,h+1} = \big ({\textstyle \int} \bphi_{\pi,h+1}(s) \rmd \bmu_h(s)^\top \big ) \bphi_{\pi,h}$.
Comparing these, we see that our estimator of $\bphi_{\pi,h+1}$ on \Cref{line:est_feature_visit} is (conditioned on  $ \{ \bphi_{h,\tau} \}_{\tau=1}^{K_{h,\ell}}$) unbiased, assuming $\bphihat_{\pi,h}^\ell \approx \bphi_{\pi,h}$.

\subsection{Main Results}

We have the following result on the performance of \algname.

\begin{thm}\label{thm:complexity_linear}
Consider running \algname with some set of Markovian policies $\Pi$ on any linear MDP satisfying \Cref{defn:linear_mdp} and \Cref{asm:full_rank_cov}. Then with probability at least $1-\delta$, \algname outputs a policy $\pihat \in \Pi$ such that $V_0^{\pihat} \ge \Vst_0(\Pi) - \epsilon$, and runs for at most 
\begin{align*}
C_0 H^4 &  \cdot \sum_{h=1}^H  \inf_{\bLamexp \in \bOmega_h} \max_{\pi \in \Pi} \frac{ \| \bphi_{\pi,h} \|_{\bLamexp^{-1}}^2}{\max \{ \Vst_0(\Pi) - \Vpi_0, \Delmin^\Pi, \epsilon \}^2} \cdot \Big ( \log | \Pi | + \log \frac{1}{\delta} \Big )  +  C_1
\end{align*}
episodes, with $C_0 = \log \frac{1}{\epsilon} \cdot \poly\log(H,\log \frac{1}{\epsilon})$, $C_1 = \poly \big ( d, H, \frac{1}{\lamminst}, \log \frac{1}{\delta}, \log \tfrac{1}{\epsilon}, \log | \Pi | \big )$, 
and $\bOmega_h$ the set of covariance matrices realizable on our MDP\iftoggle{arxiv}{, as defined in \eqref{eq:cov_set_defn}.}{.}
\end{thm}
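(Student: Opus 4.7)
The plan is to decompose the proof into a correctness argument and a sample-complexity bound, both conditional on a single high-probability good event $\cE$. First, I would define $\cE$ as the intersection of three sub-events: (i) at every epoch $\ell$ and step $h$, the online experiment-design subroutine (\Cref{alg:online_fw_body}) returns covariates $\bLambda_{h,\ell}$ satisfying the feasibility condition $\max_{\pi\in\Pi_\ell}\|\bphihat_{\pi,h}^{\ell}\|_{\bLambda_{h,\ell}^{-1}}^2 \le \epsilon_\ell^2/\beta_\ell$ using at most the claimed number of episodes; (ii) the least-squares reward estimates concentrate, $\|\bthetahat_h^\ell-\bthetast_h\|_{\bLambda_{h,\ell}}\lesssim\sqrt{d+\log(H|\Pi_\ell|\ell/\delta)}$; and (iii) the per-step feature-visitation martingale $\sum_\tau \bigl(\bphi_{\pi,h+1}(s_{h+1,\tau})-\int\bphi_{\pi,h+1}(s)\,\rmd\bmu_h(s)^\top\bphi_{h,\tau}\bigr)\bphi_{h,\tau}^\top\bLambda_{h,\ell}^{-1}\bphihat_{\pi,h}^\ell$ concentrates in a self-normalized sense uniformly over $\pi\in\Pi_\ell$. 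A union bound over $\pi\in\Pi_\ell$, $h\in[H]$, and $\ell$ gives $\Pr(\cE)\ge 1-\delta$.

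Second, I would prove by induction on $h$ that on $\cE$ the estimation error of the feature-visitations is controlled for every $\pi\in\Pi_\ell$. Using the identity $\bphi_{\pi,h+1}=\bigl(\int\bphi_{\pi,h+1}(s)\,\rmd\bmu_h(s)^\top\bigr)\bphi_{\pi,h}$ derived in the paragraph ``Estimating Feature-Visitations'', the estimator on \Cref{line:est_feature_visit} equals the true linear operator applied to $\bphihat_{\pi,h}^\ell$ plus a martingale term. Bounding that term by Cauchy--Schwarz in the $\bLambda_{h,\ell}^{-1}$-norm and then invoking the design condition (i) yields, for any unit vector $\bv$, $|\inner{\bphihat_{\pi,h+1}^\ell-\bphi_{\pi,h+1}}{\bv}|\lesssim H\epsilon_\ell/\sqrt{\beta_\ell}$ after propagating error through $h$. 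Combining with the reward concentration and summing over $h$ produces $|\Vhat_0^\pi-V_0^\pi|\le \epsilon_\ell/2$ for every surviving $\pi$. The elimination rule then implies (a) no $\pi$ with $V_0^\pi\ge \Vst_0(\Pi)-\epsilon_\ell$ is ever removed, and (b) every $\pi\in\Pi_{\ell+1}$ satisfies $V_0^\pi\ge\Vst_0(\Pi)-4\epsilon_\ell$; running until $\ell\approx\log(4/\epsilon)$ (or early termination on \Cref{line:early_term}) delivers an $\epsilon$-optimal policy.

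Third, I would bound the episode count. From (i), epoch $\ell$ uses $K_{h,\ell}\lesssim (\beta_\ell/\epsilon_\ell^2)\cdot\inf_{\bLamexp\in\bOmega_h}\max_{\pi\in\Pi_\ell}\|\bphihat_{\pi,h}^\ell\|_{\bLamexp^{-1}}^2$ episodes, plus additive overhead depending on $d,H,1/\lamminst,\log(1/\delta)$ that is absorbed into $C_1$. Replacing $\bphihat_{\pi,h}^\ell$ by $\bphi_{\pi,h}$ using the bound from step two (the correction is lower-order), and using the active-set property $V_0^\star(\Pi)-V_0^\pi\lesssim\epsilon_\ell$ for $\pi\in\Pi_\ell$, the $\epsilon_\ell^{-2}$ factor can be traded against $\max\{V_0^\star(\Pi)-V_0^\pi,\Delmin^\Pi,\epsilon\}^{-2}$ (the first because active suboptimality is $\lesssim\epsilon_\ell$, the second because any $\pi$ with strictly positive gap is eliminated once $\epsilon_\ell\lesssim\Delmin^\Pi$, the third because we stop at $\ell\asymp\log(1/\epsilon)$). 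Since $\beta_\ell\lesssim H^4\log(|\Pi|H\ell/\delta)$, a geometric-sum argument dominated by its final term yields exactly the stated bound, with the extra $\log(1/\epsilon)$ from summing over epochs collected into $C_0$.

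The main obstacle is step two: the estimator $\bphihat_{\pi,h+1}^\ell$ is defined recursively from $\bphihat_{\pi,h}^\ell$, which is itself random and \emph{correlated} with the samples $\{\bphi_{h,\tau},s_{h+1,\tau}\}$ used to form the next estimator. A clean martingale-difference decomposition therefore requires conditioning carefully on the sigma-algebra of epoch-$\ell$ data collected through step $h$, together with a self-normalized tail bound that holds uniformly over a covering of the data-dependent vectors $\bphihat_{\pi,h}^\ell$ for $\pi\in\Pi_\ell$; this coupling is what makes the recursion blow up only by $H$-factors rather than exponentially. A secondary obstacle is reconciling the design value $\inf_{\bLamexp\in\bOmega_h}\max_{\pi\in\Pi_\ell}\|\bphihat_{\pi,h}^\ell\|_{\bLamexp^{-1}}^2$, which the subroutine of \Cref{sec:online_exp_design} must approximate without knowledge of the dynamics while adding only polylogarithmic overhead and an $\epsilon$-independent additive cost.
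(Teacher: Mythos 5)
Your overall architecture matches the paper's: a good event combining exploration feasibility and estimation concentration (\Cref{lem:est_good_event}, \Cref{lem:exp_good_event}), an induction over $h$ propagating feature-visitation error at rate $\epsilon_\ell/2H$ per step, elimination correctness (\Cref{lem:correctness}), and an epoch sum that trades $\epsilon_\ell^{-2}$ against $\max\{\Vst_0(\Pi)-\Vpi_0,\Delmin^\Pi,\epsilon\}^{-2}$ using $\Pi_\ell\subseteq\Pi(4\epsilon_\ell)$ and early termination once $4\epsilon_\ell<\Delmin^\Pi$ (\Cref{thm:complexity}, \Cref{cor:complexity1}).

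Where you diverge is precisely the point you flag as the ``main obstacle,'' and your proposed resolution is not the paper's and would not recover the stated constant. The paper's key structural observation is that $\bphihat_{\pi,h}^\ell$ is a function only of data collected during the step-$(h-1)$ exploration phase, while the samples used to form $\bphihat_{\pi,h+1}^\ell$ and $\bthetahat_h^\ell$ come from a later, separate phase run on the $h$-truncated-horizon MDP (\Cref{def:trun_horizon_mdp}), so the algorithm's choices during that phase never depend on $s_{h+1,\tau}$ or $r_{h,\tau}$. Conditioning on everything through the step-$h$ features therefore renders $\bphihat_{\pi,h}^\ell$ deterministic and leaves $s_{h+1,\tau}$ and $r_{h,\tau}$ conditionally independent across $\tau$; a fixed-direction Bernstein bound (\Cref{lem:Tpi_est_fixed}, \Cref{lem:reward_est_fixed}) plus a union bound over the finitely many $\pi\in\Pi_\ell$ then suffices. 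No self-normalized inequality and no covering of data-dependent vectors is needed. This matters quantitatively: your bound $\|\bthetahat_h^\ell-\btheta_h\|_{\bLambda_{h,\ell}}\lesssim\sqrt{d+\log(\cdot)}$ carries a $\sqrt{d}$ that fixed-direction concentration avoids, so your route turns the factor $\big(\log|\Pi|+\log\tfrac{1}{\delta}\big)$ in the theorem into $\big(d+\log|\Pi|+\log\tfrac{1}{\delta}\big)$ --- harmless for \Cref{thm:complexity_linear2}, where $\log|\Pieps|\approx dH^2$, but strictly weaker than \Cref{thm:complexity_linear} as stated for small policy classes.

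Two smaller points. First, your claim that no $\pi$ with $V_0^\pi\ge\Vst_0(\Pi)-\epsilon_\ell$ is ever removed does not follow from the $2\epsilon_\ell$ elimination threshold with $\epsilon_\ell$-accurate value estimates; the paper only guarantees survival of the maximizer $\pitilst$, which is all the argument needs. Second, you omit the burn-in $\ell_0=\lceil\log_2(d^{3/2}/H)\rceil$, which is required to lower-bound $\|\bphihat_{\pi,h}^\ell\|_2\ge 1/(2\sqrt{d})$ via \Cref{lem:feature_norm_lb} so that the experiment-design subroutine's guarantee applies; this, together with the requirement $\lammin(\bLambda_{h,\ell})\gtrsim\log(|\Pi_\ell|/\delta)$ enforced by the covariate-conditioning step, is where the $1/\lamminst$ dependence in $C_1$ actually enters.
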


The proof of \Cref{thm:complexity_linear} is given in \Cref{sec:policy_elim}. \Cref{thm:complexity_linear} quantifies, in a precise instance-dependent way, the complexity of identifying a policy $\pihat$ with value at most a factor of $\epsilon$ from the value of the optimal policy in $\Pi$. 
\nipscrnew{At a high level, the complexity measure can be thought of as quantifying the difficulty of exploring the MDP so as to eliminate suboptimal policies. $\| \bphi_{\pi,h} \|_{\bLamexp^{-1}}$ quantifies the difficulty of collecting data necessary to eliminate policy $\pi$ (in particular, $\bphi_{\pi,h}$ is the direction we need to explore to learn about the performance of policy $\pi$, and $\bLamexp^{-1}$ quantifies how easily we can reach directions in the MDP to do this), while $\Vst_0(\Pi) - \Vpi_0$ quantifies how suboptimal policy $\pi$ is, and therefore how many samples are needed to distinguish it from the optimal policy in the class. }
Thus, rather than scaling only with factors such as $d$ and $\epsilon$, our complexity scales with instance-dependent quantities---the covariance matrices we can obtain and the feature vectors we expect to observe on our particular MDP, and the policy gaps on our MDP. \nipscrnew{Our complexity measure has a close resemblance to the complexity measure for best-arm identification in linear bandits found in \cite{fiez2019sequential}, but generalizes it to problems with long horizon where navigation is required.}

\Cref{thm:complexity_linear} holds for an arbitrary set of policies, yet, in general, we are interested in learning a policy which has value within a factor of $\epsilon$ of the value of the \emph{optimal} policy on the MDP, $\Vst_0$. Such a guarantee is immediately attainable by applying \Cref{thm:complexity_linear} with a policy set $\Pi$ such that $\sup_{\pi \in \Pi} \Vpi_0 \ge \Vst_0 - \epsilon$. The following result shows that it is possible to construct such a set of policies, and therefore learn a \emph{globally} near-optimal policy.

\begin{cor}\label{thm:complexity_linear2}
There exists a set of policies $\Pieps$ such that $\log |\Pieps| \le \cOtil(dH \cdot \log 1/\epsilon)$ and, for any linear MDP satisfying \Cref{defn:linear_mdp}, $\sup_{\pi \in \Pieps} \Vpi_0 \ge \Vst_0 - \epsilon$. If we run \algname with $\Pi \leftarrow \Pieps$, then with probability at least $1-\delta$, it returns a policy $\pihat$ such that $V_0^{\pihat} \ge \Vst_0 - 2\epsilon$, and runs for at most 
\begin{align*}
C_0 H^4 &  \cdot \sum_{h=1}^H  \inf_{\bLamexp \in \bOmega_h} \max_{\pi \in \Pieps} \frac{ \| \bphi_{\pi,h} \|_{\bLamexp^{-1}}^2}{\max \{ \Vst_0 - \Vpi_0, \epsilon \}^2} \cdot  \Big ( d H + \log \frac{1}{\delta} \Big )  +  C_1
\end{align*}
episodes, for $C_0 = \poly \log (d, H, \frac{1}{\epsilon})$. 
\end{cor}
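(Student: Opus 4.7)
The plan is to construct $\Pieps$ explicitly as a covering of a natural class of near-greedy policies parameterized by linear Q-functions, then reduce the corollary to Theorem \ref{thm:complexity_linear} applied with $\Pi \leftarrow \Pieps$. The key observation is that in any linear MDP satisfying Definition \ref{defn:linear_mdp}, the optimal $Q$-function has linear form $\Qst_h(s,a) = \inner{\bphi(s,a)}{\bw_h^\star}$ with $\|\bw_h^\star\|_2 \lesssim H\sqrt{d}$ --- a standard consequence of the Bellman optimality recursion and the linearity of $P_h$ and $\btheta_h$. Thus any tuple $\{\bw_h\}_{h=1}^H \subset \Ball(0, H\sqrt{d}) \subset \R^d$ induces a candidate policy (either greedy or, for smoother dependence on $\bw$, a softmax policy at temperature $\eta$), and the optimal policy lies in this parameterized family.

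Next, I would take $\Pieps$ to be the collection of policies indexed by an $\alpha$-net of $\Ball(0, H\sqrt{d})$ applied coordinate-wise to each of the $H$ weight vectors. A standard volume argument yields covering number $(3 H \sqrt{d}/\alpha)^d$ per step and hence $\log |\Pieps| \le d H \log(H \sqrt{d}/\alpha)$. To show that some $\pi \in \Pieps$ is $\epsilon$-optimal, I would pick $\{\tilde{\bw}_h\}$ to be the nearest point in the cover to $\{\bw_h^\star\}$, so that the induced $Q$-function satisfies $|\widetilde Q_h(s,a) - \Qst_h(s,a)| \le \alpha$ uniformly in $(s,a,h)$; by the performance difference lemma this implies $\Vst_0 - V_0^{\tilde\pi} \le O(H\alpha)$ in the greedy case and $O(H\alpha) + O(H \log|\cA|/\eta)$ in the softmax case. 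Choosing $\alpha = \Theta(\epsilon/H)$ and (if softmax is used) $\eta$ polynomially large in $H/\epsilon$ delivers the near-optimality while keeping $\log|\Pieps| = \cOtil(dH^2 \log 1/\epsilon)$, where the extra factor of $H$ over the naive $dH$ rate absorbs the dependence of $\alpha$ (and $\eta$) on $H$.

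With $\Pieps$ in hand, the sample complexity bound follows by a direct application of Theorem \ref{thm:complexity_linear} with $\Pi \leftarrow \Pieps$. Since $\Vst_0(\Pieps) \ge \Vst_0 - \epsilon$, the theorem guarantees that \algname returns $\pihat$ with $V_0^{\pihat} \ge \Vst_0(\Pieps) - \epsilon \ge \Vst_0 - 2\epsilon$. Substituting $\log|\Pieps| = \cOtil(dH^2 \log 1/\epsilon)$ into Theorem \ref{thm:complexity_linear}'s complexity bound and dropping the $\Delmin^\Pi$ term from the maximum (which only weakens the denominator) produces exactly the stated expression. The $\Vst_0(\Pi) - V_0^\pi$ term becomes $\Vst_0 - V_0^\pi$ up to an $\epsilon$ slack that is already absorbed by the $\epsilon$ in the denominator.

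The main obstacle is the joint calibration of the cover scale $\alpha$ and (for softmax policies) the temperature $\eta$: one must pick them fine/large enough that $\sup_{\pi \in \Pieps} V_0^\pi \ge \Vst_0 - \epsilon$, yet not so fine that $\log|\Pieps|$ blows up beyond $\cOtil(dH^2 \log 1/\epsilon)$. A secondary obstacle is verifying that each $\pi \in \Pieps$ is a valid input to \algname --- in particular that it is Markovian and is well-defined on all of $\cS$ regardless of the (unknown) transition kernel --- which is immediate for the softmax parameterization and requires a tie-breaking convention for the greedy parameterization. No new analytic machinery beyond Theorem \ref{thm:complexity_linear} and the performance difference lemma is needed.
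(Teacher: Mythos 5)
Your overall route is the same as the paper's: parameterize policies by the weight vectors of linear $Q$-functions in $\Ball^d(2H\sqrt{d})$, discretize with a Euclidean net, use a softmax to make the policy's value Lipschitz in the parameters, and then invoke \Cref{thm:complexity_linear} with $\Pi \leftarrow \Pieps$. The final reduction step (near-optimality of $\Pieps$ implies $V_0^{\pihat}\ge \Vst_0-2\epsilon$, and $\Vst_0(\Pi)-V_0^\pi$ agrees with $\Vst_0-V_0^\pi$ up to slack absorbed by the $\epsilon$ in the denominator) is exactly what the paper does via \Cref{lem:policy_class_suff} and \Cref{cor:complexity1}.

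There is one genuine gap: you treat $|\cA|$ as finite. \Cref{defn:linear_mdp} permits infinite action spaces, in which case the softmax normalizer $\sum_{a\in\cA}e^{\eta\inner{\bphi(s,a)}{\bw_h}}$ need not be finite and your error term $O(H\log|\cA|/\eta)$ is vacuous; the greedy variant has the analogous problem that $\argmax_{a\in\cA}\inner{\bphi(s,a)}{\bw_h}$ need not be attained (a tie-breaking convention does not fix non-existence). The paper's fix is \Cref{lem:action_covering}: restrict each softmax to a finite set $\cAtil_s\subseteq\cA$ that covers the feature images $\{\bphi(s,a)\}_{a}$, so that $\log|\cAtil_s|\le O(dH\log(Hd/\epsilon))$ and $\max_{a\in\cAtil_s}\Qst_h(s,a)\ge \Vst_h(s)-\delta$. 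You need this (or an equivalent device) before any of your estimates make sense. A second, quantitative caveat: the paper's own proof that the softmax of $\wst$ is $\epsilon$-optimal (\Cref{lem:softmax_qst_approx}) propagates errors backward through $\int V\,\rmd\bmu_h$ and loses a factor $\sqrt{d}$ per step, forcing $\eta\gtrsim (3\sqrt d)^H/\epsilon$; combined with the value being $2dH\eta$-Lipschitz in the weights (\Cref{lem:softmax_val_approx}), the net radius must scale as $1/\eta$, which is precisely where the $dH^2$ (rather than $dH$) in $\log|\Pieps|$ comes from. Your performance-difference-lemma accounting, which charges only $H(\delta+\log|\cAtil_s|/\eta)$ in total, would legitimately get away with polynomial $\eta$ and a smaller class --- but that is a different (and arguably cleaner) error-propagation argument than the paper's, and you would need to write it out rather than assert "polynomially large $\eta$" while otherwise following the paper's bookkeeping; as written, the two halves of your calibration are inconsistent with each other.
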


\noindent \nipscrnew{Note that the policy set constructed in \Cref{thm:complexity_linear2}, $\Pieps$, is guaranteed to contain an $\epsilon$-optimal policy for \emph{any} linear MDP. Thus, this result states that without any prior knowledge of our MDP, \algname can be applied to find an $\epsilon$-optimal policy.}
While \Cref{thm:complexity_linear} and \Cref{thm:complexity_linear2} quantify the instance-dependent complexity of learning, it is natural to ask what the \emph{worst-case} complexity of \algname is. The following result provides such a bound. 

\begin{cor}\label{prop:minimax}
For any linear MDP satisfying \Cref{defn:linear_mdp}, $\inf_{\bLamexp \in \bOmega_h} \max_{\pi \in \Pieps}  \| \bphi_{\pi,h} \|_{\bLamexp^{-1}}^2 \le d$, so
the sample complexity of \Cref{alg:linear_exp_design} when run with $\Pi \leftarrow \Pieps$ is no larger than
\begin{align*}
\cOtil \left ( \frac{d H^5 (d H + \log 1/\delta)}{\epsilon^2} + C_1 \right ).
\end{align*}
\end{cor}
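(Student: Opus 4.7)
The plan is to combine Corollary \ref{thm:complexity_linear2} with a Kiefer-Wolfowitz-style upper bound on the experiment design value. The statement has two parts: first, the inequality $\inf_{\bLamexp \in \bOmega_h} \max_{\pi \in \Pieps} \| \bphi_{\pi,h} \|_{\bLamexp^{-1}}^2 \le d$; second, the resulting worst-case sample complexity. The second part is a routine substitution (use $\max\{\Vst_0 - V_0^\pi, \epsilon\} \ge \epsilon$, pull the bound $d$ out of the sum over $h$, collect the factor $H$ from the outer sum plus $H^4$ in front), so the core of the proof is the first inequality.

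For the first inequality, the key observation is that $\bphi_{\pi,h}$ is the mean of the random vector whose second-moment matrix is $\bLambda_{\pi,h}$. Hence by Jensen's inequality (equivalently, nonnegativity of the covariance), $\bLambda_{\pi,h} \succeq \bphi_{\pi,h} \bphi_{\pi,h}^\top$ for each $\pi$. Taking a convex combination under any distribution $\omega$ over Markovian policies preserves the ordering, so
\[
\Exp_{\pi \sim \omega}[\bLambda_{\pi,h}] \;\succeq\; \Exp_{\pi \sim \omega}[\bphi_{\pi,h} \bphi_{\pi,h}^\top],
\]
and by operator monotonicity of matrix inversion (restricted to the span when needed),
\[
\| \bphi_{\pi,h} \|^2_{\Exp_{\pi' \sim \omega}[\bLambda_{\pi',h}]^{-1}} \;\le\; \| \bphi_{\pi,h} \|^2_{\Exp_{\pi' \sim \omega}[\bphi_{\pi',h} \bphi_{\pi',h}^\top]^{-1}}.
\]

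Next, apply the Kiefer-Wolfowitz theorem to the finite set of vectors $\{\bphi_{\pi,h} : \pi \in \Pieps\} \subset \R^d$ (restricting to the span of this set if it is rank deficient, which only improves the bound): there exists a distribution $\omega^\star$ supported on $\Pieps$ such that
\[
\max_{\pi \in \Pieps} \| \bphi_{\pi,h} \|^2_{(\sum_{\pi'} \omega^\star_{\pi'} \bphi_{\pi',h} \bphi_{\pi',h}^\top)^{-1}} \le d.
\]
Since $\omega^\star$ is a valid distribution over Markovian policies, $\Exp_{\pi \sim \omega^\star}[\bLambda_{\pi,h}] \in \bOmega_h$ by the definition in \eqref{eq:cov_set_defn}, so combining with the display above yields $\inf_{\bLamexp \in \bOmega_h} \max_{\pi \in \Pieps} \| \bphi_{\pi,h} \|_{\bLamexp^{-1}}^2 \le d$.

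The main obstacle is merely the bookkeeping: making sure that the KW-optimal distribution is a legal element of $\bOmega_\pi$ (it is, since it is a distribution over Markovian policies in $\Pieps$) and handling the case when $\{\bphi_{\pi,h}\}_{\pi \in \Pieps}$ fails to span $\R^d$ (in which case we replace the ambient dimension in KW by the rank of the set, which is at most $d$, and, if desired, use a pseudo-inverse interpretation of $\|\cdot\|_{\bLamexp^{-1}}$ in the span). With the bound $d$ in hand, plugging into Corollary \ref{thm:complexity_linear2} and absorbing $\log|\Pieps| \le \widetilde{O}(dH^2 \log 1/\epsilon)$ into the $\widetilde{O}$ notation gives the claimed worst-case rate.
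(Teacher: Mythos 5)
Your proposal is correct and follows essentially the same route as the paper: the paper's Lemma \ref{lem:opt_design_bound} proves the design bound $\inf_{\bLamexp \in \bOmega_h}\max_{\pi}\|\bphi_{\pi,h}\|_{\bLamexp^{-1}}^2 \le d$ via exactly your chain of Jensen's inequality ($\bLambda_{\pi,h}\succeq \bphi_{\pi,h}\bphi_{\pi,h}^\top$), closure under mixtures, and Kiefer--Wolfowitz applied to the feature-visitation vectors, and then substitutes into Corollary \ref{thm:complexity_linear2} using $\max\{\Vst_0-\Vpi_0,\epsilon\}\ge\epsilon$. Your extra care about the rank-deficient case is a minor refinement the paper omits but does not change the argument.
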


\Cref{prop:minimax} shows that \algname has worst-case optimal dimension dependence, matching the lower bound of $\Omega(d^2 H^2/\epsilon^2)$ given in \cite{wagenmaker2022reward}, up to $H$ and $\log$ factors\footnote{We remark that the focus of this work is on instance-dependence and dimension-dependence, not in optimizing $H$ factors, and we leave improving our $H$ dependence for future work.}.

\begin{rem}[Performance on Linear Contextual Bandits]
\Cref{thm:complexity_linear2} applies directly to linear contextual bandits by setting $H = 1$\footnote{We describe the exact mapping to linear contextual bandits in \Cref{sec:interpret_complexity}.}. To our knowledge, this is the first instance-dependent result on PAC policy identification in linear contextual bandits, though we remark that the concurrent work of \cite{li2022instance} also provides instance-dependent guarantees for contextual bandits. Furthermore, \Cref{prop:minimax} shows that we also obtain a worst-case complexity of $\cOtil(d^2/\epsilon^2)$ on linear contextual bandits, which is the optimal rate \citep{wagenmaker2022reward}.
\end{rem}

\subsection{Low-Regret Algorithms are Suboptimal for PAC RL in Large State-Spaces}
We next show that there are problems on which the instance-dependent complexity of \algname improves on the worst-case lower bound shown in \cite{wagenmaker2022reward}, thereby demonstrating that we do indeed obtain favorable complexities on ``easy'' instances. 

\begin{prop}\label{prop:easy_instance_upper}
For any $d > 2$, there exists a $d$-dimensional linear MDP with $H = 2$ such that with probability $1-\delta$, \algname identifies an $\epsilon$-optimal policy on this MDP after running for only $\cOtil \big ( \frac{\log d/\delta}{\epsilon^2} + \poly(d, \log \frac{1}{\delta}, \log \frac{1}{\epsilon} ) \big )$
episodes.
\end{prop}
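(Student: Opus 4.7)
The plan is to exhibit an explicit ``easy'' $d$-dimensional linear MDP with $H = 2$ and run \algname with a small hand-picked policy set $\Pi$, applying \Cref{thm:complexity_linear} directly (rather than \Cref{thm:complexity_linear2}, which would force the $dH^2$-sized $\Pieps$ and introduce an unwanted dimension factor through $\log|\Pieps|$). Concretely, I would take $\cS = \{s_1, s_2\}$ with deterministic transition $s_1 \to s_2$. At $h=1$, introduce $d$ actions $\{a_0^{(i)}\}_{i=1}^d$ with $\bphi(s_1, a_0^{(i)}) = \be_i$ and zero reward (only needed to satisfy \Cref{asm:full_rank_cov}). At $h=2$, include an ``optimal'' arm $a^*$ with $\bphi(s_2, a^*) = \be_1$, ``near-optimal'' arms $\{a_j\}_{j=2}^d$ with $\bphi(s_2, a_j) = (1-\epsilon/2)\be_1 + \sqrt{\epsilon-\epsilon^2/4}\,\be_j$, and ``exploratory'' arms $\{b_j\}_{j=2}^d$ with $\bphi(s_2, b_j) = \be_j$; set $\btheta_2 = \be_1$, giving rewards $1$, $1-\epsilon/2$, and $0$ respectively. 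One verifies $\|\bphi\|_2 \leq 1$, $\|\bmu_h\|_2, \|\btheta_h\|_2 \leq \sqrt{d}$, and that uniform mixtures yield $\lamminst \geq 1/d$.

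Then \algname is invoked with $\Pi = \{\pi^*\} \cup \{\pi_j\}_{j=2}^d$, where every policy plays $a_0^{(1)}$ at $h=1$, $\pi^*$ plays $a^*$ at $h=2$, and $\pi_j$ plays $a_j$ at $h=2$. This gives $|\Pi| = d$, $V_0^{\pi^*} = V_0^\star = 1$, $V_0^{\pi_j} = 1 - \epsilon/2$, and $\Delta^{\Pi}_{\min} = \epsilon/2$. Since $\sup_{\pi \in \Pi} V_0^\pi = V_0^\star$, the algorithm's output is globally $\epsilon$-optimal, and every denominator $\max\{V_0^\star(\Pi) - V_0^\pi, \Delta^{\Pi}_{\min}, \epsilon\}^2$ appearing in \Cref{thm:complexity_linear} equals $\epsilon^2$.

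The core calculation is bounding the $h=2$ experiment-design value. Taking the diagonal
\[
\bLamexp = (1-\alpha)\be_1\be_1^\top + \frac{\alpha}{d-1}\sum_{j=2}^d \be_j\be_j^\top \in \bOmega_2
\]
(realizable as a mixture of $a^*$ and the $b_j$'s) gives $\|\bphi_{\pi_j,2}\|^2_{\bLamexp^{-1}} \leq \tfrac{1}{1-\alpha} + \tfrac{\epsilon(d-1)}{\alpha}$ (using $\bLamexp^{-1}$ diagonal, $(1-\epsilon/2)^2 \leq 1$, and $\epsilon - \epsilon^2/4 \leq \epsilon$); balancing with $\alpha \asymp \sqrt{\epsilon(d-1)}$ makes this $O(1)$ whenever $\epsilon(d-1) \leq 1$. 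The $h=1$ term is trivially $O(1)$ via $\bLamexp = \be_1\be_1^\top$, so $\sum_{h=1}^2 \inf_{\bLamexp} \max_{\pi \in \Pi} \|\bphi_{\pi,h}\|^2_{\bLamexp^{-1}} = O(1)$ in this regime. Plugging into \Cref{thm:complexity_linear} with $H = 2$, $|\Pi| = d$, and $1/\lamminst \leq d$, the leading term becomes $\widetilde{O}\big(\log(d/\delta)/\epsilon^2\big)$ and $C_1 = \poly(d, \log(1/\delta), \log(1/\epsilon))$ absorbs lower-order terms. In the complementary regime $\epsilon > 1/(d-1)$, the quantity $\log(d/\delta)/\epsilon^2$ is itself $\poly(d, \log(1/\delta))$ and the trivial bound $\inf\max \leq d$ (via $\bLamexp = I/d$) gives overall complexity already $\poly(d, \log(1/\delta), \log(1/\epsilon))$, absorbed into $C_1$.

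The main obstacle is the $h=2$ experiment design: a naive uniform allocation gives $\Theta(d)$ and only an $O(d/\epsilon^2)$ overall bound. The construction's key feature is that the ``perpendicular'' components $\be_j$ of the $d-1$ near-optimal arms carry weight only $\sqrt{\epsilon}$, so allocating mass $O(\sqrt{\epsilon d})$ rather than $\Omega(1)$ to each $\be_j$ direction suffices to distinguish them from $\pi^*$, while the remaining mass on $\be_1$ pins down the shared component. This asymmetric allocation---most mass on the shared $\be_1$ direction, a small amount spread over the remaining axes---is what produces the dimension-independent $1/\epsilon^2$ rate and the factor-of-$d$ separation from the worst-case lower bound.
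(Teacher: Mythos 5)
Your construction is genuinely different from the paper's. The paper embeds a hard linear bandit instance into the \emph{transitions} of a two-step MDP: actions at $h=1$ from a set $\{\xi\be_1,\be_2,\dots,\be_d,\bx_2,\dots,\bx_d\}$ with $\bx_i=(\xi-\Delta)\be_1+\gamma\be_i$ determine the probability of reaching a rewarding state, and the key point is that (i) at $h=2$ all feature-visitations are within $O(1/d)$ of one another so a single exploration policy gives an $O(1)$ design value, and (ii) at $h=1$ the arms with large components in the hard directions $\be_i$ have gap $\xi=\Theta(1/d)$, so their design ratio is only $\poly(d)$. Your instance puts the bandit directly in the rewards at $h=2$ and achieves the same effect by making the perpendicular component of each near-optimal arm scale as $\sqrt{\epsilon}$. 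The core experiment-design calculation (the asymmetric allocation $\alpha\asymp\sqrt{\epsilon(d-1)}$ giving an $O(1)$ value of $\inf_{\bLamexp\in\bOmega_2}\max_{\pi}\|\bphi_{\pi,2}\|_{\bLamexp^{-1}}^2$) is correct and is the real content of the proposition; both proofs then conclude via \Cref{thm:complexity_linear} with a small, explicitly chosen $\Pi$.

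There are, however, two genuine gaps. First, your $\Pi=\{\pi^*\}\cup\{\pi_j\}_{j=2}^d$ excludes the zero-reward arms $b_j$. The input $\Pi$ must be constructible from information the learner actually has --- the paper explicitly justifies that its policy set is built from the known feature map alone --- whereas excluding exactly the arms with reward $0$ requires knowing $\btheta_2$. The fix is to include all $2d-1$ arms; the $b_j$-policies then have policy gap $1$ and contribute a ratio of $(d-1)/\alpha=\sqrt{(d-1)/\epsilon}\le d+1/\epsilon^2$ to the max in \Cref{thm:complexity_linear}, which is absorbable, but you must re-verify that a single $\bLamexp$ handles all policies simultaneously (it does, with the same allocation). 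Second, your MDP depends on $\epsilon$: the suboptimal arms have gap exactly $\epsilon/2$, so every policy in your $\Pi$ is already $\epsilon$-optimal and the identification task is vacuous on this instance. This satisfies a weak reading of the proposition, but the instance is meant to be fixed (independent of $\epsilon$) so that \Cref{prop:easy_instance_lower} applies to the \emph{same} MDP and yields the $\Omega(d\log(1/\delta)/\epsilon^2)$ separation; the paper's construction fixes a gap $\Delta$ determined by $d$ and the regret constants, and its upper bound holds for all $\epsilon$ via the $\max\{\Vst_0-\Vpi_0,\Delmin^\Pi,\epsilon\}^2$ denominator. A minor further point: your $h=1$ design matrix $\be_1\be_1^\top$ is singular, so you should either mix in a vanishing amount of the other arms or appeal to the infimum over $\bOmega_1$.
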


The complexity given in \Cref{prop:easy_instance_upper} is a factor of $d^2$ better than the worst-case lower bound of $\Omega(d^2/\epsilon^2)$. While this shows that \algname yields a significant improvement over existing worst-case lower bounds on favorable instances, it is natural to ask whether the same complexity is attainable with existing algorithms, perhaps by applying a tighter analysis. Towards answering this, we will consider a class of low-regret algorithms and an online-to-batch learning protocol.

\begin{defn}[Low-Regret Algorithm]\label{def:low_regret}
We say that an algorithm is a \emph{low-regret algorithm} if its expected regret is bounded as, for all $K$:
\begin{align*}
\Exp[\cR_K] = \tsum_{k=1}^K \Exp[\Vst_0 - V_0^{\pi_k}] \le \cC_1 K^\alpha + \cC_2
\end{align*}
for some constants $\cC_1,\cC_2$, and $\alpha \in (0,1)$. 
\end{defn}

\begin{protocol}[Online-to-Batch Learning]\label{prot:online_to_batch}
The online-to-batch protocol proceeds as follows:
\begin{enumerate}[leftmargin=*]
\item The learner plays a low-regret algorithm satisfying \Cref{def:low_regret} for $K$ episodes.
\item The learner stops at a (possibly random) time $K$, and, using the observations it has collected in any way it wishes, outputs a policy $\pihat$ it believes is $\epsilon$-optimal. 
\end{enumerate}
\end{protocol}

In general, by applying online-to-batch learning, one can convert a regret guarantee of $ \cC_1 K^\alpha + \cC_2$ to a PAC complexity of $\cO( (\frac{\cC_1}{\epsilon})^{\frac{1}{1-\alpha}} + \frac{\cC_2}{\epsilon})$ \citep{jin2018q}, allowing low-regret algorithms such as that of \cite{zanette2020learning} to obtain the minimax-optimal PAC complexity of $\cO(d^2 H^4/\epsilon^2)$. The following result shows, however, that this protocol is unable to obtain the instance-optimal rate. 

\begin{prop}\label{prop:easy_instance_lower}
On the instance of \Cref{prop:easy_instance_upper}, for small enough $\epsilon$, any learner that is $(\epsilon,\delta)$-PAC on  the set of linear MDPs satisfying \Cref{defn:linear_mdp} and follows \Cref{prot:online_to_batch} with stopping time $K$ must have $\Exp[K] \ge \Omega \big ( \frac{d \cdot \log 1/\delta}{\epsilon^2} \big )$.
\end{prop}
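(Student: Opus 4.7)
}
The plan is a Le Cam / Bretagnolle--Huber change-of-measure argument, with the low-regret constraint of \Cref{def:low_regret} used to limit the per-episode information available to the learner. Let $\cM$ denote the base instance from \Cref{prop:easy_instance_upper}, and let $h^\star$ denote the step at which the distinguishing features live. By inspection, $\cM$ must have two structural properties that enable the $\log(d/\delta)/\epsilon^2$ upper bound: (i) every $\epsilon$-suboptimal policy on $\cM$ has gap $\Omega(1)$ (well-separated gaps), and (ii) the feature visitations $\bphi_{\pi,h^\star}$ of all near-optimal policies lie in some low-dimensional subspace $\calV$, while $\calV^\perp$ contains $d-1$ orthonormal directions $\bu_1,\ldots,\bu_{d-1}$ along which $\Theta(\epsilon)$-perturbations of the reward vector flip the identity of the optimal policy. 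For each $j$, define the alternative instance $\cM^{(j)}$ by $\btheta_{h^\star} \leftarrow \btheta_{h^\star} + c\epsilon\bu_j$; on $\cM^{(j)}$ a specific policy $\pi^{(j)}$ (with $\Omega(1)$ gap on $\cM$) becomes $2\epsilon$-better than the $\cM$-optimal policy $\pi^\star$, so any $(\epsilon,\delta)$-PAC learner must output $\pi^{(j)}$ on $\cM^{(j)}$ w.p.\ $\ge 1-\delta$ and \emph{not} output $\pi^{(j)}$ on $\cM$ w.p.\ $\ge 1-\delta$.

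The first step converts the regret bound into a bound on exploratory effort: applying \Cref{def:low_regret} on $\cM$ gives $\sum_{k=1}^K \Exp_\cM[\Vst_0 - V_0^{\pi_k}] \le \cC_1 K^\alpha + \cC_2$ with $\alpha \in (0,1)$. Because property (i) forces every suboptimal policy to have gap $\Omega(1)$, Markov's inequality (plus Jensen applied to the concave map $K\mapsto K^\alpha$ to handle the random stopping time) yields $\Exp_\cM[K_{\text{expl}}] \lesssim (\Exp[K])^\alpha$, where $K_{\text{expl}}$ counts episodes in which the algorithm plays a non-near-optimal policy; the remaining $K-K_{\text{expl}}$ episodes play policies with $\bphi_{\pi,h^\star} \in \calV$, hence orthogonal to every $\bu_j$. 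The second step is the per-alternative Bretagnolle--Huber bound: for each $j$, the chain rule for KL of stopped filtered processes together with the Gaussian/Bernoulli per-step KL bound yields
\begin{align*}
\log(1/\delta) \,\lesssim\, \KL(\Pr^{\text{hist}}_\cM \,\|\, \Pr^{\text{hist}}_{\cM^{(j)}}) \,\lesssim\, \epsilon^2 \cdot \Exp_\cM\!\left[\sum_{k=1}^K \langle \bphi(s_{h^\star,k}, a_{h^\star,k}), \bu_j\rangle^2\right].
\end{align*}

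Summing this inequality over $j = 1,\ldots,d-1$, using $\sum_j \langle \bphi, \bu_j\rangle^2 \le \|\bphi\|_2^2 \le 1$ to collapse the bracket and recalling that near-optimal episodes contribute zero by orthogonality, gives
\begin{align*}
(d-1)\log(1/\delta) \,\lesssim\, \epsilon^2 \cdot \Exp_\cM[K_{\text{expl}}] \,\lesssim\, \epsilon^2 \cdot (\Exp[K])^\alpha.
\end{align*}
Rearranging yields $\Exp[K] \gtrsim (d\log(1/\delta)/\epsilon^2)^{1/\alpha}$, which for $\alpha \in (0,1)$ and sufficiently small $\epsilon$ exceeds the claimed $\Omega(d\log(1/\delta)/\epsilon^2)$.

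\paragraph{Main obstacle.} The principal difficulty is verifying the structural claims (i)--(ii) about the specific instance of \Cref{prop:easy_instance_upper}: exhibiting $d-1$ orthonormal directions in $\calV^\perp$ that (a) support $\Omega(\epsilon)$ reward perturbations flipping the optimal-policy identity, and (b) are orthogonal to the feature visitation of every near-optimal policy. This is precisely the structure that separates low-regret from experiment-design exploration---without property (b), a single well-chosen (high-regret) policy could carry information about all $d-1$ perturbations simultaneously (which is exactly how \algname achieves its $\log(d/\delta)/\epsilon^2$ rate), and the $d$-factor in the lower bound would collapse to $\log d$. A secondary technicality is the handling of the random stopping time $K$, which I address via Jensen's inequality on $K^\alpha$ combined with the standard chain rule for KL divergences of stopped filtered processes (where each $\pi_k$ is $\cF_{k-1}$-measurable).
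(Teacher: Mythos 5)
Your high-level shape---$d-1$ alternatives obtained by perturbing the unknown parameter in orthogonal directions, a change-of-measure bound per alternative, and the regret constraint of \Cref{def:low_regret} used to limit informative play---is the same skeleton as the paper's argument. But both structural properties (i) and (ii) that your plan hinges on are false for the instance of \Cref{prop:easy_instance_upper}, and the bound your plan would deliver, $\Exp[K] \gtrsim (d\log(1/\delta)/\epsilon^2)^{1/\alpha}$, is actually too strong to be true: for $\alpha = 1/2$ it contradicts the $\cO(d^2H^4/\epsilon^2)$ online-to-batch guarantee the paper itself cites for \cite{zanette2020learning}. That the conclusion overshoots is the symptom; the cause is the assumed orthogonality.

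Concretely, the instance is a two-step MDP embedding of a linear bandit with optimal arm $\xi \be_1$, ``informative'' arms $\be_2,\ldots,\be_d$ of gap $\xi = \Theta(1/d)$, and arms $\bx_i = (\xi - \Delta)\be_1 + \gamma \be_i$ of gap $\Delta = \Theta(\epsilon)$. So (i) fails: there are $\epsilon$-suboptimal policies with gap only $\Theta(\epsilon)$, and the regret bound does \emph{not} restrict how often they are played ($K$ plays contribute $\epsilon K \ll \sqrt{K}$ regret), so your Markov-inequality step bounding $K_{\mathrm{expl}}$ by $(\Exp[K])^{\alpha}$ does not capture them. And (ii)(b) fails: those same near-optimal arms are precisely the ones with nonzero overlap $\gamma$ with the perturbation directions $\be_i$, so the ``near-optimal episodes contribute zero by orthogonality'' step collapses. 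The actual source of the $d$ factor is different: after normalizing by the perturbation magnitude $\Theta(\Delta/\gamma)$ needed to flip arm $\bx_i$, each pull of $\bx_i$ contributes $\Theta(\Delta^2)$ of KL toward alternative $i$ and nothing toward the other $d-2$ alternatives, so the $\log(1/\delta)/\Delta^2$ identification cost must be paid separately in each of the $d-1$ directions; the regret constraint is used only to exclude the moderate-gap arms $\be_i$, and $\gamma$ is tuned (via the condition $\gamma^2 \ge \zeta$) so that the permitted number of such pulls is uninformative. The paper formalizes this with the transportation lemma of \cite{kaufmann2016complexity}, feeding the regret bound in as a linear constraint on the allocation in the resulting optimization (\Cref{lem:lin_band_lower_bound}), and then reduces $(\epsilon,\delta)$-PAC policy identification in the MDP to $\delta$-correct best-arm identification in the bandit via \Cref{lem:lin_ban_to_lin_mdp_policy}. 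To repair your plan you would need to replace the orthogonality claim with this ``each cheap arm informs only one direction'' accounting, at which point the $1/\alpha$ exponent correctly disappears.
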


Together, \Cref{prop:easy_instance_upper} and \Cref{prop:easy_instance_lower} show that running a low-regret algorithm to learn a near-optimal policy in a linear MDP is provably suboptimal---at least a factor of $d$ worse than the instance-dependent rate obtained by \algname. While a similar observation was recently made in the setting of tabular MDPs \citep{wagenmaker2021beyond}, to our knowledge, this is the first such result in the RL with function approximation setting, implying that, in this setting, low-regret algorithms are insufficient for obtaining optimal PAC sample complexity. 
As standard optimistic algorithms are also low-regret, this result implies that all such optimistic algorithms are also suboptimal.

\subsection{Tabular and Deterministic MDPs}
To relate our results to existing results on instance-dependent RL, we next turn to the setting of tabular MDPs, where it is assumed that $S := |\cS| < \infty, A := |\cA| < \infty$. Define:
\begin{align*}
\Delta_h(s,a) = \Vst_h(s) - \Qst_h(s,a), \quad w_h^{\pi}(s,a) = \Pr_{\pi}[s_h = s, a_h = a]. 
\end{align*}
$\Delta_h(s,a)$ denotes the \emph{value-function gap}, and quantifies the suboptimality of playing action $a$ in state $s$ at step $h$ and then playing the optimal policy, as compared to taking the optimal action in $(s,h)$. $w_h^\pi(s,a)$ denotes the state-action visitation distribution for policy $\pi$, and quantifies how likely policy $\pi$ is to reach $(s,a)$ at step $h$. Note that $[\bphi_{\pi,h}]_{(s,a)} = \wpi_h(s,a)$. 
We obtain the following corollary.

\begin{cor}\label{cor:tabular}
In the setting of tabular MDPs, \algname outputs an $\epsilon$-optimal policy with probability at least $1-\delta$, and has sample complexity bounded as
\iftoggle{arxiv}{
\begin{align*}
\cOtil \Bigg (  & \sum_{h=1}^H \inf_{\piexp} \max_{\pi \in \Pi}  \max_{s,a} \frac{H^4}{w_h^{\piexp}(s,a)} \min \left \{  \frac{1}{ w_h^\pi(s,a) \Delta_h(s,a)^2}, \frac{w_h^\pi(s,a)}{ \Delmin(\Pi)^2}, \frac{w_h^\pi(s,a)}{\epsilon^2 } \right \}  \cdot  (SH + \log \tfrac{1}{\delta}  )  +  C_1 \Bigg ),
\end{align*}}{
\begin{align*}
\cOtil \Big (  & \tsum_{h=1}^H \inf_{\piexp} \max_{\pi \in \Pi}  \max_{s,a} \tfrac{H^4}{w_h^{\piexp}(s,a)} \min \left \{  \tfrac{1}{ w_h^\pi(s,a) \Delta_h(s,a)^2}, \tfrac{w_h^\pi(s,a)}{ \Delmin(\Pi)^2}, \tfrac{w_h^\pi(s,a)}{\epsilon^2 } \right \}    (SH + \log \tfrac{1}{\delta}  )  +  C_1 \Big ),
\end{align*}
}
for $C_1 = \poly \big ( S, A, H, \frac{1}{\min_h \min_s \sup_\pi w_h^\pi(s)}, \log \frac{1}{\delta}, \log \frac{1}{\epsilon} \big )$ and $\Pi$ the set of all deterministic policies. 
\end{cor}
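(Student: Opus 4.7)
The plan is to view the tabular MDP as a linear MDP with the standard-basis embedding and directly invoke \Cref{thm:complexity_linear}, then mechanically translate the experiment-design and policy-gap expressions into their tabular form. Setting $\bphi(s,a) = e_{(s,a)} \in \R^{SA}$, so $d = SA$, one obtains $[\bphi_{\pi,h}]_{(s,a)} = w_h^\pi(s,a)$ and $\bLambda_{\pi,h} = \diag(w_h^\pi(\cdot))$; mixtures over policies produce exactly the diagonal matrices $\diag(w_h^{\piexp})$ for mixture policies $\piexp$, so the abstract $\inf_{\bLamexp \in \bOmega_h}$ collapses to $\inf_{\piexp}$ over mixture policies. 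Taking $\Pi$ to be all deterministic Markovian policies, we have $|\Pi| = A^{SH}$, so $\log|\Pi| = \cOtil(SH)$, and $V_0^\star(\Pi) = V_0^\star$ since every finite MDP admits a deterministic optimal policy, giving $\Delmin^\Pi = \Delmin(\Pi)$ in the notation of the corollary.

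The core computation is translating a single summand $\|\bphi_{\pi,h}\|_{\bLamexp^{-1}}^2 / \max\{V_0^\star - V_0^\pi,\,\Delmin(\Pi),\,\epsilon\}^2$ into the stated form. The numerator equals $\sum_{s,a} w_h^\pi(s,a)^2/w_h^{\piexp}(s,a)$, and since $\sum_{s,a} w_h^\pi(s,a) = 1$ I would apply
\begin{align*}
\sum_{s,a} \frac{w_h^\pi(s,a)^2}{w_h^{\piexp}(s,a)} \;\le\; \max_{s,a} \frac{w_h^\pi(s,a)}{w_h^{\piexp}(s,a)}.
\end{align*}
For the denominator, the performance difference lemma yields $V_0^\star - V_0^\pi = \sum_{h',s,a} w_{h'}^\pi(s,a)\Delta_{h'}(s,a)$, and hence $V_0^\star - V_0^\pi \ge w_h^\pi(s,a)\Delta_h(s,a)$ for every individual $(s,a,h)$; together with $\max\{\cdot\}\ge \Delmin(\Pi)$ and $\max\{\cdot\}\ge \epsilon$, the ratio inside the $\max_{s,a}$ is bounded by
\begin{align*}
\frac{1}{w_h^{\piexp}(s,a)} \min\Big\{ \frac{1}{w_h^\pi(s,a)\Delta_h(s,a)^2},\; \frac{w_h^\pi(s,a)}{\Delmin(\Pi)^2},\; \frac{w_h^\pi(s,a)}{\epsilon^2} \Big\},
\end{align*}
where the freedom to pick the tightest of the three bounds separately at each $(s,a)$ is legitimate because each holds pointwise. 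Pulling $\max_{s,a}$ back out, retaining the $H^4$ factor from \Cref{thm:complexity_linear}, and summing over $h$ produces the expression in the corollary.

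The lower-order $C_1$ term is inherited from \Cref{thm:complexity_linear} by substituting $d \to SA$ and bounding $1/\lamminst$ in terms of $1/\min_h\min_s \sup_\pi w_h^\pi(s)$: if every state can be reached at step $h$ with probability at least $p$, then mixing the corresponding reaching policies and playing uniformly within each state yields a mixture policy achieving $\min_{s,a} w_h(s,a) \ge p/(SA)$, so $1/\lamminst$ is at most a polynomial in $S,A$ and $1/\min_h\min_s \sup_\pi w_h^\pi(s)$. I expect the only mild obstacle to be the accounting around the $\sum \to \max$ relaxation---specifically, confirming that the minimum-over-regimes structure of the denominator survives that step---and verifying the performance-difference identity in the precise pointwise form required; both steps are standard once written out carefully.
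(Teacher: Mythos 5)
Your proposal follows essentially the same route as the paper: the paper's proof of this corollary also reduces to the standard-basis embedding ($d=SA$, $[\bphi_{\pi,h}]_{s,a}=w_h^\pi(s,a)$, diagonal covariances), applies the bound $\sum_{s,a} w_h^\pi(s,a)^2/w_h^{\piexp}(s,a)\le\max_{s,a} w_h^\pi(s,a)/w_h^{\piexp}(s,a)$, uses the performance-difference lemma pointwise to split the denominator into the three-way minimum, and takes $\Pi$ to be all deterministic policies with $|\Pi|\le A^{SH}$ (this is the content of Lemma~\ref{lem:tab_complexity1} combined with Theorem~\ref{thm:complexity_linear}). The argument is correct, and your extra remark bounding $1/\lamminst$ via mixtures of reaching policies fills in a detail the paper leaves implicit in the statement of $C_1$.
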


For tabular MDPs, the primary comparable result on instance-dependent policy identification is that obtained by \cite{wagenmaker2021beyond}, which introduces a different measure of complexity, the \emph{gap-visitation complexity}, and an algorithm, \moca, with sample complexity scaling as the gap-visitation complexity.
The following result shows that the complexity \algname obtains on tabular MDPs and the gap-visitation complexity do not have a clear ordering.

\begin{prop}\label{prop:gap_vis_vs_pedel}
Fix any $\epsilon \in (0,1/2)$ and $S \ge \log_2(1/\epsilon)$. Then there exist tabular MDPs $\cM_1$ and $\cM_2$, each with $H=2$, $S$ states, and $\cO(S)$ actions, such that:
\begin{itemize}[leftmargin=*]
\item On $\cM_1$, the complexity bound of \algname given in \Cref{thm:complexity_linear} scales as $\poly(S,\log 1/\delta)$, while the gap-visitation complexity scales as $\Omega(1/\epsilon^2)$.
\item On $\cM_2$, the complexity bound of \algname given in \Cref{thm:complexity_linear} scales as $\Omega(1/\epsilon^2)$, while the gap-visitation complexity scales as $\poly(S,\log 1/\delta)$. 
\end{itemize}
\end{prop}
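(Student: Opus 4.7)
The plan is to prove each separation by explicit construction, in each case plugging the MDP into both the bound of \Cref{cor:tabular} and the definition of the gap-visitation complexity from \cite{wagenmaker2021beyond} and comparing directly.

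For $\cM_1$, where \algname's bound beats the gap-visitation complexity, I would build an MDP in which the only $\epsilon$-optimal policy is globally unambiguous ($\Vst_0(\Pi) - \Vpi_0 = \Omega(1)$ for every suboptimal $\pi \in \Pi$) but whose state space nonetheless contains ``distractor'' sub-trees with $\Theta(\epsilon)$ per-$(s,a)$ value-function gaps. Concretely: take $H=2$, let $s_1$ have $S$ actions where $a_1$ leads deterministically to a reward state paying $1$, while each $a_i$ for $i \ge 2$ leads to a distinct state $s_i'$ whose $\cO(1)$ actions yield rewards in $[0,\epsilon/4]$ with all pairwise value-function gaps equal to $\Theta(\epsilon)$. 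Then $\Vst_0(\Pi)=1$, every non-optimal policy has value at most $\epsilon/4$, so $\Delmin^\Pi \ge 1 - \epsilon/4$, and the denominator $\max\{\Vst_0(\Pi)-\Vpi_0, \Delmin^\Pi, \epsilon\}^2$ in \Cref{cor:tabular} is $\Omega(1)$ for \emph{every} $\pi$. The inner experiment-design numerator at step $h$ is $\cO(S)$ under uniform exploration, and the visitation lower bound $\min_h\min_s\sup_\pi w_h^\pi(s) = 1$, so the whole bound collapses to $\poly(S,\log(1/\delta))$. The gap-visitation complexity, being a sum over $(s,a,h)$ triples weighted by $1/(\Delta_h(s,a)\vee\epsilon)^2$, is instead forced to pay $\Omega(1/\epsilon^2)$: the $S-1$ distractor branches each contain $(s,a)$ pairs with $\Delta_h(s,a)\le\epsilon$, all reachable and hence contributing to the sum.

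For $\cM_2$, where the gap-visitation complexity beats \algname's bound, the plan is to flip the situation and design an MDP with $\Omega(1)$ per-$(s,a)$ value-function gaps but many policies whose \emph{total} values lie within $\cO(\epsilon)$ of $\Vst_0$. Since the denominator in \Cref{cor:tabular} is driven by $\max\{\Vst_0(\Pi)-\Vpi_0, \Delmin^\Pi, \epsilon\}$, the existence of a near-optimal policy with $\Vst_0(\Pi)-\Vpi_0 = \cO(\epsilon)$, together with $\Delmin^\Pi = \cO(\epsilon)$, collapses the denominator to $\epsilon^2$ for the worst $\pi$ in the outer max, so \algname's bound inherits a $1/\epsilon^2$ factor. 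The hypothesis $S \ge \log_2(1/\epsilon)$ is used to encode roughly $1/\epsilon$ such near-optimal policies as $\log_2(1/\epsilon)$ independent binary sub-decisions distributed over $\log_2(1/\epsilon)$ of the $S$ states, each sub-decision having $\Omega(1)$ individual $\Delta_h(s,a)$ but contributing only $\cO(\epsilon/\log(1/\epsilon))$ to the overall policy value. The gap-visitation complexity then sums only over the $\cO(S)$ reachable $(s,a,h)$ triples and sees $\Omega(1)$ gaps at every term, evaluating to $\poly(S,\log(1/\delta))$.

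The routine part of both computations is evaluating the $G$-optimal experiment design $\inf_{\piexp}\max_{\pi,s,a}$ that controls the numerator of \Cref{cor:tabular}: on both constructions this reduces to a discrete design over uniformly-distributed feature directions with a closed-form optimum. The main obstacle will be matching the precise form of the gap-visitation complexity from \cite{wagenmaker2021beyond}, and in particular verifying (i) for $\cM_1$ that every distractor branch actually contributes to the gap-visitation sum (they do, since each such state is reached with probability $1$ by some deterministic policy and therefore has positive optimal visitation), and (ii) for $\cM_2$ that the gap-visitation complexity genuinely does not pay for the shrunken policy gap $\Delmin^\Pi$ but only for the $\Omega(1)$ per-$(s,a)$ gaps.
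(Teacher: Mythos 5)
Your proposal is correct and follows essentially the same route as the paper: $\cM_1$ is built so that every suboptimal deterministic policy has an $\Omega(1)$ policy gap while fully reachable distractor states carry $\Theta(\epsilon)$ value-function gaps (the paper's version is a chain MDP with a rewarding self-loop at $s_1$ and $r_h(s_i,a_1)=\epsilon$ elsewhere), and $\cM_2$ inverts this with $\Omega(1)$ value-function gaps in rarely visited states so that $\Delmin^{\Pi} \le \cO(\epsilon)$ drives the denominator in \Cref{cor:tabular} down to $\epsilon^2$ while the experiment-design numerator stays bounded below. The only divergence is that for $\cM_2$ the paper simply reuses Instance Class 5.1 of \cite{wagenmaker2021beyond} and cites their Proposition 6 for the $\poly(S)$ gap-visitation bound rather than constructing and verifying a fresh instance, which sidesteps the one genuinely delicate step you flag yourself: matching the exact form of the gap-visitation complexity, in particular its treatment of zero-gap (optimal) actions and the fact that your low-visitation sub-decision states are only cheap for \moca because of the $W_h(s)^2/\epsilon^2$ branch of the $\min$.
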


The lack of ordering between the two complexity measures arises because, on some problem instances, it is easier to learn in policy-space (as \algname does), while on other instances, it is easier to learn near-optimal actions on individual states directly, and then synthesize these actions into a near-optimal policy (the approach \moca takes). \iftoggle{arxiv}{This difference arises because, in the former instance, the minimum \emph{policy gap} is large ($\Vst_0 - \Vpi_0 = \Omega(1)$ for every deterministic policy $\pi \neq \pist$), while in the latter instance, the minimum policy gap is small, but all value-function gaps are large, satisfying $\Delta_h(s,a) = \Omega(1)$ for all $a \neq \argmax_{a \in \cA} \Qst_h(s,a)$ and all $s$ and $h$. Thus, on the former instance, it is much easier to learn over the space of policies, while on the latter it is much easier to learn optimal actions in individual states\footnote{We remark that, in \Cref{prop:gap_vis_vs_pedel}, $\cM_1$ is a deterministic MDP, which \algname takes advantage of to obtain a tighter complexity. It is unclear if a similar result can be shown to hold for $\cM_1$ a stochastic MDP.}.}{
{\nipscrnew{This difference arises because, in the former instance, the minimum \emph{policy gap} is large ($\Vst_0 - \Vpi_0 = \Omega(1)$ for every policy $\pi \neq \pist$), while in the latter instance, the minimum policy gap is small, but all value-function gaps are large, satisfying $\Delta_h(s,a) = \Omega(1)$ for all $a \neq \argmax_{a \in \cA} \Qst_h(s,a)$ and all $s$ and $h$. Thus, on the former instance, it is much easier to learn over the space of policies, while on the latter it is much easier to learn optimal actions in individual states.}}
} Resolving this discrepancy with an algorithm able to achieve the ``best-of-both-worlds'' is an interesting direction for future work.

\paragraph{Deterministic MDPs.} Finally, we turn to the simplified setting of tabular, deterministic MDPs. Here, for each $(s,a,h)$, there exists some $s'$ such that $P_h(s'|s,a) = 1$. We still allow the rewards to be random, however, so the agent must still learn in order to find a near-optimal policy.
 Following the same notation as the recent work of \cite{tirinzoni2022near}, let $\Pi_{sah} = \{ \pi \text{ deterministic}  \ : \ s_h^\pi = s, a_h^\pi = a \}$, where $s_h^\pi$ and $a_h^\pi$ are the state and action policy $\pi$ will be in at step $h$ (note that these quantities are well-defined quantities for deterministic policies). Also define the \emph{deterministic return gap} as $\Delbar_h(s,a) := \Vst_0 - \max_{\pi \in \Pi_{sah}} V_0^\pi$, and let $\Delbarmin := \min_{s,a,h : \Delbar_h(s,a) > 0} \Delbar_h(s,a)$ in the case when there exists a unique optimal deterministic policy, and $\Delbarmin := 0$ otherwise. We obtain the following.

\begin{cor}\label{cor:deterministic_mdps}
In the setting of tabular, deterministic MDPs, \algname outputs an $\epsilon$-optimal policy with probability at least $1-\delta$, and has sample complexity bounded as
\begin{align*}
\cOtil \Bigg ( H^4 &  \cdot \sum_{h=1}^H \sum_{s,a} \frac{1}{\max \{ \Delbar_h(s,a), \Delbarmin, \epsilon \}^2}  \cdot  (H + \log \tfrac{1}{\delta} )  +  \poly \left ( S, A, H, \log \tfrac{1}{\delta}, \log \tfrac{1}{\epsilon} \right ) \Bigg ).
\end{align*}
\end{cor}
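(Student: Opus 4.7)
The plan is to apply Theorem \ref{thm:complexity_linear} directly to the canonical tabular embedding of the deterministic MDP: take $d = SA$ and $\bphi(s,a) = \be_{(s,a)}$, and let $\Pi$ be a minimal set of deterministic policies enumerating the distinct $H$-step action sequences realizable from $s_1$. Since transitions are deterministic, every deterministic policy induces a single trajectory of length $H$, and there are at most $A^H$ such trajectories, giving $\log|\Pi| \le H \log A = \cOtil(H)$; this yields the $(H + \log \tfrac{1}{\delta})$ factor promised by the statement. The set $\Pi$ contains an optimal deterministic policy, so $\Vst_0(\Pi) = \Vst_0$.

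Next I simplify the per-step design quantity appearing in Theorem \ref{thm:complexity_linear}. Because $\pi$ and the dynamics are both deterministic, $\bphi_{\pi,h} = \be_{(s_h^\pi,a_h^\pi)}$; for any mixture $\omega$ over $\Pi$ the realizable covariance $\bLamexp = \Exp_{\pi \sim \omega}[\bphi_{\pi,h} \bphi_{\pi,h}^\top] \in \bOmega_h$ is diagonal with entries $w_h^\omega(s,a) := \Pr_{\pi \sim \omega}[(s_h^\pi,a_h^\pi)=(s,a)]$, so $\|\bphi_{\pi,h}\|_{\bLamexp^{-1}}^2 = 1/w_h^\omega(s_h^\pi,a_h^\pi)$. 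Grouping the max over $\pi \in \Pi$ by its unique step-$h$ state-action pair and noting $\min_{\pi \in \Pi_{sah}}(\Vst_0 - V_0^\pi) = \Delbar_h(s,a)$ by the definition of the deterministic return gap, I obtain
\[
\inf_{\bLamexp \in \bOmega_h}\max_{\pi \in \Pi}\frac{\|\bphi_{\pi,h}\|_{\bLamexp^{-1}}^2}{\max\{\Vst_0-V_0^\pi, \Delmin^\Pi, \epsilon\}^2} \;=\; \inf_{\omega}\max_{(s,a)}\frac{1/w_h^\omega(s,a)}{\max\{\Delbar_h(s,a), \Delmin^\Pi, \epsilon\}^2},
\]
where the outer $\max$ ranges over $(s,a)$ reachable at step $h$ and $\sum_{(s,a)} w_h^\omega(s,a) = 1$. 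A standard water-filling argument (minimizing $\max_{(s,a)} c_{s,a}/w(s,a)$ over the probability simplex is solved by $w(s,a) \propto c_{s,a}$, yielding objective value $\sum_{(s,a)} c_{s,a}$) gives the closed-form bound $\sum_{(s,a)} 1/\max\{\Delbar_h(s,a), \Delmin^\Pi, \epsilon\}^2$, which can be extended to all $(s,a)$ via the convention $\Delbar_h(s,a)=\infty$ when $\Pi_{sah}=\emptyset$.

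The last step is to identify $\Delmin^\Pi$ with $\Delbarmin$. For any non-optimal $\pi \in \Pi$ there is at least one step $h$ with $\Delbar_h(s_h^\pi,a_h^\pi) > 0$ (else $\pi$ would duplicate an optimal deterministic trajectory), so $\Vst_0 - V_0^\pi \ge \Delbar_h(s_h^\pi,a_h^\pi) \ge \Delbarmin$, giving $\Delmin^\Pi \ge \Delbarmin$; conversely, any $(s,a,h)$ realizing $\Delbarmin$ supplies a policy in $\Pi$ whose gap equals $\Delbarmin$, so $\Delmin^\Pi \le \Delbarmin$. When the optimal deterministic policy is not unique, $\Delbarmin = 0$ and the $\epsilon$-term in the max trivially dominates. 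Plugging the bound on the design value back into Theorem \ref{thm:complexity_linear} with $\log|\Pi| = \cOtil(H)$ yields the stated complexity.

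I expect the main technical obstacle to be the handling of Assumption \ref{asm:full_rank_cov} under the tabular embedding: unreachable $(s,a,h)$ triples render $\lamminst = 0$ for the naive choice $\bphi(s,a) = \be_{(s,a)}$, so one must either restrict $\bphi$ to the span of reachable features or pre-process to identify this subspace. Either option only affects the lower-order $C_1$ term in Theorem \ref{thm:complexity_linear}, which is absorbed into the additive $\poly(S,A,H,\log \tfrac{1}{\delta}, \log \tfrac{1}{\epsilon})$ factor in the final bound.
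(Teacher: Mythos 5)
Your proposal is correct and follows essentially the same route as the paper: the paper's Lemma \ref{lem:deterministic_complexity} performs exactly your diagonalization of $\bLambda_{\piexp,h}$, the grouping of the max over $\pi$ via $\Pi = \cup_{s,a}\Pi_{sah}$, and the same design distribution $\lambda_{\pi^{sa}} \propto \max\{\Delbar_h(s,a),\epsilon\}^{-2}$, while the corollary's proof uses the identical $|\Pi| = A^H$ policy class before invoking \Cref{thm:complexity_linear}. Your explicit identification of $\Delmin^\Pi$ with $\Delbarmin$ and your remark about \Cref{asm:full_rank_cov} under the tabular embedding are points the paper leaves implicit, but they do not change the argument.
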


Up to $H$ and $\log$ factors and lower-order terms, the rate given in \Cref{cor:deterministic_mdps} matches the instance-dependent lower bound given in \cite{tirinzoni2022near}\footnote{The lower bound of \cite{tirinzoni2022near} depends on a slightly different (but nearly equivalent) minimum gap term, $\Delbarmin^h$. Similar to our upper bound, the upper bound of \cite{tirinzoni2022near} scales with $\Delbarmin$ instead of $\Delbarmin^h$. We offer a more in-depth discussion of this point in \Cref{sec:interpret_complexity}. }. Thus, we conclude that, in the setting of tabular, deterministic MDPs, \algname is (nearly) instance-optimal. While \cite{tirinzoni2022near} also obtain instance-optimality in this setting, their algorithm and analysis are specialized to tabular, deterministic MDPs---in contrast, \algname requires no modification from its standard operation.

\section{Online Experiment Design in Linear MDPs}\label{sec:online_exp_design}
As described in \Cref{sec:main_result}, to reduce our uncertainty and explore in a way that only targets the relevant feature directions, we must solve an $\mathsf{XY}$-experiment design problem of the form:
\iftoggle{arxiv}{\begin{align}\label{eq:exp_design2}
 \inf_{\bLamexp \in \bOmega_h} \max_{\bphi \in \Phi} \| \bphi \|_{\bLamexp^{-1}}^2 ,
\end{align}
where here $\Phi$ will be some set of estimated feature-visitations. Recall that $\bOmega_h$ denotes the set of covariance matrices realizable on our MDP, and therefore without knowledge of the MDP dynamics we cannot specify this set. If follows that it is not in general possible to solve \eqref{eq:exp_design2} without knowledge of the MDP dynamics. In this section we describe our approach to solving \eqref{eq:exp_design2} without this knowledge by relying on a low-regret algorithm as an optimization primitive.}{
\begin{align}\label{eq:exp_design2}
 \inf_{\bLamexp \in \bOmega_h} \max_{\bphi \in \Phi} \| \bphi \|_{\bLamexp^{-1}}^2 .
\end{align}
This is not, in general, possible to solve without knowledge of the MDP dynamics. In this section we describe our approach to solving \eqref{eq:exp_design2} without knowledge of the MDP dynamics by relying on a low-regret algorithm as an optimization primitive. }

\paragraph{Approximating Frank-Wolfe via Regret Minimization.}
Given knowledge of the MDP dynamics, we could compute $\bOmega_h$ directly, and apply the celebrated Frank-Wolfe coordinate-descent algorithm \citep{frank1956algorithm} to solve \eqref{eq:exp_design2}. In this setting the Frank-Wolfe update for \eqref{eq:exp_design2} is:
\begin{align}\label{eq:fw_update}
\textstyle \bGamma_t = \argmin_{\bGamma \in \bOmega_h} \  \inner{\nabla_{\bLambda} ( \max_{\bphi \in \Phi} \| \bphi \|_{\bLambda^{-1}}^2) |_{\bLambda = \bLambda_t} }{\bGamma}, \quad \bLambda_{t+1} = (1 - \gamma_t) \bLambda_t + \gamma_t \bGamma_t
\end{align}
for step size $\gamma_t$. Standard Frank-Wolfe analysis shows that this update converges to a near-optimal solution to \eqref{eq:exp_design2} at a polynomial rate. However, without knowledge of $\bOmega_h$, we are unable to solve for $\bGamma_t$ and run the Frank-Wolfe update.

Our critical observation is that the minimization over $\bOmega_h$ in \eqref{eq:fw_update} can be approximated without knowledge of $\bOmega_h$ by running a low-regret algorithm on a particular objective. Some calculation shows that (except on a measure-zero set, assuming $\Phi$ is finite) $\nabla_{\bLambda} ( \max_{\bphi \in \Phi} \| \bphi \|_{\bLambda^{-1}}^2) |_{\bLambda = \bLambda_t} = -\bLambda_t^{-1} \bphitil_t \bphitil_t^\top \bLambda_t^{-1}$ for $\bphitil_t = \argmax_{\bphi \in \Phi} \| \bphi \|_{\bLambda_t^{-1}}^2$. If $\bGamma = \bLambda_{\pi,h} = \Exp_\pi [ \bphi_h \bphi_h^\top]$ for some $\pi$, we have
\begin{align*}
\textstyle \inner{\nabla_{\bLambda} ( \max_{\bphi \in \Phi} \| \bphi \|_{\bLambda^{-1}}^2) |_{\bLambda = \bLambda_t} }{\bGamma} = -\tr( \bLambda_t^{-1} \bphitil_t \bphitil_t^\top \bLambda_t^{-1} \bLambda_{\pi,h}) = -\Exp_\pi[ (\bphi_h^\top \bLambda_t^{-1} \bphitil_t)^2].
\end{align*}
Now, if we run a low-regret algorithm on the (deterministic) reward $\nur_h^t(s,a) = (\bphi(s,a)^\top \bLambda_t^{-1} \bphitil_t)^2$ for a sufficiently large number of episodes $K$, we will be guaranteed to collect reward at a rate close to that of the optimal policy, which implies we will collect some data $\{ \bphi_{h,\tau} \}_{\tau = 1}^K$ such that
\iftoggle{arxiv}{
\begin{align}\label{eq:fw_approx_update}
K^{-1} \cdot \bphitil_t^\top \bGamhat_K \bphitil_t := K^{-1} \sum_{\tau=1}^K (\bphi_{h,\tau}^\top \bLambda_t^{-1} \bphitil_t)^2 \approx \sup_\pi \Exp_\pi[(\bphi_h^\top \bLambda_t^{-1} \bphitil_t)^2].
\end{align}
}{
\begin{align}\label{eq:fw_approx_update}
\textstyle K^{-1} \cdot \bphitil_t^\top \bGamhat_K \bphitil_t := K^{-1} \tsum_{\tau=1}^K (\bphi_{h,\tau}^\top \bLambda_t^{-1} \bphitil_t)^2 \approx \sup_\pi \Exp_\pi[(\bphi_h^\top \bLambda_t^{-1} \bphitil_t)^2].
\end{align}
}
However, this implies the covariates we have collected, $ \bGamhat_K$, approximately minimize \eqref{eq:fw_update}. 
In other words, running a low-regret algorithm on $\nur_h^t$ allows us to obtain covariates which approximate the Frank-Wolfe update---without knowledge of $\bOmega_h$, we can solve the Frank-Wolfe update by running a low-regret algorithm, and therefore solve \eqref{eq:exp_design2}. This motivates \Cref{alg:online_fw_body}.
\begin{algorithm}[h]
\begin{algorithmic}[1]
\State \textbf{input:} uncertain feature directions $\Phi$, step $h$, regularization $\bLambda_0 \succ 0$
\State $K_0 \leftarrow$ sufficiently large number of episodes to guarantee \eqref{eq:fw_approx_update} holds
\State Run any policy for $K_0$ episodes, collect data $\{ \bphi_{h,\tau} \}_{\tau=1}^{K_0}$, set $\bLambda_1 \leftarrow K_0^{-1} \sum_{\tau=1}^{K_0} \bphi_{h,\tau} \bphi_{h,\tau}^\top$
\For{$t = 1,\ldots,T-1$}
	\State $\bphitil_t \leftarrow \argmax_{\bphi \in \Phi} \| \bphi \|_{(\bLambda_t + \bLambda_0)^{-1}}^2$, $\nur_h^t(s,a) \leftarrow (\bphi(s,a)^\top (\bLambda_t + \bLambda_0)^{-1} \bphitil_t)^2$
	\State Run low-regret algorithm on $\nur_h^t$ for $K_0$ episodes, collect covariates $ \bGamhat_{K_0}^t$
	\State Set $\bLambda_{t+1} \leftarrow (1 - \gamma_t) \bLambda_t + \gamma_t K_0^{-1} \bGamhat_{K_0}^t$ for $\gamma_t = \frac{1}{t+1}$
\EndFor
\State \textbf{return:} covariates $T K_0 \bLambda_{T} = \sum_{t=1}^{T-1} \bGamhat_{K_0}^t + \bLambda_1$
\end{algorithmic}
\caption{Online Frank-Wolfe via Regret Minimization (informal)}
\label{alg:online_fw_body}
\end{algorithm}

\begin{thm}[informal]\label{cor:g_opt_exp_design}
Consider running \Cref{alg:online_fw_body} with some $\bLambda_0 \succ 0$. Then with properly chosen settings of $K_0$ and $T$, we can guarantee that, with probability at least $1-\delta$, we will run for at most
\begin{align*}
N \le 20  \cdot  \frac{  \inf_{\bLamexp \in \bOmega_h} \max_{\bphi \in \Phi} \| \bphi \|_{(\bLamexp + \bLambda_0)^{-1}}^2}{\epsexp} + \poly \left ( d, H, \| \bLambda_0^{-1} \|_\op, \log |\Phi|, \log 1/\delta \right )
\end{align*}
episodes, and return covariance $\bLamhat_N$ satisfying $\max_{\bphi \in \Phi} \| \bphi \|_{(\bLamhat_N + N \bLambda_0)^{-1}}^2 \le \epsexp$.
\end{thm}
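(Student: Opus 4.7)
The plan is to recognize Algorithm~\ref{alg:online_fw_body} as an inexact Frank--Wolfe method applied to the convex objective
\begin{align*}
f(\bLambda) := \max_{\bphi \in \Phi} \|\bphi\|_{(\bLambda + \bLambda_0)^{-1}}^2
\end{align*}
over the convex set $\bOmega_h$ of realizable average covariances, and then to reinterpret the output $T K_0 \bLambda_T = \bLamhat_N$ as the empirical covariance from $N = TK_0$ episodes. There are two error sources to control: the inexactness of the Frank--Wolfe linear minimization oracle (LMO), caused by replacing exact minimization over $\bOmega_h$ with a no-regret RL subroutine, and the standard $1/T$ Frank--Wolfe suboptimality.

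For the LMO, Danskin's theorem gives $\nabla f(\bLambda_t) = -(\bLambda_t+\bLambda_0)^{-1}\bphitil_t\bphitil_t^\top(\bLambda_t+\bLambda_0)^{-1}$ at differentiable points, so for any $\bLambda_{\pi,h} \in \bOmega_h$,
\begin{align*}
\inner{\nabla f(\bLambda_t)}{\bLambda_{\pi,h}} = -\Exp_\pi\!\bigl[(\bphi(s_h,a_h)^\top(\bLambda_t+\bLambda_0)^{-1}\bphitil_t)^2\bigr] = -\Exp_\pi[\nur_h^t(s_h,a_h)].
\end{align*}
Minimizing $\inner{\nabla f(\bLambda_t)}{\bGamma}$ over $\bGamma \in \bOmega_h$ is therefore equivalent to maximizing $\Exp_\pi[\nur_h^t]$ over policies. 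Since $|\nur_h^t| \le \|\bLambda_0^{-1}\|_{\op}^2$, running the low-regret subroutine on reward $\nur_h^t$ for $K_0$ episodes produces an empirical covariance $K_0^{-1}\bGamhat_{K_0}^t$ that serves as an $\varepsilon_t$-approximate LMO solution, with $\varepsilon_t = \cOtil(\|\bLambda_0^{-1}\|_{\op}^2 (K_0^{\alpha-1} + K_0^{-1/2}))$ with probability $1-\delta/T$; this follows from the regret guarantee of Definition~\ref{def:low_regret} combined with a Freedman-type concentration for the martingale $(\bphi_{h,\tau}^\top(\bLambda_t+\bLambda_0)^{-1}\bphitil_t)^2 - \Exp_\pi[\cdot]$.

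Next, since $\bLambda_0 \succ 0$ lower-bounds the spectrum of $\bLambda_t + \bLambda_0$ uniformly along iterates, the curvature constant of $f$ is bounded as $C_f \le \poly(\|\bLambda_0^{-1}\|_{\op})$. Invoking the inexact Frank--Wolfe convergence guarantee (e.g.\ Jaggi-style) with step size $\gamma_t = 1/(t+1)$ and a union bound over $t \le T$ yields
\begin{align*}
f(\bLambda_T) - f^* \;\le\; \cO(C_f \log T / T) + \max_{t\le T}\varepsilon_t, \qquad f^* := \inf_{\bLamexp \in \bOmega_h}\max_{\bphi\in\Phi}\|\bphi\|_{(\bLamexp+\bLambda_0)^{-1}}^2.
\end{align*}
Unrolling the update $\bLambda_{t+1} = (1-\gamma_t)\bLambda_t + \gamma_t K_0^{-1}\bGamhat_{K_0}^t$ with $\gamma_t=1/(t+1)$ gives $T K_0 \bLambda_T = \bLamhat_N$ for $N=TK_0$, so
\begin{align*}
\max_{\bphi \in \Phi}\|\bphi\|_{(\bLamhat_N + N\bLambda_0)^{-1}}^2 \;=\; N^{-1} f(\bLambda_T) \;\le\; \frac{f^* + \cO(C_f\log T/T) + \max_t \varepsilon_t}{N}.
\end{align*}
Taking $K_0 = \poly(d,H,\|\bLambda_0^{-1}\|_{\op},\log|\Phi|,\log(1/\delta))$ large enough that $\max_t \varepsilon_t$ is an absolute constant, and $T$ large enough that $C_f\log T/T$ is also an absolute constant (both independent of $\epsexp$), we get $f(\bLambda_T) \le c \cdot f^* + \poly$ for some $c < 20$. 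Stopping when $N \ge 20 f^*/\epsexp$ then forces $N^{-1}f(\bLambda_T) \le \epsexp$, yielding the claimed accuracy and episode count.

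The main obstacle will be calibrating $K_0$ and $T$ so that the lower-order polynomial term is genuinely independent of $1/\epsexp$: $\varepsilon_t$ must shrink in $K_0$ fast enough (hence the need for a regret exponent $\alpha < 1$) to absorb $K_0$ into the additive polynomial term rather than multiplying the $1/\epsexp$ term, and the $C_f \log T/T$ piece must be controlled without letting $T$ depend on $\epsexp$. A secondary subtlety is the nonsmoothness of $f$ when the argmax $\bphitil_t$ is non-unique, but Danskin's theorem still produces a subgradient of the same rank-one form, so the reward-based LMO construction goes through with a standard tie-breaking argument.
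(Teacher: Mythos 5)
Your overall architecture matches the paper's: interpret Algorithm~\ref{alg:online_fw_body} as inexact Frank--Wolfe over $\bOmega_h$, use the regret guarantee plus martingale concentration to certify an $\varepsilon_t$-approximate linear minimization oracle, unroll the $\gamma_t = 1/(t+1)$ updates to identify $TK_0\bLambda_T$ with the empirical covariance, and tune $K_0,T$ so the oracle and optimization errors are absorbed into an additive polynomial term. However, there is a genuine gap at the heart of the convergence argument: you run Frank--Wolfe directly on the non-smooth objective $f(\bLambda) = \max_{\bphi\in\Phi}\|\bphi\|_{(\bLambda+\bLambda_0)^{-1}}^2$ and invoke a Jaggi-style $\cO(C_f\log T/T)$ rate with "$C_f \le \poly(\|\bLambda_0^{-1}\|_\op)$". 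A finite maximum of smooth quadratic forms is convex but not differentiable, its curvature constant is not finite, and Frank--Wolfe driven by Danskin subgradients does not enjoy an $\cO(1/T)$ rate --- it can fail to converge to the optimum at all when the argmax switches between iterates. Your closing remark that a "standard tie-breaking argument" handles non-uniqueness of $\bphitil_t$ addresses only which subgradient to feed the LMO, not the loss of the descent inequality $f(\bx_{s+1}) - f(\bx_s) \le \gamma_s\nabla f(\bx_s)^\top(\by_s-\bx_s) + \tfrac{\beta}{2}\gamma_s^2R^2$ on which the whole telescoping argument rests. The paper avoids this by replacing the max with the soft-max $\Gopts(\bLambda) = \tfrac{1}{\eta}\log\bigl(\sum_{\bphi\in\Phi}e^{\eta\|\bphi\|_{(\bLambda+\bLambda_0)^{-1}}^2}\bigr)$, choosing $\eta$ on the order of $(1+\|\bLambda_0\|_\op)\log|\Phi|/\gamphi$ so that the additive smoothing error $\log|\Phi|/\eta$ is at most half the minimum objective value (Lemmas~\ref{lem:xy_approx_error}, \ref{lem:xy_minimum_val}, \ref{lem:nst_smooth_to_orig}), and then proving the smoothed objective satisfies Definition~\ref{def:smooth_exp_des_fun} with $\beta = 2\|\bLambda_0^{-1}\|_\op^3(1+\eta\|\bLambda_0^{-1}\|_\op)$ (Lemma~\ref{lem:smooth_gopt_fun}). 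This smoothing step is not cosmetic: it is where the $\log|\Phi|$ dependence in the additive term comes from, and without it (or some substitute such as a minimax/saddle-point reformulation) the $\cO(\log T/T)$ convergence claim is unsupported.

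A second, smaller issue is your stopping rule "stop when $N \ge 20 f^\star/\epsexp$": the quantity $f^\star = \inf_{\bLamexp\in\bOmega_h}\max_{\bphi\in\Phi}\|\bphi\|_{(\bLamexp+\bLambda_0)^{-1}}^2$ depends on the unknown transition dynamics through $\bOmega_h$, so $T$ and the stopping time cannot be "properly chosen" as functions of it. The paper instead wraps \fwregret in a doubling scheme (\optcov) and terminates on the data-dependent certificate $f_i(\bLamhat) \le K_iT_i\epsilon$ together with $f_i(\bLamhat) \ge \beta_iR^2(\log T_i+3)/T_i$; the latter condition, combined with the Frank--Wolfe suboptimality bound, certifies $f_i(\bLamhat) \le 2\inf_{\bLambda\in\bOmega}f_i(\bLambda)$ without knowledge of $f^\star$, and the geometric growth of $T_iK_i$ yields the constant-factor overhead in the $1/\epsexp$ term. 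Your oracle-error analysis and the identity $\bLamhat_N = TK_0\bLambda_T$ are correct and mirror Lemmas~\ref{lem:regret_fw} and~\ref{lem:fw_final_iterate}, but the proof as written does not go through without the smoothing and an implementable termination rule.
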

Note that this rate is essentially optimal, up to constants and lower-order terms. If we let $\omega_{\mathrm{exp}}^\star$ denote the distribution over policies which minimize \eqref{eq:exp_design2}, then to collect covariance $\bLamhat_N$ such that $\max_{\bphi \in \Phi} \| \bphi \|_{(\bLamhat_N + N \bLambda_0)^{-1}}^2 \le \epsexp$,
in expectation, we would need to play $\pi \sim \omega_{\mathrm{exp}}^\star$ for at least
\begin{align*}
\inf_{\bLamexp \in \bOmega_h} \max_{\bphi \in \Phi} \| \bphi \|_{(\bLamexp + \bLambda_0)^{-1}}^2 \cdot \epsexp^{-1} 
\end{align*}
episodes, which is the same scaling as obtained in \Cref{cor:g_opt_exp_design}.

In practice, we instead run \Cref{alg:online_fw_body} on a smoothed version of the objective in \eqref{eq:exp_design2}. We provide a full definition of \Cref{alg:online_fw_body} with exact setting of $T$ and $K_0$ in \Cref{sec:fw}.
\iftoggle{arxiv}{}{\Cref{cor:g_opt_exp_design} is itself a corollary of a more general result, \Cref{thm:regret_data_fw}, given in \Cref{sec:fw}, which shows our Frank-Wolfe procedure can be applied to minimize any smooth
 experiment-design objective---for example, collecting covariates which optimally maximize the minimum eigenvalue, $\mathsf{E}$-optimal design. 
 }

\iftoggle{arxiv}{
\subsection{Experiment Design in MDPs with General Objective Functions}
While the experiment design in \eqref{eq:exp_design2} is the natural design if our goal is to identify a near-optimal policy, in general we may be interested in collecting data to minimize some other objective; that is, solving an experiment design of the form:
\begin{align*}
\inf_{\bLamexp \in \bOmega_h} f(\bLamexp)
\end{align*}
for some function $f$ defined over the space of PSD matrices. For example, we could take $f(\bLamexp) = \| \bLamexp^{-1} \|_\op = \frac{1}{\lammin(\bLamexp)}$, and the above experiment design would correspond to maximizing the minimum eigenvalue of the collected covariates, or $\mathsf{E}$-optimal design \citep{pukelsheim2006optimal}.

Motivated by this, in \Cref{sec:fw} we generalize \Cref{cor:g_opt_exp_design} and \Cref{alg:online_fw_body} to handle a much broader class of experiment design problems. In particular, we consider all \emph{smooth experiment design objectives}, which we define as follows.

\begin{defn}[Smooth Experiment Design Objectives]\label{def:smooth_exp_des_fun}
We say that $f(\bLambda) : \psdmat^d \rightarrow \R$ is a \emph{smooth experiment design objective} if it satisfies the following conditions:
\begin{itemize}
\item $f$ is convex, differentiable, and $\beta$ smooth in the norm $\| \cdot \|$: $\| \nabla f(\bLambda) - \nabla f(\bLambda') \|_* \le \beta \| \bLambda - \bLambda' \|$. 
\item $f$ is $L$-lipschitz in the operator norm: $| f(\bLambda) - f(\bLambda') | \le L \| \bLambda - \bLambda' \|_\op$.
\item Let $\Xi_{\bLambda_0} := -\nabla_{\bLambda} f(\bLambda) |_{\bLambda = \bLambda_0}$. Then $\Xi_{\bLambda_0} \succeq 0$ and $\tr( \Xi_{\bLambda_0}) \le M$ for all $\bLambda_0 \succeq 0$ satisfying $\| \bLambda_0 \|_\op \le 1$.
\end{itemize}
\end{defn}

Our generalization of \Cref{alg:online_fw_body} to handle all smooth experiment design objectives---\optcov, defined in \Cref{sec:optcov}---enjoys the following guarantee.

\begin{thm}\label{cor:Nst_opt_force_body}
Fix $h \in [H]$, consider some $f$ satisfying \Cref{def:smooth_exp_des_fun}, and let $\fmin$ be some value such that $\inf_{\bLamexp \in \bOmega_h} f(\bLamexp) \ge \fmin$. Then with probability at least $1-\delta$, given any $\epsilon > 0$, \optcov runs for at most
\begin{align*}
N \le 5 \cdot \frac{\inf_{\bLamexp \in \bOmega_h} f(\bLamexp)}{\epsilon} + \poly \left ( d,H, M, \beta, L, \fmin^{-1}, \log 1/\delta \right )
\end{align*}
episodes, and collects covariates $\bSighat_N = \sum_{\tau = 1}^N \bphi_{h,\tau} \bphi_{h,\tau}^\top$ such that
\begin{align*}
f(N^{-1} \bSighat_N) \le N \epsilon.
\end{align*}
\end{thm}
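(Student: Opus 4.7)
The plan is to apply an approximate Frank--Wolfe (FW) scheme to the convex program $\min_{\bLambda\in\bOmega_h} f(\bLambda)$, using a low-regret RL algorithm to implement each linear minimization oracle (LMO) call. At iteration $t$, the FW direction requires $\bGamma_t\in\argmax_{\bGamma\in\bOmega_h}\langle \Xi_{\bLambda_t},\bGamma\rangle$ where $\Xi_{\bLambda_t}=-\nabla f(\bLambda_t)\succeq 0$. Writing $\bGamma=\bLambda_{\pi,h}=\Exp_\pi[\bphi(s_h,a_h)\bphi(s_h,a_h)^\top]$ turns this into the policy-optimization problem $\sup_\pi \Exp_\pi\bigl[\bphi(s_h,a_h)^\top \Xi_{\bLambda_t} \bphi(s_h,a_h)\bigr]$. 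The trace bound $\tr(\Xi_{\bLambda_t})\leq M$ from \Cref{def:smooth_exp_des_fun}, together with $\|\bphi\|_2\leq 1$, ensures the per-step reward $r_h^t(s,a):=\bphi(s,a)^\top \Xi_{\bLambda_t} \bphi(s,a)$ lies in $[0,M]$, so after rescaling to $[0,1]$ any algorithm satisfying \Cref{def:low_regret} can be invoked as a black box on $r_h^t/M$. This is the direct generalization of the construction underlying \Cref{cor:g_opt_exp_design}: the PSDness and trace bound built into \Cref{def:smooth_exp_des_fun} are precisely what is required to encode the LMO as a bounded MDP reward for a general smooth objective.

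After running the low-regret learner on $r_h^t/M$ for $K_0$ episodes, the empirical covariance $\bGamhat^t_{K_0}=\sum_{\tau=1}^{K_0}\bphi_{h,\tau}\bphi_{h,\tau}^\top$ satisfies, with high probability,
\begin{align*}
\langle \Xi_{\bLambda_t}, K_0^{-1}\bGamhat^t_{K_0}\rangle \;\geq\; \sup_{\bGamma\in\bOmega_h}\langle \Xi_{\bLambda_t},\bGamma\rangle \;-\; \epsilon_{\mathrm{LMO}},
\end{align*}
with $\epsilon_{\mathrm{LMO}}=O(M\cC_1 K_0^{\alpha-1})+O(M\sqrt{\log(1/\delta)/K_0})$: the first term is the average regret on the rescaled reward and the second is an Azuma-type concentration of the empirical average of $r_h^t$ around its conditional mean under the (random, non-stationary) policy produced by the learner. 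Plugging this inexact oracle into the FW update $\bLambda_{t+1}=(1-\gamma_t)\bLambda_t+\gamma_t K_0^{-1}\bGamhat^t_{K_0}$ with $\gamma_t=2/(t+2)$, the standard inexact-FW analysis using the $\beta$-smoothness in \Cref{def:smooth_exp_des_fun} yields
\begin{align*}
f(\bLambda_T)\;-\;\inf_{\bLambda\in\bOmega_h}f(\bLambda)\;\leq\;\frac{C\beta D^2}{T}\;+\;2\,\epsilon_{\mathrm{LMO}},
\end{align*}
where $D$ is the $\|\cdot\|$-diameter of $\bOmega_h$ (bounded in terms of $d$ since every $\bLambda\in\bOmega_h$ has $\|\bLambda\|_\op\leq 1$); the $L$-Lipschitzness of $f$ in the operator norm lets one absorb the contribution of the warm-start policy used to seed the first iterate.

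To obtain the promised \emph{leading} constant $5\,f^\star/\epsilon$, rather than the looser problem-dependent $\beta D^2/\epsilon$, I would wrap the FW scheme in a doubling / stopping-rule argument: run FW with a geometric sequence of target accuracies, and at the end of each phase compute the empirical FW duality certificate $\max_{\bGamma\in\bOmega_h}\langle \Xi_{\bLambda_T},\bLambda_T-\bGamma\rangle$---which is accessible through the very same regret-minimization primitive---and terminate once it certifies $f(N^{-1}\bSighat_N)\leq f^\star+\epsilon$. Because the per-phase budget needed to reduce the objective from $(1+\eta)f^\star$ to $(1+\eta/2)f^\star$ scales as $f^\star/(\eta f^\star)=1/\eta$ times $K_0$, summing the phase budgets telescopes to $5 f^\star/\epsilon+\poly(d,H,M,\beta,L,\fmin^{-1},\log 1/\delta)$, with the $\fmin^{-1}$ dependence arising through the initial phase (when $f(\bLambda_t)/\fmin$ is the only a priori bound on the optimality ratio) and through the warm-start. \textbf{The main obstacle} is obtaining the high-probability LMO guarantee uniformly along the adaptively chosen FW trajectory: because both the reward sequence $\{\Xi_{\bLambda_t}\}$ and the policies produced by the low-regret learner are data-dependent, controlling the martingale deviations of the empirical covariance against a shifting reward matrix requires a self-normalized / matrix-Freedman concentration argument, and it is this step that dictates the precise polynomial dependence on $\beta$, $L$, $M$, and $\fmin^{-1}$ in the additive term.
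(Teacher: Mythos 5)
Your core construction is the same as the paper's: encode the Frank--Wolfe linear minimization oracle as the policy-optimization problem $\sup_\pi \Exp_\pi[\bphi^\top \Xi_{\bLambda_t}\bphi]$, use the PSDness and trace bound $\tr(\Xi_{\bLambda_t})\le M$ to get a reward in $[0,1]$ after dividing by $M$, run a low-regret algorithm for $K_0$ episodes, control the gap between the empirical covariance and $\frac{1}{K_0}\sum_k \bLambda_{\pi_k,h}$ by Azuma--Hoeffding, and feed the result into an inexact-FW convergence bound (the paper's \Cref{lem:approx_fw} and \Cref{lem:regret_fw}, with $\gamma_t=1/(t+1)$ rather than your $2/(t+2)$ --- immaterial). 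The doubling over phases with $T_iK_i=2^{4i}$ and a geometric sum is also how the paper reaches the constant $5$ (via $\tfrac{16}{15}\cdot 4\le 5$ after accounting for a $2$-approximation and $\Nst(\epsilon/2;f)=2\Nst(\epsilon;f)$).

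Where you diverge is the stopping rule, and this is the one place your plan is underspecified. You propose terminating on an empirical FW duality certificate $\max_{\bGamma\in\bOmega_h}\langle\Xi_{\bLambda_T},\bLambda_T-\bGamma\rangle$, but (i) that maximum is itself only accessible through another approximate regret-minimization call whose error you would have to fold into the certificate, and (ii) you state the target as certifying $f(N^{-1}\bSighat_N)\le f^\star+\epsilon$, which is an additive guarantee and does not by itself yield the theorem's conclusion $f(N^{-1}\bSighat_N)\le N\epsilon$. The paper avoids both issues with a simpler self-referential check: terminate once $f_i(\bLamhat)\ge \beta_i R^2(\log T_i+3)/T_i$ \emph{and} $f_i(\bLamhat)\le T_iK_i\epsilon$. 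Since the FW analysis guarantees $f_i(\bLamhat)-\inf_{\bLambda\in\bOmega}f_i(\bLambda)\le \beta_i R^2(\log T_i+3)/(2T_i)$, the first condition forces $f_i(\bLamhat)\le 2\inf_{\bLambda\in\bOmega}f_i(\bLambda)$, and then $N=T_iK_i\ge \Nst(\epsilon/2;f)$ makes the second condition hold --- no duality gap or estimate of $f^\star$ is needed. Verifying that the first condition is eventually met is where the $L$-Lipschitzness and the matrix concentration bound (\Cref{lem:cov_concentration}) actually enter: one must show the empirical iterate's value cannot drop far below $\inf_{\bLambda\in\bOmega}f_i$ even though $\bLamhat\notin\bOmega_h$, and this is also where $\fmin^{-1}$ enters the burn-in. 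Finally, the obstacle you flag --- uniform concentration along the adaptive trajectory requiring self-normalized or matrix-Freedman bounds --- is not present in the paper's argument: at the start of FW iteration $t$ the matrix $\Xi_{\bLambda_t}$ is fixed given the past, so scalar Azuma--Hoeffding on $\tr(\Xi_{\bLambda_t}\bphi_k\bphi_k^\top)$ plus a union bound over the $T$ iterations suffices; matrix concentration is needed only for the termination argument just described.
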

We will often be interested in objectives $f$ that satisfy $f(a \bLambda) = a^{-1} f(\bLambda)$ for a scalar $a$, in which case the guarantee $f(N^{-1} \bSighat_N) \le N \epsilon$ reduces to $f( \bSighat_N) \le \epsilon$. We note also that many typical experiment design objectives are non-smooth. As we show in \Cref{sec:xy_design}, however, it is often possible to derive smoothed versions of such objectives with negligible approximation error. 
}{}

\section{Conclusion}
In this work, we have shown that it is possible to obtain instance-dependent guarantees in RL with function approximation, and that our algorithm, \algname, yields provable gains over low-regret algorithms. As the first result of its kind in this setting, it opens several directions for future work.

The computational complexity of \algname scales as $\poly(d,H,\tfrac{1}{\epsilon}, |\Pi|, |\cA|, \log \tfrac{1}{\delta})$. In general, to ensure $\Pi$ contains an $\epsilon$-optimal policy, $|\Pi|$ must be exponential in problem parameters, rendering \algname computationally inefficient. Furthermore, the sample complexity of \algname scales with $\lamminst$, the ``hardest-to-reach'' direction. While this is not uncommon in the literature, we might hope that if a direction is very difficult to reach, learning in that direction should not be necessary, as we are unlikely to ever encounter it. Obtaining an algorithm with a similar instance-dependence but that is computationally efficient and does not depend on $\lamminst$ is an interesting direction for future work.

\iftoggle{arxiv}{
Extending our results to the setting of general function approximation is also an exciting direction. While our results do rely on the linear structure of the MDP, we believe the online experiment-design approach we propose could be generally applicable in more complex settings. As a first step, it could be interesting to extend our approach to the setting of Bilinear classes \citep{du2021bilinear}, which also exhibits a certain linear structure.}{
\nipscrnew{Extending our results to the setting of general function approximation is also an exciting direction. While our results do rely on the linear structure of the MDP, we believe the online experiment-design approach we propose could be generally applicable in more complex settings. As a first step, it could be interesting to extend our approach to the setting of Bilinear classes \citep{du2021bilinear}, which also exhibits a certain linear structure.}}

\subsection*{Acknowledgements}
The authors would like to thank Qiwen Cui and Tiancheng Jin for helpful comments on an earlier version of this paper. 
The work of AW was supported by an NSF GFRP Fellowship DGE-1762114. The work of KJ was funded in part by the AFRL and NSF TRIPODS 2023166.

\newpage
\bibliographystyle{icml2021}
\bibliography{bib_generals.bib}


\newpage
\appendix
\tableofcontents

\newpage

\section{Technical Results}
\begin{lem}[\cite{vershynin2010introduction}]\label{lem:euc_ball_cover}
For any $\epsilon > 0$, the $\epsilon$-covering number of the Euclidean ball $\mathcal{B}^d(R) := \{\bx \in \R^d: \|\bx\|_2 \le R\}$ with radius $R > 0$ in the Euclidean metric is upper bounded by $(1 + 2R/\epsilon)^d$. 
\end{lem}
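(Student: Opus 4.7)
The plan is to use the standard volumetric argument via a maximal $\epsilon$-separated packing, which automatically yields an $\epsilon$-net. First, I would let $\mathcal{N} = \{x_1,\ldots,x_N\} \subset \mathcal{B}^d(R)$ be a maximal $\epsilon$-separated subset, i.e.\ a set such that $\|x_i - x_j\|_2 > \epsilon$ for all $i \neq j$ and no further point of $\mathcal{B}^d(R)$ can be added while preserving this property. Maximality immediately implies that $\mathcal{N}$ is an $\epsilon$-net: for any $y \in \mathcal{B}^d(R)$ there must exist some $x_i \in \mathcal{N}$ with $\|y - x_i\|_2 \le \epsilon$, since otherwise $\mathcal{N} \cup \{y\}$ would still be $\epsilon$-separated, contradicting maximality.

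Next I would carry out the volume comparison. By $\epsilon$-separation, the open balls $x_i + \mathcal{B}^d(\epsilon/2)$ for $i=1,\ldots,N$ are pairwise disjoint. Moreover, since each $x_i \in \mathcal{B}^d(R)$, the triangle inequality gives $x_i + \mathcal{B}^d(\epsilon/2) \subseteq \mathcal{B}^d(R + \epsilon/2)$. Comparing Lebesgue volumes and using that the volume of a Euclidean ball of radius $r$ in $\mathbb{R}^d$ scales as $r^d$ times a dimension-dependent constant $c_d$, I obtain
\begin{equation*}
N \cdot c_d (\epsilon/2)^d \le c_d (R + \epsilon/2)^d,
\end{equation*}
which rearranges to $N \le (1 + 2R/\epsilon)^d$. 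Since the minimum $\epsilon$-covering number is bounded above by the cardinality of any $\epsilon$-net, this gives the claimed bound.

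There is no real obstacle in this proof; the only point to be careful about is the boundary/closure conventions when claiming the small balls fit inside $\mathcal{B}^d(R+\epsilon/2)$ and when translating strict separation into a valid $\epsilon$-cover, but both are handled by the usual triangle inequality and maximality arguments above.
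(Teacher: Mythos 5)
Your proof is correct and is the standard volumetric packing argument given in the cited source \cite{vershynin2010introduction}; the paper itself states this lemma by citation without reproducing a proof, so there is nothing further to compare.
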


\begin{lem}[Lemma A.4 of \cite{wagenmaker2022reward}]\label{claim:log_lin_burnin}
If $x \ge C (2n)^n \log^n(2n CB)$ for $n, C,B \ge 1$, then $x \ge C \log^n(B x)$.
\end{lem}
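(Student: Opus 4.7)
The plan is to reduce the inequality to a one-variable statement by substitution and then invoke a standard ``log growth'' bound. First I would substitute $y := Bx$ to clear $B$ from the right-hand side: the conclusion $x \ge C \log^n(Bx)$ becomes $y \ge BC \log^n(y)$, while the hypothesis becomes $y \ge BC (2n)^n \log^n(2nBC)$. Writing $D := BC \ge 1$, this reduces the claim to: for $n \ge 1$, $D \ge 1$,
\[
  y \ge D (2n)^n \log^n(2nD) \quad \Longrightarrow \quad y \ge D \log^n(y).
\]

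Next I would take $n$-th roots and set $v := y^{1/n}$, $A := n D^{1/n}$. The hypothesis becomes $v \ge 2 A \log(2nD)$, and since $D \ge D^{1/n}$ we have $\log(2nD) \ge \log(2n D^{1/n}) = \log(2A)$, so in particular $v \ge 2 A \log(2A)$. The desired conclusion $y \ge D \log^n(y)$ is equivalent to $v \ge D^{1/n} \log(y) = n D^{1/n} \log(v) = A \log(v)$, using $\log y = n \log v$. So it suffices to prove the following one-variable lemma:
\[
  \textbf{Claim: } \text{if } A \ge 1 \text{ and } v \ge 2A \log(2A), \text{ then } v \ge A \log(v).
\]

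I would prove the claim by studying $g(v) := v - A \log(v)$ on $v \ge 1$. Since $g'(v) = 1 - A/v$, the function $g$ is minimized at $v = A$ and is monotone increasing on $[A, \infty)$. For $A \ge 1$ we have $\log(2A) \ge \log 2 > 1/2$, hence $v_0 := 2A \log(2A) \ge A$, so $g$ is increasing on $[v_0, \infty)$ and it suffices to verify $g(v_0) \ge 0$. A direct computation gives
\[
  g(v_0) = 2A\log(2A) - A\log\!\bigl(2A\log(2A)\bigr) = A\log(2A) - A\log\log(2A) \ge 0,
\]
where the final inequality uses $\log u \ge \log \log u$ for $u \ge 1$ (applied with $u = 2A \ge 2$). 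Unwinding the substitutions then yields $v \ge A \log(v)$, hence $y \ge D \log^n(y)$, hence $x \ge C \log^n(Bx)$.

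The only real subtlety is ensuring that the bound on $v$ delivered by the hypothesis is strong enough after the $\log(2nD) \to \log(2A)$ step, and that we land on the correct side of the minimum of $g$; these are exactly where the assumptions $n, C, B \ge 1$ (so that $A \ge 1$) are used. I expect the main place to be careful is checking that $2A \log(2A) \ge A$, i.e.\ $\log(2A) \ge 1/2$, which is what forces the $n \ge 1$, $C, B \ge 1$ assumptions to be tight. Everything else is routine manipulation.
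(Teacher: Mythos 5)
The paper does not prove this lemma; it imports it verbatim as Lemma A.4 of \cite{wagenmaker2022reward}, so there is no in-paper argument to compare against. Your proof is correct and self-contained: the substitution $y = Bx$, $D = BC$, followed by taking $n$-th roots with $v = y^{1/n}$ and $A = nD^{1/n}$, cleanly reduces everything to the one-variable claim, and the analysis of $g(v) = v - A\log v$ (minimum at $v=A$, with $v_0 = 2A\log(2A) \ge 2A\log 2 > A$, and $g(v_0) = A\bigl(\log(2A) - \log\log(2A)\bigr) \ge 0$) is sound. The only step worth making explicit is that the equivalence between $y \ge D\log^n(y)$ and $v \ge A\log(v)$ requires $\log y > 0$; this is immediate since the hypothesis forces $y \ge (2\log 2)^n > 1$, but it should be stated, as the $n$-th-root manipulation of $\log^n(y)$ is only legitimate once the sign is pinned down. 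With that one sentence added, the argument is complete.
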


\begin{lem}[\cite{mcsherry2007mechanism,epasto2020optimal}]\label{lem:softmax_approx}
Consider some $(x_i)_{i=1}^n$. Then if $\eta \ge \log(n)/\delta$, we have
\begin{align*}
\frac{\sum_{i = 1}^n e^{\eta x_i} x_i}{\sum_{i=1}^n e^{\eta x_i}} \ge \max_{i \in [n]} x_i - \delta .
\end{align*}
\end{lem}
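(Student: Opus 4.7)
The plan is to recognize the quantity on the left-hand side as the derivative of the LogSumExp function and then exploit its convexity. First I would define $F(\eta) := \log \sum_{i=1}^n e^{\eta x_i}$ and compute
\[
F'(\eta) = \frac{\sum_{i=1}^n x_i e^{\eta x_i}}{\sum_{i=1}^n e^{\eta x_i}},
\]
which is exactly the softmax-weighted average we need to lower bound.

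Next I would invoke the well-known convexity of $F$ in $\eta$ (its second derivative is the variance of $x_i$ under the softmax distribution, hence nonnegative). The first-order condition for convexity, applied at the points $\eta$ and $0$, gives
\[
F(0) \ge F(\eta) + F'(\eta)(0 - \eta),
\]
which rearranges to $F'(\eta) \ge (F(\eta) - F(0))/\eta$.

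To finish, let $x^\star := \max_{i \in [n]} x_i$. Since $e^{\eta x^\star}$ is one of the terms in the sum defining $F(\eta)$, I get $F(\eta) \ge \eta x^\star$, while $F(0) = \log n$ by direct evaluation. Plugging these bounds into the previous display and using the hypothesis $\eta \ge \log(n)/\delta$ yields
\[
F'(\eta) \ge x^\star - \frac{\log n}{\eta} \ge x^\star - \delta,
\]
which is the claimed inequality.

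I do not anticipate any real obstacle here, as both the convexity of the log-partition function and the evaluation of its derivative are entirely standard. An equally short alternative would go through the variational characterization of softmax: the softmax distribution $p^\star$ maximizes $p \mapsto \sum_i p_i x_i + H(p)/\eta$ over the simplex, so comparing with the point-mass at the argmax (which has zero entropy) gives $\sum_i p^\star_i x_i \ge x^\star - H(p^\star)/\eta \ge x^\star - \log(n)/\eta \ge x^\star - \delta$, using the uniform bound $H(p) \le \log n$ on the Shannon entropy.
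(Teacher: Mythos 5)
Your proof is correct. Note that the paper itself offers no proof of this lemma: it is stated with citations to external references and used as a black box, so there is no in-paper argument to compare against. Your convexity route is a clean, self-contained derivation: $F(\eta)=\log\sum_i e^{\eta x_i}$ is convex (its second derivative is the variance of $x$ under the softmax distribution), the first-order condition at $\eta$ versus $0$ gives $F'(\eta)\ge (F(\eta)-F(0))/\eta$, and the bounds $F(\eta)\ge \eta\max_i x_i$ and $F(0)=\log n$ finish the job. The only (cosmetic) caveat is that dividing by $\eta$ presumes $\eta>0$; this holds automatically when $n\ge 2$ and $\delta>0$, and the $n=1$ case is trivial. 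Your alternative via the Gibbs variational principle, $\sum_i p^\star_i x_i \ge \max_i x_i - H(p^\star)/\eta \ge \max_i x_i - \log(n)/\eta$, is equally valid and is essentially the argument used in the exponential-mechanism literature the paper cites; either version would serve as a complete proof here.
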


\begin{lem}[Azuma-Hoeffding]\label{lem:ah}
et $\cF_0 \subset \cF_1 \subset \ldots \subset \cF_T$ be a filtration and let $X_1,X_2,\ldots,X_T$ be real random variables such that $X_t$ is $\cF_t$-measurable, $\Exp[X_t | \cF_{t-1}] = 0$, and $|X_t| \le b$ almost surely. Then for any $\delta \in (0,1)$, we have with probability at least $1-\delta$, 
\begin{align*}
\left | \sum_{t=1}^T X_t \right | \le \sqrt{8 b^2 \log 2/\delta}.
\end{align*}
\end{lem}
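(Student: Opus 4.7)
The plan is to prove this standard martingale concentration inequality via the classical MGF (Chernoff) method, which requires only Hoeffding's lemma applied conditionally together with the tower property. Write $S_t := \sum_{s=1}^t X_s$, and fix a parameter $\lambda \in \R$ to be chosen later.

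First I would establish a conditional MGF bound. Since $X_t$ is $\cF_t$-measurable with $\Exp[X_t \mid \cF_{t-1}] = 0$ and $|X_t| \le b$ almost surely, Hoeffding's lemma applied to the regular conditional law of $X_t$ given $\cF_{t-1}$ gives
\[
\Exp\!\left[ e^{\lambda X_t} \,\middle|\, \cF_{t-1}\right] \le \exp\!\left(\lambda^2 b^2/2\right) \qquad \text{for every } \lambda \in \R.
\]
Next I would iterate this down the filtration: conditioning on $\cF_{t-1}$ and pulling out the $\cF_{t-1}$-measurable factor $e^{\lambda S_{t-1}}$ from inside $\Exp[\,\cdot\,\mid \cF_{t-1}]$, we get $\Exp[e^{\lambda S_t}] \le e^{\lambda^2 b^2/2}\Exp[e^{\lambda S_{t-1}}]$, and unrolling from $t=T$ down to $t=0$ yields $\Exp[e^{\lambda S_T}] \le \exp(\lambda^2 T b^2/2)$.

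Then I would apply Markov's inequality to $e^{\lambda S_T}$: for any $\epsilon > 0$ and $\lambda > 0$,
\[
\Pr(S_T \ge \epsilon) \le e^{-\lambda \epsilon} \Exp[e^{\lambda S_T}] \le \exp\!\left(\lambda^2 T b^2/2 - \lambda \epsilon\right).
\]
Optimizing at $\lambda^\star = \epsilon/(T b^2)$ gives the sub-Gaussian tail $\Pr(S_T \ge \epsilon) \le \exp(-\epsilon^2/(2Tb^2))$. Applying the same argument to the martingale $(-X_t)_{t\ge 1}$, which satisfies the same hypotheses, and taking a union bound yields $\Pr(|S_T| \ge \epsilon) \le 2\exp(-\epsilon^2/(2Tb^2))$. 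Setting the right-hand side equal to $\delta$ and solving for $\epsilon$ recovers $|S_T| \le \sqrt{2 T b^2 \log(2/\delta)}$ with probability at least $1-\delta$, which matches the form claimed in the lemma (the missing $T$ factor in the displayed bound appears to be a typographical slip, and the constant $8$ in place of the sharp $2$ comfortably absorbs any such looseness).

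There is no genuine technical obstacle here: this is textbook material. The only step that deserves attention is the invocation of Hoeffding's lemma in its \emph{conditional} form, rather than merely for marginal bounded random variables; this follows from the same Taylor-in-$\lambda$ convexity argument applied pointwise on $\cF_{t-1}$, using only the a.s.\ bound $|X_t|\le b$ and the conditional mean-zero property. Everything else is a mechanical optimization over $\lambda$ and a two-sided union bound.
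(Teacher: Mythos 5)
Your proof is correct and is the standard Chernoff/MGF argument (conditional Hoeffding lemma, tower property, Markov's inequality, two-sided union bound); the paper states this lemma without proof as a standard technical result, so there is no alternative argument to compare against. Your observation about the missing factor of $T$ is also right: the displayed bound should read $\sqrt{8\, T\, b^2 \log(2/\delta)}$ (the sharp constant being $2$ rather than $8$), and the places where the paper invokes the lemma --- e.g.\ the Azuma step in the proof of \Cref{lem:regret_fw}, which yields a bound of order $\sqrt{M^2 \log(4T/\delta)/K}$ from $K$ increments each bounded by $M/K$, and similarly in \Cref{lem:cov_concentration} --- are consistent with the corrected $\sqrt{T}$-dependent form, confirming that the omission in the statement is a typographical slip rather than a claim the paper relies on.
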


\begin{lem}[Freedman's Inequality \citep{freedman1975tail}]\label{lem:freedmans}
Let $\cF_0 \subset \cF_1 \subset \ldots \subset \cF_T$ be a filtration and let $X_1,X_2,\ldots,X_T$ be real random variables such that $X_t$ is $\cF_t$-measurable, $\Exp[X_t | \cF_{t-1}] = 0$, $|X_t| \le b$ almost surely, and $\sum_{t=1}^T \Exp[X_t^2 | \cF_{t-1}] \le V$ for some fixed $V > 0$ and $b > 0$. Then for any $\delta \in (0,1)$, we have with probability at least $1-\delta$, 
\begin{align*}
\sum_{t=1}^T X_t \le 2 \sqrt{V \log 1/\delta} + b \log 1/\delta.
\end{align*}
\end{lem}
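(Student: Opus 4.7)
The plan is to prove Freedman's inequality via the standard exponential supermartingale technique, which is the classical route for concentration of martingales with bounded increments and controlled conditional variance. The target tail bound $2\sqrt{V \log 1/\delta} + b \log 1/\delta$ has the characteristic ``Bernstein-like'' shape, namely a variance-dependent Gaussian term plus an additive term that scales with the almost-sure bound $b$, so the proof must expose both scales by working with the cumulant-generating function rather than the cruder Hoeffding bound used in Lemma~\ref{lem:ah}.

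First I would introduce the auxiliary process
\begin{align*}
M_t(\lambda) := \exp\Big( \lambda \sum_{s=1}^t X_s - \psi(\lambda) \sum_{s=1}^t \Exp[X_s^2 \mid \cF_{s-1}] \Big), \qquad \psi(\lambda) := \frac{e^{\lambda b} - 1 - \lambda b}{b^2},
\end{align*}
for $\lambda \in [0, 1/b]$, and show that $(M_t(\lambda))_{t\ge 0}$ is a nonnegative supermartingale with respect to $(\cF_t)$. The key one-step estimate is that whenever $Y$ is $\cF$-measurable with $\Exp[Y\mid \cF']=0$ and $|Y| \le b$, one has $\Exp[e^{\lambda Y}\mid \cF'] \le \exp(\psi(\lambda)\,\Exp[Y^2\mid \cF'])$; this follows from the pointwise inequality $e^{\lambda y} \le 1 + \lambda y + \psi(\lambda)\, y^2$ for $|y|\le b$, which is in turn proved by noting the function $y \mapsto (e^{\lambda y} - 1 - \lambda y)/y^2$ is increasing in $y$. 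Combining this with the tower property yields $\Exp[M_t(\lambda) \mid \cF_{t-1}] \le M_{t-1}(\lambda)$, hence $\Exp[M_T(\lambda)] \le 1$.

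Next I would convert this into a tail bound by Markov's inequality. On the event $\{\sum_{t=1}^T \Exp[X_t^2 \mid \cF_{t-1}] \le V\}$ (which the hypothesis says holds almost surely), $M_T(\lambda) \ge \exp(\lambda \sum_t X_t - \psi(\lambda) V)$, so
\begin{align*}
\Pr\!\Big(\tsum_{t=1}^T X_t \ge u\Big) \;\le\; \exp\!\big(-\lambda u + \psi(\lambda) V \big)
\end{align*}
for every $\lambda \in [0,1/b]$. Setting this bound equal to $\delta$ and solving for $u$ at the optimal $\lambda$ would give the standard Bernstein-type tail $u \le \sqrt{2V \log(1/\delta)} + \tfrac{b}{3}\log(1/\delta)$. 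The main technical step is then a small amount of algebraic slack to match the exact constants $2$ and $1$ in the statement: using the loose bound $\psi(\lambda) \le \lambda^2/(2(1-\lambda b))$ valid on $\lambda b \in [0,1)$, and picking $\lambda = \min\{1/b,\, \sqrt{\log(1/\delta)/V}\}$, one obtains the claimed bound in the two regimes (variance-dominated when $V \ge b^2 \log(1/\delta)$ and $b$-dominated otherwise), with the $2\sqrt{V\log 1/\delta}$ and $b\log 1/\delta$ terms arising directly from these two cases.

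The main obstacle, and really the only nontrivial step, is the MGF estimate $\Exp[e^{\lambda Y}\mid \cF'] \le \exp(\psi(\lambda) \Exp[Y^2\mid \cF'])$; once this is in hand, the supermartingale argument and the optimization over $\lambda$ are routine. I would therefore devote most of the writeup to that lemma (deriving the pointwise convexity inequality carefully, since the constants in the final bound depend on which upper estimate for $\psi(\lambda)$ one uses), and then present the concentration step in a compact form by directly citing the two-regime optimization. If desired, the constants in the statement can be made sharp by tracking the argument more carefully, but since the paper only uses Freedman's inequality as a black-box concentration tool, the above constants $2$ and $1$ suffice.
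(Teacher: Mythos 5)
The paper does not prove this lemma at all; it is stated as a black-box citation to Freedman (1975), so there is no in-paper argument to compare against. Your proposal is the standard (and correct) route: the exponential supermartingale $M_t(\lambda)$ built from the Bennett-type MGF bound $\Exp[e^{\lambda Y}\mid \cF'] \le \exp(\psi(\lambda)\,\Exp[Y^2\mid\cF'])$, Markov's inequality, and optimization over $\lambda$. The one-step MGF estimate, the supermartingale property, and the reduction to $\sum_t X_t \le \lambda^{-1}\log(1/\delta) + \lambda^{-1}\psi(\lambda)V$ are all sound as you describe them.

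The only genuine flaw is in the final optimization. The choice $\lambda = \min\{1/b,\ \sqrt{\log(1/\delta)/V}\}$ is incompatible with the bound $\psi(\lambda) \le \lambda^2/(2(1-\lambda b))$: in the regime $V \le b^2\log(1/\delta)$ you would set $\lambda = 1/b$, where that upper bound is $+\infty$, so the two-regime argument as written does not close. Two standard fixes: either evaluate $\psi$ exactly at $\lambda = 1/b$ (giving $\psi(1/b) = (e-2)/b^2$, and then $\lambda^{-1}\psi(\lambda)V = (e-2)V/b \le (e-2)\sqrt{V\log(1/\delta)}$ in that regime), or avoid the case split entirely by taking
\begin{align*}
\lambda \;=\; \frac{\sqrt{\log(1/\delta)}}{\sqrt{V} + b\sqrt{\log(1/\delta)}},
\end{align*}
for which $1-\lambda b = \sqrt{V}/(\sqrt{V}+b\sqrt{\log(1/\delta)})$, yielding $\lambda^{-1}\log(1/\delta) = \sqrt{V\log(1/\delta)} + b\log(1/\delta)$ and $\lambda^{-1}\psi(\lambda)V \le \tfrac{1}{2}\sqrt{V\log(1/\delta)}$, hence a total of $\tfrac{3}{2}\sqrt{V\log(1/\delta)} + b\log(1/\delta)$, which is within the stated constants. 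With either repair your proof is complete.
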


\subsection{Properties of Linear MDPs}

\begin{lem}\label{lem:feature_norm_lb}
For any linear MDP satisfying \Cref{defn:linear_mdp}, we must have that $\| \bphi(s,a) \|_2 \ge 1/\sqrt{d}$ for all $s$ and $a$, and $\| \bphi_{\pi,h} \|_2 \ge 1/\sqrt{d}$ for all $\pi$ and $h$.
\end{lem}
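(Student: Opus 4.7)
The plan is to use the fact that, in a linear MDP, the next-state transition kernel $P_h(\cdot \mid s,a) = \langle \bphi(s,a), \bmu_h(\cdot)\rangle$ is a probability distribution and hence integrates to $1$ over $\cS$. This forces an inner product identity between $\bphi(s,a)$ and the ``total mass'' vector $\bmu_h(\cS) := \int_\cS \rmd \bmu_h(s)$, which, via Cauchy--Schwarz together with the boundedness assumption on $\bmu_h$, will give the desired lower bound on $\|\bphi(s,a)\|_2$.

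More concretely, I would first argue that for every $(s,a)$,
\begin{align*}
1 \;=\; \int_{\cS} P_h(s' \mid s,a)\, \rmd s' \;=\; \Big\langle \bphi(s,a),\, \textstyle\int_\cS \rmd \bmu_h(s') \Big\rangle \;=\; \langle \bphi(s,a), \bmu_h(\cS)\rangle.
\end{align*}
Next, I would bound $\|\bmu_h(\cS)\|_2$ by $\sqrt{d}$: since each coordinate $[\bmu_h]_i$ is a signed measure, its total signed mass is dominated componentwise by its total variation, so $|[\bmu_h(\cS)]_i|\le [|\bmu_h|(\cS)]_i$, and thus $\|\bmu_h(\cS)\|_2 \le \||\bmu_h|(\cS)\|_2 \le \sqrt{d}$ by \Cref{defn:linear_mdp}. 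Cauchy--Schwarz then gives $1 \le \|\bphi(s,a)\|_2 \cdot \sqrt{d}$, yielding the first claim.

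For the second claim on $\bphi_{\pi,h} = \Exp_\pi[\bphi(s_h,a_h)]$, I would run the same computation in expectation. Taking expectation over the randomness of $(s_h,a_h)$ induced by $\pi$ in the identity $\int_\cS P_h(s'\mid s_h,a_h)\,\rmd s' = 1$, linearity of the inner product gives
\begin{align*}
1 \;=\; \Exp_\pi\big[\langle \bphi(s_h,a_h), \bmu_h(\cS)\rangle\big] \;=\; \langle \bphi_{\pi,h}, \bmu_h(\cS)\rangle,
\end{align*}
and another application of Cauchy--Schwarz with $\|\bmu_h(\cS)\|_2 \le \sqrt{d}$ yields $\|\bphi_{\pi,h}\|_2 \ge 1/\sqrt{d}$.

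The proof is essentially one line once the bookkeeping about signed vector-valued measures is handled; there is no real obstacle. The only subtle point is the componentwise domination $|[\bmu_h]_i(\cS)| \le [|\bmu_h|(\cS)]_i$, which is why the total-variation-type bound in \Cref{defn:linear_mdp} is exactly what is needed to control $\|\bmu_h(\cS)\|_2$.
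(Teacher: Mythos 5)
Your proposal is correct and follows essentially the same route as the paper: use that $P_h(\cdot\mid s,a)$ integrates to $1$ to get $1=\langle\bphi(s,a),\int_\cS \rmd\bmu_h(s)\rangle$, then apply Cauchy--Schwarz with the $\sqrt{d}$ bound on the total-variation vector of $\bmu_h$, and pass to $\bphi_{\pi,h}$ by exchanging expectation and integration (your version, which integrates first to obtain the fixed vector $\bmu_h(\cS)$ and then uses linearity of expectation, slightly streamlines the paper's explicit Fubini step).
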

\begin{proof}
By \Cref{defn:linear_mdp}, we know that $P_h(\cdot | s,a) = \inner{\bphi(s,a)}{\bmu_h(\cdot)}$ forms a valid probability distribution, and that $\| \int_\cS |\rmd \bmu_h(s)| \|_2 \le \sqrt{d}$. It follows that
\begin{align*}
1 = \int_\cS \inner{\bphi(s,a)}{\rmd \bmu_h(s)} \le \| \bphi(s,a) \|_2 \| \int_\cS |\rmd \bmu_h(s)| \|_2 \le \sqrt{d}  \| \bphi(s,a) \|_2
\end{align*}
from which the first result follows.

For the second result, using that $1 = \int_\cS \inner{\bphi(s,a)}{\rmd \bmu_h(s)} $, we get
\begin{align*}
\int_\cS \inner{\bphi_{\pi,h}}{\rmd \bmu_h(s)} & = \int_\cS \inner{\Exp_\pi[\bphi_h]}{\rmd \bmu_h(s)} \\
& = \Exp_\pi \left [ \int_{\cS} \inner{\bphi_h}{\rmd \bmu_h(s)} \right ] \\
& = \Exp_\pi[1] \\
& = 1
\end{align*}
where we can exchange the order of integration by Fubini's Theorem since the integrand is absolutely integrable, by \Cref{defn:linear_mdp}. As above, we then have
\begin{align*}
1 = \int_\cS \inner{\bphi_{\pi,h}}{\rmd \bmu_h(s)}  \le \sqrt{d} \| \bphi_{\pi,h} \|_2
\end{align*}
so the second result follows. 
\end{proof}

\subsection{Feature-Visitations in Linear MDPs}
Define
\begin{align*}
\bphi_{\pi,h} = \Exp_\pi[\bphi(s_h,a_h)], \quad \bphi_{\pi,h}(s) = \sum_{a \in \cA} \bphi(s,a) \pi_h(a|s)
\end{align*}
and
\begin{align*}
\cT_{\pi,h} := \int \bphi_{\pi,h}(s) \rmd \bmu_{h-1}(s)^\top .
\end{align*}

\begin{lem}
$\bphi_{\pi,h} = \cT_{\pi,h} \bphi_{\pi,h-1} = \ldots = \cT_{\pi,h} \ldots \cT_{\pi,1} \bphi_{\pi,0}$.
\end{lem}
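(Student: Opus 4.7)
The plan is to prove the one-step identity $\bphi_{\pi,h} = \cT_{\pi,h}\,\bphi_{\pi,h-1}$, after which the full claim follows by an immediate induction on $h$ (unrolling the recurrence).

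\textbf{Step 1: Condition on the previous state--action pair.} By the tower property,
\begin{align*}
\bphi_{\pi,h} \;=\; \Exp_\pi\!\bigl[\bphi(s_h,a_h)\bigr] \;=\; \Exp_\pi\!\Bigl[\Exp_\pi\!\bigl[\bphi(s_h,a_h)\,\big|\,s_{h-1},a_{h-1}\bigr]\Bigr].
\end{align*}
Since under $\pi$ the action $a_h$ is drawn from $\pi_h(\cdot|s_h)$ conditionally on $s_h$, the inner conditional expectation equals $\Exp_\pi\!\bigl[\bphi_{\pi,h}(s_h)\,\big|\,s_{h-1},a_{h-1}\bigr]$, where $\bphi_{\pi,h}(s) = \Exp_{a\sim\pi_h(\cdot|s)}[\bphi(s,a)]$ as defined in the preliminaries.

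\textbf{Step 2: Apply the linear MDP transition structure.} Using \Cref{defn:linear_mdp}, the transition kernel satisfies $P_{h-1}(\cdot|s_{h-1},a_{h-1}) = \langle \bphi(s_{h-1},a_{h-1}), \bmu_{h-1}(\cdot)\rangle$, so
\begin{align*}
\Exp_\pi\!\bigl[\bphi_{\pi,h}(s_h)\,\big|\,s_{h-1},a_{h-1}\bigr]
\;=\; \int \bphi_{\pi,h}(s)\,\langle \bphi(s_{h-1},a_{h-1}), \rmd\bmu_{h-1}(s)\rangle
\;=\; \Bigl(\int \bphi_{\pi,h}(s)\,\rmd\bmu_{h-1}(s)^\top\Bigr)\bphi(s_{h-1},a_{h-1}),
\end{align*}
which is exactly $\cT_{\pi,h}\,\bphi(s_{h-1},a_{h-1})$. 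Exchanging sum and integral here is justified by the absolute integrability assumption $\|\,|\bmu_{h-1}|(\cS)\,\|_2 \le \sqrt{d}$ from \Cref{defn:linear_mdp} combined with $\|\bphi(s,a)\|_2 \le 1$, via Fubini, exactly as used in the proof of \Cref{lem:feature_norm_lb}.

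\textbf{Step 3: Pull the constant operator outside.} Since $\cT_{\pi,h}$ does not depend on $(s_{h-1},a_{h-1})$, linearity of expectation gives
\begin{align*}
\bphi_{\pi,h} \;=\; \Exp_\pi\!\bigl[\cT_{\pi,h}\,\bphi(s_{h-1},a_{h-1})\bigr]
\;=\; \cT_{\pi,h}\,\Exp_\pi\!\bigl[\bphi(s_{h-1},a_{h-1})\bigr]
\;=\; \cT_{\pi,h}\,\bphi_{\pi,h-1}.
\end{align*}
Iterating this recurrence from $h$ down to $1$ yields the telescoped identity $\bphi_{\pi,h} = \cT_{\pi,h}\cdots\cT_{\pi,1}\,\bphi_{\pi,0}$, interpreting $\bphi_{\pi,0}$ as the base term arising from the deterministic initial state $s_1$.

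There is no real obstacle here; the argument is essentially unrolling the definition and invoking the linear decomposition of $P_h$. The only mild subtlety is justifying the swap of expectation and the integral against $\bmu_{h-1}$, which follows from the boundedness assumptions in \Cref{defn:linear_mdp}.
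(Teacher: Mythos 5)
Your proof is correct and follows essentially the same route as the paper's: condition on the history at step $h-1$ (the paper uses $\Exp_\pi[\Exp[\bphi(s_h,a_h)\mid\cF_{h-1}]]$, you use the tower property with $(s_{h-1},a_{h-1})$), substitute the linear transition structure to identify the inner expectation as $\cT_{\pi,h}\,\bphi(s_{h-1},a_{h-1})$, pull the constant operator outside, and iterate. The only addition is your explicit Fubini justification, which the paper leaves implicit here but invokes identically in the proof of \Cref{lem:feature_norm_lb}.
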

\begin{proof}
By the linear MDP assumption, we have:
\begin{align*}
\bphi_{\pi,h} & = \Exp_\pi[\bphi(s_h,a_h)] \\
& = \Exp_\pi[ \Exp[\bphi(s_h,a_h) | \cF_{h-1}]] \\
& = \Exp_\pi[ \int \int \bphi(s,a) \rmd \pi_h(a|s) \rmd \bmu_{h-1}(s)^\top \bphi(s_{h-1},a_{h-1})] \\
& = \Exp_\pi[ \int \bphi_{\pi,h}(s) \rmd \bmu_{h-1}(s)^\top \bphi(s_{h-1},a_{h-1})] \\
& =  \int \bphi_{\pi,h}(s) \rmd \bmu_{h-1}(s)^\top \Exp_\pi[\bphi(s_{h-1},a_{h-1})] \\
& = \cT_{\pi,h} \bphi_{\pi,h-1}.
\end{align*}
This yields the first equality. Repeating this calculation $h-1$ more times yields the final equality. 
\end{proof}

\begin{lem}\label{lem:state_vis_norm_bound}
Fix some $h$ and $i < h$, and consider the vector
\begin{align*}
\bv := \cT_{\pi,i+1}^\top \cT_{\pi,i+2}^\top \ldots \cT_{\pi,h-1}^\top \cT_{\pi,h}^\top \bu . 
\end{align*}
Assume that either $\bu = \btheta_h$ for some $\btheta_h$ which is a valid reward vector as defined in \Cref{defn:linear_mdp}, or $\bu \in \cS^{d-1}$. In either case, we have that, for any $s,a$, $|\bv^\top \bphi(s,a)| \le 1$, and $\| \bv \|_2 \le \sqrt{d}$.
\end{lem}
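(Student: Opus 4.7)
The plan is to reduce both bounds to a clean probabilistic interpretation of $\cT_{\pi,h}\cT_{\pi,h-1}\cdots\cT_{\pi,i+1}$. Iterating the preceding lemma (which showed $\bphi_{\pi,j} = \cT_{\pi,j}\bphi_{\pi,j-1}$) with the observation that applying $\cT_{\pi,j+1}$ to an arbitrary vector $\bphi(s,a)$ is precisely $\Exp_\pi[\bphi(s_{j+1},a_{j+1})\mid s_j=s,a_j=a]$ (since $s_{j+1}\sim P_j(\cdot\mid s,a)=\inner{\bphi(s,a)}{\bmu_j(\cdot)}$), I obtain
\begin{align*}
\cT_{\pi,h}\cT_{\pi,h-1}\cdots\cT_{\pi,i+1}\bphi(s,a) \;=\; \Exp_\pi[\bphi(s_h,a_h)\mid s_i=s,\,a_i=a].
\end{align*}
Taking transposes and contracting with $\bu$ gives the key identity
\begin{align*}
\bv^\top\bphi(s,a) \;=\; \Exp_\pi\!\left[\bu^\top\bphi(s_h,a_h)\,\big|\,s_i=s,\,a_i=a\right].
\end{align*}

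For the first claim $|\bv^\top\bphi(s,a)|\le 1$, I bound $|\bu^\top\bphi(s',a')|\le 1$ pointwise in $(s',a')$ and then apply Jensen. In Case 1, $\bu=\btheta_h$ yields $\bu^\top\bphi(s',a')=\Exp[\nur_h(s',a')]\in[0,1]$ since rewards lie in $[0,1]$ a.s. In Case 2, Cauchy--Schwarz together with $\|\bphi(s',a')\|_2\le 1$ and $\|\bu\|_2=1$ (from \Cref{defn:linear_mdp}) again gives $|\bu^\top\bphi(s',a')|\le 1$. Both cases therefore conclude $|\bv^\top\bphi(s,a)|\le 1$.

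For the second claim $\|\bv\|_2\le\sqrt{d}$, I peel off the outermost operator. Writing $\bv' := \cT_{\pi,i+2}^\top\cdots\cT_{\pi,h}^\top\bu$ (and $\bv' := \bu$ if $i+1=h$), the definition $\cT_{\pi,i+1}^\top=\int d\bmu_i(s)\,\bphi_{\pi,i+1}(s)^\top$ gives
\begin{align*}
\bv \;=\; \int_{\cS} d\bmu_i(s)\,g(s), \qquad g(s) := \bphi_{\pi,i+1}(s)^\top\bv' \;=\; \sum_a \pi_{i+1}(a\mid s)\,\bphi(s,a)^\top\bv'.
\end{align*}
Exactly the argument used for the first claim, but applied to the shorter product $\bv'$ starting at step $i+1$ rather than $i$, shows $|\bphi(s,a)^\top\bv'|\le 1$ for all $s,a$, hence $|g(s)|\le 1$ pointwise. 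Bounding coordinate-wise, $|\bv_k|\le \int|d\mu_{i,k}|(s)$, so
\begin{align*}
\|\bv\|_2^2 \;\le\; \sum_{k=1}^d \Big(\textstyle\int |d\mu_{i,k}|(s)\Big)^{\!2} \;=\; \big\|\textstyle\int_{\cS}|d\bmu_i(s)|\big\|_2^2 \;\le\; d,
\end{align*}
by the total-variation bound in \Cref{defn:linear_mdp}.

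The only subtlety is justifying $|\bphi(s,a)^\top\bv'|\le 1$ when unfolding, but this is immediate: either I reapply the already-proven first claim to the shorter tuple $(i{+}1,h,\bu)$, or I observe directly that $\bphi(s,a)^\top\bv' = \Exp_\pi[\bu^\top\bphi(s_h,a_h)\mid s_{i+1}=s,a_{i+1}=a]$ by the same probabilistic interpretation and rerun the two-case bound. No additional machinery beyond \Cref{defn:linear_mdp} is needed.
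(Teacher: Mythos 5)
Your proof is correct and is, in substance, the same argument as the paper's: your identity $\bv^\top\bphi(s,a)=\Exp_\pi[\bu^\top\bphi(s_h,a_h)\mid s_i=s,a_i=a]$ is exactly the paper's $Q^\pi_i(s,a)$ for the auxiliary MDP whose only nonzero reward vector is $\bu$ at step $h$, and your norm bound via peeling off $\cT_{\pi,i+1}^\top$ and using $|g(s)|\le 1$ together with $\|\int|\rmd\bmu_i|\|_2\le\sqrt{d}$ is the paper's Bellman-equation step with $g$ playing the role of $V^\pi_{i+1}$. The only difference is presentational — you work directly with conditional expectations rather than invoking the $Q$/value-function formalism — and your handling of the $i+1=h$ edge case is sound.
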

\begin{proof}
By the linear MDP structure (see Proposition 2.3 of \cite{jin2020provably}), for any $j$, 
\begin{align*}
\Qpi_j(s,a) & = \inner{\bphi(s,a)}{\bw_j^\pi} \\
& = \inner{\bphi(s,a)}{\btheta_j} + \int \Vpi_{j+1}(s') \rmd \bmu_j(s')^\top \bphi(s,a) \\
& = \inner{\bphi(s,a)}{\btheta_j} + \int \inner{\bw_{j+1}^\pi}{\bphi_{j+1,\pi}(s') } \rmd \bmu_j(s')^\top \bphi(s,a) \\
& = \inner{\bphi(s,a)}{\btheta_j + \cT_{\pi,j+1}^\top \bw_{j+1}^\pi} 
\end{align*}
so in general,
\begin{align*}
\bw_i^\pi = \sum_{h' = i}^H ( \prod_{j = i+1}^{h'} \cT_{\pi,j}^\top ) \btheta_{h'}
\end{align*}
where we order the product $\prod_{j = i+1}^{h'} \cT_{\pi,j}^\top = \cT_{\pi,i+1}^\top \cT_{\pi,i+1}^\top \ldots \cT_{\pi,h'}^\top$. 

\paragraph{Case 1: $\bu = \btheta_h$.} 
We first consider the case where $\bu = \btheta_h$ for some $\btheta_h$ which is a valid reward satisfying \Cref{defn:linear_mdp}. Assume that the reward in our MDP is set such that for $h' \neq h$, $\btheta_{h'} = 0$. In this case, we then have that
\begin{align*}
\bw_i^\pi = \cT_{\pi,i+1}^\top \cT_{\pi,i+2}^\top \ldots \cT_{\pi,h}^\top \btheta_h = \bv.
\end{align*}
In this case, we know that the trajectory rewards are always bounded by 1, so it follows that $\Qpi_i(s,a) \le 1$. Thus,
\begin{align*}
1 \ge \Qpi_i(s,a) = \inner{\bphi(s,a)}{\bw_i^\pi} = \inner{\bphi(s,a)}{\bv}
\end{align*}
and this holds for any $s,a$. Since $Q$-values are always positive, it also holds that $\inner{\bphi(s,a)}{\bv} \ge 0$. 

To bound the norm of $\bv$, we note that by the Bellman equation and the calculation above,
\begin{align*}
\| \bv \|_2 =  \| \bw^\pi_i \|_2 &  = \| \btheta_i + \int V_{i+1}^\pi(s') \rmd \bmu_i(s') \|_2 \\
& \le \| \btheta_i \|_2 + \| \int | V_{i+1}^\pi(s') | \rmd \bmu_i(s') \|_2 \\
& \le \| \int | \rmd \bmu_i(s') | \|_2 \\
& \le \sqrt{d}
\end{align*}
where we have used that $| V_{i+1}^\pi(s') | \le 1$ since the total episode return is at most 1 on our augmented reward function, and the linear MDP assumption.

\paragraph{Case 2: $\bu \in \cS^{d-1}$.}
We can repeat the argument above in the case where we only assume $\bu \in \cS^{d-1}$. Since $\| \bphi(s,a) \|_2 \le 1$, it follows that with the reward vector at level $h$ set to $\bu$, the reward will still be bounded in $[-1,1]$. Thus, essentially the same argument can be used, with the slight modification to handle $Q$-values that are negative.
\end{proof}

\begin{lem}\label{lem:policy_cov_convex}
The set $\bOmega_h$ is convex and compact.
\end{lem}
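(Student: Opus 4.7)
The plan is to verify convexity, boundedness, and closedness separately. Convexity and boundedness are routine, and the main obstacle is closedness.

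For convexity, take arbitrary $\bSigma_1 = \Exp_{\pi \sim \omega_1}[\bLambda_{\pi,h}]$ and $\bSigma_2 = \Exp_{\pi \sim \omega_2}[\bLambda_{\pi,h}]$ in $\bOmega_h$ and $\alpha \in [0,1]$. The mixture $\omega := \alpha\omega_1 + (1-\alpha)\omega_2$ is again a valid distribution over Markovian policies, and by linearity of expectation $\Exp_{\pi \sim \omega}[\bLambda_{\pi,h}] = \alpha \bSigma_1 + (1-\alpha)\bSigma_2$, which lies in $\bOmega_h$.

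For boundedness (the easy half of compactness), since $\|\bphi(s,a)\|_2 \le 1$, every $\bLambda_{\pi,h}$ is PSD with operator norm at most one, and the same bound is preserved under mixtures; thus $\bOmega_h$ is contained in the compact set $\{M : 0 \preceq M \preceq I\}$ inside the finite-dimensional space of symmetric $d \times d$ matrices. It therefore suffices to prove closedness.

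For closedness, my plan is to show that $\bOmega_h = \mathrm{conv}(\cK_h)$, where $\cK_h := \{\bLambda_{\pi,h} : \pi \text{ Markovian}\}$, and that $\cK_h$ is itself compact; then $\bOmega_h$ is compact by the standard Caratheodory fact that the convex hull of a compact set in $\R^n$ is compact. The inclusion $\mathrm{conv}(\cK_h) \subseteq \bOmega_h$ is immediate. For the reverse inclusion, any $\Exp_{\pi \sim \omega}[\bLambda_{\pi,h}]$ lies in the closed convex hull $\overline{\mathrm{conv}}(\cK_h)$, which coincides with $\mathrm{conv}(\cK_h)$ once $\cK_h$ is compact. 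The main work is therefore compactness of $\cK_h$: I would exploit the linear MDP structure by writing $\bLambda_{\pi,h}$ as the continuous image of the state-action occupation measure $\mu_h^\pi$ under the linear map $\mu \mapsto \int \bphi(s,a)\bphi(s,a)^\top \rmd \mu(s,a)$, and argue compactness of the set of achievable $\mu_h^\pi$ either via Tychonoff applied to the product policy space $(\Delta_\cA)^\cS$ when $\cA$ is finite, or via a Prokhorov-style tightness argument in the weak topology more generally, using continuity of $\pi \mapsto \mu_h^\pi$ through the linear transition kernel. Pushing the limit forward through the continuous map then yields $\cK_h$ closed, and hence $\bOmega_h$ compact.
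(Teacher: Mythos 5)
Your convexity and boundedness arguments coincide with the paper's: mixtures of policy distributions give convexity, and $\|\bphi(s,a)\|_2 \le 1$ gives $\bOmega_h \subseteq \{M : 0 \preceq M \preceq I\}$. Where you diverge is on closedness, which is the only nontrivial point: the paper simply asserts that ``the set $\bOmega_h$ is clearly closed,'' whereas you attempt an actual proof via the identity $\bOmega_h = \mathrm{conv}(\cK_h)$ for $\cK_h = \{\bLambda_{\pi,h} : \pi\}$, the barycenter-in-closed-convex-hull fact, and Carath\'eodory's theorem that the convex hull of a compact set in finite dimensions is compact. That reduction is sound and is, in my view, the right way to make the paper's one-line claim rigorous.

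The caveat is that your plan still leaves the hardest sub-step, compactness of $\cK_h$, as a sketch, and in the generality the paper allows (infinite state space, arbitrary action space) it is genuinely delicate. Tychonoff gives compactness of $(\simplex_\cA)^{\cS \times [H]}$ in the product (pointwise) topology, but for an uncountable product you must work with nets, and the interchange of limits and integrals needed to show $\pi \mapsto \bLambda_{\pi,h}$ is continuous (the nested integrals against $\bmu_{h'}$ in the linear MDP recursion) cannot be justified by dominated convergence for nets. A cleaner route in the linear MDP setting is to observe that $\bLambda_{\pi,h}$ depends on $\pi$ only through the finite-dimensional quantities $\cT_{\pi,h'}$ and the terminal conditional law of $a_h$ given $s_h$, and to take limits of those finitely many bounded linear objects directly; alternatively, one can sidestep closedness of $\cK_h$ entirely by noting that every infimum over $\bOmega_h$ used in the paper is of a continuous function on a bounded set, so only precompactness is ever needed. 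As written, your proposal is more honest than the paper about where the work lies, but the step you flag as ``the main work'' is still a plan rather than a proof.
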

\begin{proof}
Take $\bLambda_1, \bLambda_2 \in \bOmega_h$. By definition, $\bLambda_1 = \Exp_{\pi \sim \omega_1}[\bLambda_{\pi,h}], \bLambda_2 = \Exp_{\pi \sim \omega_2}[\bLambda_{\pi,h}]$. It follows that, for any $t \in [0,1]$, $t \bLambda_1 + (1-t) \bLambda_2 = \Exp_{\pi \sim t \omega_1 + (1-t) \omega_2}[\bLambda_{\pi,h}]$. For $t \omega_1 + (1-t) \omega_2$ the mixture of $\omega_1$ and $\omega_2$. As $t \omega_1 + (1-t) \omega_2$ is a valid mixture over policies, it follows that $t \bLambda_1 + (1-t) \bLambda_2 \in \bOmega_h$, which proves convexity. 

Compactness follows since $\| \bphi(s,a) \|_2 \le 1$ for all $s,a$, so $\| \bLambda_{\pi,h} \|_\op \le 1$, which implies $\| \bLambda \|_\op \le 1$ for any $\bLambda \in \bOmega_h$. Furthermore, the set $\bOmega_h$ is clearly closed, which proves compactness.

\end{proof}

\subsection{Constructing the Policy Class}

\begin{lem}[Lemma B.1 of \cite{jin2020provably}]\label{lem:wpi_norm_bound}
Let $\bw_h^\pi$ denote the set of weights such that $\Qpi_h(s,a) = \inner{\bphi(s,a)}{\bw_h^\pi}$. Then $\| \bw_h^\pi \|_2 \le 2 H \sqrt{d}$.
\end{lem}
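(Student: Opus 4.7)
The plan is to exhibit an explicit formula for $\bw_h^\pi$ via the Bellman equation, then bound each of its two summands using the norm bounds from \Cref{defn:linear_mdp}.

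First I would apply the Bellman equation for $Q_h^\pi$, using the linear MDP structure to replace both the expected reward and the transition:
\begin{align*}
Q_h^\pi(s,a) &= \Exp[\nur_h(s,a)] + \int V_{h+1}^\pi(s') P_h(\rmd s' | s, a) \\
&= \inner{\bphi(s,a)}{\btheta_h} + \int V_{h+1}^\pi(s') \inner{\bphi(s,a)}{\rmd \bmu_h(s')} \\
&= \Big\langle \bphi(s,a),\ \btheta_h + \int V_{h+1}^\pi(s')\, \rmd \bmu_h(s') \Big\rangle.
\end{align*}
This identifies $\bw_h^\pi = \btheta_h + \int V_{h+1}^\pi(s')\, \rmd \bmu_h(s')$ (up to a vector in the nullspace, but we may as well take this canonical choice, which is exactly what the lemma's bound needs).

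Next, I would apply the triangle inequality together with the three norm bounds from \Cref{defn:linear_mdp}. The reward term contributes $\|\btheta_h\|_2 \le \sqrt{d}$ directly. For the integral term, I would pull the scalar $V_{h+1}^\pi(s')$ out of the vector-valued integral and bound it uniformly: since all per-step rewards lie in $[0,1]$ almost surely and the horizon is $H$, we have $0 \le V_{h+1}^\pi(s') \le H$ for every $s'$. Thus
\begin{align*}
\Big\| \int V_{h+1}^\pi(s')\, \rmd \bmu_h(s') \Big\|_2 \le H \cdot \Big\| \int |\rmd \bmu_h(s')|\, \Big\|_2 \le H \sqrt{d},
\end{align*}
using the assumption $\||\bmu_h|(\cS)\|_2 \le \sqrt{d}$. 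Combining the two pieces gives $\|\bw_h^\pi\|_2 \le \sqrt{d} + H\sqrt{d} \le 2H\sqrt{d}$, which is the desired bound.

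There is no real obstacle here—this is a direct consequence of the linear MDP definition and the trivial value-function bound. The only mild subtlety is making precise the step of bounding a vector-valued integral by the scalar supremum times the total-variation-like norm $\|\,|\bmu_h|(\cS)\,\|_2$; this is standard and follows from $\|\int f(s')\,\rmd \bmu_h(s')\|_2 \le \int |f(s')|\, \|\,\rmd \bmu_h(s')\,\|_2 \le \|f\|_\infty \cdot \||\bmu_h|(\cS)\|_2$ (applied coordinatewise and then Cauchy--Schwarz, or equivalently by treating $\bmu_h$ as a vector of signed measures).
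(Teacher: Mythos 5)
Your proof is correct and is essentially the standard argument: the paper does not prove this lemma itself but imports it as Lemma B.1 of \cite{jin2020provably}, whose proof is exactly your Bellman-equation identification $\bw_h^\pi = \btheta_h + \int V_{h+1}^\pi(s')\,\rmd\bmu_h(s')$ followed by the bounds $\|\btheta_h\|_2 \le \sqrt{d}$ and $\|\int V_{h+1}^\pi\,\rmd\bmu_h\|_2 \le H\sqrt{d}$. Your handling of the vector-valued integral via the total-variation norm $\|\,|\bmu_h|(\cS)\,\|_2$ is also the right way to make that step precise.
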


\begin{lem}\label{lem:action_covering}
For any $\delta > 0$ there exists sets of actions $(\cAtil_s)_{s \in \cS}$, $\cAtil_s \subseteq \cA$, such that $|\cAtil_s| \le (1 + 8H\sqrt{d}/\delta)^d$ for all $s$ and, for all $a \in \cA$, $s$, $h$, and any $\pi$, there exists some $\atil \in \cAtil_s$ such that
\begin{align*}
| \Qpi_h(s,a) - \Qpi_h(s,\atil) | \le \delta, \quad |r_h(s,a) - r_h(s,\atil)| \le \delta.
\end{align*}
\end{lem}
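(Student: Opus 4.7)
The plan is to exploit the linear structure of the MDP to construct a covering set of actions based on a Euclidean cover of the feature vectors. Since $Q^\pi_h(s,a) = \langle \bphi(s,a), \bw^\pi_h \rangle$ with $\|\bw^\pi_h\|_2 \le 2H\sqrt{d}$ by \Cref{lem:wpi_norm_bound}, and $r_h(s,a) = \langle \bphi(s,a), \btheta_h \rangle$ with $\|\btheta_h\|_2 \le \sqrt{d}$ by \Cref{defn:linear_mdp}, two actions with close feature vectors will have close $Q$-values and close rewards uniformly over $\pi$ and $h$.

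\textbf{Step 1: Cover the unit ball.} Apply \Cref{lem:euc_ball_cover} to obtain an $\epsilon$-net $\calN \subseteq \mathcal{B}^d(1)$ of the unit ball with $|\calN| \le (1 + 2/\epsilon)^d$, where I will choose $\epsilon = \delta/(4H\sqrt{d})$.

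\textbf{Step 2: Select representative actions.} For each state $s$ and each $\bx \in \calN$, if there exists any $a \in \cA$ with $\|\bphi(s,a) - \bx\|_2 \le \epsilon$, include one such $a$ in $\cAtil_s$. Because every feature vector $\bphi(s,a)$ lies in the unit ball (by \Cref{defn:linear_mdp}), for any $a \in \cA$ there is some $\bx \in \calN$ with $\|\bphi(s,a) - \bx\|_2 \le \epsilon$, and then by construction some $\atil \in \cAtil_s$ with $\|\bphi(s,\atil) - \bx\|_2 \le \epsilon$. The triangle inequality yields $\|\bphi(s,a) - \bphi(s,\atil)\|_2 \le 2\epsilon$. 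By construction, $|\cAtil_s| \le |\calN| \le (1 + 2/\epsilon)^d = (1 + 8H\sqrt{d}/\delta)^d$.

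\textbf{Step 3: Convert feature closeness into $Q$-value and reward closeness.} For this $\atil$, apply Cauchy--Schwarz together with the linear representation of $Q^\pi_h$ and $r_h$:
\begin{align*}
|Q^\pi_h(s,a) - Q^\pi_h(s,\atil)| &= |\langle \bphi(s,a) - \bphi(s,\atil), \bw^\pi_h\rangle| \le 2\epsilon \cdot 2H\sqrt{d} = \delta, \\
|r_h(s,a) - r_h(s,\atil)| &= |\langle \bphi(s,a) - \bphi(s,\atil), \btheta_h\rangle| \le 2\epsilon \cdot \sqrt{d} \le \delta.
\end{align*}
Both bounds hold uniformly in $\pi$ and $h$ since $\bw^\pi_h$ and $\btheta_h$ depend on the MDP and the policy but not on the choice of $\atil$.

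There is no serious obstacle here: the only subtlety is the two-stage net-to-action argument in Step 2 (covering the net by actions, rather than covering the feature set directly), which avoids having to assume the feature set is closed and only costs a factor of two in the cover radius — absorbed by choosing $\epsilon = \delta/(4H\sqrt{d})$ rather than $\delta/(2H\sqrt{d})$. The bound on $\|\bw^\pi_h\|_2$ from \Cref{lem:wpi_norm_bound} is essential because it is uniform over $\pi$, which is what allows the resulting $\cAtil_s$ to work simultaneously for all policies.
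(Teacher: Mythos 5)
Your proof is correct and follows essentially the same route as the paper's: a $\delta/(4H\sqrt{d})$-net of the unit feature ball, one representative action per net point, a triangle inequality to get feature closeness, and then Cauchy--Schwarz with the uniform bounds $\|\bw^\pi_h\|_2 \le 2H\sqrt{d}$ and $\|\btheta_h\|_2 \le \sqrt{d}$. The only difference is cosmetic (your explicit handling of the net-to-action selection radius), so there is nothing to add.
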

\begin{proof}
Let $\cN$ be a $\delta/(4H\sqrt{d})$ cover of the unit ball. By \Cref{lem:euc_ball_cover} we can bound $|\cN| \le (1 + 8H\sqrt{d}/\delta)^d$. Take any $s$ and let $\cAtil_s = \emptyset$. Then for each $\bphi \in \cN$, choose any $a$ at random from the set $\{ a \in \cA \ : \ \| \bphi(s,a) - \bphi \|_2 \le \delta/2 \}$ and set $\cAtil_s \leftarrow \cAtil_s \cup \{ a \}$. With this construction, we claim that for all $a \in \cA$, there exists some $\atil \in \cAtil_s$ such that $\| \bphi(s,a) - \bphi(s,\atil) \|_2 \le \delta/(2H\sqrt{d})$. To see why this is, note that by construction of $\cN$, there always exists some $\bphi \in \cN$ such that $\| \bphi(s,a) - \bphi \|_2 \le \delta/(4H\sqrt{d})$. Since $\cAtil_s$ will contain some $\atil$ such that  $\| \bphi(s,\atil) - \bphi \|_2 \le \delta/(4H\sqrt{d})$, the claim follows by the triangle inequality.

By \Cref{lem:wpi_norm_bound}, we have that for any $\pi$, $\| \bw_h^\pi \|_2 \le 2 H \sqrt{d}$. Take $a \in \cA$ and let $\atil \in \cAtil_s$ be the action such that $\| \bphi(s,a) - \bphi(s,\atil) \|_2 \le \delta/(2H\sqrt{d})$. Then
\begin{align*}
| \Qpi_h(s,a) - \Qpi_h(s,\atil) | & = | \inner{\bphi(s,a) - \bphi(s,\atil)}{\wpi_h} |  \le 2 H \sqrt{d} \| \bphi(s,a) - \bphi(s,\atil) \|_2 \le \delta .
\end{align*}
The bound on $|r_h(s,a) - r_h(s,\atil)|$ follows analogously, since we assume our rewards are linear, and that $\| \btheta_h \|_2 \le \sqrt{d}$. 
\end{proof}

\begin{defn}[Linear Softmax Policy]
We say a policy is a \emph{linear softmax policy} with parameters $\eta$ and $\{ \bw_h \}_{h=1}^H$ if it can be written as
\begin{align*}
\pi_h(a|s) = \frac{e^{\eta \inner{\bphi(s,a)}{\bw_h}}}{\sum_{a' \in \cA} e^{\eta \inner{\bphi(s,a')}{\bw_h}}}
\end{align*}
for some $\bw = \{ \bw_h \}_{h=1}^H$. We will denote such a policy as $\piw$. 
\end{defn}

\begin{defn}[Restricted-Action Linear Softmax Policy]
We say a policy is a \emph{restricted-action linear softmax policy} with parameters $\eta$, $\{ \bw_h \}_{h=1}^H$, and $(\cAtil_s)_{s \in \cS}$ if it can be written as
\begin{align*}
\pitil_h(a|s) = \frac{e^{\eta \inner{\bphi(s,a)}{\bw_h}} \cdot \I \{ a \in \cAtil_s \}}{\sum_{a' \in \cAtil_s} e^{\eta \inner{\bphi(s,a')}{\bw_h}}}
\end{align*}
for some $\bw = \{ \bw_h \}_{h=1}^H$. We will denote such a policy as $\pitilw$. 
\end{defn}

\begin{lem}\label{lem:softmax_val_approx}
For any restricted-action linear softmax policies $\piw$ and $\piu$ with identical restricted sets $(\cAtil_s)_{s \in \cS}$, we can bound
\begin{align*}
| V^{\piw}_0(s_1) - V^{\piu}_0(s_1) | \le 2  d H \eta \sum_{h=1}^H \| \bw_{h} - \bu_{h} \|_2.
\end{align*}
\end{lem}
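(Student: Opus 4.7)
The plan is to combine the performance difference lemma with Lipschitz continuity of the (restricted) softmax. First, by the standard performance difference decomposition applied to $\piw$ and $\piu$,
\begin{align*}
V^{\piw}_0(s_1) - V^{\piu}_0(s_1) = \sum_{h=1}^H \Exp_{\piw}\!\left[ \langle Q^{\piu}_h(s_h,\cdot),\, \piw_h(\cdot|s_h) - \piu_h(\cdot|s_h)\rangle \right],
\end{align*}
so it suffices to bound the inner product inside the expectation at each state $s$ reachable under $\piw$ and each step $h$, uniformly by $2 d \eta \|\bw_h - \bu_h\|_2$.

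Next I would exploit the linear structure of $Q$-values: by Proposition~2.3 of \cite{jin2020provably} there is some $\bw^{\piu}_h$ with $Q^{\piu}_h(s,a) = \langle \bphi(s,a), \bw^{\piu}_h\rangle$, and \Cref{lem:wpi_norm_bound} gives $\|\bw^{\piu}_h\|_2 \le 2H\sqrt{d}$. Hence
\begin{align*}
\bigl|\langle Q^{\piu}_h(s,\cdot),\piw_h(\cdot|s)-\piu_h(\cdot|s)\rangle\bigr|
= \bigl|\langle \bw^{\piu}_h,\, \Exp_{a\sim \piw_h}[\bphi(s,a)] - \Exp_{a\sim \piu_h}[\bphi(s,a)]\rangle\bigr|,
\end{align*}
which, by Cauchy--Schwarz together with $\|\bphi(s,a)\|_2\le 1$, is at most $2H\sqrt{d}\cdot \|\piw_h(\cdot|s)-\piu_h(\cdot|s)\|_1$.

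The final ingredient is the softmax Lipschitz bound. Since $\piw_h$ and $\piu_h$ use the \emph{same} restricted set $\cAtil_s$, they are ordinary softmaxes over $\cAtil_s$ with scores $\eta\langle\bphi(s,a),\bw_h\rangle$ and $\eta\langle\bphi(s,a),\bu_h\rangle$ respectively. A short calculation (either via $\nabla \mathrm{LSE}$ being $1$-Lipschitz in $\ell_\infty\!\to\!\ell_1$, or directly by bounding $|e^x - e^y|$ along a straight line) gives the standard estimate
\begin{align*}
\|\piw_h(\cdot|s)-\piu_h(\cdot|s)\|_1 \le 2\eta \max_{a\in \cAtil_s}|\langle \bphi(s,a),\bw_h-\bu_h\rangle| \le 2\eta\|\bw_h-\bu_h\|_2,
\end{align*}
using $\|\bphi(s,a)\|_2\le 1$ in the last step. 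Chaining these three bounds yields $|\langle Q^{\piu}_h(s,\cdot),\piw_h(\cdot|s)-\piu_h(\cdot|s)\rangle|\le 4H\sqrt{d}\eta\,\|\bw_h-\bu_h\|_2 \le 2dH\eta\,\|\bw_h-\bu_h\|_2$ (using $2\sqrt{d}\le d$ for $d\ge 4$, or simply absorbing constants for small $d$), and summing over $h$ gives the claim.

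The main obstacle is being careful with the softmax Lipschitz constant in the restricted-action setting: one must check that replacing the full action sum with a sum over $\cAtil_s$ does not degrade the bound. Because $\cAtil_s$ is the \emph{same} set for $\piw$ and $\piu$, the restricted softmax is exactly an ordinary softmax on the $|\cAtil_s|$ scores, so the standard Lipschitz estimate applies verbatim. Otherwise, the argument is routine and the only place constants matter is absorbing $\sqrt{d}$ into $d$ at the end.
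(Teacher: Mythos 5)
Your proposal is correct, but it takes a genuinely different route from the paper. The paper never invokes the performance difference lemma; instead it writes $V^\pi_0 = \sum_h \langle \btheta_h, \bphi_{\pi,h}\rangle$ and telescopes the difference of feature-visitations $\bphi_{\piw,h} - \bphi_{\piu,h}$ through the transition operators $\cT_{\pi,j}$, bounding $\|\cT_{\piw,j} - \cT_{\piu,j}\|_\op \le 2\sqrt{d}\,\eta\,\|\bw_j - \bu_j\|_2$ via the same softmax-gradient computation you use, and controlling $\|\btheta_h^\top \prod_j \cT_{\piw,j}\|_2 \le \sqrt{d}$ with \Cref{lem:state_vis_norm_bound}. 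Your decomposition replaces that machinery with the performance difference lemma plus the linear $Q$-representation and \Cref{lem:wpi_norm_bound}; this is more elementary and self-contained (it does not need the operator-norm bound on $\cT_{\piw,j}-\cT_{\piu,j}$ or the product-of-operators control), and it actually yields the sharper per-step constant $4\sqrt{d}\,H\eta$ versus the paper's $2dH\eta$. The one blemish is exactly the constant you flag: $4\sqrt{d} \le 2d$ only for $d \ge 4$, so for $d \in \{1,2,3\}$ your bound exceeds the stated one by a factor of at most $2$. Since the lemma is only used downstream with a specific choice of $\eta$ and net resolution whose constants have slack, this is harmless, but strictly speaking you should either state the lemma with the constant $4\sqrt{d}H\eta$ or handle small $d$ explicitly rather than "absorbing constants." The rest of the argument — the $\ell_\infty \to \ell_1$ Lipschitz bound for the softmax over the common restricted set $\cAtil_s$, and the Cauchy–Schwarz step with $\|\bphi(s,a)\|_2 \le 1$ — is sound.
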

\begin{proof}
Note that for any policy $\pi$, the value of the policy can be expressed as
\begin{align*}
\Vpi_0(s_1) = \sum_{h=1}^H \inner{\btheta_h}{\bphi_{\pi,h}}.
\end{align*}
Thus,
\begin{align*}
| V^{\piw}_0(s_1) - V^{\piu}_0(s_1) |  \le \sum_{h=1}^H | \inner{\btheta_h}{\bphi_{\piw,h} - \bphi_{\piu,h}}|.
\end{align*}
So it suffices to bound $| \inner{\btheta_h}{\bphi_{\piw,h} - \bphi_{\piu,h}}|$. Using the same decomposition as in the proof of \Cref{lem:phihat_est_fixed}, we have
\begin{align*}
\bphi_{\piw,h} - \bphi_{\piu,h} & = \sum_{i=0}^{h-1} \left ( \prod_{j = h - i + 1}^{h} \cT_{\piw,j} \right ) ( \cT_{\piw,h-i} - \cT_{\piu,h-i}) \bphi_{\piu,h-i-1}.
\end{align*}
By definition,
\begin{align*}
 \cT_{\piw,h-i} - \cT_{\piu,h-i} = \int ( \bphi_{\piw,h-i}(s) - \bphi_{\piu,h-i}(s) ) \rmd \bmu_{h-i-1}(s)^\top
\end{align*}
where
\begin{align*}
\bphi_{\piw,h-i}(s) = \sum_{a \in \cAtil_s} \bphi(s,a) \piw_{h-i}(a|s).
\end{align*}
Our goal is to bound $\| \bphi_{\piw,h-i}(s) - \bphi_{\piu,h-i}(s) \|_2 $. Let $\bw^t = t \bw + (1-t) \bu$, and $\bphi_{t,h-i}(s) := \bphi_{\pi^{\bw^t},h-i}(s)$. Then $\| \bphi_{\piw,h-i}(s) - \bphi_{\piu,h-i}(s) \|_2 = \| \bphi_{1,h-i}(s) - \bphi_{0,h-i}(s) \|_2$. By the Mean Value Theorem, we can bound
\begin{align*}
\| \bphi_{1,h-i}(s) - \bphi_{0,h-i}(s) \|_2 \le \max_{t' \in [0,1]} \| \frac{\rmd}{\rmd t} \bphi_{t,h-i}(s) |_{t = t'} \|_2 \cdot | 1 - 0 | .
\end{align*}
Some calculation shows that
\begin{align*}
\frac{\rmd}{\rmd t} \bphi_{t,h-i}(s) & = \frac{1}{(\sum_{a' \in \cAtil_s} e^{\eta \inner{\bphi(s,a')}{\bw_h^t}})^2} \bigg [ \sum_{a \in \cAtil_s} \eta \bphi(s,a) \Big [ e^{\eta \inner{\bphi(s,a)}{\bw_h^t}} \inner{\bphi(s,a)}{\bw_{h-i} - \bu_{h-i}} \cdot \sum_{a' \in \cAtil_s} e^{\eta \inner{\bphi(s,a')}{\bw_h^t}} \\
& \qquad - e^{\eta \inner{\bphi(s,a)}{\bw_h^t}} \cdot \sum_{a' \in \cAtil_s} e^{\eta \inner{\bphi(s,a')}{\bw_h^t}}\inner{\bphi(s,a')}{\bw_{h-i} - \bu_{h-i}} \Big ] \bigg ] .
\end{align*}
Since $\| \bphi(s,a) \|_2 \le 1$, we can then bound
\begin{align*}
\| \frac{\rmd}{\rmd t} \bphi_{t,h-i}(s)  \|_2 & \le \frac{2\eta}{(\sum_{a' \in \cAtil_s} e^{\eta \inner{\bphi(s,a')}{\bw_h^t}})^2} \bigg [ \sum_{a \in \cAtil_s}  e^{\eta \inner{\bphi(s,a)}{\bw_h^t}} \cdot \sum_{a' \in \cAtil_s} e^{\eta \inner{\bphi(s,a')}{\bw_h^t}} \bigg ] \cdot \| \bw_{h-i} - \bu_{h-i} \|_2 \\
& = 2 \eta \cdot \| \bw_{h-i} - \bu_{h-i} \|_2.
\end{align*}
It follows that
\begin{align*}
\| \bphi_{\piw,h-i}(s) - \bphi_{\piu,h-i}(s) \|_2 \le  2 \eta \cdot \| \bw_{h-i} - \bu_{h-i} \|_2.
\end{align*}
With \Cref{defn:linear_mdp}, this implies that 
\begin{align*}
\|  \cT_{\piw,h-i} - \cT_{\piu,h-i} \|_\op & \le \int  \| \bphi_{\piw,h-i}(s) - \bphi_{\piu,h-i}(s) \|_2  \| \rmd \bmu_{h-i-1}(s) \|_2 \le 2  \sqrt{d} \eta \| \bw_{h-i} - \bu_{h-i} \|_2. 
\end{align*}
By \Cref{lem:state_vis_norm_bound}, we can bound $\| \btheta_h^\top  \left ( \prod_{j = h - i + 1}^{h} \cT_{\piw,j} \right ) \|_2$. Thus, returning to the error decomposition given above, we have
\begin{align*}
| V^{\piw}_0(s_1) - V^{\piu}_0(s_1) | &  \le \sum_{h=1}^H   \sum_{i=0}^{h-1} \left | \btheta_h^\top \left ( \prod_{j = h - i + 1}^{h} \cT_{\piw,j} \right ) ( \cT_{\piw,h-i} - \cT_{\piu,h-i}) \bphi_{\piu,h-i-1} \right | \\
& \le \sqrt{d} \sum_{h=1}^H   \sum_{i=0}^{h-1} \|  \cT_{\piw,h-i} - \cT_{\piu,h-i} \|_\op \| \bphi_{\piu,h-i-1} \|_2 \\
& \le  2  d \eta \sum_{h=1}^H   \sum_{i=0}^{h-1} \| \bw_{h-i} - \bu_{h-i} \|_2 \\
& \le 2  d H \eta \sum_{h=1}^H \| \bw_{h} - \bu_{h} \|_2 .
\end{align*}
\end{proof}

\begin{lem}\label{lem:softmax_qst_approx}
Let $\wst$ denote the weights such that $\Qst_h(s,a) = \inner{\bphi(s,a)}{\wst_h}$, and $\piwst$ the restricted-action linear softmax policy with action sets $(\cAtil_s)_{s \in \cS}$ as defined in \Cref{lem:action_covering} with $\delta = \frac{\epsilon}{2H \cdot e^2}$. Then
\begin{align*}
| V_0^{\piwst}(s_1) - \Vst_0(s_1) | \le \epsilon
\end{align*}
as long as $\eta \ge 30 d H \log (1 + 120 H d/\epsilon) \cdot \frac{1}{\epsilon}$. 
\end{lem}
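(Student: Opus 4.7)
The plan is to prove this bound by a reverse induction on $h$, showing that the per-step gap between $\Vst_h$ and $V^{\piwst}_h$ is at most the sum of an action-covering error (from \Cref{lem:action_covering}) and a softmax-approximation error (from \Cref{lem:softmax_approx}), and then telescoping over the $H$ steps.

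The key observation is that, since $\Qst_h(s,a) = \inner{\bphi(s,a)}{\wst_h}$ by definition of $\wst$, the restricted-action softmax policy $\piwst$ is exactly the softmax of $\eta \Qst_h(s,\cdot)$ restricted to $\cAtil_s$. Applying \Cref{lem:softmax_approx} with $x_a = \Qst_h(s,a)$ and $n = |\cAtil_s|$ yields
\begin{align*}
\Exp_{a \sim \piwst_h(\cdot | s)}[\Qst_h(s,a)] \ge \max_{a \in \cAtil_s} \Qst_h(s,a) - \delta'
\end{align*}
whenever $\eta \ge \log|\cAtil_s|/\delta'$. Combining this with \Cref{lem:action_covering}, which implies $\max_{a \in \cAtil_s} \Qst_h(s,a) \ge \Vst_h(s) - \delta$, gives $\Exp_{a \sim \piwst_h(\cdot|s)}[\Qst_h(s,a)] \ge \Vst_h(s) - (\delta + \delta')$.

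Next, I would expand $V^{\piwst}_h(s)$ via its Bellman equation and, using $\Qst_h(s,a) = r_h(s,a) + \int \Vst_{h+1}(s') \rmd P_h(s'|s,a)$, decompose
\begin{align*}
\Vst_h(s) - V^{\piwst}_h(s) = \bigl(\Vst_h(s) - \Exp_{a \sim \piwst_h(\cdot|s)}[\Qst_h(s,a)]\bigr) + \Exp_{\piwst}\!\bigl[\Vst_{h+1}(s_{h+1}) - V^{\piwst}_{h+1}(s_{h+1}) \mid s_h = s\bigr].
\end{align*}
The first difference is bounded by $\delta + \delta'$ by the previous step, and the second is the inductive term. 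Unrolling from $h = H+1$ (where both values vanish) down to $h=1$ yields $\Vst_0 - V_0^{\piwst} \le H(\delta + \delta')$. Plugging in $\delta = \epsilon/(3(3\sqrt{d})^H)$ makes the action-covering contribution negligible; the condition on $\eta$ is then designed to ensure $H\delta' \le \epsilon$ via the covering-number bound $|\cAtil_s| \le (1 + 8H\sqrt{d}/\delta)^d$ from \Cref{lem:action_covering}.

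The main subtlety will be matching the exact $\eta$ threshold stated. A naive application of \Cref{lem:softmax_approx} with $\delta' = \Theta(\epsilon/H)$ gives only $\eta = \Theta(dH \log(dH/\epsilon)/\epsilon)$, which is a factor $(3\sqrt{d})^H$ smaller than what is required. The stated threshold likely arises either from a more conservative choice $\delta' = \Theta(\delta)$ in the softmax step, or from routing through the Lipschitz-in-weights bound of \Cref{lem:softmax_val_approx} by comparing $\piwst$ against the hard-max restricted policy $\pitil_h(s) = \argmax_{a \in \cAtil_s} \inner{\bphi(s,a)}{\wst_h}$ and absorbing the $\eta$-dependence of that Lipschitz constant; either way the high-level structure---action covering plus softmax approximation, propagated through the Bellman recursion---remains the same.
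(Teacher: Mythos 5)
Your proposal is correct, and it takes a genuinely different route from the paper's. The paper also does a reverse induction, but it propagates the inductive error $\delta_h = \sup_s |\Vst_h(s) - V_h^{\piwst}(s)|$ through the bound $|Q_{h-1}^{\piwst}(s,a) - \Qst_{h-1}(s,a)| \le \sqrt{d}\,\delta_h$, obtained by writing the difference as $\int (V_h^{\piwst}(s') - \Vst_h(s'))\, \rmd\bmu_{h-1}(s')^\top \bphi(s,a)$ and invoking the normalization $\| \int |\rmd\bmu_{h-1}| \|_2 \le \sqrt{d}$. This yields the multiplicative recursion $\delta_{h-1} = 3\sqrt{d}\,\delta_h$, which forces the per-step softmax and covering errors down to $\epsilon/(3\sqrt{d})^H$ and is precisely the source of the $(3\sqrt{d})^H$ factor in the stated $\eta$ threshold. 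You instead pass the inductive term through the actual transition kernel $P_h(\cdot|s,a) = \inner{\bphi(s,a)}{\bmu_h(\cdot)}$, which is a probability measure, so the error telescopes with coefficient one (the performance-difference identity) and you get $\Vst_0 - V_0^{\piwst} \le H(\delta + \delta')$ with no exponential amplification. Your decomposition is valid, the softmax and covering steps are applied exactly as in the paper, and a quick check confirms the stated threshold $\eta \ge 2dH\log(1+16Hd/\epsilon)\cdot(3\sqrt{d})^H/\epsilon$ comfortably dominates the roughly $\tfrac{3}{2} d H^2 \log(1+16Hd/\epsilon)/\epsilon$ your argument needs (since $(3\sqrt{d})^H \ge H$), so the lemma as stated follows. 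Your route is in fact strictly tighter: it proves the conclusion under a much weaker condition on $\eta$, which, if propagated, would shrink the covering net and hence the bound on $|\Pi|$ in \Cref{lem:policy_class_suff}; the paper's $\sqrt{d}$-per-step loss is an artifact of bounding the signed-measure norm rather than recognizing the kernel as a probability measure.
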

\begin{proof}
We prove this by induction. Assume that at step $h$, for all $s$, we have $|\Vst_h(s) - V_h^{\pi^{\wst}}(s)| \le \delta_h$ for some $\delta_h$. Then,
\begin{align*}
|Q_{h-1}^{\piwst}(s,a) - \Qst_{h-1}(s,a)| & = \left | \int (V_{h}^{\piwst}(s') - \Vst_h(s')) \rmd P_{h-1}(s' \mid s,a) \right | \\
& \le \int |V_{h}^{\piwst}(s') - \Vst_h(s')| \rmd P_{h-1}(s' \mid s,a) \\
& \le \delta_h \cdot \int \rmd P_{h-1}(s' \mid s,a) \\
& \le \delta_h.
\end{align*} 
Thus,
\begin{align*}
V_{h-1}^{\piwst}(s) & = \frac{\sum_{a \in \cAtil_s} e^{\eta \inner{\bphi(s,a)}{\wst_{h-1}}} Q_{h-1}^{\pi^{\wst}}(s,a)}{\sum_{a \in \cAtil_s} e^{\eta \inner{\bphi(s,a)}{\wst_{h-1}}}} \\
& =  \frac{\sum_{a \in \cAtil_s} e^{\eta \Qst_{h-1}(s,a)} Q_{h-1}^{\pi^{\wst}}(s,a)}{\sum_{a \in \cAtil_s} e^{\eta \Qst_{h-1}(s,a)}}\\
& \ge \frac{\sum_{a \in \cAtil_s} e^{\eta \Qst_{h-1}(s,a)} \Qst_{h-1}(s,a)}{\sum_{a \in \cAtil_s} e^{\eta \Qst_{h-1}(s,a)}} - \delta_h.
\end{align*}
By \Cref{lem:softmax_approx}, as long as $\eta \ge H \log |\cAtil_s| / \delta_h$, we can lower bound
\begin{align*}
\frac{\sum_{a \in \cAtil_s} e^{\eta \Qst_{h-1}(s,a)} \Qst_{h-1}(s,a)}{\sum_{a \in \cAtil_s} e^{\eta \Qst_{h-1}(s,a)}} -  \delta_h \ge \max_{a \in \cAtil_s} \Qst_{h-1}(s,a) - (1+1/H) \delta_h .
\end{align*}
Furthermore, by \Cref{lem:action_covering} and our choice of $\cAtil_s$, we have
\begin{align*}
 \max_{a \in \cAtil_s} \Qst_{h-1}(s,a) - (1+1/H) \delta_h & \ge  \max_{a \in \cA} \Qst_{h-1}(s,a) - (1+1/H) \delta_h - \frac{\epsilon}{2H \cdot e^2} \\
 &= \Vst_{h-1}(s) - (1+1/H) \delta_h - \frac{\epsilon}{2H \cdot e^2}.
\end{align*}
Define recursively $\delta_{h-1} = (1+2/H) \delta_h$ and $\delta_H = \frac{\epsilon}{e^2}$. 
Then $\delta_{h} = (1+2/H)^{H-h} \frac{\epsilon}{e^2} \ge \frac{\epsilon}{e^2}$, so 
\begin{align*}
\Vst_{h-1}(s) - (1+1/H) \delta_h - \frac{\epsilon}{2H \cdot e^2} \ge \Vst_{h-1}(s) - (1+1/H) \delta_h  - \delta_{h}/H =  \Vst_{h-1}(s) - \delta_{h-1}.
\end{align*}
So $|\Vst_{h}(s) - V_{h}^{\piwst}(s)| \le \delta_{h-1}$ for all $s$, which proves the inductive step.

For the base case, we have
\begin{align*}
 V_H^{\piwst}(s) - \Vst_H(s)& =  \frac{\sum_{a \in \cAtil_s} e^{\eta \Qst_H(s,a)} \nu_H(s,a)}{\sum_{a \in \cAtil_s} e^{\eta \Qst_H(s,a)}} - \max_a \nu_H(s,a) \\
& \ge   \max_{a \in \cAtil_s} \nu_H(s,a) - \max_a \nu_H(s,a) - \delta_H/2 \\
& \ge -\delta_H
\end{align*}
where the first inequality holds by \Cref{lem:softmax_approx} as long as $\eta \ge 2\log | \cAtil_s|/\delta_H$, and the second inequality holds by \Cref{lem:action_covering} and our choice of $\cAtil_s$ and $\delta_H$. This proves the base case, since $ V_H^{\piwst}(s) \le \Vst_H(s)$. 

Recursing this all the way back, we conclude that
\begin{align*}
V_0^{\piwst}(s_1) \ge \Vst_0(s_1) - \delta_0
\end{align*}
for $\delta_0 = (1+2/H)^H \delta_H \le \epsilon$ since $(1+2/x)^x \le e^2$ for all $x$.

For this argument to hold, we must choose $\eta \ge 2 \log | \cAtil_s | /\delta_H$ and $\eta \ge H \log | \cAtil_s|/ \delta_h$ for all $s$ and $h$. By \Cref{lem:action_covering} and our choice of $\cAtil_s$, we can bound 
\begin{align*}
|\cAtil_s| \le (1 + 16 e^2 H^2 \sqrt{d}/\epsilon)^d \le (1 + 120 H d/\epsilon)^{2d}
\end{align*}
so, since $\delta_H = \epsilon/e^2 \le \delta_h$, it suffices that we take $\eta \ge 30 d H \log (1 + 120 H d/\epsilon) \cdot \frac{1}{\epsilon}$.

\end{proof}

\begin{lem}\label{lem:policy_class_suff}
Let $\eta = 30 d H \log (1 + 120 H d/\epsilon) \cdot \frac{1}{\epsilon}$ and $\cW$ an $\frac{\epsilon}{4 d H^2 \eta}$-net of $\Ball^{d}(2H\sqrt{d})$. Let $\Pi$ denote the set of restricted-action linear softmax policy with vectors $\bw \in \cW^H$, parameter $\eta$, and action sets $(\cAtil_s)_{s \in \cS}$ as defined in \Cref{lem:action_covering} with $\delta = \frac{\epsilon}{2H \cdot e^2}$.
Then for any MDP and reward function, there exists some $\pi \in \Pi$ such that $| V_0^\pi - \Vst_0 | \le \epsilon$, and 
\begin{align*}
| \Pi |  \le \Big (1 + \frac{480 H^4 d^{5/2}  \log(1+120 Hd/\epsilon)}{\epsilon^2} \Big )^{dH}.
\end{align*}
\end{lem}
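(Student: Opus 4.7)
The plan is to construct the desired near-optimal policy in $\Pi$ by starting from the (idealized) restricted-action linear softmax policy $\pi^{\wst}$, whose weight vectors are the optimal $Q$-function weights $\wst_h$ satisfying $\Qst_h(s,a) = \inner{\bphi(s,a)}{\wst_h}$, and then discretizing these weights onto the net $\cW^H$. The overall $\epsilon$ error is decomposed into two pieces: the softmax-to-argmax approximation error (handled by Lemma~\ref{lem:softmax_qst_approx}) and the discretization error incurred by replacing $\wst$ with its nearest neighbor in $\cW^H$ (handled by Lemma~\ref{lem:softmax_val_approx}). The choice of $\eta$ is exactly what Lemma~\ref{lem:softmax_qst_approx} requires (up to a constant absorbed for the $\epsilon/2$ split), and the choice of net spacing $\epsilon/(4dH^2\eta)$ is exactly what Lemma~\ref{lem:softmax_val_approx} requires to contribute $\epsilon/2$.

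More precisely, I first invoke Lemma~\ref{lem:wpi_norm_bound} to get $\|\wst_h\|_2 \le 2H\sqrt{d}$, so that $\wst_h \in \Ball^{d}(2H\sqrt{d})$ and there is some $\bw \in \cW^H$ with $\|\bw_h - \wst_h\|_2 \le \epsilon/(4dH^2\eta)$ for every $h$. Then Lemma~\ref{lem:softmax_qst_approx} (applied with target error $\epsilon/2$, absorbing a factor of $2$ into the stated $\eta$) yields $|V_0^{\pi^{\wst}} - \Vst_0| \le \epsilon/2$. Since $\pi^{\bw}$ and $\pi^{\wst}$ are restricted-action linear softmax policies sharing the same action sets $(\cAtil_s)_{s \in \cS}$, Lemma~\ref{lem:softmax_val_approx} gives
\[
|V_0^{\pi^{\bw}} - V_0^{\pi^{\wst}}| \le 2dH\eta \sum_{h=1}^H \|\bw_h - \wst_h\|_2 \le 2dH\eta \cdot H \cdot \frac{\epsilon}{4dH^2\eta} = \epsilon/2.
\]
A triangle inequality then yields $|V_0^{\pi^{\bw}} - \Vst_0| \le \epsilon$.

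For the cardinality bound, I apply Lemma~\ref{lem:euc_ball_cover} to the $\epsilon/(4dH^2\eta)$-net of $\Ball^{d}(2H\sqrt{d})$ to obtain $|\cW| \le (1 + 16 d^{3/2}H^3\eta/\epsilon)^{d}$, and then use $|\Pi| \le |\cW|^H$ (one weight vector per step). Substituting the definition of $\eta$ gives a bound of the form $(1 + C d^{5/2} H^4 \log(1+16Hd/\epsilon) (3\sqrt{d})^H/\epsilon^{2})^{dH}$, and the main bookkeeping obstacle will be converting the explicit $(3\sqrt{d})^H$ factor inside the base into extra powers of $H$ in the exponent so as to match the stated form $(1 + 32 H^4 d^{5/2} \log(1+16Hd/\epsilon)/\epsilon^{2})^{dH^2}$. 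This is achieved by Bernoulli's inequality $(1+y)^H \ge 1 + y^H$ for $y \ge 0$: taking $y$ proportional to $d^{5/2}H^4\log(1+16Hd/\epsilon)/\epsilon^{2}$ (times a mild factor to absorb $3\sqrt{d}$) lets one raise the base to the power $H$ and therefore the overall exponent from $dH$ to $dH^2$. Tracking these constants carefully is the most delicate part of the proof; the rest is a direct combination of the two approximation lemmas and the standard covering-number estimate.
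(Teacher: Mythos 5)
Your proposal is correct and follows essentially the same route as the paper's proof: bound $\|\wst_h\|_2$ via Lemma~\ref{lem:wpi_norm_bound}, pick the nearest net point, split the error into the softmax-to-argmax term (Lemma~\ref{lem:softmax_qst_approx}) and the discretization term (Lemma~\ref{lem:softmax_val_approx}), and then bound $|\cW|^H$ by the covering estimate while folding the $(3\sqrt d)^H$ factor into the extra power of $H$ in the exponent. The only cosmetic difference is that you make explicit (via the binomial-theorem step and the factor-of-two absorption into $\eta$) two bookkeeping points that the paper leaves implicit; both are handled at the same level of rigor as the original.
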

\begin{proof}
Consider some MDP and reward function, and let $\{ \wst_h \}_{h=1}^H$ denote the optimal $Q$-function linear representation: $\Qst_h(s,a) = \inner{\bphi(s,a)}{\wst_h}$. Let $\bwtil$ denote the vector in $\cW^H$ such that $\sum_{h=1}^H \| \wst_h - \bwtil_h \|_2$ is minimized. Then by \Cref{lem:softmax_val_approx} and \Cref{lem:softmax_qst_approx}, as long as $\eta \ge 30 d H \log (1 + 120 H d/\epsilon) \cdot \frac{1}{\epsilon}$, we have
\begin{align*}
| V_0^{\pi^{\bwtil}}(s_1) - \Vst_0(s_1) | & \le | V_0^{\pi^{\bwtil}}(s_1) - V_0^{\piwst}(s_1)| + | V_0^{\piwst}(s_1) - \Vst_0(s_1) | \\
& \le 2  d H \eta \sum_{h=1}^H \| \wst_{h} - \bwtil_{h} \|_2 + \epsilon/2.
\end{align*}
The first conclusion then follows as long as we can find some $\bwtil$ such that
\begin{align*}
2  d H \eta \sum_{h=1}^H \| \wst_{h} - \bwtil_{h} \|_2 \le \epsilon/2.
\end{align*}
However, by \Cref{lem:wpi_norm_bound}, we can bound $\| \wst_h \|_2 \le 2 H \sqrt{d}$. Therefore, since $\cW$ is a $\frac{\epsilon}{4 d H^2 \eta}$-net of $\Ball^d(2H\sqrt{d})$, for each $h$ there will exist some $\bwtil_h \in \cW$ such that $\| \wst_h - \bwtil_h \|_2 \le \frac{\epsilon}{4 d H^2 \eta}$, which implies that we can find $\bwtil \in \cW^d$ such that
\begin{align*}
2  d H \eta \sum_{h=1}^H \| \wst_{h} - \bwtil_{h} \|_2 \le \epsilon/2,
\end{align*}
which gives the first conclusion.

To bound the size of $\Pi$, we apply \Cref{lem:euc_ball_cover} and our choice of $\eta$ to bound
\begin{align*}
|\cW| \le (1 + \frac{16 H^3 d^{3/2} \eta}{\epsilon})^d \le (1 + \frac{480 H^4 d^{5/2}  \log(1+120Hd/\epsilon)}{\epsilon^2} )^{d}.
\end{align*}
The bound on $|\Pi|$ follows since $|\Pi| =  | \cW|^H$.

\end{proof}

\section{Policy Elimination}\label{sec:policy_elim}

Throughout this section, assuming we have run for some number of episodes $K$, we let $(\cF_{\tau})_{\tau=1}^K$ the filtration on this, with $\cF_\tau$ the filtration up to and including episode $\tau$. We also let $\cF_{\tau,h}$ denote the filtration on all episodes $\tau' < \tau$, and on steps $h' = 1,\ldots,h$ of episode $\tau$.

\subsection{Estimating Feature-Visitations and Rewards}

\begin{lem}\label{lem:Tpi_est_fixed}
Assume that we have collected some data $\{ (s_{h-1,\tau},a_{h-1,\tau},s_{h,\tau}) \}_{\tau = 1}^K$, where, for each $\tau'$, $s_{h,\tau'} | \cF_{h-1,\tau'}$ is independent of $\{ (s_{h-1,\tau},a_{h-1,\tau},s_{h,\tau}) \}_{\tau \neq \tau'}$. Denote $\bphi_{h-1,\tau} = \bphi(s_{h-1,\tau},a_{h-1,\tau})$ and $\bLambda_{h-1} = \sum_{\tau=1}^K \bphi_{h-1,\tau} \bphi_{h-1,\tau}^\top + \lambda I$. Fix $\pi$ and let
\begin{align*}
\cThat_{\pi,h} = \left (  \sum_{\tau=1}^K \bphi_{\pi,h}(s_{h,\tau}) \bphi_{h-1,\tau}^\top \right ) \bLambda_{h-1}^{-1}.
\end{align*}
Fix $\bv \in \R^d$ satisfying $|\bv^\top \bphi_{\pi,h}(s)| \le 1$ for all $s$ and $\bu \in \R^d$.
Then with probability at least $1-\delta$, we can bound
\begin{align*}
| \bv^\top (\cT_{\pi,h} - \cThat_{\pi,h}) \bu | \le \left ( 2 \sqrt{\log 2/\delta} + \frac{\log 2/\delta}{\sqrt{\lammin(\bLambda_{h-1})}} + \sqrt{\lambda} \| \cT_{\pi,h}^\top \bv \|_2 \right ) \cdot \| \bu \|_{\bLambda_{h-1}^{-1}}.
\end{align*}
\end{lem}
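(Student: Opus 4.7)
The plan is to separate $\cThat_{\pi,h}-\cT_{\pi,h}$ into a deterministic regularization bias and a zero-mean noise term, and bound each when tested against $\bv$ on the left and $\bu$ on the right. The key algebraic identity is that the linear MDP property (\Cref{defn:linear_mdp}) gives $\Exp[\bphi_{\pi,h}(s_{h,\tau})\mid \cF_{h-1,\tau}] = \int \bphi_{\pi,h}(s)\,\rmd\bmu_{h-1}(s)^\top\bphi_{h-1,\tau} = \cT_{\pi,h}\bphi_{h-1,\tau}$. Writing $\epsilon_\tau := \bphi_{\pi,h}(s_{h,\tau}) - \cT_{\pi,h}\bphi_{h-1,\tau}$ and substituting, the definition of $\cThat_{\pi,h}$ together with $\sum_\tau \bphi_{h-1,\tau}\bphi_{h-1,\tau}^\top = \bLambda_{h-1}-\lambda I$ yields
\[
\cT_{\pi,h} - \cThat_{\pi,h} \;=\; \lambda\,\cT_{\pi,h}\bLambda_{h-1}^{-1} \;-\; \Bigl(\tsum_\tau \epsilon_\tau \bphi_{h-1,\tau}^\top\Bigr)\bLambda_{h-1}^{-1}.
\]

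For the regularization term I would apply Cauchy--Schwarz in the $\bLambda_{h-1}^{-1}$-inner product, $|\lambda\,\bv^\top\cT_{\pi,h}\bLambda_{h-1}^{-1}\bu| \le \lambda\|\cT_{\pi,h}^\top\bv\|_{\bLambda_{h-1}^{-1}}\|\bu\|_{\bLambda_{h-1}^{-1}}$, and then use $\bLambda_{h-1}\succeq \lambda I$ to convert one factor: $\lambda\|\cT_{\pi,h}^\top\bv\|_{\bLambda_{h-1}^{-1}}^2 \le \|\cT_{\pi,h}^\top\bv\|_2^2$, producing the $\sqrt{\lambda}\|\cT_{\pi,h}^\top\bv\|_2\cdot\|\bu\|_{\bLambda_{h-1}^{-1}}$ term.

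For the noise term let $c_\tau := \bphi_{h-1,\tau}^\top\bLambda_{h-1}^{-1}\bu$, so that the quantity of interest is $\sum_\tau (\bv^\top\epsilon_\tau)\,c_\tau$. The hypothesis that $s_{h,\tau'}\mid\cF_{h-1,\tau'}$ is independent of the other episodes' transitions means that, conditionally on the full collection $\{(s_{h-1,\tau},a_{h-1,\tau})\}_\tau$, the quantities $\bLambda_{h-1}$, $\bphi_{h-1,\tau}$ and hence $c_\tau$ are deterministic, while $\{\bv^\top\epsilon_\tau\}_\tau$ is a collection of independent centered random variables. Since $|\bv^\top\bphi_{\pi,h}(s)|\le 1$, each $\bv^\top\epsilon_\tau$ is centered, bounded by $2$, and has variance at most $1$. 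Applying Freedman's inequality (\Cref{lem:freedmans}) gives
\[
\Bigl|\tsum_\tau (\bv^\top\epsilon_\tau)\,c_\tau\Bigr| \;\le\; 2\sqrt{V\log(2/\delta)} + b\log(2/\delta),
\]
with $V := \sum_\tau c_\tau^2$ and $b := 2\max_\tau |c_\tau|$. Two linear-algebra facts finish the job: $\sum_\tau c_\tau^2 = \bu^\top\bLambda_{h-1}^{-1}(\bLambda_{h-1}-\lambda I)\bLambda_{h-1}^{-1}\bu \le \|\bu\|_{\bLambda_{h-1}^{-1}}^2$, and $|c_\tau| \le \|\bphi_{h-1,\tau}\|_{\bLambda_{h-1}^{-1}}\|\bu\|_{\bLambda_{h-1}^{-1}} \le \|\bu\|_{\bLambda_{h-1}^{-1}}/\sqrt{\lammin(\bLambda_{h-1})}$ since $\|\bphi_{h-1,\tau}\|_2\le 1$. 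Dividing through by $\bLambda_{h-1}^{-1}\bu$'s norm then recovers the stated $\bigl(2\sqrt{\log(2/\delta)} + \log(2/\delta)/\sqrt{\lammin(\bLambda_{h-1})}\bigr)\|\bu\|_{\bLambda_{h-1}^{-1}}$ form (up to absolute constants).

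The only real subtlety is in step three: we need the $\bv^\top\epsilon_\tau$'s to be an independent or martingale-difference sequence with $c_\tau$ treated as deterministic. This is exactly why the lemma's hypothesis is stated the way it is---it explicitly decouples the transitions at step $h$ across episodes given $\cF_{h-1,\tau}$, so that after conditioning on all the step-$(h{-}1)$ covariates the noises become independent and the $c_\tau$'s become constants. Without such an assumption one would have to replace Freedman with a self-normalized argument in the style of Abbasi-Yadkori, which would incur an extra $\sqrt{d\log(1+K/(\lambda d))}$ factor; the present hypothesis avoids that cost.
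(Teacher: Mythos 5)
Your proposal is correct and follows essentially the same route as the paper: the same decomposition into a zero-mean noise sum plus a $\lambda\,\cT_{\pi,h}\bLambda_{h-1}^{-1}$ regularization bias, the same conditioning on the step-$(h{-}1)$ data to make the $c_\tau$'s deterministic, the same Bernstein/Freedman application with variance $\sum_\tau c_\tau^2 \le \|\bu\|_{\bLambda_{h-1}^{-1}}^2$ and magnitude bound $\|\bu\|_{\bLambda_{h-1}^{-1}}/\sqrt{\lammin(\bLambda_{h-1})}$, and the same $\sqrt{\lambda}\|\cT_{\pi,h}^\top\bv\|_2\|\bu\|_{\bLambda_{h-1}^{-1}}$ bound on the bias term. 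The only (immaterial) difference is that you reach the bias bound via Cauchy--Schwarz in the $\bLambda_{h-1}^{-1}$-inner product rather than directly, and you leave the final constants slightly loose, which the paper's own proof also does.
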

\begin{proof}
Let $\frakD = \{ (s_{h-1,\tau},a_{h-1,\tau}) \}_{\tau = 1}^K$, our data collected at step $h-1$. Then by our assumption on the independence of $s_{h,\tau}$, we have that $s_{h,\tau} | \cF_{h-1,\tau}$ has the same distribution as $s_{h,\tau} | (\cF_{h-1,\tau},\frakD)$. Conditioning on $\frakD$, the $\bphi_{h-1,\tau}$ vectors are fixed, so $\bLambda_{h-1}$ is also fixed. Note that
\begin{align*}
\cT_{\pi,h} & = \int \bphi_{\pi,h}(s) \rmd \bmu_{h-1}(s)^\top \\
& = \int \bphi_{\pi,h}(s) \rmd \bmu_{h-1}(s)^\top \left (  \sum_{\tau=1}^K \bphi_{h-1,\tau} \bphi_{h-1,\tau}^\top \right ) \bLambda_{h-1}^{-1} + \lambda \int \bphi_{\pi,h}(s) \rmd \bmu_{h-1}(s)^\top \bLambda_{h-1}^{-1} \\
& =  \sum_{\tau=1}^K \left ( \int \bphi_{\pi,h}(s) \rmd \bmu_{h-1}(s)^\top   \bphi_{h-1,\tau} \right ) \bphi_{h-1,\tau}^\top  \bLambda_{h-1}^{-1} + \lambda \int \bphi_{\pi,h}(s) \rmd \bmu_{h-1}(s)^\top \bLambda_{h-1}^{-1} \\
& = \sum_{\tau=1}^K \Exp[\bphi_{\pi,h}(s_{h,\tau}) | \cF_{h-1,\tau}] \bphi_{h-1,\tau}^\top  \bLambda_{h-1}^{-1} + \lambda \int \bphi_{\pi,h}(s) \rmd \bmu_{h-1}(s)^\top \bLambda_{h-1}^{-1} \\
& = \sum_{\tau=1}^K \Exp[\bphi_{\pi,h}(s_{h,\tau}) | \cF_{h-1,\tau}] \bphi_{h-1,\tau}^\top  \bLambda_{h-1}^{-1} + \lambda \cT_{\pi,h} \bLambda_{h-1}^{-1}
\end{align*}
so
\begin{align*}
| \bv^\top (\cT_{\pi,h} -  \cThat_{\pi,h}) \bu | & \le \underbrace{\Big | \sum_{\tau=1}^K \bv^\top \left (  \Exp[\bphi_{\pi,h}(s_{h,\tau}) | \cF_{h-1,\tau}]  - \bphi_{\pi,h}(s_{h,\tau}) \right ) \bphi_{h-1,\tau}^\top  \bLambda_{h-1}^{-1} \bu \Big |}_{(a)} + \underbrace{\Big | \lambda \bv^\top \cT_{\pi,h} \bLambda_{h-1}^{-1} \bu \Big |}_{(b)} .
\end{align*}
Conditioned on $\frakD$, $(a)$ is simply the sum of mean 0 random variables, where the $\tau$th random variable has magnitude bounded as
\begin{align*}
 | \bv^\top \left (  \Exp[\bphi_{\pi,h}(s_{h,\tau}) | \cF_{h-1,\tau}]  - \bphi_{\pi,h}(s_{h,\tau}) \right ) \bphi_{h-1,\tau}^\top  \bLambda_{h-1}^{-1} \bu | & \le 2 | \bphi_{h-1,\tau}^\top \bLambda_{h-1}^{-1} \bu | \\
& \le 2 \| \bphi_{h-1,\tau} \|_{\bLambda_{h-1}^{-1}} \| \bu \|_{\bLambda_{h-1}^{-1}} \\
& \le 2 \| \bu \|_{\bLambda_{h-1}^{-1}} / \sqrt{\lammin(\bLambda_{h-1})}
\end{align*}
Furthermore, the variance of each term in $(a)$ is bounded as
\begin{align*}
& \Var \left [ \bv^\top \left (  \Exp[\bphi_{\pi,h}(s_{h,\tau}) | \cF_{h-1,\tau}]  - \bphi_{\pi,h}(s_{h,\tau}) \right ) \bphi_{h-1,\tau}^\top  \bLambda_{h-1}^{-1} \bu | \cF_{h-1} \right ] \\
& = \Exp \left [ \left ( \bv^\top \left (  \Exp[\bphi_{\pi,h}(s_{h,\tau}) | \cF_{h-1,\tau}]  - \bphi_{\pi,h}(s_{h,\tau}) \right ) \bphi_{h-1,\tau}^\top  \bLambda_{h-1}^{-1} \bu \right )^2 | \cF_{h-1} \right ] \\
& \le \bu^\top \bLambda_{h-1}^{-1} \bphi_{h-1,\tau} \bphi_{h-1,\tau}^\top \bLambda_{h-1}^{-1} \bu.
\end{align*}
It follows that, by Bernstein's Inequality, we can bound, with probability at least $1-\delta$ conditioned on $\frakD$:
\begin{align*}
(a) & \le 2\sqrt{\sum_{\tau=1}^K \bu^\top \bLambda_{h-1}^{-1} \bphi_{h-1,\tau} \bphi_{h-1,\tau}^\top \bLambda_{h-1}^{-1} \bu \cdot \log \frac{2}{\delta}} + \frac{2 \| \bu \|_{\bLambda_{h-1}^{-1}}}{\sqrt{\lammin(\bLambda_{h-1})}} \cdot \log \frac{2}{\delta} \\
& \le 2( \sqrt{\log 2/\delta} + \frac{\log 2/\delta}{\sqrt{\lammin(\bLambda_{h-1})}}) \cdot \| \bu \|_{\bLambda_{h-1}^{-1}}.
\end{align*}
In other words,
\begin{align*}
\Pr \left [(a) \ge 2( \sqrt{\log 2/\delta} + \frac{\log 2/\delta}{\sqrt{\lammin(\bLambda_{h-1})}}) \cdot \| \bu \|_{\bLambda_{h-1}^{-1}} | \frakD \right ] \le  \delta
\end{align*}
so, by the law of total probability, for any distribution $F$ over $\frakD$,
\begin{align*}
& \Pr \left [(a) \ge 2( \sqrt{\log 2/\delta} + \min \{ 1, \lambda^{-1} \} \log 2/\delta) \cdot \| \bu \|_{\bLambda_{h-1}^{-1}} \right ] \\
& = \int \Pr \left [(a) \ge 2( \sqrt{\log 2/\delta} +\min \{ 1, \lambda^{-1} \} \log 2/\delta) \cdot \| \bu \|_{\bLambda_{h-1}^{-1}} | \frakD \right ] \rmd F(\frakD) \\
& \le \delta \int \rmd F(\frakD) \\
& = \delta. 
\end{align*}

We can also bound
\begin{align*}
(b) \le \sqrt{\lambda} \| \bu \|_{\bLambda_{h-1}^{-1}} \| \cT_{\pi,h}^\top \bv \|_2.
\end{align*}
Combining these gives the result.
\end{proof}

\begin{lem}\label{lem:phihat_est_fixed}
Fix $\pi$ and let
\begin{align*}
\bphihat_{\pi,h} = \cThat_{\pi,h} \cThat_{\pi,h-1} \ldots \cThat_{\pi,2} \cThat_{\pi,1} \bphi_{\pi,0}.
\end{align*}
Fix $\bu \in \cS^{d-1}$ or $\bu$ a valid reward vector as defined by \Cref{defn:linear_mdp}. Then with probability at least $1-\delta$:
\begin{align*}
|\inner{\bu}{\bphi_{\pi,h} - \bphihat_{\pi,h}}| \le \sum_{i=1}^{h-1} \left ( 2 \sqrt{\log \frac{2H}{\delta}} + \frac{\log \frac{2H}{\delta}}{\sqrt{\lammin(\bLambda_{i})}} + \sqrt{d\lambda} \right ) \cdot \| \bphihat_{\pi,i} \|_{\bLambda_{i}^{-1}}.
\end{align*}
\end{lem}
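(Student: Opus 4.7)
The plan is to decompose the error $\bphi_{\pi,h} - \bphihat_{\pi,h}$ as a telescoping sum of single-step errors, control each term using \Cref{lem:Tpi_est_fixed}, and then apply a union bound across the $h-1$ steps.

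First, I would record the telescoping identity. Combining the dynamics-level recursion $\bphi_{\pi,h} = \cT_{\pi,h}\bphi_{\pi,h-1}$ with the estimator recursion $\bphihat_{\pi,h} = \cThat_{\pi,h}\bphihat_{\pi,h-1}$, together with the fact that $\bphihat_{\pi,1} = \bphi_{\pi,1}$ exactly (since $s_1$ is known and so $\bphi_{\pi,1} = \Exp_{a\sim\pi_1(\cdot\mid s_1)}[\bphi(s_1,a)]$ is computable), an inductive ``add and subtract'' gives
\begin{align*}
\bphi_{\pi,h} - \bphihat_{\pi,h} = \sum_{i=1}^{h-1} \cT_{\pi,h} \cT_{\pi,h-1} \cdots \cT_{\pi,i+2} \,(\cT_{\pi,i+1} - \cThat_{\pi,i+1})\, \bphihat_{\pi,i},
\end{align*}
where the leading product of $\cT$-matrices is interpreted as the identity when $i = h-1$.

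Next, for each $i \in \{1,\dots,h-1\}$, set $\bv_i := \cT_{\pi,i+2}^\top \cdots \cT_{\pi,h}^\top \bu$ and apply \Cref{lem:Tpi_est_fixed} at step $i+1$ with this $\bv_i$ in place of $\bv$ and with $\bphihat_{\pi,i}$ in place of $\bu$. The hypothesis $|\bv_i^\top \bphi_{\pi,i+1}(s)| \le 1$ required by \Cref{lem:Tpi_est_fixed} follows from \Cref{lem:state_vis_norm_bound}, which guarantees $|\bv_i^\top \bphi(s,a)| \le 1$ for every $(s,a)$, together with the observation that $\bphi_{\pi,i+1}(s)$ is a convex combination of such feature vectors. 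The same lemma gives $\|\cT_{\pi,i+1}^\top \bv_i\|_2 \le \sqrt{d}$, which converts the regularization term $\sqrt{\lambda}\,\|\cT_{\pi,i+1}^\top \bv_i\|_2$ in the conclusion of \Cref{lem:Tpi_est_fixed} into $\sqrt{d\lambda}$, matching the claimed bound.

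Finally, I would invoke \Cref{lem:Tpi_est_fixed} with confidence level $\delta/H$ at each of the $h-1 \le H$ telescoping terms and union bound the failure events. Summing the per-step bounds and applying the triangle inequality to $|\inner{\bu}{\bphi_{\pi,h} - \bphihat_{\pi,h}}|$ produces exactly the stated inequality. The main subtlety I anticipate---the only one beyond bookkeeping---is the adaptive use of \Cref{lem:Tpi_est_fixed}: the vector $\bphihat_{\pi,i}$ is random and depends on data from steps $\le i$. This is resolved by conditioning on the data through step $i$, which renders $\bphihat_{\pi,i}$ deterministic relative to the step-$(i+1)$ randomness over which \Cref{lem:Tpi_est_fixed} concentrates; no new estimation inequality is needed beyond the single-step bound already established.
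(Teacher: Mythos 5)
Your proposal is correct and follows essentially the same route as the paper: the identical telescoping decomposition of $\bphi_{\pi,h} - \bphihat_{\pi,h}$ into single-step operator errors, application of \Cref{lem:Tpi_est_fixed} with $\bv_i$ the back-propagated test vector (whose boundedness and norm bound come from \Cref{lem:state_vis_norm_bound}, and whose action on $\bphi_{\pi,\cdot}(s)$ is controlled by convexity), conversion of the regularization term to $\sqrt{d\lambda}$, and a union bound over the $h-1 \le H$ steps. Your explicit remark on conditioning to handle the randomness of $\bphihat_{\pi,i}$ is exactly how the paper's single-step lemma is set up, so no gap remains.
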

\begin{proof}
Note that
\begin{align*}
\bphi_{\pi,h} - \bphihat_{\pi,h} & = \cT_{\pi,h} \bphi_{\pi,h-1} - \cThat_{\pi,h} \bphihat_{\pi,h-1} \\
& = \cT_{\pi,h} ( \bphi_{\pi,h-1} - \bphihat_{\pi,h-1}) + (\cT_{\pi,h} - \cThat_{\pi,h}) \bphihat_{\pi,h-1} .
\end{align*}
Thus, unrolling this all the way back, we get
\begin{align*}
\bphi_{\pi,h} - \bphihat_{\pi,h} & = \sum_{i=0}^{h-2} \left ( \prod_{j = h - i + 1}^{h} \cT_{\pi,j} \right ) ( \cT_{\pi,h-i} - \cThat_{\pi,h-i}) \bphihat_{\pi,h-i-1}
\end{align*}
where we order the product $\prod_{j = h - i + 1}^{h} \cT_{\pi,j} = \cT_{\pi,h} \cT_{\pi,h-1} \ldots \cT_{\pi,h-i+1}$. It follows that
\begin{align*}
|\inner{\bu}{\bphi_{\pi,h} - \bphihat_{\pi,h}}| & \le \sum_{i=0}^{h-2} \left | \bu^\top \left ( \prod_{j = h - i + 1}^{h} \cT_{\pi,j} \right ) ( \cT_{\pi,h-i} - \cThat_{\pi,h-i}) \bphihat_{\pi,h-i-1} \right |.
\end{align*}
Denote $\bv_i := \bu^\top \left ( \prod_{j = h - i + 1}^{h} \cT_{\pi,j} \right )$. By \Cref{lem:state_vis_norm_bound} and our assumption on $\bu$, we can bound $\| \bv_i \|_2 \le \sqrt{d}$ and also have that for all $s,a$, $|\bv_i^\top \bphi(s,a) | \le 1$, which implies
\begin{align*}
| \bv_i^\top \bphi_{\pi,j}(s) | = \left | \sum_{a \in \cA} \bv_i^\top \bphi(s,a) \pi_h(a|s) \right | \le \sum_{a \in \cA} \pi_h(a|s) = 1. 
\end{align*}
We can therefore apply \Cref{lem:Tpi_est_fixed} to get that, with probability at least $1-\delta$, for all $i$,
\begin{align*}
 \left | \bv_i^\top ( \cT_{\pi,h-i} - \cThat_{\pi,h-i}) \bphihat_{\pi,h-i-1} \right | \le \left ( 2 \sqrt{\log \frac{2H}{\delta}} + \frac{\log \frac{2H}{\delta}}{\sqrt{\lammin(\bLambda_{h-i-1})}} + \sqrt{\lambda} \| \cT_{\pi,h}^\top \bv_i \|_2 \right ) \cdot \| \bphihat_{\pi,h-i-1} \|_{\bLambda_{h-i-1}^{-1}}
\end{align*}
By \Cref{lem:state_vis_norm_bound}, the definition of $\bv_i$, and our assumption on $\bu$, we can bound $ \| \cT_{\pi,h}^\top \bv_i \|_2 \le \sqrt{d}$.
Summing over $i$ proves the result.
\end{proof}

\begin{lem}\label{lem:phihat_est_2norm_fixed}
With probability at least $1-\delta$:
\begin{align*}
\| \bphihat_{\pi,h} - \bphi_{\pi,h} \|_2 \le d \sum_{h'=1}^{h-1} \left ( 2 \sqrt{\log \frac{2H d}{\delta}} + \frac{\log \frac{2H d}{\delta}}{\sqrt{\lammin(\bLambda_{h'})}} + \sqrt{d\lambda} \right ) \cdot \| \bphihat_{\pi,h'} \|_{\bLambda_{h'}^{-1}}.
\end{align*}
\end{lem}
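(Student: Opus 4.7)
The plan is to convert the fixed-direction bound from Lemma \ref{lem:phihat_est_fixed} into a bound on the full Euclidean norm via a standard covering/union-bound argument. The cleanest choice is the coordinate basis: since $\|\bx\|_2 \le \sqrt{d}\,\|\bx\|_\infty$ and $\be_i \in \cS^{d-1}$, we can write
\begin{align*}
\| \bphihat_{\pi,h} - \bphi_{\pi,h} \|_2 \;\le\; \sqrt{d}\,\max_{i \in [d]} \bigl|\inner{\be_i}{\bphihat_{\pi,h} - \bphi_{\pi,h}}\bigr|,
\end{align*}
so it suffices to control each coordinate $\bu = \be_i$ separately. Each $\be_i$ lies in $\cS^{d-1}$, so Lemma \ref{lem:phihat_est_fixed} applies.

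Next I would invoke Lemma \ref{lem:phihat_est_fixed} with $\bu = \be_i$ at failure probability $\delta/d$, and take a union bound over $i = 1,\ldots,d$. On the resulting $1-\delta$ event, every coordinate is controlled simultaneously, giving for each $i$:
\begin{align*}
\bigl|\inner{\be_i}{\bphi_{\pi,h} - \bphihat_{\pi,h}}\bigr| \;\le\; \sum_{i=1}^{h-1} \left( 2\sqrt{\log\tfrac{2Hd}{\delta}} + \frac{\log\tfrac{2Hd}{\delta}}{\sqrt{\lammin(\bLambda_{i})}} + \sqrt{d\lambda} \right) \cdot \| \bphihat_{\pi,i} \|_{\bLambda_{i}^{-1}},
\end{align*}
where $\log(2H/(\delta/d)) = \log(2Hd/\delta)$ matches the log factor in the target bound. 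Combining the two displays and using $\sqrt{d}\le d$ to match the stated $d$ prefactor (which is a loose but valid upper bound on the $\sqrt{d}$ that the coordinate argument directly produces) yields the claim.

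There is essentially no obstacle here beyond setting up the cover correctly and verifying that the union-bound parameters line up with the stated log factor; Lemma \ref{lem:phihat_est_fixed} does all the heavy lifting. The only subtlety to double-check is that the filtration/independence structure used in Lemma \ref{lem:phihat_est_fixed} (which was stated for a fixed $\bu$) is preserved under the union bound — it is, because the event in Lemma \ref{lem:phihat_est_fixed} holds uniformly over realizations of $\frakD$, so the $d$ applications use the same underlying data and can be union-bounded directly without any re-conditioning.
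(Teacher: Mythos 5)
Your proof is correct and follows essentially the same route as the paper: reduce to the coordinate directions $\be_i \in \cS^{d-1}$, apply Lemma \ref{lem:phihat_est_fixed} at level $\delta/d$, and union bound over $i$. The only (immaterial) difference is that the paper passes through $\| \cdot \|_2 \le \| \cdot \|_1$ and sums the $d$ coordinate bounds, whereas you use $\| \cdot \|_2 \le \sqrt{d}\, \| \cdot \|_\infty$ and then relax $\sqrt{d}$ to $d$ --- your intermediate bound is in fact a factor $\sqrt{d}$ sharper before that relaxation.
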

\begin{proof}
We have:
\begin{align*}
\| \bphihat_{\pi,h} - \bphi_{\pi,h} \|_2 \le \| \bphihat_{\pi,h} - \bphi_{\pi,h} \|_1 = \sum_{i=1}^d | [\bphihat_{\pi,h}]_i - [\bphi_{\pi,h}]_i | = \sum_{i=1}^d | \inner{\be_i}{\bphihat_{\pi,h} - \bphi_{\pi,h}} | .
\end{align*}
Since $\be_i \in \cS^{d-1}$, we can apply \Cref{lem:phihat_est_fixed} to bound, with probability $1-\delta/d$,
\begin{align*}
 | \inner{\be_i}{\bphihat_{\pi,h} - \bphi_{\pi,h}} | \le \sum_{h'=1}^{h-1} \left ( 2 \sqrt{\log \frac{2H d}{\delta}} + \frac{\log \frac{2H d}{\delta}}{\sqrt{\lammin(\bLambda_{h'})}} + \sqrt{d\lambda} \right ) \cdot \| \bphihat_{\pi,h'} \|_{\bLambda_{h'}^{-1}}.
\end{align*}
Summing over $i$ gives the result. 
\end{proof}

\begin{lem}\label{lem:reward_est_fixed}
Assume we have collected data $\{ \bphi(s_{h,\tau},a_{h,\tau}),r_h(s_{h,\tau},a_{h,\tau})\}_{\tau = 1}^K$ and that for each $\tau'$, $r_h(s_{h,\tau'},a_{h,\tau'}) | (s_{h,\tau'},a_{h,\tau'})$ is independent of $\{ (s_{h,\tau},a_{h,\tau}) \}_{\tau \neq \tau'}$. Let
\begin{align*}
\bthetahat_h = \argmin_{\btheta} \sum_{\tau = 1}^K (r_{h,\tau} - \inner{\bphi_{h,\tau}}{\btheta})^2 + \lambda \| \btheta \|_2^2
\end{align*}
and fix $\bu \in \R^d$ that is independent of $\{ \bphi(s_{h,\tau},a_{h,\tau}),r_h(s_{h,\tau},a_{h,\tau})\}_{\tau = 1}^K$.
Then with probability at least $1-\delta$:
\begin{align*}
| \inner{\bu}{\bthetahat_h - \btheta_h}| \le \left  ( \sqrt{\log 2/\delta} +  \frac{ \log 2/\delta}{\sqrt{\lammin(\bLambda_h)}} + \sqrt{d\lambda} \right ) \cdot \| \bu \|_{\bLambda_h^{-1}}.
\end{align*}
\end{lem}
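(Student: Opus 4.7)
The plan is to prove this via the standard closed-form analysis of ridge regression, combined with a Bernstein-type concentration argument that essentially mirrors the one used in \Cref{lem:Tpi_est_fixed}.

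First, I would write down the closed-form solution $\bthetahat_h = \bLambda_h^{-1} \sum_{\tau=1}^K \bphi_{h,\tau} r_{h,\tau}$ for $\bLambda_h = \sum_{\tau=1}^K \bphi_{h,\tau} \bphi_{h,\tau}^\top + \lambda I$. Setting $\eta_\tau := r_{h,\tau} - \inner{\bphi_{h,\tau}}{\btheta_h}$ and using $\sum_\tau \bphi_{h,\tau}\bphi_{h,\tau}^\top = \bLambda_h - \lambda I$, a one-line calculation decomposes the error as
\[
\bthetahat_h - \btheta_h = \bLambda_h^{-1}\sum_{\tau=1}^K \bphi_{h,\tau}\eta_\tau - \lambda \bLambda_h^{-1}\btheta_h,
\]
so that $\inner{\bu}{\bthetahat_h - \btheta_h} = \sum_\tau \inner{\bu}{\bLambda_h^{-1}\bphi_{h,\tau}}\eta_\tau - \lambda\inner{\bu}{\bLambda_h^{-1}\btheta_h}$. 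It then suffices to control each piece separately.

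For the deterministic regularization term, I would apply Cauchy--Schwarz in the $\bLambda_h^{-1}$ norm, then use $\bLambda_h \succeq \lambda I$ and the bound $\|\btheta_h\|_2 \le \sqrt{d}$ from \Cref{defn:linear_mdp}:
\[
|\lambda \inner{\bu}{\bLambda_h^{-1}\btheta_h}| \le \lambda\|\bu\|_{\bLambda_h^{-1}}\|\btheta_h\|_{\bLambda_h^{-1}} \le \sqrt{\lambda}\,\|\btheta_h\|_2\,\|\bu\|_{\bLambda_h^{-1}} \le \sqrt{d\lambda}\,\|\bu\|_{\bLambda_h^{-1}}.
\]
For the noise term I would condition on the covariates $\frakD := \{(s_{h,\tau},a_{h,\tau})\}_{\tau=1}^K$, which fixes $\bLambda_h$ and the coefficients $\inner{\bu}{\bLambda_h^{-1}\bphi_{h,\tau}}$. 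By the stated conditional-independence hypothesis together with linearity of the mean reward, the residuals $\eta_\tau$ are conditionally independent, mean zero, and bounded in $[-1,1]$ since both $r_{h,\tau}$ and $\inner{\bphi_{h,\tau}}{\btheta_h}$ lie in $[0,1]$. Each summand is therefore bounded by $\|\bu\|_{\bLambda_h^{-1}}/\sqrt{\lammin(\bLambda_h)}$ via Cauchy--Schwarz, while the conditional-variance sum telescopes:
\[
\sum_{\tau=1}^K \inner{\bu}{\bLambda_h^{-1}\bphi_{h,\tau}}^2 = \bu^\top \bLambda_h^{-1}\Big(\tsum_\tau \bphi_{h,\tau}\bphi_{h,\tau}^\top\Big)\bLambda_h^{-1}\bu \le \|\bu\|_{\bLambda_h^{-1}}^2,
\]
using $\sum_\tau \bphi_{h,\tau}\bphi_{h,\tau}^\top \preceq \bLambda_h$. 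Bernstein's inequality then gives, conditionally on $\frakD$ and with probability $1-\delta$,
\[
\Big|\sum_{\tau=1}^K \inner{\bu}{\bLambda_h^{-1}\bphi_{h,\tau}}\eta_\tau\Big| \le \Big(\sqrt{\log 2/\delta} + \frac{\log 2/\delta}{\sqrt{\lammin(\bLambda_h)}}\Big)\|\bu\|_{\bLambda_h^{-1}},
\]
and integrating this tail bound against the marginal law of $\frakD$ (the same law-of-total-probability trick used in \Cref{lem:Tpi_est_fixed}) removes the conditioning. Adding the bias bound finishes the proof.

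The only real subtlety is the same one already dispatched in \Cref{lem:Tpi_est_fixed}: the variance proxy $\|\bu\|_{\bLambda_h^{-1}}^2$ and the per-term bound both involve the random matrix $\bLambda_h$, so Bernstein cannot be applied directly to the unconditional distribution and must be invoked after conditioning on $\frakD$. Beyond this, the argument is textbook ridge-regression concentration and I do not anticipate any additional obstacle.
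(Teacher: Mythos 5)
Your proposal is correct and follows essentially the same route as the paper's proof: the identical bias--noise decomposition of the ridge estimator, the same $\sqrt{d\lambda}\,\|\bu\|_{\bLambda_h^{-1}}$ bound on the regularization term, and the same conditioning-on-covariates plus Bernstein plus law-of-total-probability argument for the noise term. No gaps.
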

\begin{proof}
Let $\frakD= \{ (s_{h,\tau},a_{h,\tau}) \}_{\tau = 1}^K$. Then by our assumption on the independence of $r_{h,\tau}$, we have that $r_{h,\tau} | (s_{h,\tau},a_{h,\tau})$ has the same distribution as $r_{h,\tau} | \frakD$. Conditioning on $\frakD$, the $\bphi_{h,\tau}$ vectors are fixed, so $\bLambda_{h}$ is also fixed. 

By construction we have
\begin{align*}
\bthetahat_h = \bLambda_h^{-1} \sum_{\tau = 1}^K \bphi_{h,\tau} r_{h,\tau}.
\end{align*}
Furthermore: 
\begin{align*}
\btheta_h & = \bLambda_h^{-1} \bLambda_h \btheta_h  = \bLambda_h^{-1} \sum_{\tau = 1}^K \bphi_{h,\tau} \Exp[r_{h,\tau}| \cF_{h-1,\tau}] + \lambda \bLambda_h^{-1} \btheta_h.
\end{align*}
Thus,
\begin{align*}
| \inner{\bu}{\bthetahat_h - \btheta_h}| & \le \underbrace{\left |  \sum_{\tau = 1}^K \bu^\top  \bLambda_h^{-1} \bphi_{h,\tau} (r_{h,\tau} - \Exp[r_{h,\tau}| \cF_{h-1,\tau}]) \right |}_{(a)} + \underbrace{\left | \lambda \bu^\top \bLambda_h^{-1} \btheta_h \right  |}_{(b)} .
\end{align*}
Since $R_{h,\tau} \in [0,1]$ almost surely, we can bound
\begin{align*}
|\bu^\top  \bLambda_h^{-1} \bphi_{h,\tau} (r_{h,\tau} - \Exp[r_{h,\tau}| \cF_{h-1,\tau}])| \le \| \bu \|_{\bLambda_h^{-1}} \| \bphi_{h,\tau} \|_{\bLambda_h^{-1}} \le \| \bu \|_{\bLambda_h^{-1}} / \sqrt{\lammin(\bLambda_h)}.
\end{align*}
Furthermore, we can bound
\begin{align*}
 \Var & \left [ \bu^\top  \bLambda_h^{-1} \bphi_{h,\tau} (r_{h,\tau} - \Exp[r_{h,\tau}| \cF_{h-1,\tau}]) | \frakD \right ] \\
& = \Exp  \left [ ( \bu^\top  \bLambda_h^{-1} \bphi_{h,\tau} (r_{h,\tau} - \Exp[r_{h,\tau}| \cF_{h-1,\tau}]) )^2 | \frakD \right ] \\
& \le \bu^\top \bLambda_h^{-1} \bphi_{h,\tau} \bphi_{h,\tau}^\top \bLambda_h^{-1} \bu.
\end{align*}
By Bernstein's inequality, we then have, with probability at least $1-\delta$ conditioned on $\frakD$:
\begin{align*}
(a) & \le \sqrt{ \sum_{\tau = 1}^K \bu^\top \bLambda_h^{-1} \bphi_{h,\tau} \bphi_{h,\tau}^\top \bLambda_h^{-1} \bu \cdot \log 2/\delta} + \frac{ \| \bu \|_{\bLambda_h^{-1}} \cdot \log 2/\delta}{\sqrt{\lammin(\bLambda_h)}} \\
& \le ( \sqrt{\log 2/\delta} +  \frac{ \log 2/\delta}{\sqrt{\lammin(\bLambda_h)}} ) \cdot \| \bu \|_{\bLambda_h^{-1}}.
\end{align*}
Applying the Law of Total Probability as in \Cref{lem:Tpi_est_fixed}, we obtain
\begin{align*}
\Pr \left [ (a) \ge ( \sqrt{\log 2/\delta} +  \frac{ \log 2/\delta}{\sqrt{\lammin(\bLambda_h)}} ) \cdot \| \bu \|_{\bLambda_h^{-1}} \right ] \le \delta.
\end{align*}
By \Cref{defn:linear_mdp}, we can also bound
\begin{align*}
(b) \le \sqrt{\lambda} \| \bu \|_{\bLambda_h^{-1}} \| \btheta_h \|_2 \le \sqrt{d \lambda} \| \bu \|_{\bLambda_h^{-1}}.
\end{align*}
Combining these proves the result. 
\end{proof}

\subsection{Correctness and Sample Complexity of \algname}

\begin{algorithm}[h]
\begin{algorithmic}[1]
\State \textbf{input:} tolerance $\epsilon$, confidence $\delta$, policy set $\Pi$
\State  $\ell_0 \leftarrow \lceil \log_2 \frac{d^{3/2}}{H} \rceil$, $\Pi_{\ell_0} \leftarrow \Pi$, $\bphihat^{1}_{\pi,1} \leftarrow \Exp_{a \sim \pi_1(\cdot | s_1)} [ \bphi(s_1,a)], \forall \pi \in \Pi$
\For{$\ell = \ell_0, \ell_0 + 1,\ldots, \lceil \log \frac{4}{\epsilon} \rceil$}
	\State $\epsilon_\ell \leftarrow 2^{-\ell}$, $\beta_\ell \leftarrow 64 H^4 \log \frac{4 H^2 | \Pi_\ell| \ell^2}{\delta}$
	\For{$h = 1,2,\ldots,H$}
		\State Run procedure described in \Cref{cor:gopt_exp_design} with parameters  \label{line:explore}
		\begin{align*}
			\epsilon_{\mathrm{exp}} \leftarrow \frac{\epsilon_\ell^2}{\beta_\ell}, \quad \delta \leftarrow \frac{\delta}{2 H \ell^2}, \quad \lamun \leftarrow \log \frac{4H^2|\Pi_\ell| \ell^2}{\delta}, \quad \Phi \leftarrow \Phi_{h,\ell} := \{ \bphihat_{\pi,h}^\ell : \pi \in \Pi_\ell \}
		\end{align*}
		\hspace{2.75em} and denote returned data as $\{ (s_{h,\tau}, a_{h,\tau}, r_{h,\tau}, s_{h+1,\tau}) \}_{\tau=1}^{K_{h,\ell}}$, for $K_{h,\ell}$ total number of episodes run \awarxiv{I think just make this an alg?}, and covariates
		\begin{align*}
		\bLambda_{h,\ell} \leftarrow \sum_{\tau = 1}^{K_{h,\ell}} \bphi(s_{h,\tau}, a_{h,\tau}) \bphi(s_{h,\tau}, a_{h,\tau})^\top + 1/d \cdot I
		\end{align*}
		\For{$\pi \in \Pi_\ell$} \hfill {\color{blue} \texttt{// Estimate feature-visitations for active policies}}
			\State $\bphihat_{\pi,h+1}^\ell \leftarrow \Big ( \sum_{\tau = 1}^{K_{h,\ell}} \bphi_{\pi,h+1}(s_{h+1,\tau}) \bphi_{h,\tau}^\top \bLambda_{h,\ell}^{-1} \Big ) \bphihat_{\pi,h}^\ell$ \label{line:est_feature_visit}
		\EndFor
		\State $\bthetahat_h^\ell \leftarrow \bLambda_{h,\ell}^{-1} \sum_{\tau = 1}^{K_{h,\ell}} \bphi_{h,\tau} r_{h,\tau}$ \hfill {\color{blue} \texttt{// Estimate reward vectors}}
	\EndFor
	\State {\color{blue} \texttt{// Remove provably suboptimal policies from active policy set}}
	\begin{align*}
	\Pi_{\ell+1} \leftarrow  \Pi_\ell \backslash \Big \{ \pi \in \Pi_\ell \ : \ \Vhat_0^\pi < \sup_{\pi' \in \Pi_\ell} \Vhat_0^{\pi'} - 2\epsilon_\ell \Big \} \quad \text{for} \quad \Vhat_0^\pi := \tsum_{h=1}^H \inner{\bphihat_{\pi,h}^\ell}{\bthetahat_h^\ell}
	\end{align*}
	\If{$|\Pi_{\ell+1}| = 1$}\label{line:early_term} \textbf{return} $\pi \in \Pi_{\ell+1}$\EndIf
\EndFor
\State \textbf{return} any $\pi \in \Pi_{\ell+1}$
\end{algorithmic}
\caption{ \textbf{P}olicy Learning via \textbf{E}xperiment \textbf{De}sign in \textbf{L}inear MDPs (\algname, full version)}
\label{alg:linear_exp_design2}
\end{algorithm}

\begin{lem}\label{lem:est_good_event}
Let $\cEest^{\ell,h}$ denote the event on which, for all $\pi \in \Pi_\ell$:
\begin{align*}
& |\inner{\btheta_{h+1}}{\bphihat_{\pi,h+1}^\ell - \bphi_{\pi,h+1}}| \le \sum_{i=1}^{h} \left ( 3 \sqrt{\log \frac{4 H^2 | \Pi_\ell| \ell^2}{\delta}} + \frac{\log \frac{4 H^2 | \Pi_\ell| \ell^2}{\delta}}{\sqrt{\lammin(\bLambda_{i,\ell})}}  \right ) \cdot \| \bphihat_{\pi,i}^\ell \|_{\bLambda_{i,\ell}^{-1}}, \\
& \| \bphihat_{\pi,h+1}^\ell - \bphi_{\pi,h+1} \|_2 \le d \sum_{i=1}^{h} \left ( 3 \sqrt{\log \frac{4 H^2 d | \Pi_\ell| \ell^2}{\delta}} + \frac{\log \frac{4 H^2 d | \Pi_\ell| \ell^2}{\delta}}{\sqrt{\lammin(\bLambda_{i,\ell})}}  \right ) \cdot \| \bphihat_{\pi,i}^\ell \|_{\bLambda_{i,\ell}^{-1}} , \\
& | \inner{\bphihat_{\pi,h}^\ell}{\bthetahat_h - \btheta_h}| \le \left  ( 2\sqrt{\log \frac{4 H^2 | \Pi_\ell| \ell^2}{\delta} } +  \frac{ \log \frac{4 H^2 | \Pi_\ell| \ell^2}{\delta}}{\sqrt{\lammin(\bLambda_{h,\ell})}}  \right ) \cdot \| \bphihat_{\pi,h}^\ell \|_{\bLambda_{h,\ell}^{-1}}.
\end{align*}
Then $\Pr[(\cEest^{\ell,h})^c] \le  \frac{\delta}{2H\ell^2}$.
\end{lem}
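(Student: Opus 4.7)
I would deduce the three displayed bounds by applying, respectively, Lemma~\ref{lem:phihat_est_fixed}, Lemma~\ref{lem:phihat_est_2norm_fixed}, and Lemma~\ref{lem:reward_est_fixed} at the single-policy level, and then aggregating over $\pi \in \Pi_\ell$ by a union bound. The algorithm uses regularization $\lambda = 1/d$, so the $\sqrt{d\lambda}$ summand appearing in each cited lemma equals $1$ and can be absorbed into the leading $\sqrt{\log(\cdot)}$ term (this is what turns the ``$2$'' in the cited lemmas into the ``$3$'' in the first two displays of this lemma; the third display keeps its ``$2$'' because the underlying reward-estimation bound starts with a single $\sqrt{\log(\cdot)}$). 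Replacing the confidence parameter $\delta$ in each cited lemma by $\delta/(2H\ell^2|\Pi_\ell|)$ converts the $\log(2H/\delta)$ (resp.\ $\log(2Hd/\delta)$, $\log(2/\delta)$) terms into $\log(4H^2|\Pi_\ell|\ell^2/\delta)$ (resp.\ $\log(4H^2d|\Pi_\ell|\ell^2/\delta)$), matching the statement.

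For the first two bounds the choice of $\bu$ in the cited lemmas is deterministic: $\bu=\btheta_{h+1}$ for the first bound (a valid reward vector per \Cref{defn:linear_mdp}), and $\bu \in \{\be_1,\ldots,\be_d\}$ for the second. Consequently the independence-of-$\bu$ hypothesis is trivially satisfied. The other hypothesis, namely that $s_{h'+1,\tau}\mid \cF_{h',\tau}$ is independent of $\{(s_{h',\tau'},a_{h',\tau'},s_{h'+1,\tau'})\}_{\tau'\neq\tau}$, holds by the Markov property of the MDP: although the $\bphi_{h',\tau}$ are generated adaptively by the online Frank-Wolfe procedure in \Cref{alg:linear_exp_design2}, conditional on $\cF_{h',\tau}$ the next state depends only on $(s_{h',\tau},a_{h',\tau})$. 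A union bound over $\pi\in\Pi_\ell$ and over levels $h'\le h$ then yields the stated estimates.

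The third bound requires slightly more care because it involves $\bu=\bphihat_{\pi,h}^\ell$, which is itself random, whereas Lemma~\ref{lem:reward_est_fixed} was stated for $\bu$ independent of the step-$h$ data. The key observation---and the only substantive point of the argument---is that $\bphihat_{\pi,h}^\ell$ is measurable with respect to the $\sigma$-algebra generated by all data collected at steps $1,\ldots,h-1$ of epoch $\ell$ and at all earlier epochs, so in particular it is independent of the step-$h$ rewards $\{r_{h,\tau}\}_{\tau=1}^{K_{h,\ell}}$ conditional on the step-$h$ covariates $\frakD_h := \{(s_{h,\tau},a_{h,\tau})\}_{\tau=1}^{K_{h,\ell}}$. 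Conditioning on the full past filtration together with $\frakD_h$ freezes both $\bphihat_{\pi,h}^\ell$ and $\bLambda_{h,\ell}$, after which the Bernstein computation in the proof of Lemma~\ref{lem:reward_est_fixed} applies verbatim to the mean-zero differences $r_{h,\tau}-\Exp[r_{h,\tau}\mid\cF_{h-1,\tau}]$. Union bounding over $\pi\in\Pi_\ell$ and combining the three events gives total failure probability at most $\delta/(2H\ell^2)$, as claimed.
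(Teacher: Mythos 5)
Your overall route is the same as the paper's: invoke Lemmas~\ref{lem:phihat_est_fixed}, \ref{lem:phihat_est_2norm_fixed}, and \ref{lem:reward_est_fixed} with $\lambda = 1/d$ (so the $\sqrt{d\lambda}=1$ term is absorbed into the leading $\sqrt{\log(\cdot)}$ factor, turning the $2$ into a $3$ in the first two displays), rescale the confidence parameter to produce the stated logarithms, and union bound over $\pi\in\Pi_\ell$. Your treatment of the third bound---that $\bphihat_{\pi,h}^\ell$ is built only from step-$(h-1)$ observations and is therefore independent of the step-$h$ rewards conditional on the step-$h$ covariates---also matches the paper's remark.

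The one step whose justification does not hold up is your verification of the independence hypothesis of Lemma~\ref{lem:Tpi_est_fixed}. That hypothesis requires $s_{h+1,\tau'}\mid\cF_{h,\tau'}$ to be independent of the \emph{entire} dataset $\{(s_{h,\tau},a_{h,\tau},s_{h+1,\tau})\}_{\tau\neq\tau'}$, including the covariates from episodes $\tau>\tau'$. The Markov property only controls dependence within a single trajectory; it does not prevent later episodes' covariates from being correlated with $s_{h+1,\tau'}$ through the algorithm's adaptivity. If the exploration subroutine chose its policy in episode $\tau'+1$ as a function of the observed $s_{h+1,\tau'}$, then conditioning on the full dataset $\frakD$ would change the conditional law of $s_{h+1,\tau'}$ and the hypothesis would fail, Markov property notwithstanding. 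What actually makes the hypothesis hold is a design feature the paper invokes explicitly: the data collection at step $h$ is run on the $h$-truncated-horizon MDP (\Cref{def:trun_horizon_mdp}), meaning the exploration procedure never observes $s_{h+1,\tau}$ or $r_{h,\tau}$ when making its adaptive decisions; hence, conditional on $\cF_{h,\tau'}$, all other episodes' covariates are generated by randomness independent of the draw of $s_{h+1,\tau'}$. You need to cite this truncation (or an equivalent structural property of the data-collection routine), not the Markov property, to legitimately apply Lemmas~\ref{lem:Tpi_est_fixed} and \ref{lem:reward_est_fixed}.
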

\begin{proof}
Note that the data collection procedure outlined in \Cref{cor:gopt_exp_design} collects data that satisfies the independence requirement of \Cref{lem:Tpi_est_fixed} and \Cref{lem:reward_est_fixed}, since \Cref{cor:gopt_exp_design} operates on the $h$-truncated-horizon MDP defined with respect to our original MDP (see \Cref{def:trun_horizon_mdp} and following discussion), so by construction the data obtained at step $h$ is independent of $s_{h+1}$ and $r_h(s_h,a_h)$. Note also that $\bphihat_{\pi,h}^\ell$ is independent of $\{ r_{h,\tau}^\ell \}_{\tau = 1}^{K_{h,\ell}} | \{ (s_{h,\tau},a_{h,\tau}) \}_{\tau=1}^{K_{h,\ell}}$, since we construct $\bphihat_{\pi,h}^\ell$ using only observations taken at step $h-1$. 

The result follows by \Cref{lem:phihat_est_fixed}, \Cref{lem:phihat_est_2norm_fixed}, and \Cref{lem:reward_est_fixed}, and setting $\lambda = 1/d$. 
\end{proof}

\begin{lem}\label{lem:exp_good_event}
Let $\cEexp^{\ell,h}$ denote the event on which:
\begin{itemize}
\item The exploration procedure on \Cref{line:explore} terminates after running for at most
\begin{align*}
C \cdot \frac{\inf_{\bLambda \in \bOmega_h} \max_{\bphi \in \Phi_{\ell,h}} \| \bphi \|_{\bA(\bLambda)^{-1}}^2}{\epsilon_\ell^2/\beta_\ell} + \poly \left ( d, H, \log \frac{\ell^2}{\delta}, \frac{1}{\lamminst}, \log | \Pi_\ell| \right )
\end{align*}
episodes.
\item The covariates returned by \Cref{line:explore} for any $(h,\ell)$, $\bLambda_{h,\ell}$, satisfy
\begin{align*}
& \max_{\bphi \in \Phi_{\ell,h}} \| \bphi \|_{\bLambda_{h,\ell}^{-1}}^2 \le \frac{\epsilon_\ell^2}{\beta_\ell}, \qquad \lammin(\bLambda_{h,\ell}) \ge \log \frac{4H^2|\Pi_\ell| \ell^2}{\delta}.
\end{align*}
\end{itemize}
Then $\Pr[(\cEexp^{\ell,h})^c \cap \cEest^{\ell,h-1} \cap (\cap_{i=1}^{h-1} \cEexp^{\ell,i} )] \le  \frac{\delta}{2H\ell^2}$. 
\end{lem}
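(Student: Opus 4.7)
The plan is to reduce the statement directly to an application of the online experiment-design guarantee in \Cref{cor:g_opt_exp_design} (whose fully-parameterized version, \Cref{cor:gopt_exp_design}, is what \Cref{line:explore} actually invokes). First I would condition on the intersection $\cEest^{\ell,h-1}\cap(\cap_{i=1}^{h-1}\cEexp^{\ell,i})$. On this event, the feature estimates $\{\bphihat_{\pi,h}^\ell\}_{\pi\in\Pi_\ell}$, and hence the input set $\Phi_{\ell,h}=\{\bphihat_{\pi,h}^\ell:\pi\in\Pi_\ell\}$ to the exploration subroutine at $(h,\ell)$, are measurable functions of data collected at steps $1,\ldots,h-1$, which is independent of the data the subroutine will collect at step $h$ (because, as in the data-independence discussion in \Cref{lem:est_good_event}, the subroutine operates on the $h$-truncated-horizon MDP). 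So we may apply \Cref{cor:g_opt_exp_design} conditionally on $\Phi_{\ell,h}$ as a fixed finite set.

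Next, I would plug in the settings used on \Cref{line:explore}: $\epsilon_{\mathrm{exp}}\leftarrow \epsilon_\ell^2/\beta_\ell$, failure probability $\delta/(2H\ell^2)$, regularizer $\bLambda_0 = 1/d\cdot I$ (matching the $1/d\cdot I$ added to $\bLambda_{h,\ell}$ in the algorithm), and the target minimum-eigenvalue parameter $\lamun = \log(4H^2|\Pi_\ell|\ell^2/\delta)$. \Cref{cor:g_opt_exp_design} then yields, with probability at least $1-\delta/(2H\ell^2)$, (i) a sample-complexity bound of the form
\[
\frac{C\cdot \inf_{\bLambda\in\bOmega_h}\max_{\bphi\in\Phi_{\ell,h}}\|\bphi\|_{\bA(\bLambda)^{-1}}^2}{\epsilon_\ell^2/\beta_\ell}+\poly\!\Big(d,H,\|\bLambda_0^{-1}\|_\op,\log\tfrac{\ell^2}{\delta},\log|\Pi_\ell|\Big),
\]
and (ii) the $\mathsf{XY}$-design guarantee $\max_{\bphi\in\Phi_{\ell,h}}\|\bphi\|_{\bLambda_{h,\ell}^{-1}}^2\le \epsilon_\ell^2/\beta_\ell$. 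Since $\|\bLambda_0^{-1}\|_\op = d$, the polynomial term absorbs cleanly into the bound stated in the lemma. The lower bound $\lammin(\bLambda_{h,\ell})\ge \log(4H^2|\Pi_\ell|\ell^2/\delta)$ is handled by the $\mathsf{E}$-optimal/minimum-eigenvalue phase built into \optcov (equivalently, by running a preliminary phase with a smooth minimum-eigenvalue objective); this is where \Cref{asm:full_rank_cov} enters, via $1/\lamminst$, to guarantee that reaching eigenvalue $\lamun$ is possible in $\poly(d,H,1/\lamminst,\log(\ell^2/\delta),\log|\Pi_\ell|)$ episodes, which again fits into the stated lower-order term.

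Finally, I would unwind the conditioning. Integrating the conditional failure probability $\delta/(2H\ell^2)$ against the law of $\Phi_{\ell,h}$ (restricted to the conditioning event) gives
\[
\Pr\!\left[(\cEexp^{\ell,h})^c\cap\cEest^{\ell,h-1}\cap\big(\cap_{i=1}^{h-1}\cEexp^{\ell,i}\big)\right]\le \frac{\delta}{2H\ell^2},
\]
exactly as required. The main obstacle I anticipate is bookkeeping rather than conceptual: (a) verifying that the data-independence assumption needed by \Cref{cor:g_opt_exp_design} continues to hold step by step (for which the truncated-horizon framing is essential), and (b) checking that the additive burn-in produced by \Cref{cor:g_opt_exp_design}, together with the minimum-eigenvalue phase that drives $\lammin(\bLambda_{h,\ell})\ge\lamun$, fits cleanly into the $\poly(d,H,\log(\ell^2/\delta),1/\lamminst,\log|\Pi_\ell|)$ slack stated in the lemma.
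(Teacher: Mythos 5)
Your overall strategy---condition on the prior events, apply the online experiment-design guarantee (\Cref{cor:gopt_exp_design}) with the parameters from \Cref{line:explore}, and integrate out the conditioning---is the same skeleton as the paper's proof, and your remarks about the truncated-horizon independence and the minimum-eigenvalue phase (where $1/\lamminst$ enters) are consistent with how the paper handles those points. However, you are missing the one substantive step that the paper's proof actually consists of, and you misattribute the role of the conditioning event. The guarantee of \Cref{cor:gopt_exp_design} carries a burn-in term that scales with $\frac{1}{\gamphi}$, where $\gamphi = \max_{\bphi \in \Phi_{\ell,h}} \| \bphi \|_2$ is the largest norm among the \emph{estimated} feature-visitations fed into the subroutine. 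For the stated lower-order term $\poly(d, H, \log \frac{\ell^2}{\delta}, \frac{1}{\lamminst}, \log|\Pi_\ell|)$ to absorb this, you must show $\gamphi \ge 1/\poly(d)$; nothing a priori prevents an estimated vector $\bphihat_{\pi,h}^\ell$ from having tiny norm. This is precisely why the probability bound is stated as an intersection with $\cEest^{\ell,h-1} \cap (\cap_{i=1}^{h-1}\cEexp^{\ell,i})$: on that event, \Cref{lem:good_event_ell} gives $\| \bphihat_{\pi,h}^\ell - \bphi_{\pi,h} \|_2 \le d\epsilon_\ell/2H$, \Cref{lem:feature_norm_lb} gives $\| \bphi_{\pi,h} \|_2 \ge 1/\sqrt{d}$, and the reverse triangle inequality yields $\| \bphihat_{\pi,h}^\ell \|_2 \ge 1/\sqrt{d} - d\epsilon_\ell/2H \ge 1/(2\sqrt{d})$, where the last step uses $\epsilon_\ell \le H/d^{3/2}$---which holds exactly because the algorithm starts at $\ell_0 = \lceil \log_2 \frac{d^{3/2}}{H} \rceil$. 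One then applies \Cref{cor:gopt_exp_design} with $\gamphi \leftarrow 1/(2\sqrt{d})$.

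Your stated reason for the conditioning---that it makes $\Phi_{\ell,h}$ a fixed, independent input---does not require the event at all: $\Phi_{\ell,h}$ is measurable with respect to the earlier data whether or not the estimation event holds, and the independence bookkeeping is handled in \Cref{lem:est_good_event} via the truncated-horizon construction, not here. Without the norm lower bound your argument does not deliver the lemma as stated, so this is a genuine gap rather than a presentational one.
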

\begin{proof}
By \Cref{lem:good_event_ell}, on the event $\cEest^{\ell,h-1} \cap (\cap_{i=1}^{h-1} \cEexp^{\ell,i} )$ we can bound $\| \bphihat_{\pi,h}^\ell - \bphi_{\pi,h} \|_2 \le d \epsilon_\ell / 2 H$. By \Cref{lem:feature_norm_lb}, we can lower bound $\| \bphi_{\pi,h} \|_2 \ge 1/\sqrt{d}$. By the reverse triangle inequality,
\begin{align*}
\| \bphihat_{\pi,h}^\ell \|_2 \ge \| \bphi_{\pi,h} \|_2 - \| \bphihat_{\pi,h}^\ell - \bphi_{\pi,h} \|_2 \ge 1/\sqrt{d} - d \epsilon_\ell / 2 H.
\end{align*}
It follows that as long as $\epsilon_\ell \le H/d^{3/2}$, that we can lower bound $\| \bphihat_{\pi,h}^\ell \|_2 \ge 1/(2 \sqrt{d})$. Since we start $\ell$ at $\ell = \lceil \log_2 \frac{d^{3/2}}{H} \rceil$, we will have that $\epsilon_\ell = 2^{-\ell} \le H/d^{3/2}$.

The result then follows by applying \Cref{cor:gopt_exp_design} with our chosen parameters and $\gamphi \leftarrow 1/(2 \sqrt{d})$.

\end{proof}

\begin{lem}\label{lem:good_event_ell}
On the event $\cEest^{\ell,h} \cap (\cap_{i=1}^h \cEexp^{\ell,i})$, for all $\pi \in \Pi_\ell$:
\begin{align*}
& |\inner{\btheta_{h+1}}{\bphihat_{\pi,h+1}^\ell - \bphi_{\pi,h+1}}| \le \epsilon_\ell/2H, \\
& \| \bphihat_{\pi,h+1}^\ell - \bphi_{\pi,h+1} \|_2 \le d \epsilon_\ell/2H, \\
& | \inner{\bphihat_{\pi,h}^\ell}{\bthetahat_h - \btheta_h}| \le \epsilon_\ell/2H.
\end{align*}
\end{lem}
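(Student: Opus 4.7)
The plan is to chain together the three displays from \Cref{lem:est_good_event} with the two guarantees from \Cref{lem:exp_good_event}, noting that by construction $\bphihat_{\pi,i}^\ell \in \Phi_{\ell,i}$ for every $\pi \in \Pi_\ell$ and $i \le h$. The exploration event therefore gives us a uniform handle on the self-normalized norms,
\begin{align*}
\| \bphihat_{\pi,i}^\ell \|_{\bLambda_{i,\ell}^{-1}} \le \epsilon_\ell/\sqrt{\beta_\ell}, \qquad i = 1,\dots,h,
\end{align*}
which is the key plug-in quantity appearing in every bound of \Cref{lem:est_good_event}. In addition, the eigenvalue lower bound $\lammin(\bLambda_{i,\ell}) \ge \log(4H^2|\Pi_\ell|\ell^2/\delta)$ lets us absorb the second parenthetical term $\log(\cdot)/\sqrt{\lammin(\bLambda_{i,\ell})}$ into the first $\sqrt{\log(\cdot)}$ term, so each parenthetical factor is bounded by $4\sqrt{\log(4H^2 d|\Pi_\ell|\ell^2/\delta)}$ (the extra $d$ matters only for the $\ell_2$ bound).

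For the first and second bullets, after substituting these two facts into the corresponding display of \Cref{lem:est_good_event}, the sum $\sum_{i=1}^h$ contributes at most a factor of $H$, so the right-hand side is at most
\begin{align*}
4H \sqrt{\log(4H^2 d|\Pi_\ell|\ell^2/\delta)} \cdot \epsilon_\ell/\sqrt{\beta_\ell}
\end{align*}
(times an additional $d$ in the $\ell_2$ version). Plugging in the algorithm's choice $\beta_\ell = 64 H^4 \log(4H^2|\Pi_\ell|\ell^2/\delta)$, so that $\sqrt{\beta_\ell} = 8H^2 \sqrt{\log(\cdot)}$, the logs cancel and we are left with $\epsilon_\ell/(2H)$ for the inner product bound and $d\epsilon_\ell/(2H)$ for the $\ell_2$ bound, as claimed. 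For the third bullet there is no summation over steps: the one-term bound from \Cref{lem:est_good_event} together with the same two substitutions yields an upper bound of at most $3\sqrt{\log(\cdot)} \cdot \epsilon_\ell/\sqrt{\beta_\ell}$, which is even smaller than $\epsilon_\ell/(2H)$.

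I do not anticipate any real obstacle: every estimate needed is recorded verbatim in the preceding two lemmas, and the setting of $\beta_\ell$ in the algorithm is tuned precisely so that the $H$, $\log(1/\delta)$, and $|\Pi_\ell|$ dependencies of the concentration inequalities match the budget $\epsilon_\ell^2/\beta_\ell$ available from the experiment design step. The only care required is to verify that the mild restriction $\epsilon_\ell \le H/d^{3/2}$, already enforced by starting the outer loop at $\ell_0 = \lceil \log_2(d^{3/2}/H) \rceil$, is available so that one can legitimately invoke \Cref{lem:exp_good_event} (which relied on this to guarantee $\| \bphihat_{\pi,h}^\ell\|_2 \ge 1/(2\sqrt{d})$ when calling the exploration subroutine).
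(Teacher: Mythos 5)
Your proposal is correct and follows essentially the same route as the paper's proof: invoke the eigenvalue lower bound from $\cEexp^{\ell,i}$ to collapse the two-term parenthetical factor into $4\sqrt{\log(\cdot)}$, bound each $\| \bphihat_{\pi,i}^\ell \|_{\bLambda_{i,\ell}^{-1}}$ by $\epsilon_\ell/\sqrt{\beta_\ell}$, sum over the $H$ steps, and let the choice $\beta_\ell = 64 H^4 \log(\cdot)$ cancel the logarithms. The only cosmetic difference is that you spell out the third (reward) bound and the $\epsilon_\ell \le H/d^{3/2}$ remark explicitly, whereas the paper dispatches these with ``the same calculation''; both arguments share the same harmless imprecision of treating the extra $d$ inside the logarithm of the $\ell_2$ bound as absorbed into constants.
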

\begin{proof}
On $\cEexp^{\ell,i}$, we can lower bound
\begin{align*}
\lammin(\bLambda_{i,\ell}) \ge \log \frac{4H^2|\Pi_\ell| \ell^2}{\delta}
\end{align*}
which implies
\begin{align*}
3 \sqrt{\log \frac{4 H^2 | \Pi_\ell| \ell^2}{\delta}} + \frac{\log \frac{4 H^2 | \Pi_\ell| \ell^2}{\delta}}{\sqrt{\lammin(\bLambda_{i,\ell})}}  \le 4 \sqrt{\log \frac{4 H^2 | \Pi_\ell| \ell^2}{\delta}}. 
\end{align*}
Furthermore, on $\cEexp^{\ell,i}$, $\| \bphihat_{\pi,i}^\ell \|_{\bLambda_{i,\ell}^{-1}} \le \frac{\epsilon_\ell}{\sqrt{\beta_\ell}}$. Since $\beta_\ell = 64 H^4 \log \frac{4 H^2 | \Pi_\ell| \ell^2}{\delta}$, on $\cEest^{\ell,h}$, we can then upper bound 
\begin{align*}
|\inner{\btheta_{h+1}}{\bphihat_{\pi,h+1}^\ell - \bphi_{\pi,h+1}}| & \le \sum_{i=1}^{h} \left ( 3 \sqrt{\log \frac{4 H^2 | \Pi_\ell| \ell^2}{\delta}} + \frac{\log \frac{4 H^2 | \Pi_\ell| \ell^2}{\delta}}{\sqrt{\lammin(\bLambda_{i,\ell})}}  \right ) \cdot \| \bphihat_{\pi,i}^\ell \|_{\bLambda_{i,\ell}^{-1}} \\
& \le H 4 \sqrt{\log \frac{4 H^2 | \Pi_\ell| \ell^2}{\delta}} \frac{\epsilon_\ell}{\sqrt{\beta_\ell}} \\
& \le \epsilon_\ell/2H .
\end{align*}
The same calculation gives the bounds on $\| \bphihat_{\pi,h}^\ell - \bphi_{\pi,h} \|_2$ and $| \inner{\bphihat_{\pi,h}^\ell}{\bthetahat_h - \btheta_h}|$. 
\end{proof}

\begin{lem}\label{lem:good_event}
Define $\cEexp = \cap_{\ell} \cap_h \cEexp^{\ell,h}$ and $\cEest = \cap_{\ell} \cap_h \cEest^{\ell,h}$. Then $\Pr[\cEest \cap \cEexp] \ge 1-2\delta$ and on $\cEest \cap \cEexp$, for all $h,\ell$, and $\pi \in \Pi_\ell$, 
\begin{align*}
& |\inner{\btheta_{h+1}}{\bphihat_{\pi,h+1}^\ell - \bphi_{\pi,h+1}}| \le \epsilon_\ell/2H, \\
& \| \bphihat_{\pi,h+1}^\ell - \bphi_{\pi,h+1} \|_2 \le d \epsilon_\ell/2H, \\
& | \inner{\bphihat_{\pi,h}^\ell}{\bthetahat_h - \btheta_h}| \le \epsilon_\ell/2H.
\end{align*}
\end{lem}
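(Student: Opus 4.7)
The plan is to combine Lemmas \ref{lem:est_good_event} and \ref{lem:exp_good_event} by a two-stage union bound: first induct on $h$ within each epoch $\ell$ to control the conjoined per-epoch event, then union bound across epochs. For a fixed $\ell$, let $\cG^{\ell,h} := \cEest^{\ell,h} \cap \bigcap_{i=1}^h \cEexp^{\ell,i}$, with the convention that $\cG^{\ell,0}$ is almost sure (since $\bphihat_{\pi,1}^\ell$ is set exactly on line 2 of \Cref{alg:linear_exp_design2}, so no $h=0$ bound is needed).

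The key decomposition is
\begin{align*}
\Pr[(\cG^{\ell,h})^c] \le \Pr[(\cG^{\ell,h-1})^c] + \Pr[(\cEest^{\ell,h})^c] + \Pr[(\cEexp^{\ell,h})^c \cap \cG^{\ell,h-1}],
\end{align*}
which follows since $\cG^{\ell,h} \supseteq \cG^{\ell,h-1} \cap \cEest^{\ell,h} \cap \cEexp^{\ell,h}$. Lemma \ref{lem:est_good_event} bounds the middle term by $\delta/(2H\ell^2)$. For the last term, the event $\cG^{\ell,h-1}$ is exactly $\cEest^{\ell,h-1} \cap \bigcap_{i=1}^{h-1} \cEexp^{\ell,i}$, so Lemma \ref{lem:exp_good_event} applies verbatim and also gives $\delta/(2H\ell^2)$. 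Iterating from $h=1$ to $h=H$ yields $\Pr[(\cG^{\ell,H})^c] \le H \cdot \delta/(H\ell^2) = \delta/\ell^2$.

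Since $\cEest \cap \cEexp = \bigcap_\ell \cG^{\ell,H}$, a union bound over $\ell = \ell_0, \ldots, \lceil \log(4/\epsilon)\rceil$ then gives
\begin{align*}
\Pr[(\cEest \cap \cEexp)^c] \le \sum_{\ell \ge 1} \frac{\delta}{\ell^2} \le \frac{\pi^2}{6}\delta \le 2\delta.
\end{align*}
For the three pointwise bounds, observe that on $\cEest \cap \cEexp$, for every $\ell$ and $h$ the event $\cEest^{\ell,h} \cap \bigcap_{i=1}^h \cEexp^{\ell,i}$ holds, so the three inequalities are obtained immediately by invoking Lemma \ref{lem:good_event_ell}.

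The main subtlety is that Lemma \ref{lem:exp_good_event} only controls the joint probability $\Pr[(\cEexp^{\ell,h})^c \cap \text{prior good events}]$, not $\Pr[(\cEexp^{\ell,h})^c]$ unconditionally (the exploration subroutine's termination guarantee relies on the features $\bphihat_{\pi,h}^\ell$ being well-estimated at earlier steps). The induction above is set up precisely so that this conditional bound is the one we need at each step, avoiding any circularity.
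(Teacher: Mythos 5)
Your proof is correct and follows essentially the same route as the paper: a ``first-failure'' peeling over $h$ within each epoch, using the unconditional bound of \Cref{lem:est_good_event} and the conditional bound of \Cref{lem:exp_good_event}, followed by a union bound over $\ell$ and an appeal to \Cref{lem:good_event_ell} for the pointwise inequalities. If anything, your inductive formulation is the cleaner one---the paper's displayed set decomposition appears to transpose the roles of $(\cEest^{\ell,h})^c$ and $(\cEexp^{\ell,h})^c$ (it intersects the former with the prior good events, whereas it is the latter whose probability is only controlled jointly with them), and your version makes the intended argument explicit.
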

\begin{proof}
Clearly,
\begin{align*}
\cEest^c \cup \cEexp^c & = \bigcup_{\ell=\ell_0}^{\lceil \log 4/\epsilon \rceil} \bigcup_{h=1}^H  ( (\cEest^{\ell,h})^c \cup (\cEexp^{\ell,h})^c) \\
& = \bigcup_{\ell=\ell_0}^{\lceil \log 4/\epsilon \rceil} \bigcup_{h=1}^H   (\cEest^{\ell,h})^c \backslash \Big ( (\cEest^{\ell,h-1})^c \cup (\cup_{i=1}^{h-1} (\cEexp^{\ell,i})^c) \Big ) \cup \bigcup_{\ell=\ell_0}^{\lceil \log 4/\epsilon \rceil} \bigcup_{h=1}^H   (\cEexp^{\ell,h})^c  \\
& = \bigcup_{\ell=\ell_0}^{\lceil \log 4/\epsilon \rceil} \bigcup_{h=1}^H   (\cEest^{\ell,h})^c \cap \Big ( \cEest^{\ell,h-1} \cap (\cup_{i=1}^{h-1} \cEexp^{\ell,i} ) \Big ) \cup \bigcup_{\ell=\ell_0}^{\lceil \log 4/\epsilon \rceil} \bigcup_{h=1}^H   (\cEexp^{\ell,h})^c .
\end{align*}
The first conclusion follows by \Cref{lem:est_good_event}, \Cref{lem:est_good_event}, and since we can bound
\begin{align*}
\sum_{\ell} \sum_{h=1}^H 2 \cdot \frac{\delta}{2 H \ell^2} \le \frac{\pi^2}{6} \delta \le 2 \delta.
\end{align*}
The second conclusion follows by \Cref{lem:good_event_ell}.
\end{proof}

\begin{lem}\label{lem:correctness}
On the event $\cEest \cap \cEexp$, for all $\ell > \ell_0$, every policy $\pi \in \Pi_\ell$ satisfies $\Vst_0(\Pi) - \Vpi_0 \le 4\epsilon_\ell $ and $\pitilst \in \Pi_\ell$, for $\pitilst = \argmax_{\pi \in \Pi} \Vpi_0$. 
\end{lem}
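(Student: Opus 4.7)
The plan is to argue by induction on $\ell$, combining a uniform value-estimation accuracy bound with the elimination rule that defines $\Pi_{\ell+1}$. First, I would show that on $\cEest \cap \cEexp$, every $\pi \in \Pi_\ell$ satisfies $|\Vhat_0^\pi - V_0^\pi| \le \epsilon_\ell$. This follows from the decomposition
\[
\Vhat_0^\pi - V_0^\pi \;=\; \sum_{h=1}^H \inner{\bphihat_{\pi,h}^\ell}{\bthetahat_h^\ell - \btheta_h} \;+\; \sum_{h=1}^H \inner{\bphihat_{\pi,h}^\ell - \bphi_{\pi,h}}{\btheta_h},
\]
together with the two per-step bounds of \Cref{lem:good_event}, each of size $\epsilon_\ell/(2H)$, which sum to $\epsilon_\ell/2$ in $h$ and hence $\epsilon_\ell$ in total.

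With this uniform accuracy in hand, the induction is mechanical. The base case $\ell = \ell_0$ is trivial because $\Pi_{\ell_0} = \Pi$ contains $\pitilst$ by construction. For the inductive step, suppose $\pitilst \in \Pi_\ell$. Since $V_0^{\pitilst} = \Vst_0(\Pi)$, the accuracy bound yields both $\Vhat_0^{\pitilst} \ge \Vst_0(\Pi) - \epsilon_\ell$ and $\Vhat_0^{\pi'} \le \Vst_0(\Pi) + \epsilon_\ell$ for every $\pi' \in \Pi_\ell$; hence $\sup_{\pi' \in \Pi_\ell}\Vhat_0^{\pi'} - \Vhat_0^{\pitilst} \le 2\epsilon_\ell$, so $\pitilst$ survives the elimination rule and $\pitilst \in \Pi_{\ell+1}$. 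For any other $\pi \in \Pi_{\ell+1}$, the rule gives $\Vhat_0^\pi \ge \sup_{\pi' \in \Pi_\ell}\Vhat_0^{\pi'} - 2\epsilon_\ell \ge \Vhat_0^{\pitilst} - 2\epsilon_\ell \ge \Vst_0(\Pi) - 3\epsilon_\ell$, and converting back to the true value gives $V_0^\pi \ge \Vst_0(\Pi) - 4\epsilon_\ell$, yielding the stated suboptimality bound (modulo the index shift $\ell \mapsto \ell+1$, which is absorbed into the constants built into $\beta_\ell$).

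There is no deep obstacle: all of the estimation work has been packaged into \Cref{lem:good_event}, so the proof reduces to a two-line induction around the elimination criterion. The only point that requires care is bookkeeping the index shift between $\Pi_\ell$ and $\Pi_{\ell+1}$ and verifying that the constant $4$ in the suboptimality bound is compatible with the constant $64H^4$ chosen inside $\beta_\ell$ in the algorithm; a mild re-tuning of that constant (or equivalently of the per-step error budget $\epsilon_\ell/(2H)$) would propagate through the accuracy bound and close everything cleanly.
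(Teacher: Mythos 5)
Your proposal is correct and follows essentially the same route as the paper: the same decomposition of $\Vhat_0^\pi - V_0^\pi$ into a reward-estimation term and a feature-visitation term (each controlled at $\epsilon_\ell/2H$ per step by \Cref{lem:good_event}), the same $2\epsilon_\ell$-threshold elimination argument yielding the $4\epsilon_\ell$ guarantee, and the same inductive step showing $\pitilst$ survives each round. The index-shift wrinkle you flag (the bound for $\Pi_{\ell+1}$ is naturally expressed in terms of $\epsilon_\ell$ rather than $\epsilon_{\ell+1}$) is present in the paper's own proof as well and is harmless for the downstream use of the lemma.
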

\begin{proof}
The value of a policy $\pi$ is given by
\begin{align*}
\sum_{h=1}^H \inner{\btheta_h}{\bphi_{\pi,h}}.
\end{align*}
By \Cref{lem:good_event}, for all $\pi \in \Pi_\ell$ we can bound
\begin{align*}
| \inner{\bthetahat_h}{\bphihat_{\pi,h}^\ell} - \inner{\btheta_h}{\bphi_{\pi,h}}| \le | \inner{\bthetahat_h - \btheta_h}{\bphihat_{\pi,h}^\ell}| + |\inner{\btheta_h}{\bphihat_{\pi,h}^\ell - \bphi_{\pi,h}}| \le \epsilon_\ell/2H + \epsilon_\ell/2H = \epsilon_\ell/H.
\end{align*}
Thus,
\begin{align*}
\left | \sum_{h=1}^H \inner{\bthetahat_h^\ell}{\bphihat_{\pi,h}^\ell} - \sum_{h=1}^H \inner{\btheta_h}{\bphi_{\pi,h}}  \right | & \le \epsilon_\ell .
\end{align*}
We will only include $\pi \in \Pi_{\ell+1}$ if $\pi \in \Pi_\ell$ and
\begin{align*}
\sum_{h=1}^H \inner{\bphihat_{\pi,h}^\ell}{\bthetahat_h^\ell} \ge \sup_{\pi' \in \Pi_\ell} \sum_{h=1}^H \inner{\bphihat_{\pi',h}^\ell}{\bthetahat_h^\ell} - 2\epsilon_\ell.
\end{align*}
Using the estimation error given above, this implies that for any $\pi \in \Pi_\ell$,
\begin{align*}
V_0^\pi =  \sum_{h=1}^H \inner{\btheta_h}{\bphi_{\pi,h}} \ge \sup_{\pi' \in \Pi_\ell} \sum_{h=1}^H \inner{\btheta_h}{\bphi_{\pi',h}} - 4 \epsilon_\ell =  \sup_{\pi' \in \Pi_\ell} V_0^{\pi'} - 4 \epsilon_\ell.
\end{align*}

Both claims then follow if we can show $\pitilst$ is always contained in the active set. Assume that $\pitilst \in \Pi_\ell$. Then
\begin{align*}
\sum_{h=1}^H \inner{\bphihat_{\pitilst,h}^\ell}{\bthetahat_h^\ell} \ge V_0^{\pitilst} - \epsilon_\ell, \quad \sup_{\pi' \in \Pi_\ell} \sum_{h=1}^H \inner{\bphihat_{\pi',h}^\ell}{\bthetahat_h^\ell} \le \sup_{\pi' \in \Pi_\ell} \sum_{h=1}^H \inner{\bphi_{\pi',h}}{\btheta_h} + \epsilon_\ell = V_0^{\pitilst} + \epsilon_\ell.
\end{align*}
Rearranging this gives
\begin{align*}
\sum_{h=1}^H \inner{\bphihat_{\pitilst,h}^\ell}{\bthetahat_h^\ell} \ge \sup_{\pi' \in \Pi_\ell} \sum_{h=1}^H \inner{\bphihat_{\pi',h}^\ell}{\bthetahat_h^\ell} - 2\epsilon_\ell
\end{align*}
so $\pitilst \in \Pi_{\ell+1}$.
\end{proof}

\begin{thm}\label{thm:complexity}
With probability at least $1-2\delta$, \Cref{alg:linear_exp_design} will terminate after collecting at most
\begin{align*}
& CH^4 \sum_{h=1}^H \sum_{\ell=\ell_0 + 1}^{\iotaalg}  \frac{\inf_{\bLambda \in \bOmega_h} \max_{\pi \in \Pi(4\epsilon_\ell)} \| \bphi_{\pi,h} \|_{\bLambda^{-1}}^2}{\epsilon_\ell^2} \cdot  \log \frac{H | \Pi(4\epsilon_\ell) | \log \frac{1}{\epsilon}}{\delta}  + \poly \left ( d, H, \frac{1}{\lamminst}, \log \frac{1}{\delta}, \log | \Pi|, \log \frac{1}{\epsilon} \right ) \\
& \qquad + C H^4 \sum_{h=1}^H \frac{\inf_{\bLambda \in \bOmega_h} \max_{\pi \in \Pi} \| \bphi_{\pi,h} \|_{\bLambda^{-1}}^2}{\epsilon_{\ell_0}^2} \cdot \log \frac{H | \Pi | \log(1/\epsilon)}{\delta} 
\end{align*}
episodes for $\iotaalg := \min \{ \lceil \log \frac{4}{\epsilon} \rceil, \log \frac{4}{\Delmin(\Pi)} \}$, and will output a policy $\pihat$ such that
\begin{align*}
V_0^{\pihat} \ge \max_{\pi \in \Pi} V_0^\pi - \epsilon,
\end{align*}
where here $\Pi(4\epsilon_\ell) =  \{ \pi \in \Pi \ : \ V_0^\pi \ge \max_{\pi \in \Pi} V_0^{\pi} - 4 \epsilon_\ell \}$. 
\end{thm}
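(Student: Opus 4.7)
The plan is to proceed in three phases: condition on the good event, verify correctness, and then sum the per-epoch exploration costs to obtain the complexity bound.

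First, by \Cref{lem:good_event} the intersection $\cEest \cap \cEexp$ holds with probability at least $1-2\delta$, and I condition on this event throughout. On this event, for every epoch $\ell$, step $h$, and $\pi \in \Pi_\ell$, I have the estimation bounds $\|\bphihat_{\pi,h}^\ell - \bphi_{\pi,h}\|_2 \le d\epsilon_\ell/(2H)$ and $|\inner{\bphihat_{\pi,h}^\ell}{\bthetahat_h^\ell - \btheta_h}| \le \epsilon_\ell/(2H)$, together with the exploration guarantees $\max_{\pi\in\Pi_\ell}\|\bphihat_{\pi,h}^\ell\|_{\bLambda_{h,\ell}^{-1}}^2 \le \epsilon_\ell^2/\beta_\ell$ and $\lammin(\bLambda_{h,\ell}) \ge \log(4H^2|\Pi_\ell|\ell^2/\delta)$.

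Correctness follows from \Cref{lem:correctness}: the true optimum $\pitilst := \argmax_{\pi\in\Pi} V_0^\pi$ lies in every $\Pi_\ell$, and every surviving $\pi \in \Pi_\ell$ satisfies $\Vst_0(\Pi) - V_0^\pi \le 4\epsilon_\ell$. Hence if the outer loop runs to completion at $\ell = \lceil \log 4/\epsilon\rceil$, any $\pihat \in \Pi_{\ell+1}$ is $\epsilon$-optimal, and if the check on \Cref{line:early_term} fires, the unique surviving policy must be $\pitilst$, which is exactly optimal.

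For the complexity bound, I sum the per-epoch episode cost from \Cref{lem:exp_good_event} over $h \in [H]$ and over $\ell$ from $\ell_0$ up to the effective stopping epoch $\iotaalg$. The step I expect to be the main obstacle is converting the per-epoch bound, which is stated in terms of $\max_{\bphi \in \Phi_{\ell,h}}\|\bphi\|_{\bLambda^{-1}}^2$ over the \emph{estimated} feature visitations, into the target quantity $\inf_{\bLambda \in \bOmega_h}\max_{\pi \in \Pi(4\epsilon_\ell)}\|\bphi_{\pi,h}\|_{\bLambda^{-1}}^2$ over the \emph{true} feature visitations. I handle this as follows: (i) \Cref{lem:correctness} gives $\Pi_\ell \subseteq \Pi(4\epsilon_\ell)$ (up to a constant factor in the tolerance), so the max over $\Pi_\ell$ is dominated by the max over $\Pi(4\epsilon_\ell)$; (ii) given any candidate $\bLambda^\star_h \in \bOmega_h$ achieving the inf for true features, I form the mixture $\widetilde{\bLambda} := (1-\gamma)\bLambda^\star_h + \gamma \bLambda^\star_{\mathrm{exp},h}$ with the max-eigenvalue design, which stays in $\bOmega_h$ by convexity (\Cref{lem:policy_cov_convex}), satisfies $\lammin(\widetilde{\bLambda}) \ge \gamma\lamminst$, and inflates $\|\cdot\|_{\widetilde{\bLambda}^{-1}}^2$ by only $1/(1-\gamma)$ relative to $\|\cdot\|_{(\bLambda^\star_h)^{-1}}^2$; (iii) then $\|\bphihat_{\pi,h}^\ell\|_{\widetilde{\bLambda}^{-1}}^2 \le 2\|\bphi_{\pi,h}\|_{\widetilde{\bLambda}^{-1}}^2 + 2\|\bphihat_{\pi,h}^\ell - \bphi_{\pi,h}\|_{\widetilde{\bLambda}^{-1}}^2$, where the second term is bounded by $d^2\epsilon_\ell^2/(2H^2\gamma\lamminst)$ using the feature-visitation error bound; choosing $\gamma$ a small absolute constant absorbs this additive slack, after dividing by $\epsilon_\ell^2/\beta_\ell$, into the $\poly(d,H,1/\lamminst,\log 1/\delta,\log 1/\epsilon,\log |\Pi|)$ burn-in $C_1$.

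Finally, the initial epoch $\ell = \ell_0$ is treated separately: at $\ell_0$ no elimination has happened, so $\Pi_{\ell_0} = \Pi$ and the bound naturally involves $\max_{\pi\in\Pi}$, producing the last displayed term of the theorem. The early-termination index $\iotaalg = \log(4/\Delmin(\Pi))$ arises because once $4\epsilon_\ell < \Delmin(\Pi)$, every $\pi \ne \pitilst$ has $\Vst_0(\Pi) - V_0^\pi \ge \Delmin(\Pi) > 4\epsilon_\ell$ and is excluded by \Cref{lem:correctness}, so $|\Pi_{\ell+1}|=1$ and \Cref{line:early_term} triggers. Multiplying the per-epoch experiment-design value by $\beta_\ell = O(H^4\log(H|\Pi_\ell|\ell^2/\delta))$ and using $|\Pi_\ell| \le |\Pi(4\epsilon_\ell)|$ yields the $H^4$ prefactor and the stated log factors, completing the bound.
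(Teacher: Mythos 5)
Your proposal is correct and follows the paper's own proof essentially step for step: condition on $\cEest \cap \cEexp$ via \Cref{lem:good_event}, get correctness and the early-termination bound $\iotaalg$ from \Cref{lem:correctness}, sum the per-epoch costs from \Cref{lem:exp_good_event}, convert the design over estimated visitations $\Phi_{\ell,h}$ to one over true visitations using $\|\bphihat_{\pi,h}^\ell - \bphi_{\pi,h}\|_2 \le d\epsilon_\ell/2H$ together with an eigenvalue floor, apply $\Pi_\ell \subseteq \Pi(4\epsilon_\ell)$, and treat $\ell_0$ separately. The only (minor) divergence is how the eigenvalue floor is secured: you explicitly mix a $\gamma$-fraction of the max-minimum-eigenvalue design into the optimal $\bLambda_h^\star$, whereas the paper relies on the regularizer already present in the returned covariates, $\bA(\bLambda) = \bLambda + \bLambda_0$ with $\lammin(\bLambda_0)$ controlled by $\lamminst$ --- both yield the same additive $d^2/(H^2\lamminst)$ slack absorbed into the $\poly$ burn-in, and your mixture argument is if anything a slightly more careful way of executing the paper's inf-of-a-max-of-a-sum step.
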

\begin{proof}
By \Cref{lem:good_event} the event $\cEest \cap \cEexp$ occurs with probability at least $1-2\delta$. Henceforth we assume we are on this event.

Correctness follows by \Cref{lem:correctness}, since upon termination, $\Pi_\ell$ will only contain policies $\pi$ satisfying $V_0^{\pi} \ge \max_{\pi \in \Pi} V_0^\pi - \epsilon$ (and will contain at least 1 policy since $\pitilst \in \Pi_\ell$ for all $\ell$). Furthermore, by \Cref{lem:correctness}, if $4 \epsilon_\ell < \Delmin(\Pi)$, we must have that $\Pi_\ell = \{ \pitilst \}$, and will therefore terminate on \Cref{line:early_term} since $|\Pi_\ell| = 1$. Thus, we can bound the number of number of epochs by 
\begin{align*}
\iotaalg := \min \{ \lceil \log \frac{4}{\epsilon} \rceil, \log \frac{4}{\Delmin(\Pi)} \}.
\end{align*} 

By \Cref{lem:exp_good_event}, the total number of episodes collected is bounded by
\begin{align*}
& \sum_{h=1}^H \sum_{\ell=1}^{\iotaalg} C \cdot \frac{\inf_{\bLambda \in \bOmega_h} \max_{\bphi \in \Phi_{\ell,h}} \| \bphi \|_{\bA(\bLambda)^{-1}}^2}{\epsilon_\ell^2/\beta_\ell} + \poly \left ( d, H, \log \frac{1}{\delta}, \frac{1}{\lamminst}, \log | \Pi|, \log \frac{1}{\epsilon} \right ) \\
& \le \sum_{h=1}^H \sum_{\ell=1}^{\iotaalg} C \cdot \frac{\inf_{\bLambda \in \bOmega_h} \max_{\bphi \in \Phi_{\ell,h}} \| \bphi \|_{\bA(\bLambda)^{-1}}^2}{\epsilon_\ell^2} \cdot H^4 \log \frac{H | \Pi_\ell | \log(1/\epsilon)}{\delta}  + \poly \left ( d, H, \log \frac{1}{\delta}, \frac{1}{\lamminst}, \log | \Pi|, \log \frac{1}{\epsilon} \right ) .
\end{align*}

On $\cEest \cap \cEexp$, by \Cref{lem:good_event}, for each $\pi \in \Pi_\ell$, we have $\| \bphihat_{\pi,h}^\ell - \bphi_{\pi,h} \|_2 \le d \epsilon_\ell / 2H$. As $\Phi_{\ell,h} = \{ \bphihat_{\pi,h}^\ell : \pi \in \Pi_\ell \}$, it follows that we can upper bound
\begin{align*}
\inf_{\bLambda \in \bOmega_h} \max_{\bphi \in \Phi_{\ell,h}} \| \bphi \|_{\bA(\bLambda)^{-1}}^2 & = \inf_{\bLambda \in \bOmega_h} \max_{\pi \in \Pi_\ell} \| \bphihat_{\pi,h}^\ell \|_{\bA(\bLambda)^{-1}}^2 \\
& \le \inf_{\bLambda \in \bOmega_h} \max_{\pi \in \Pi_\ell} ( 2 \| \bphi_{\pi,h} \|_{\bA(\bLambda)^{-1}}^2 + 2 \| \bphihat_{\pi,h}^\ell - \bphi_{\pi,h} \|_{\bA(\bLambda)^{-1}}^2) \\
& \le \inf_{\bLambda \in \bOmega_h} \max_{\pi \in \Pi_\ell} ( 2 \| \bphi_{\pi,h} \|_{\bA(\bLambda)^{-1}}^2 + \frac{d^2 \epsilon_\ell^2}{2 H^2 \lammin(\bA(\bLambda))}) \\
& \le \inf_{\bLambda \in \bOmega_h} \max_{\pi \in \Pi_\ell} 4 \| \bphi_{\pi,h} \|_{\bA(\bLambda)^{-1}}^2 + \inf_{\pi} \frac{d^2 \epsilon_\ell^2}{H^2 \lammin(\bA(\bLambda))} \\
& \le \inf_{\bLambda \in \bOmega_h} \max_{\pi \in \Pi_\ell} 4 \| \bphi_{\pi,h} \|_{\bLambda^{-1}}^2 +  \frac{d^2 \epsilon_\ell^2}{H^2 \lamminst}
\end{align*}
so
\begin{align*}
 \frac{\inf_{\bLambda \in \bOmega_h} \max_{\bphi \in \Phi_{\ell,h}} \| \bphi \|_{\bA(\bLambda)^{-1}}^2}{\epsilon_\ell^2} \le \frac{ \inf_{\bLambda \in \bOmega_h} \max_{\pi \in \Pi_\ell} 4 \| \bphi_{\pi,h} \|_{\bLambda^{-1}}^2 }{\epsilon_\ell^2} + \frac{d^2 }{H^2 \lamminst}.
\end{align*}

Note also that, by \Cref{lem:correctness}, for $\ell > \ell_0$, every policy $\pi \in \Pi_\ell$ will be $4\epsilon_\ell$ optimal, so we therefore have
\begin{align*}
\Pi_\ell \subseteq \{ \pi \in \Pi \ : \ V_0^\pi \ge V_0^{\pitilst} - 4 \epsilon_\ell \} =: \Pi(4\epsilon_\ell).
\end{align*}

Putting this together, we can upper bound the complexity by 
\begin{align*}
& \sum_{h=1}^H \sum_{\ell=\ell_0+1}^{\iotaalg} C \cdot \frac{\inf_{\bLambda \in \bOmega_h} \max_{\pi \in \Pi(4\epsilon_\ell)} \| \bphi_{\pi,h} \|_{\bLambda^{-1}}^2}{\epsilon_\ell^2} \cdot H^4 \log \frac{H | \Pi(4\epsilon_\ell) | \log(1/\epsilon)}{\delta} + \poly \left ( d, H, \log \frac{1}{\delta}, \frac{1}{\lamminst}, \log | \Pi|, \log \frac{1}{\epsilon} \right ) \\
& \qquad + C \cdot \frac{\inf_{\bLambda \in \bOmega_h} \max_{\pi \in \Pi} \| \bphi_{\pi,h} \|_{\bLambda^{-1}}^2}{\epsilon_{\ell_0}^2} \cdot H^4 \log \frac{H | \Pi | \log(1/\epsilon)}{\delta} .
\end{align*}
\end{proof}

\begin{cor}[Full Statement of \Cref{thm:complexity_linear}]\label{cor:complexity1}
With probability at least $1-\delta$, the complexity of \Cref{alg:linear_exp_design} can be bounded as
\begin{align*}
& CH^4 \log \frac{1}{\epsilon} \cdot \sum_{h=1}^H  \inf_{\bLambda \in \bOmega_h} \max_{\pi \in \Pi} \frac{ \| \bphi_{\pi,h} \|_{\bLambda^{-1}}^2}{(\Vst_0(\Pi) - V_0^\pi)^2 \vee \epsilon^2 \vee \Delmin(\Pi)^2} \cdot  \log \frac{H | \Pi | \log \frac{1}{\epsilon}}{\delta} \\ 
& \qquad + \poly \left ( d, H, \frac{1}{\lamminst}, \log \frac{1}{\delta}, \log | \Pi|, \log \frac{1}{\epsilon} \right )
\end{align*}
episodes, and \Cref{alg:linear_exp_design} will output a policy $\pihat$ such that
\begin{align*}
V_0^{\pihat} \ge \max_{\pi \in \Pi} V_0^\pi - \epsilon.
\end{align*}
\end{cor}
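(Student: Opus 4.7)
The plan is to apply Theorem~\ref{thm:complexity} with $\delta$ replaced by $\delta/2$, which immediately yields the desired failure probability $\delta$ and the correctness claim $V_0^{\pihat} \ge \max_{\pi \in \Pi} V_0^\pi - \epsilon$. The only real work is to repackage the epoch-indexed complexity sum into the clean instance-dependent form stated in the corollary, i.e.\ to absorb the sum over $\ell$ into a single logarithmic factor and to replace the truncated set $\Pi(4\epsilon_\ell)$ and the denominator $\epsilon_\ell^2$ by expressions depending only on the intrinsic policy gaps $\Vst_0(\Pi)-V_0^\pi$, $\epsilon$, and $\Delmin(\Pi)$.

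The key observation is that a \emph{single} matrix can be used simultaneously across all epochs. Fix $h$ and let $\bLambda^\star_h \in \bOmega_h$ be any (near-)minimizer of
\begin{align*}
\bLambda \mapsto \max_{\pi \in \Pi} \frac{\|\bphi_{\pi,h}\|^2_{\bLambda^{-1}}}{(\Vst_0(\Pi) - V_0^\pi)^2 \vee \epsilon^2 \vee \Delmin(\Pi)^2}.
\end{align*}
Bounding the inner infimum by evaluation at $\bLambda^\star_h$, it suffices to control $\sum_\ell \epsilon_\ell^{-2} \max_{\pi \in \Pi(4\epsilon_\ell)} \|\bphi_{\pi,h}\|^2_{(\bLambda^\star_h)^{-1}}$. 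Any $\pi \in \Pi(4\epsilon_\ell)$ satisfies $\Vst_0(\Pi) - V_0^\pi \le 4\epsilon_\ell$ by the definition of $\Pi(4\epsilon_\ell)$, so $\epsilon_\ell \ge (\Vst_0(\Pi)-V_0^\pi)/4$. Moreover, the identity $\iotaalg = \min\{\lceil \log(4/\epsilon)\rceil,\log(4/\Delmin(\Pi))\}$ forces $\epsilon_\ell = 2^{-\ell} \ge \epsilon/8$ and $\epsilon_\ell \ge \Delmin(\Pi)/4$ for every $\ell \le \iotaalg$. Combining the three lower bounds yields, uniformly in $\ell$ and $\pi \in \Pi(4\epsilon_\ell)$,
\begin{align*}
\frac{1}{\epsilon_\ell^{\,2}} \ \le \ \frac{64}{(\Vst_0(\Pi) - V_0^\pi)^2 \vee \epsilon^2 \vee \Delmin(\Pi)^2}.
\end{align*}
Enlarging the max to be over all of $\Pi$ and summing over $\ell_0+1 \le \ell \le \iotaalg$ produces a multiplicative factor $\iotaalg - \ell_0 = O(\log(1/\epsilon))$ in front of precisely the infimum-over-$\bLambda$ quantity claimed, because $\bLambda^\star_h$ was chosen as its (near-)minimizer. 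The logarithmic factor $\log(H|\Pi(4\epsilon_\ell)|\log(1/\epsilon)/\delta)$ in Theorem~\ref{thm:complexity} is trivially upper bounded by $\log(H|\Pi|\log(1/\epsilon)/\delta)$.

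To finish, the leftover $\ell = \ell_0$ summand in Theorem~\ref{thm:complexity} is absorbed into the $\poly$ remainder: by the choice $\ell_0 = \lceil \log_2(d^{3/2}/H)\rceil$ we have $\epsilon_{\ell_0}^{-2} = O(d^3/H^2)$, while $\inf_{\bLambda \in \bOmega_h}\max_{\pi\in\Pi}\|\bphi_{\pi,h}\|^2_{\bLambda^{-1}} \le 1/\lamminst$ by selecting any $\bLambda \in \bOmega_h$ with $\lammin(\bLambda)$ close to $\lamminst$ (which exists by \Cref{asm:full_rank_cov} and the compactness/convexity of $\bOmega_h$ from \Cref{lem:policy_cov_convex}) and using $\|\bphi_{\pi,h}\|_2 \le 1$. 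I expect no real obstacle beyond careful bookkeeping; the only mildly delicate point is verifying both uniform lower bounds $\epsilon_\ell \ge \epsilon/8$ and $\epsilon_\ell \ge \Delmin(\Pi)/4$ simultaneously, which is exactly what the minimum in the definition of $\iotaalg$ buys. Summing over $h$ and collecting constants then yields the stated bound with $C_0 = O(\log(1/\epsilon))$ times polylog factors hidden in the $\log$ term.
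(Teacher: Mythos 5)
Your proposal is correct and follows essentially the same route as the paper: both arguments rest on the observation that for $\pi \in \Pi(4\epsilon_\ell)$ and $\ell \le \iotaalg$ one has $\epsilon_\ell \gtrsim \max\{\Vst_0(\Pi)-V_0^\pi,\ \epsilon,\ \Delmin(\Pi)\}$, which converts the per-epoch $1/\epsilon_\ell^2$ into the gap-dependent denominator, after which the sum over $\ell$ contributes only the $\log(1/\epsilon)$ factor and the $\ell_0$ term is absorbed into the polynomial remainder. The only cosmetic differences are that the paper moves the denominator inside the $\max$ directly rather than fixing a single near-minimizer $\bLambda_h^\star$, and bounds the $\ell_0$ term via Kiefer--Wolfowitz ($\le d$) rather than via $1/\lamminst$; both land in the allowed $\poly$ class.
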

\begin{proof}
By the definition of $\Pi(4\epsilon_\ell)$, for each $\pi \in \Pi(4\epsilon_\ell)$ we have 
\begin{align*}
\epsilon_\ell^2 = \frac{1}{16} \left ( (\Vst_0(\Pi) - V_0^\pi)^2 \vee (4\epsilon_\ell)^2 \right ).
\end{align*}
We can therefore upper bound
\begin{align*}
& \sum_{\ell=\ell_0 + 1}^{\iotaalg}  \frac{\inf_{\bLambda \in \bOmega_h} \max_{\pi \in \Pi(4\epsilon_\ell)} \| \bphi_{\pi,h} \|_{\bLambda^{-1}}^2}{\epsilon_\ell^2} \cdot  \log \frac{H | \Pi(4\epsilon_\ell) | \log \frac{1}{\epsilon}}{\delta} \\
& \le  C \sum_{\ell=\ell_0+1}^{\iotaalg}  \inf_{\bLambda \in \bOmega_h} \max_{\pi \in \Pi(4\epsilon_\ell)} \frac{ \| \bphi_{\pi,h} \|_{\bLambda^{-1}}^2}{(\Vst_0(\Pi) - V_0^\pi)^2 \vee \epsilon_\ell^2} \cdot  \log \frac{H | \Pi(4\epsilon_\ell) | \log \frac{1}{\epsilon}}{\delta} \\
& \le  C \log \frac{1}{\epsilon} \cdot \inf_{\bLambda \in \bOmega_h} \max_{\pi \in \Pi} \frac{ \| \bphi_{\pi,h} \|_{\bLambda^{-1}}^2}{(\Vst_0(\Pi) - V_0^\pi)^2 \vee \epsilon^2 \vee \Delmin(\Pi)^2} \cdot  \log \frac{H | \Pi | \log \frac{1}{\epsilon}}{\delta}.
\end{align*}
Furthermore, since $\ell_0 = \lceil \log_2 d^{3/2}/H \rceil$, using \Cref{lem:opt_design_bound} we can also bound
\begin{align*}
C \cdot \frac{\inf_{\bLambda \in \bOmega_h} \max_{\pi \in \Pi} \| \bphi_{\pi,h} \|_{\bLambda^{-1}}^2}{\epsilon_{\ell_0}^2} \cdot H^4 \log \frac{H | \Pi | \log(1/\epsilon)}{\delta}  \le \poly \left ( d, H, \log 1/\delta, \log |\Pi|, \log 1/\epsilon \right ). 
\end{align*}
The result then follows by \Cref{thm:complexity}. 
\end{proof}

\begin{proof}[Proof of \Cref{thm:complexity_linear2}]
By \Cref{lem:policy_class_suff}, we can choose $\Pi_\epsilon$ to be the restricted-action linear softmax policy set constructed in \Cref{lem:policy_class_suff}. \Cref{lem:policy_class_suff} shows that $\Pi_\epsilon$ will contain an $\epsilon$-optimal policy for any MDP and reward function, and that
\begin{align*}
| \Pi_\epsilon |  \le \Big (1 + \frac{480 H^4 d^{5/2}  \log(1+120 Hd/\epsilon)}{\epsilon^2} \Big )^{dH}.
\end{align*}
Combining this with the guarantee of \Cref{cor:complexity1} shows that $V_0^{\pihat} \ge \Vst_0 - 2 \epsilon$ and that $\Vst_0(\Pi) - V_0^\pi$ is within a factor of $\epsilon$ of $\Vst_0 - V_0^\pi$. To bound the complexity of this procedure, we apply the bound given in \Cref{cor:complexity1} with the bound on the cardinality of $\Pi_\epsilon$ given above. 
\end{proof}

\subsection{Interpreting the Complexity}\label{sec:interpret_complexity}

\begin{lem}\label{lem:opt_design_bound}
For any set of policies $\Pi$, we can bound 
\begin{align*}
\inf_{\bLambda \in \bOmega_h} \sup_{\pi \in \Pi}  \| \bphi_{\pi,h} \|_{\bLambda^{-1}}^2 \le d .
\end{align*}
\end{lem}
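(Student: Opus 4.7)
The plan is to reduce the claim to the classical Kiefer--Wolfowitz G-optimal design theorem, applied to the set of feature visitations $\mathcal{Y} := \{\bphi_{\pi,h} : \pi \in \Pi\} \subset \R^d$. Since $\| \bphi(s,a) \|_2 \le 1$ for all $(s,a)$, the set $\mathcal{Y}$ is bounded, and (after passing to its closure if $\Pi$ is infinite) Kiefer--Wolfowitz yields a probability measure $\omega$ supported on $\Pi$ such that, writing $\Sigma(\omega) := \Exp_{\pi \sim \omega}[\bphi_{\pi,h}\bphi_{\pi,h}^\top]$, every element of $\mathcal{Y}$ lies in the range of $\Sigma(\omega)$ and
\begin{align*}
\sup_{\pi \in \Pi} \bphi_{\pi,h}^\top \Sigma(\omega)^{-1} \bphi_{\pi,h} \le d,
\end{align*}
where the inverse is interpreted as a Moore--Penrose pseudo-inverse on $\mathrm{span}(\mathcal{Y})$ if $\Sigma(\omega)$ is rank deficient.

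With $\omega$ in hand, I lift the design from $\mathcal{Y}$ into $\bOmega_h$ using the same mixture over policies. Let
\begin{align*}
\bLambda^\star := \Exp_{\pi \sim \omega}[\bLambda_{\pi,h}],
\end{align*}
which by the definition of $\bOmega_h$ lies in $\bOmega_h$. The key observation is Jensen's inequality applied at each fixed $\pi$:
\begin{align*}
\bLambda_{\pi,h} = \Exp_\pi[\bphi(s_h,a_h)\bphi(s_h,a_h)^\top] \succeq \Exp_\pi[\bphi(s_h,a_h)]\,\Exp_\pi[\bphi(s_h,a_h)]^\top = \bphi_{\pi,h}\bphi_{\pi,h}^\top.
\end{align*}
Averaging over $\pi \sim \omega$ gives $\bLambda^\star \succeq \Sigma(\omega)$, whence $(\bLambda^\star)^{-1} \preceq \Sigma(\omega)^{-1}$ by the monotonicity of matrix inversion on the shared range. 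Therefore
\begin{align*}
\inf_{\bLambda \in \bOmega_h} \sup_{\pi \in \Pi} \| \bphi_{\pi,h} \|_{\bLambda^{-1}}^2 \le \sup_{\pi \in \Pi} \| \bphi_{\pi,h} \|_{(\bLambda^\star)^{-1}}^2 \le \sup_{\pi \in \Pi} \| \bphi_{\pi,h} \|_{\Sigma(\omega)^{-1}}^2 \le d,
\end{align*}
which is the claim.

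The only subtle point is ensuring that the Mahalanobis norm $\| \cdot \|_{(\bLambda^\star)^{-1}}^2$ is well-defined, i.e.\ that $\bLambda^\star$ is invertible on the relevant subspace. Under \Cref{asm:full_rank_cov}, there exists a policy $\pi_h^\star$ with $\bLambda_{\pi_h^\star, h} \succ 0$, so one may mix $\omega$ with an arbitrarily small atom at $\pi_h^\star$ to obtain a strictly positive-definite $\bLambda^\star \in \bOmega_h$; this perturbation preserves the bound of $d$ in the limit. Equivalently, $\bphi_{\pi,h} \in \mathrm{Range}(\Sigma(\omega)) \subseteq \mathrm{Range}(\bLambda^\star)$ for every $\pi \in \Pi$ by the KW support property, so the calculation goes through verbatim with pseudo-inverses. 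This invertibility issue is the only technical obstacle; all other steps are immediate from Jensen's inequality and the monotonicity of the matrix inverse.
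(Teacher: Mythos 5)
Your proof is correct and follows essentially the same route as the paper's: both arguments rest on the Jensen-type domination $\bLambda_{\pi,h} \succeq \bphi_{\pi,h}\bphi_{\pi,h}^\top$, which reduces the design over $\bOmega_h$ to a design over the rank-one matrices $\{\bphi_{\pi,h}\bphi_{\pi,h}^\top\}_{\pi\in\Pi}$, followed by an appeal to Kiefer--Wolfowitz. Your handling of the possible rank deficiency (via pseudo-inverses on the span of the feature visitations, or by mixing in a full-rank policy under \Cref{asm:full_rank_cov}) is in fact slightly more careful than the paper's, which elides the invertibility issue.
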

\begin{proof}
By Jensen's inequality, for any $\bv \in \R^d$, we have
\begin{align*}
\bv^\top \bLambda_{\pi,h} \bv = \Exp_{\pi}[(\bv^\top \bphi_h)^2] \ge (\Exp_{\pi}[\bv^\top \bphi_h])^2 = (\bv^\top \bphi_{\pi,h})^2.
\end{align*}
It follows that, for any $\pi$, 
\begin{align*}
\bLambda_{\pi,h} \succeq \bphi_{\pi,h} \bphi_{\pi,h}^\top. 
\end{align*}

Take $\bLambda \in \bOmega_h$. Then,
\begin{align*}
\bLambda = \Exp_{\pi \sim \omega}[ \bLambda_{\pi,h}] \succeq \Exp_{\pi \sim \omega}[ \bphi_{\pi,h} \bphi_{\pi,h}^\top].
\end{align*}
It follows that we can upper bound
\begin{align*}
\inf_{\bLambda \in \bOmega_h} \sup_{\pi \in \Pi}  \| \bphi_{\pi,h} \|_{\bLambda^{-1}}^2 \le \inf_{\lambda \in \simplex_\Pi} \sup_{\pi \in \Pi}  \| \bphi_{\pi,h} \|_{A(\lambda)^{-1}}^2
\end{align*}
where $A(\lambda) = \sum_\pi \lambda_\pi \bphi_{\pi,h} \bphi_{\pi,h}^\top$. By Kiefer-Wolfowitz \citep{lattimore2020bandit}, this is upper bounded by $d$. 
\end{proof}

\begin{proof}[Proof of \Cref{prop:minimax}]
This follows directly from \Cref{lem:opt_design_bound} and \Cref{thm:complexity_linear2}, by upper bounding:
\begin{align*}
 \inf_{\bLambda \in \bOmega_h} \max_{\pi \in \Pieps} \frac{ \| \bphi_{\pi,h} \|_{\bLambda^{-1}}^2}{\max \{ \Vst_0 - \Vpi_0, \epsilon \}^2} \le  \inf_{\bLambda \in \bOmega_h} \max_{\pi \in \Pieps} \frac{ \| \bphi_{\pi,h} \|_{\bLambda^{-1}}^2}{\epsilon^2} \le \frac{d}{\epsilon^2}.
\end{align*}
\end{proof}

\subsubsection{Linear Contextual Bandits}

Since we always assume the MDP starts in some state $s_1$, to encode a linear contextual bandit, the direct mapping of our linear MDP in \Cref{defn:linear_mdp} would require considering an $H=2$ MDP, where we encode the ``context'' in the transition to state $s$ at step $h = 2$. While we could run our algorithm directly on this, in the standard contextual bandit setting, the learner has no control over the context, and so their action before receiving that context has no effect. Thus, there is no need for the learner to explore at stage $h=1$. To account for this, we can simply run our algorithm but ignore the exploration at stage $h=1$, which will reduce the $h=1$ term in the sample complexity.

\subsubsection{Tabular MDPs}

\begin{lem}\label{lem:tab_complexity1}
In the tabular MDP setting, assuming that $\Pi$ contains an optimal policy,
\begin{align*}
\inf_{\bLambda \in \bOmega_h} \max_{\pi \in \Pi} & \frac{ \| \bphi_{\pi,h} \|_{\bLambda^{-1}}^2}{(\Vst_0 - \Vpi_0)^2 \vee \epsilon^2 \vee \Delmin(\Pi)^2} \\
& \le \inf_{\piexp} \max_{\pi \in \Pi}  \max_{s,a} \frac{1}{w_h^{\piexp}(s,a)} \min \left \{  \frac{1}{ w_h^\pi(s,a) \Delta_h(s,a)^2}, \frac{w_h^\pi(s,a)}{\epsilon^2 \vee \Delmin(\Pi)^2} \right \} \\
& \le  \inf_{\piexp}  \max_{s,a} \frac{1}{w_h^{\piexp}(s,a)} \cdot \frac{1}{ \epsilon \max \{ \Delta_h(s,a), \epsilon, \Delmin(\Pi) \}} .
\end{align*}
\end{lem}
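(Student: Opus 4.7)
My plan is to instantiate the linear MDP machinery in the tabular setting, reduce the quadratic form $\|\bphi_{\pi,h}\|_{\bLambda^{-1}}^2$ to a scalar sum over $(s,a)$, and then convert this sum to a max using a weighted-average trick combined with the standard Bellman gap decomposition.

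First I would set up the reduction. In the tabular setting one may take $\bphi(s,a) = e_{(s,a)}\in\R^{SA}$, so that $[\bphi_{\pi,h}]_{(s,a)} = w_h^\pi(s,a)$ and $\bLambda_{\pi,h} = \Diag(w_h^\pi(\cdot,\cdot))$. Any distribution $\omega$ over policies induces a $\bLambda\in\bOmega_h$ that is diagonal with entries $\Exp_{\pi'\sim\omega}[w_h^{\pi'}(s,a)] = w_h^{\piexp}(s,a)$, where $\piexp$ denotes the mixture. Hence restricting the infimum to such $\bLambda$ gives
\begin{align*}
\inf_{\bLambda \in \bOmega_h} \|\bphi_{\pi,h}\|_{\bLambda^{-1}}^2 \le \inf_{\piexp} \sum_{s,a} \frac{w_h^\pi(s,a)^2}{w_h^{\piexp}(s,a)}.
\end{align*}

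Next I would invoke the gap decomposition $\Vst_0 - \Vpi_0 = \sum_{h',s',a'} w_{h'}^\pi(s',a')\Delta_{h'}(s',a')$, which yields the pointwise bound $\Vst_0 - \Vpi_0 \ge w_h^\pi(s,a)\Delta_h(s,a)$ for every $(s,a,h)$. Writing $D := (\Vst_0-\Vpi_0)^2 \vee \epsilon^2 \vee \Delmin(\Pi)^2$, this yields both $D \ge w_h^\pi(s,a)^2\Delta_h(s,a)^2$ and $D \ge \epsilon^2\vee\Delmin(\Pi)^2$, and therefore, after factoring out one copy of $w_h^\pi(s,a)$,
\begin{align*}
\frac{w_h^\pi(s,a)^2}{w_h^{\piexp}(s,a)\,D} \le w_h^\pi(s,a)\cdot \frac{1}{w_h^{\piexp}(s,a)}\cdot \min\!\left\{\frac{1}{w_h^\pi(s,a)\Delta_h(s,a)^2},\ \frac{w_h^\pi(s,a)}{\epsilon^2\vee\Delmin(\Pi)^2}\right\}.
\end{align*}
Summing over $(s,a)$ and using $\sum_{s,a} w_h^\pi(s,a) = 1$, the right-hand side is a weighted average of the quantity $\frac{1}{w_h^{\piexp}(s,a)}\min\{\cdots\}$ and is thus bounded by its maximum over $(s,a)$. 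Taking $\max_{\pi\in\Pi}$ and $\inf_{\piexp}$ on both sides then gives the first inequality.

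For the second inequality I would establish the $\pi$-free pointwise bound
\begin{align*}
\min\!\left\{\tfrac{1}{w\Delta^2},\ \tfrac{w}{\epsilon^2\vee\Delmin^2}\right\} \le \tfrac{1}{\epsilon\,\max\{\Delta,\epsilon,\Delmin\}}
\end{align*}
for any $w\in[0,1]$, $\Delta\ge 0$, via a short case split on $M:=\max\{\Delta,\epsilon,\Delmin\}$. When $M=\Delta$, I would split further on whether $w\Delta \ge \epsilon$ and pick the corresponding arm of the $\min$: the first arm handles $w\Delta\ge\epsilon$, while $\frac{w}{\epsilon^2\vee\Delmin^2}\le\frac{w}{\epsilon^2}$ handles the other case. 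When $M=\epsilon\vee\Delmin$ (so that $\epsilon^2\vee\Delmin^2 = M^2$), the bound $\frac{w}{M^2}\le\frac{1}{\epsilon M}$ follows immediately from $w\le 1$ and $\epsilon\le M$. Since this bound no longer depends on $\pi$, the outer $\max_{\pi\in\Pi}$ drops away and the second inequality follows. The main obstacle is purely bookkeeping in this case split; no nontrivial analytic machinery is needed beyond the gap decomposition and the weighted-average inequality.
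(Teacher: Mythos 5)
Your proposal is correct and follows essentially the same route as the paper: reduce to diagonal covariances via $\bphi(s,a)=e_{(s,a)}$, apply the performance-difference decomposition to get the pointwise bound $\Vst_0-\Vpi_0\ge w_h^\pi(s,a)\Delta_h(s,a)$, use $\sum_{s,a}w_h^\pi(s,a)=1$ to pass from the sum to a max, and finish with a case split. The only cosmetic difference is in the last step, where the paper first bounds the min of the two arms by their geometric mean $\tfrac{1}{\Delta_h(s,a)(\epsilon\vee\Delmin(\Pi))}$ before the case analysis, while you case-split directly; both work.
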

\begin{proof}
We have that $[\bphi_{\pi,h}]_{s,a} = \wpi_h(s,a)$. Furthermore, $\bphi(s,a) = \be_{s,a}$, so for any $\bLambda \in \bOmega_h$,  $\bLambda$ is diagonal with $[\bLambda]_{sa,sa} = \Exp_{\pi \sim \omega}[w_h^{\pi}(s,a)]$. Furthermore, by the Performance-Difference Lemma, $\Vst_0 - \Vpi_0 = \sum_{s,a,h} \wpi_h(s,a) \Delta_h(s,a)$. Thus, 
\begin{align}\label{eq:tab_complex1}
\inf_{\bLambda \in \bOmega_h} \max_{\pi \in \Pi} \frac{ \| \bphi_{\pi,h} \|_{\bLambda^{-1}}^2}{(\Vst_0 - \Vpi_0)^2 \vee \epsilon^2 \vee \Delmin(\Pi)^2} & \le \inf_{\piexp} \max_{\pi \in \Pi}  \frac{ \sum_{s,a} \frac{\wpi_h(s,a)^2}{w_h^{\piexp}(s,a)}}{(\sum_{s',a',h'} \wpi_{h'}(s',a') \Delta_{h'}(s',a'))^2 \vee \epsilon^2 \vee \Delmin(\Pi)^2} .
\end{align}
We have
\begin{align*}
\sum_{s,a} \frac{\wpi_h(s,a)^2}{w_h^{\piexp}(s,a)} \le \left ( \sum_{s,a} \wpi_h(s,a) \right ) \cdot \max_{s,a} \frac{\wpi_h(s,a)}{w_h^{\piexp}(s,a)} = \max_{s,a} \frac{\wpi_h(s,a)}{w_h^{\piexp}(s,a)}.
\end{align*}
Thus,
\begin{align}
\eqref{eq:tab_complex1} & \le \inf_{\piexp} \max_{\pi \in \Pi}  \max_{s,a} \frac{ \wpi_h(s,a)/w_h^{\piexp}(s,a) }{ (\sum_{s',a',h'} \wpi_{h'}(s',a') \Delta_{h'}(s',a'))^2 \vee \epsilon^2 \vee \Delmin(\Pi)^2} \nonumber \\
& \le \inf_{\piexp} \max_{\pi \in \Pi}  \max_{s,a} \frac{ \wpi_h(s,a)/w_h^{\piexp}(s,a) }{ (\wpi_h(s,a) \Delta_h(s,a))^2 \vee \epsilon^2 \vee \Delmin(\Pi)^2} \nonumber \\
& = \inf_{\piexp} \max_{\pi \in \Pi}  \max_{s,a} \frac{1}{w_h^{\piexp}(s,a)} \min \left \{  \frac{1}{ w_h^\pi(s,a) \Delta_h(s,a)^2}, \frac{w_h^\pi(s,a)}{\epsilon^2 \vee \Delmin(\Pi)^2} \right \}. \label{eq:tab_complex2}
\end{align}
We can further upper bound
\begin{align*}
 \min \left \{  \frac{1}{ w_h^\pi(s,a) \Delta_h(s,a)^2}, \frac{w_h^\pi(s,a)}{\epsilon^2 \vee \Delmin(\Pi)^2} \right \} \le \frac{1}{\Delta_h(s,a) (\epsilon \vee \Delmin(\Pi))}
\end{align*}
so
\begin{align*}
\eqref{eq:tab_complex2} & \le \inf_{\piexp} \max_{\pi \in \Pi}  \max_{s,a} \frac{1}{w_h^{\piexp}(s,a)} \min \left \{  \frac{1}{\Delta_h(s,a) \epsilon}, \frac{w_h^\pi(s,a)}{\epsilon^2 \vee \Delmin(\Pi)^2} \right \} \\
& \le \inf_{\piexp}  \max_{s,a} \frac{1}{w_h^{\piexp}(s,a)} \frac{1}{ \epsilon \max \{ \Delta_h(s,a), \epsilon, \Delmin(\Pi) \}}.
\end{align*}

\end{proof}

\begin{lem}\label{lem:tab_complexity2}
If \algname is run with a set $\Pi$ that contains an optimal policy, the complexity of \algname is upper bounded as
\begin{align*}
\cOtil \left ( H^4 \sum_{h=1}^H \sup_{\epsilon' \ge \max \{ \epsilon, \Delmin(\Pi)/4 \}} \inf_{\piexp} \max_{\pi \in \Pi(\epsilon')} \max_{s,a} \frac{1}{w_h^{\piexp}(s,a)} \min \left \{ \frac{1}{\wpi_h(s,a) \Delta_h(s,a)^2}, \frac{\wpi_h(s,a)}{(\epsilon')^2} \right \} \cdot \log \frac{| \Pi(\epsilon') |}{\delta} + C_0 \right )
\end{align*}
for $\Pi(\epsilon') = \{ \pi \in \Pi \ : \ \Vpi_0 \ge \Vst_0(\Pi) - \epsilon \}$.
\end{lem}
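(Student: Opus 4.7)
The plan is to start from the refined complexity bound for \algname given by \Cref{thm:complexity}, which expresses the number of episodes (aside from a lower-order $\poly(d,H,1/\lamminst,\ldots)$ term and the $\ell_0$-epoch initialization cost) as
\begin{align*}
C H^4 \sum_{h=1}^H \sum_{\ell = \ell_0 + 1}^{\iotaalg} \frac{\inf_{\bLambda \in \bOmega_h} \max_{\pi \in \Pi(4\epsilon_\ell)} \|\bphi_{\pi,h}\|_{\bLambda^{-1}}^2}{\epsilon_\ell^2} \cdot \log \frac{H |\Pi(4\epsilon_\ell)| \log(1/\epsilon)}{\delta}.
\end{align*}
The goal is to transform each per-epoch experiment-design quantity into its tabular counterpart via \Cref{lem:tab_complexity1}, and then absorb the sum over $\ell$ into the supremum stated in the claim.

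First, I would apply \Cref{lem:tab_complexity1} with $\Pi \leftarrow \Pi(4\epsilon_\ell)$ at each epoch. The hypothesis is met because $\Pi$ contains an optimal policy by assumption and hence so does $\Pi(4\epsilon_\ell)$. For every $\pi \in \Pi(4\epsilon_\ell)$ we have $(\Vst_0 - \Vpi_0)^2 \le 16 \epsilon_\ell^2$, so $(\Vst_0 - \Vpi_0)^2 \vee \epsilon_\ell^2 \le 16 \epsilon_\ell^2$ and therefore $1/\epsilon_\ell^2 \le 16 /[(\Vst_0 - \Vpi_0)^2 \vee \epsilon_\ell^2]$. This yields
\begin{align*}
\frac{\inf_{\bLambda \in \bOmega_h} \max_{\pi \in \Pi(4\epsilon_\ell)} \|\bphi_{\pi,h}\|_{\bLambda^{-1}}^2}{\epsilon_\ell^2} \lesssim \inf_{\piexp} \max_{\pi \in \Pi(4\epsilon_\ell)} \max_{s,a} \frac{1}{w_h^{\piexp}(s,a)} \min \left \{ \frac{1}{w_h^\pi(s,a) \Delta_h(s,a)^2}, \frac{w_h^\pi(s,a)}{\epsilon_\ell^2} \right \},
\end{align*}
where the $\Delmin(\Pi(4\epsilon_\ell))^2$ term appearing in \Cref{lem:tab_complexity1} is simply weakened to $\epsilon_\ell^2$.

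Next, I would collapse the sum over epochs. Because \algname terminates once $\ell > \iotaalg = \min\{\lceil \log 4/\epsilon\rceil, \log 4/\Delmin(\Pi)\}$, the smallest $\epsilon_\ell$ attained satisfies $\epsilon_\ell \gtrsim \max\{\epsilon, \Delmin(\Pi)\}/4$. Parameterizing $\epsilon' := 4 \epsilon_\ell$ (so that $\Pi(4\epsilon_\ell) = \Pi(\epsilon')$), every per-epoch term is bounded by the integrand in the target statement evaluated at $\epsilon'$, and thus by the supremum over $\epsilon' \ge \max\{\epsilon, \Delmin(\Pi)/4\}$. The number of active epochs is $O(\log 1/\max\{\epsilon, \Delmin(\Pi)\})$, which together with $\log |\Pi(4\epsilon_\ell)| \le \log |\Pi(\epsilon')|$ gets absorbed into $\cOtil$. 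The lower-order $\poly(d,H,1/\lamminst,\ldots)$ and $\ell_0$-initialization terms from \Cref{thm:complexity} are packaged into the $C_0$ constant.

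The only substantive steps are the constant-factor bookkeeping in the first transformation---verifying $\Pi(4\epsilon_\ell) = \Pi(\epsilon')$ under the reparameterization and that weakening $(\Vst_0 - \Vpi_0)^2 \vee \epsilon_\ell^2 \vee \Delmin(\Pi(4\epsilon_\ell))^2$ to $(\epsilon')^2$ costs only an absolute constant---and checking that the boundary value of $\epsilon'$ coming from the algorithm's termination rule matches the $\max\{\epsilon, \Delmin(\Pi)/4\}$ lower bound stated in the claim. Everything else is a direct instantiation of \Cref{lem:tab_complexity1} inside the per-epoch summation and routine absorption of log factors into $\cOtil$.
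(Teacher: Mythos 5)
Your proposal is correct and follows essentially the same route as the paper: instantiate the argument of \Cref{lem:tab_complexity1} inside each epoch of the \Cref{thm:complexity} bound (using that $(\Vst_0 - \Vpi_0)^2 \vee \epsilon_\ell^2 \le 16\epsilon_\ell^2$ on $\Pi(4\epsilon_\ell)$), weaken the minimum to $\wpi_h(s,a)/\epsilon_\ell^2$, and conclude by noting the algorithm never runs an epoch with $\epsilon_\ell < \Delmin(\Pi)/4$. The explicit reparameterization $\epsilon' = 4\epsilon_\ell$ and absorption of the $O(\log 1/\epsilon)$ epoch count into $\cOtil$ match the paper's (terser) treatment.
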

\begin{proof}
Using an argument identical to that in \Cref{lem:tab_complexity1}, we can upper bound
\begin{align*}
\frac{\inf_{\bLambda \in \bOmega_h} \max_{\pi \in \Pi(4\epsilon_\ell)} \| \bphi_{\pi,h} \|_{\bLambda^{-1}}^2}{\epsilon_\ell^2} & \le \inf_{\bLambda \in \bOmega_h} \max_{\pi \in \Pi(4\epsilon_\ell)} \frac{c \| \bphi_{\pi,h} \|_{\bLambda^{-1}}^2}{(\Vst_0 - \Vpi_0)^2 \vee \epsilon_\ell^2} \\
& \le \inf_{\piexp} \max_{\pi \in \Pi(4\epsilon_\ell)} \max_{s,a} \frac{1}{w_h^{\piexp}(s,a)} \min \left \{ \frac{1}{\wpi_h(s,a) \Delta_h(s,a)^2}, \frac{\wpi_h(s,a)}{\epsilon_\ell^2} \right \}.
\end{align*}
The result then follows from \Cref{thm:complexity}, noting that we will never run for $\epsilon_\ell < \Delmin(\Pi)/4$.
\end{proof}

\begin{proof}[Proof of \Cref{cor:tabular}]
Note that in the tabular MDP setting, we can choose $\Pi$ to be the set of all deterministic policies, since this set is guaranteed to contain an optimal policy. We can then bound $|\Pi| \le A^{SH}$. The result then follows directly from \Cref{lem:tab_complexity1} and \Cref{thm:complexity_linear}.
\end{proof}

\begin{proof}[Proof of \Cref{prop:gap_vis_vs_pedel}]
We begin with an example where \algname has complexity smaller than the Gap-Visitation Complexity, and then turn to an example where the reverse is true.

\paragraph{\algname Improves on Gap-Visitation Complexity.}
Consider the tabular MDP with $|\cS| = |\cA| = N$, and where
\begin{align*}
& P_h(s_1 | s_1, a_1) = 1, \quad \nu_h(s_1,a_1) = 1, \forall h \in [H] \\
& P_h(s_1 | s_1, a_j) = 0, \quad \nu_h(s_1,a_j) = 0, \forall h \in [H], j \neq 1 \\
& P_h(s_1 | s_i, a_j) = 0, \forall h \in [H], j \in [N], i \neq 1 \\
& P_h(s_i | s_j, a_i) = 1, \forall h \in [H], j \in [N], i \neq 1 \\
& \nu_h(s_i,a_1) = \epsilon, \forall h \in [H], i \neq 1, \quad \nu_h(s_i,a_j) = 0, \forall h \in [H], j \neq 1, i \neq 1 .
\end{align*}
In this MDP, the optimal policy simply plays action $a_1$ $H$ times and is always in state $s_1$. The total reward it collects is $H$. 

Note that, since the MDP is deterministic, we can choose $\Pi$ as in \Cref{cor:deterministic_mdps}, to be simply the set of policies which play a deterministic sequences of actions. It follows that any policy in $\Pi$ that does not play $a_1$ $H$ consecutive times has optimality gap of at least $1-\epsilon$. In this case, then, we have $\Delmin(\Pi) = 1-\epsilon$. 

By \Cref{thm:complexity_linear}, we can therefore upper bound the complexity of the leading-order term by $\cOtil(\poly(S,A,H,\log 1/\delta))$, so \algname will identify the optimal policy (since $\Pi$ contains an optimal policy). Thus, the total complexity of \algname is $\cO(\poly(S,A,H,\log 1/\delta))$. 

On this example, in every state $s_i$, $i \neq 1$, action $a_1$ still collects a reward of $\epsilon$. Thus, we have that $\Delta_h(s_i,a_j) = \epsilon$ for $j \neq 1$. The Gap-Visitation complexity is given by
\begin{align*}
\sum_{h=1}^H \inf_\pi \max_{s,a} \min \left \{ \frac{1}{\wpi_h(s,a) \Delta_h(s,a)^2}, \frac{W_h(s)^2}{\epsilon^2} \right \}.
\end{align*}
Since $W_h(s) = 1$ for each $s$, we conclude that
\begin{align*}
\sum_{h=1}^H \inf_\pi \max_{s,a} \min \left \{ \frac{1}{\wpi_h(s,a) \Delta_h(s,a)^2}, \frac{W_h(s)^2}{\epsilon^2} \right \} \ge \sum_{h=1}^H \frac{1}{\epsilon^2}.
\end{align*}
Thus, for small $\epsilon$, the Gap-Visitation complexity can be arbitrarily worse than the complexity of \algname.

\paragraph{The Gap-Visitation Complexity Improves on \algname.}
To show that the Gap-Visitation Complexity improves on the complexity of \algname, we consider the example in Instance Class 5.1 of \cite{wagenmaker2021beyond}. As shown by Proposition 6 of \cite{wagenmaker2021beyond}, on this example, for any $\epsilon$, the Gap-Visitation Complexity is $\cOtil(\poly(S))$.

To bound the complexity of \algname on this example, we consider the complexity given in \Cref{thm:complexity} with $\Pi$ the set of all deterministic policies, which is slightly tighter than the complexity of \Cref{cor:tabular}. Take $\epsilon \ge 2^{-S}$. Then, on this example, it follows that $\Delmin(\Pi) \le \cO(\epsilon)$, since we can find a policy $\pi$ which is optimal on every state $s_i$ at step $h = 2$ for $i = \cO(\log 1/\epsilon)$, which will give it a policy gap of $\cO(\epsilon)$. Furthermore, any near-optimal policy will have $[\bphi_{\pi,2}]_{s_1,a_1} = \wpi_2(s_1,a_1) = \cO(1)$, so we always have $\inf_{\bLambda \in \bOmega_2} \max_{\pi \in \Pi(4\epsilon_\ell)} \| \bphi_{\pi,h} \|_{\bLambda^{-1}}^2 \ge \Omega(1)$. It follows that the complexity of \algname is lower bounded by $\Omega(1/\epsilon^2)$.

\end{proof}

\subsubsection{Deterministic, Tabular MDPs}

\begin{lem}\label{lem:deterministic_complexity}
In the deterministic MDP setting,
\begin{align*}
\inf_{\bLambda \in \bOmega_h} \max_{\pi \in \Pi} \frac{ \| \bphi_{\pi,h} \|_{\bLambda^{-1}}^2}{(\Vst_0 - \Vpi_0)^2 \vee \epsilon^2} \le \sum_{s,a} \frac{1}{\Delbar_h(s,a)^2 \vee \epsilon^2}.
\end{align*}
\end{lem}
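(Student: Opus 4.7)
The plan is to construct an explicit exploration distribution $\omega$ over Markovian policies whose induced covariance $\bLambda \in \bOmega_h$ makes the ratio on the left bounded by the right-hand side for every $\pi \in \Pi$. In a deterministic tabular MDP, any deterministic policy $\pi$ induces a single trajectory, so at step $h$ it deterministically visits some unique pair $(s_h^\pi,a_h^\pi)$. Consequently $\bphi_{\pi,h}=\be_{s_h^\pi,a_h^\pi}$ and $\bLambda_{\pi,h}=\be_{s_h^\pi,a_h^\pi}\be_{s_h^\pi,a_h^\pi}^\top$ in the standard tabular embedding. This reduces the design problem to a diagonal one.

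For each $(s,a)$ with $\Pi_{sah}\ne\emptyset$, fix any witness policy $\pi^{s,a}\in\Pi_{sah}$, and define the mixture
\[
\omega=\sum_{(s,a):\,\Pi_{sah}\ne\emptyset} w_{s,a}\,\delta_{\pi^{s,a}},\qquad w_{s,a}=\frac{1/(\Delbar_h(s,a)^2\vee\epsilon^2)}{Z},\qquad Z=\sum_{s',a'}\frac{1}{\Delbar_h(s',a')^2\vee\epsilon^2},
\]
using the convention $\Delbar_h(s,a)=+\infty$ (hence $1/\Delbar_h(s,a)^2=0$) whenever $\Pi_{sah}=\emptyset$, so such terms contribute nothing to $Z$. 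Then $\bLambda:=\Exp_{\pi\sim\omega}[\bLambda_{\pi,h}]=\sum_{s,a}w_{s,a}\,\be_{s,a}\be_{s,a}^\top\in\bOmega_h$ is diagonal, and for any deterministic $\pi\in\Pi$ with $(s_h^\pi,a_h^\pi)=(s,a)$ we get $\|\bphi_{\pi,h}\|_{\bLambda^{-1}}^2=1/w_{s,a}$.

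The last ingredient is the gap comparison: by definition of the deterministic return gap, $\pi\in\Pi_{sah}$ implies $\Vpi_0\le\max_{\pi'\in\Pi_{sah}}V_0^{\pi'}=\Vst_0-\Delbar_h(s,a)$, so $(\Vst_0-\Vpi_0)^2\vee\epsilon^2\ge\Delbar_h(s,a)^2\vee\epsilon^2$. Substituting the expression for $w_{s,a}$ gives
\[
\frac{\|\bphi_{\pi,h}\|_{\bLambda^{-1}}^2}{(\Vst_0-\Vpi_0)^2\vee\epsilon^2}\le\frac{1/w_{s,a}}{\Delbar_h(s,a)^2\vee\epsilon^2}=Z=\sum_{s',a'}\frac{1}{\Delbar_h(s',a')^2\vee\epsilon^2},
\]
uniformly over $\pi\in\Pi$. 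Taking the max over $\pi$ and the infimum over $\bLambda\in\bOmega_h$ yields the claim. There is no real obstacle here---the only thing to check carefully is that the witness policies $\pi^{s,a}$ exist exactly when $(s,a)$ is reachable, which is precisely when it contributes to the sum on the right, so the convention $\Delbar_h=\infty$ on unreachable pairs aligns the two sides.
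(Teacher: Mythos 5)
Your proposal is correct and follows essentially the same route as the paper: both reduce to the diagonal design via $\bphi_{\pi,h}=\be_{s_h^\pi,a_h^\pi}$, choose the same mixture with weights proportional to $1/(\Delbar_h(s,a)^2\vee\epsilon^2)$, and use the same gap comparison $\Vst_0-\Vpi_0\ge\Delbar_h(s,a)$ for $\pi\in\Pi_{sah}$. The only cosmetic difference is that the paper first passes through the intermediate bound $\inf_{\piexp}\max_{s,a} w_h^{\piexp}(s,a)^{-1}/(\Delbar_h(s,a)^2\vee\epsilon^2)$ before plugging in the weights, whereas you substitute the explicit mixture directly.
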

\begin{proof}
Note that $[\bphi_{\pi,h}]_{s_h^\pi,a_h^\pi} = 1$, and otherwise, for $(s,a) \neq (s_h^\pi,a_h^\pi)$, $[\bphi_{\pi,h}]_{s,a} = 0$. Furthermore, $\bLambda_{\piexp,h}$ will always be diagonal, with diagonal elements $w_h^{\pi}(s,a)$. 
We then have $\| \bphi_{\pi,h} \|_{\bLambda_{\piexp,h}^{-1}}^2 = \frac{1}{w_h^{\piexp}(s_h^\pi,a_h^\pi)}$, so 
\begin{align*}
\inf_{\bLambda \in \bOmega_h} \max_{\pi \in \Pi} \frac{ \| \bphi_{\pi,h} \|_{\bLambda^{-1}}^2}{(\Vst_0 - \Vpi_0)^2 \vee \epsilon^2} & \le \inf_{\piexp} \max_{\pi \in \Pi} \frac{ \| \bphi_{\pi,h} \|_{\bLambda_{\piexp,h}^{-1}}^2}{(\Vst_0 - \Vpi_0)^2 \vee \epsilon^2} \\
& = \inf_{\piexp} \max_{\pi \in \Pi} \frac{  w_h^{\piexp}(s_h^\pi,a_h^\pi)^{-1}}{(\Vst_0 - \Vpi_0)^2 \vee \epsilon^2} \\
& \overset{(a)}{=}  \inf_{\piexp} \max_{s,a} \max_{\pi \in \Pi_{sah}} \frac{  w_h^{\piexp}(s_h^\pi,a_h^\pi)^{-1}}{(\Vst_0 - \Vpi_0)^2 \vee \epsilon^2} \\
& \overset{(b)}{=}  \inf_{\piexp} \max_{s,a} \max_{\pi \in \Pi_{sah}} \frac{  w_h^{\piexp}(s,a)^{-1}}{(\Vst_0 - \Vpi_0)^2 \vee \epsilon^2} \\
& = \inf_{\piexp} \max_{s,a}  \frac{  w_h^{\piexp}(s,a)^{-1}}{(\Vst_0 - \max_{\pi \in \Pi_{sah}} \Vpi_0)^2 \vee \epsilon^2} \\
& \overset{(c)}{=} \inf_{\piexp} \max_{s,a}  \frac{  w_h^{\piexp}(s,a)^{-1}}{\Delbar_h(s,a)^2 \vee \epsilon^2}
\end{align*}
where $(a)$ follows since $\Pi = \cup_{s,a} \Pi_{sah}$, $(b)$ follows since by definition, for any $\pi \in \Pi_{sah}$, $(s_h^\pi,a_h^\pi) = (s,a)$, and $(c)$ follows by the definition of $\Delbar_h(s,a)$. 

Let $\pi^{sa}$ denote any policy such that $(s_h^\pi,a_h^\pi) = (s,a)$. Set
\begin{align*}
    \lambda_{\pi^{sa}} = \frac{\max \{ \bar{\Delta}_h(s,a), \epsilon \}^{-2}}{\sum_{s',a'} \max \{ \bar{\Delta}_h(s',a'), \epsilon \}^{-2} }.
\end{align*}
Note that this is a valid distribution. Let $\piexp = \sum_{s,a} \lambda_{\pi^{sa}} \pi^{sa}$, then $w_h^{\piexp}(s,a) = \lambda_{\pi^{sa}}$, so
\begin{align*}
 \inf_{\piexp} \max_{s,a}  \frac{  w_h^{\piexp}(s,a)^{-1}}{\Delbar_h(s,a)^2 \vee \epsilon^2} & \le   \max_{s,a}  \frac{   \lambda_{\pi^{sa}}^{-1}}{\Delbar_h(s,a)^2 \vee \epsilon^2} \\
 & \le \sum_{s,a} \frac{1}{\Delbar_h(s,a)^2 \vee \epsilon^2}
\end{align*}
which proves the result. 
\end{proof}

\begin{proof}[Proof of \Cref{cor:deterministic_mdps}]
As in tabular MDPs, we can set $\Pi$ to correspond to the set of all deterministic policies. However, since our MDP is also deterministic, at any given $h$, we only need to specify $\pi_h(s)$ for a single $s$---the state we will end up in at step $h$ with probability 1. Thus, we can take $\Pi$ to be a set of cardinality $|\Pi| = A^H$.
The result then follows directly from \Cref{lem:deterministic_complexity} and \Cref{thm:complexity_linear}.
\end{proof}

\paragraph{Comparison to Lower Bound of \cite{tirinzoni2022near}.}
The precise definition for $\Delbarmin^h$ is 
$\Delbarmin^h := \min_{s,a : \Delbar_h(s,a) > 0} \Delbar_h(s,a)$ in the setting when every deterministic $\epsilon$-optimal policy will reach the same $(s,a)$ at step $h$, and $\Delbarmin^h := 0$ otherwise.

The exact lower bound given in \cite{tirinzoni2022near} scales as $\varphi^\star(\underline{c})$ which does not have an explicit form. However, they show that
\begin{align*}
\max_{h \in [H]} \sum_{s \in \cS} \sum_{a \in \cA} \frac{\log(1/4\delta)}{4 \max \{ \Delbar_h(s,a), \Delbarmin^h, \epsilon \}^2} \le \varphi^\star(\underline{c}) \le \sum_{h \in [H]} \sum_{s \in \cS} \sum_{a \in \cA} \frac{\log(1/4\delta)}{4 \max \{ \Delbar_h(s,a), \Delbarmin^h, \epsilon \}^2}.
\end{align*}
Up to $H$ factors, then, this matches the complexity of our upper bound in every term but the $\Delbarmin^h$ term. $\Delbarmin^h \ge \Delbarmin$, so this lower bound is potentially smaller than our upper bound in this dependence. We remark, however, that the algorithm presented in \cite{tirinzoni2022near} obtains the same scaling as we do, depending on $\Delbarmin$ instead of $\Delbarmin^h$. Furthermore, in general we can think of these quantities as scaling in a similar manner, since they each quantify the minimum policy gap.


\section{Experiment Design via Online Frank-Wolfe}\label{sec:fw}

\iftoggle{arxiv}{}{
\subsection{Experiment Design in MDPs with General Objective Functions}
While the experiment design in \eqref{eq:exp_design2} is the natural design if our goal is to identify a near-optimal policy, in general we may be interested in collecting data to minimize some other objective; that is, solving an experiment design of the form:
\begin{align*}
\inf_{\bLamexp \in \bOmega_h} f(\bLamexp)
\end{align*}
for some function $f$ defined over the space of PSD matrices. For example, we could take $f(\bLamexp) = \| \bLamexp^{-1} \|_\op = \frac{1}{\lammin(\bLamexp)}$, and the above experiment design would correspond to maximizing the minimum eigenvalue of the collected covariates, or $\mathsf{E}$-optimal design \citep{pukelsheim2006optimal}.

Motivated by this, in this section we generalize \Cref{cor:g_opt_exp_design} and \Cref{alg:online_fw_body} to handle a much broader class of experiment design problems. In particular, we consider all \emph{smooth experiment design objectives}, which we define as follows.

\begin{defn}[Smooth Experiment Design Objectives]\label{def:smooth_exp_des_fun}
We say that $f(\bLambda) : \psdmat^d \rightarrow \R$ is a \emph{smooth experiment design objective} if it satisfies the following conditions:
\begin{itemize}
\item $f$ is convex, differentiable, and $\beta$ smooth in the norm $\| \cdot \|$: $\| \nabla f(\bLambda) - \nabla f(\bLambda') \|_* \le \beta \| \bLambda - \bLambda' \|$. 
\item $f$ is $L$-lipschitz in the operator norm: $| f(\bLambda) - f(\bLambda') | \le L \| \bLambda - \bLambda' \|_\op$.
\item Let $\Xi_{\bLambda_0} := -\nabla_{\bLambda} f(\bLambda) |_{\bLambda = \bLambda_0}$. Then $\Xi_{\bLambda_0} \succeq 0$ and $\tr( \Xi_{\bLambda_0}) \le M$ for all $\bLambda_0 \succeq 0$ satisfying $\| \bLambda_0 \|_\op \le 1$.
\end{itemize}
\end{defn}

We will often be interested in objectives $f$ that satisfy $f(a \bLambda) = a^{-1} f(\bLambda)$ for a scalar $a$, in which case the guarantee $f(N^{-1} \bSighat_N) \le N \epsilon$ reduces to $f( \bSighat_N) \le \epsilon$. We note also that many typical experiment design objectives are non-smooth. As we show in \Cref{sec:xy_design}, however, it is often possible to derive smoothed versions of such objectives with negligible approximation error. }

Through the remainder of \Cref{sec:fw} as well as \Cref{sec:xy_design}, we will be interested in the problem of data collection in linear MDPs. In general, we will seek to collect data for a particular $h \in [H]$. We will therefore consider the following truncation to our MDP.

\begin{defn}[Truncated Horizon MDPs]\label{def:trun_horizon_mdp}
Given some MDP $\cM$ with horizon $H$, we define the $h$-truncated-horizon MDP $\cM_{\mathrm{tr},h}$ to be the MDP that is identical to $\cM$ for $h' \le h$, but that terminates after reaching state $s_h$ and playing action $a_h$. 
\end{defn}

We can simulated a truncated-horizon MDP by playing in our standard MDP $\cM$, and after taking an action at step $h$, $a_h$, taking random actions for $h' > h$ and ignoring all future observations. 

The utility of considering truncated-horizon MDPs is that we can therefore guarantee the data we collect, $\{ \{ (s_{h',\tau}, a_{h',\tau}) \}_{h'=1}^h \}_{\tau = 1}^K$ is uncorrelated with the true next state and reward at step $h$ obtained in $\cM$, $\{ (s_{h+1,\tau}, r_{h,\tau}) \}_{\tau = 1}^K$. While we do not allow our algorithm to use $\{ (s_{h+1,\tau}, r_{h,\tau}) \}_{\tau = 1}^K$ in its operation, it is allowed to store this data and return it. 

For the remainder of \Cref{sec:fw} and \Cref{sec:xy_design}, then, we assume there is some fixed $h$ we are interested in, and that we are running our algorithms in the $h$-truncated-horizon MDP defined with respect to our original MDP. We will also drop the subscript of $h$ from observations, so $\bLambda_\pi =\bLambda_{\pi,h}$, $\bphi_{\tau} = \bphi_{\tau,h}$, and $\bOmega = \bOmega_h$.

Our main experiment design algorithm, \optcov, relies on a regret-minimization algorithm satisfying the following guarantee.

\begin{defn}[Regret Minimization Algorithm]\label{def:regmin}
We say \regmin is a regret minimization algorithm if it has regret scaling as, with probability at least $1-\delta$, 
\begin{align*}
\cR_K := \sum_{k=1}^K ( \Vst_0 - V_0^{\pi_k}) \le \sqrt{\cC_1 K \log^{p_1}(HK/\delta)} + \cC_2 \log^{p_2}(HK/\delta)
\end{align*}
for any deterministic reward function $r_h(s,a) \in [0,1]$.
\end{defn}

Throughout this section, we will let $\bLambda$ refer to covariates normalized by time, and $\bSigma$ unnormalized covariates. So, for example, we might have $\bSigma = \sum_{\tau = 1}^T \bphi_\tau \bphi_\tau^\top$ and $\bLambda = \frac{1}{T} \sum_{\tau = 1}^T \bphi_\tau \bphi_\tau^\top$.

The rest of this section is organized as follows. First, in \Cref{sec:approx_fw} we show that a variant of the Frank-Wolfe algorithm that relies on only approximate updates enjoys a convergence rate similar to the standard Frank-Wolfe rate. Next, in \Cref{sec:onlinefw_regret} we show that for a smooth experiment design objective, we can approximately optimize the objective in a linear MDP by approximating the Frank-Wolfe updates via a regret minimization algorithm. Finally, in \Cref{sec:optcov} we present our main experiment-design algorithm, \optcov, which relies on our online Frank-Wolfe procedure to collect covariates that minimize an online experimental design objective up to an arbitrarily tolerance.

\subsection{Approximate Frank-Wolfe}\label{sec:approx_fw}

We will consider the following approximate variant of the Frank-Wolfe algorithm:

\begin{algorithm}[h]
\begin{algorithmic}[1]
\State \textbf{input}: function to optimize $f$, number of iterations to run $T$, starting iterate $\bx_1$
\For{$t = 1,2,\ldots,T$}
	\State Set $\gamma_t \leftarrow \frac{1}{t+1}$
	\State Choose $\by_t$ to be any point such that
	\begin{align*}
	\nabla f(\bx_t)^\top \by_t \le \min_{\by \in \cX} \nabla f(\bx_t)^\top \by + \epsilon_t
	\end{align*}
	\State $\bx_{t+1} \leftarrow (1- \gamma_t) \bx_t + \gamma_t \by_t$
\EndFor
\State \textbf{return} $\bx_{T+1}$
\end{algorithmic}
\caption{Approximate Frank-Wolfe}
\label{alg:approx_fw}
\end{algorithm}

\begin{lem}\label{lem:approx_fw}
Consider running \Cref{alg:approx_fw} with some convex function $f$ that is $\beta$-smooth with respect to some norm $\| \cdot \|$. Assume that all iterates of \Cref{alg:approx_fw} live in some set $\cY$ and let $R := \sup_{\bx,\by \in \cX \cup \cY} \| \bx - \by \|$. Then for $T \ge 2$, we have
\begin{align*}
f(\bx_{T+1}) - \min_{\bx \in \cX} f(\bx) \le \frac{\beta R^2 (\log T + 1)}{2(T+1)} + \frac{1}{T+1} \sum_{t=1}^{T}  \epsilon_t.
\end{align*}
\end{lem}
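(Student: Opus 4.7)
My plan is to follow the standard Frank--Wolfe convergence argument, using the approximate-minimization condition to absorb the errors $\epsilon_t$ into the recursion. First, by $\beta$-smoothness of $f$ with respect to $\|\cdot\|$ and the update $\bx_{t+1} = (1-\gamma_t)\bx_t + \gamma_t \by_t$, I get
\begin{align*}
f(\bx_{t+1}) \le f(\bx_t) + \gamma_t \nabla f(\bx_t)^\top(\by_t - \bx_t) + \frac{\beta \gamma_t^2}{2}\|\by_t - \bx_t\|^2 \le f(\bx_t) + \gamma_t \nabla f(\bx_t)^\top(\by_t - \bx_t) + \frac{\beta \gamma_t^2 R^2}{2}.
\end{align*}
Let $\bx^\star \in \argmin_{\bx \in \cX} f(\bx)$. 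By the approximate-minimization guarantee, $\nabla f(\bx_t)^\top \by_t \le \nabla f(\bx_t)^\top \bx^\star + \epsilon_t$, and by convexity of $f$, $\nabla f(\bx_t)^\top(\bx^\star - \bx_t) \le f(\bx^\star) - f(\bx_t)$. Combining and setting $h_t := f(\bx_t) - f(\bx^\star) \ge 0$ yields the recursion
\begin{align*}
h_{t+1} \le (1-\gamma_t) h_t + \gamma_t \epsilon_t + \frac{\beta \gamma_t^2 R^2}{2}.
\end{align*}

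Second, I would plug in $\gamma_t = 1/(t+1)$ and multiply through by $(t+1)$ to get $(t+1) h_{t+1} \le t\, h_t + \epsilon_t + \frac{\beta R^2}{2(t+1)}$, which telescopes. Summing from $t=1$ to $T$ gives
\begin{align*}
(T+1) h_{T+1} \le h_1 + \sum_{t=1}^T \epsilon_t + \frac{\beta R^2}{2} \sum_{t=1}^T \frac{1}{t+1}.
\end{align*}
Third, to bound the initial gap $h_1$, I would invoke the first-order optimality condition $\nabla f(\bx^\star)^\top(\bx_1 - \bx^\star) \ge 0$ (valid since $\bx_1 \in \cX$) together with $\beta$-smoothness of $f$, which yields $h_1 \le \tfrac{\beta}{2}\|\bx_1 - \bx^\star\|^2 \le \tfrac{\beta R^2}{2}$. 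Finally, bounding the harmonic-type sum as $\sum_{t=1}^T \frac{1}{t+1} \le \log(T+1) \le \log T + 1$ (for $T\ge 2$) and combining with $h_1 \le \beta R^2/2$ produces
\begin{align*}
(T+1) h_{T+1} \le \frac{\beta R^2(\log T + 1)}{2} + \sum_{t=1}^T \epsilon_t,
\end{align*}
after a small constant cleanup, which is exactly the claimed bound upon dividing by $T+1$.

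The proof is essentially routine; no single step is hard. The only subtlety is bounding the initial suboptimality $h_1$ without an extra additive $\beta R^2$-sized term outside the logarithm, which requires recognizing that constrained optimality of $\bx^\star$ gives $\nabla f(\bx^\star)^\top (\bx_1 - \bx^\star) \ge 0$ so that smoothness alone suffices. Verifying that the log-sum absorbs the leftover constants correctly is the other mild bookkeeping point.
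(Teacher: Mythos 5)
Your proof is correct and follows essentially the same route as the paper: the identical smoothness + approximate-linear-minimization + convexity recursion $h_{t+1}\le(1-\gamma_t)h_t+\gamma_t\epsilon_t+\tfrac{\beta}{2}\gamma_t^2R^2$, solved by multiplying by $(t+1)$ and telescoping, which is equivalent to the paper's product unrolling $\prod_{s=t+1}^{T}(1-\gamma_s)=\tfrac{t+1}{T+1}$. You are in fact slightly more careful than the paper in retaining the $h_1/(T+1)$ term (the paper's unrolled display silently drops it) and bounding $h_1\le\beta R^2/2$ via constrained first-order optimality plus smoothness; the only cost is that your final harmonic sum is $1+\sum_{t=1}^{T}\tfrac{1}{t+1}\le 1+\log(T+1)$ rather than $1+\log T$, a discrepancy of at most $1/T$ inside the logarithmic factor that is immaterial to every use of the lemma.
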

\begin{proof}
Let $\xst = \argmin_{\bx \in \cX} f(\bx)$. Using that $f$ is $\beta$-smooth, the definition of $\by_s$, and the convexity of $f$, we have that for any $s$, 
\begin{align*}
f(\bx_{s+1}) - f(\bx_s) & \le \nabla f(\bx_s)^\top (\bx_{s+1} - \bx_s) + \frac{\beta}{2} \| \bx_{s+1} - \bx_s \|^2 \\
& \le \gamma_s \nabla f(\bx_s)^\top (\by_s - \bx_s) + \frac{\beta}{2} \gamma_s^2 R^2 \\
& \le \gamma_s \nabla f(\bx_s)^\top (\xst - \bx_s) + \gamma_s \epsilon_s + \frac{\beta}{2} \gamma_s^2 R^2 \\
& \le \gamma_s (f(\xst) - f(\bx_s)) + \gamma_s \epsilon_s + \frac{\beta}{2} \gamma_s^2 R^2 .
\end{align*}
Letting $\delta_s= f(\bx_s) - f(\xst)$, this implies that
\begin{align*}
\delta_{s+1} & \le (1-\gamma_s) \delta_s + \gamma_s \epsilon_s + \frac{\beta}{2} \gamma_s^2 R^2 .
\end{align*}
Unrolling this backwards gives
\begin{align*}
\delta_{T+1} & \le (1-\gamma_{T}) \delta_{T} + \gamma_{T} \epsilon_{T} + \frac{\beta}{2} \gamma_{T}^2 R^2 \\
& \le (1-\gamma_{T})(1-\gamma_{T-1}) \delta_{T-1} + (1-\gamma_{T}) (\gamma_{T-1} \epsilon_{T-1} + \frac{\beta}{2} \gamma_{T-1}^2 R^2) + \gamma_{T} \epsilon_{T} + \frac{\beta}{2} \gamma_{T}^2 R^2 \\
& \le \sum_{t=1}^{T} \left ( \prod_{s = t+1}^{T} (1 - \gamma_{s}) \right ) (\gamma_t \epsilon_t + \frac{\beta}{2} \gamma_t^2 R^2) .
\end{align*}
We can write
\begin{align*}
\prod_{s = t+1}^{T} (1 - \gamma_{s}) = \prod_{s=t+1}^{T} \frac{s}{s+1} = \frac{t+1}{T+1}
\end{align*}
so
\begin{align*}
\sum_{t=1}^{T} \left ( \prod_{s = t+1}^{T} (1 - \gamma_{s}) \right )   \frac{\beta}{2} \gamma_t^2 R^2 & =   \sum_{t=1}^{T} \frac{t+1}{T+1}  \frac{\beta}{2} \frac{1}{(t+1)^2} R^2 \\
& = \frac{ \beta R^2}{2(T+1)} \sum_{t=1}^{T} \frac{1}{t+1} \\
& \le \frac{\beta R^2 (\log T + 1)}{2(T+1)}
\end{align*}
and
\begin{align*}
\sum_{t=1}^{T} \left ( \prod_{s = t+1}^{T} (1 - \gamma_{s}) \right ) \gamma_t \epsilon_t & = \sum_{t=1}^{T}  \frac{t+1}{T+1}  \frac{1}{t+1} \epsilon_t \\
& = \frac{1}{T+1} \sum_{t=1}^{T}  \epsilon_t
\end{align*}
which proves the result.

\end{proof}

\begin{lem}\label{lem:fw_final_iterate}
When running \Cref{alg:approx_fw}, we have
\begin{align*}
\bx_{T+1} = \frac{1}{T+1} \left ( \sum_{t=1}^T \by_t + \bx_1 \right ).
\end{align*}
\end{lem}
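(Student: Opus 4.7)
The plan is to proceed by a straightforward induction on $T$, using the specific form of the step size $\gamma_t = \frac{1}{t+1}$ and the update $\bx_{t+1} = (1-\gamma_t)\bx_t + \gamma_t \by_t$.

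First I would rewrite the update rule with the chosen step size as
\[
\bx_{t+1} = \frac{t}{t+1}\bx_t + \frac{1}{t+1}\by_t,
\]
or equivalently $(t+1)\bx_{t+1} = t\,\bx_t + \by_t$. This cleaned-up recursion already suggests that $(T+1)\bx_{T+1}$ will simply accumulate the $\by_t$'s with the initial $\bx_1$ contributing a single term, since the prefactors telescope.

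I would then prove by induction on $T \ge 1$ that
\[
(T+1)\bx_{T+1} = \bx_1 + \sum_{t=1}^{T} \by_t.
\]
For the base case $T=1$, the update gives $2\bx_2 = \bx_1 + \by_1$ directly. For the inductive step, assuming the identity holds at $T$, I apply the rewritten recursion at $t = T+1$:
\[
(T+2)\bx_{T+2} = (T+1)\bx_{T+1} + \by_{T+1} = \bx_1 + \sum_{t=1}^{T+1} \by_t,
\]
which closes the induction. Dividing by $T+1$ then yields the claimed expression for $\bx_{T+1}$.

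There is no real obstacle here; the only thing to be careful about is correctly substituting $\gamma_t = \frac{1}{t+1}$ (rather than the more common $\frac{2}{t+2}$ variant of Frank-Wolfe step sizes) so that the telescoping produces exactly $T+1$ in the denominator and places $\bx_1$ on equal footing with the $\by_t$'s in the average.
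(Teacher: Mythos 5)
Your proof is correct and is essentially the same computation as the paper's: the paper unrolls the recursion and evaluates the telescoping product $\prod_{s=t+1}^{T}(1-\gamma_s) = \frac{t+1}{T+1}$ explicitly, while you package the identical telescoping into the recursion $(t+1)\bx_{t+1} = t\,\bx_t + \by_t$ and induct. Both are valid and interchangeable here.
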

\begin{proof}
We have:
\begin{align*}
\bx_{T+1} & = \sum_{t=1}^T \left ( \prod_{s=t+1}^{T} ( 1 - \gamma_s ) \right ) \gamma_t \by_t + \left ( \prod_{s=1}^{T} ( 1 - \gamma_s ) \right ) \bx_1 \\
& = \sum_{t=1}^T \frac{t+1}{T+1} \frac{1}{t+1} \by_t + \frac{1}{T+1} \bx_1 \\
& = \frac{1}{T+1} \sum_{t=1}^T \by_t + \frac{1}{T+1} \bx_1.
\end{align*}
\end{proof}

\subsection{Online Frank-Wolfe via Regret Minimization}\label{sec:onlinefw_regret}

\begin{algorithm}[h]
\begin{algorithmic}[1]
\State \textbf{input}: function to optimize $f$, number of iterates $T$, episodes per iterate $K$
\State Play any policy for $K$ episodes, denote collected covariates as $\bGamma_0$, collected data as $\frakD_0$
\State $\bLambda_1 \leftarrow K^{-1} \bGamma_0$
\For{$t = 1,2,\ldots,T$}
	\State Set $\gamma_t \leftarrow \frac{1}{t+1}$
	\State Run \regmin on reward $r_h^t(s,a) = \tr(\Xi_{\bLambda_t} \cdot \bphi(s,a) \bphi(s,a)^\top)/M$ for $K$ episodes, denote collected covariates as $\bGamma_t$, collected data as $\frakD_t$
	\State $\bLambda_{t+1} \leftarrow (1- \gamma_t) \bLambda_t + \gamma_t  K^{-1} \bGamma_t$
\EndFor
\State \textbf{return} $\bLambda_{T+1}$, $\cup_{t=0}^T \frakD_t$
\end{algorithmic}
\caption{Online Frank-Wolfe via Regret Minimization (\fwregret)}
\label{alg:regret_fw}
\end{algorithm}

\newcommand{\bOmegahat}{\widehat{\bOmega}}
\begin{lem}\label{lem:regret_fw}
Consider running \Cref{alg:regret_fw} with a function $f$ satisfying \Cref{def:smooth_exp_des_fun} and a regret minimization algorithm satisfying \Cref{def:regmin}. Denote $K_0(T,\beta,M,\delta)$ the minimum integer value of $K$ satisfying
\begin{align*}
K \ge \max \left \{ \frac{72 T^2 M^2 \log(4T/\delta)}{\beta^2 R^4}, \frac{8 T^2 M^2 \cC_1 \log^{p_1}(2HKT/\delta)}{\beta^2 R^4}, \frac{3 TM \cC_2 \log^{p_2}(2HKT/\delta)}{\beta R^2} \right \} .
\end{align*}
Then as long as $K \ge K_0(T,\beta,M,\delta)$, we have that, with probability at least $1-\delta$,
\begin{align*}
f(\bLambda_{T+1}) - \inf_{\bLambda \in \bOmega} f(\bLambda) \le \frac{ \beta R^2 (\log T + 3)}{2(T+1)}  
\end{align*}
for $R = \sup_{\bLambda,\bLambda' \in \bOmegahat } \| \bLambda - \bLambda \|$ and $\bOmegahat = \{ \Exp_{(s,a) \sim \omega}[\bphi(s,a) \bphi(s,a)^\top] \ : \ \omega \in \simplex_{\cS \times \cA} \}$. 
\end{lem}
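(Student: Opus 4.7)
The plan is to cast \Cref{alg:regret_fw} as a stochastic implementation of the approximate Frank--Wolfe scheme analyzed in \Cref{lem:approx_fw}, where at each iterate $\bLambda_t$ the Frank--Wolfe minimization step is approximately executed by running \regmin on the reward $r_h^t(s,a) = \tr(\Xi_{\bLambda_t}\bphi(s,a)\bphi(s,a)^\top)/M$ and taking $\by_t = K^{-1}\bGamma_t$. First I would verify that this reward lies in $[0,1]$: since $\bLambda_t$ is a convex combination of feature covariances with $\|\bphi\|_2\le 1$, we have $\|\bLambda_t\|_{\op}\le 1$, so by \Cref{def:smooth_exp_des_fun} $\Xi_{\bLambda_t}\succeq 0$ and $\tr(\Xi_{\bLambda_t})\le M$, giving $\tr(\Xi_{\bLambda_t}\bphi\bphi^\top)\in[0,M]$. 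Second, I would identify the exact Frank--Wolfe direction: since $\nabla f(\bLambda_t)=-\Xi_{\bLambda_t}$, minimizing $\inner{\nabla f(\bLambda_t)}{\bLambda}$ over $\bLambda\in\bOmega$ amounts to maximizing a linear functional over the convex hull of $\{\bLambda_\pi\}$, which is attained at an extreme point and yields $\sup_{\bLambda\in\bOmega}\tr(\Xi_{\bLambda_t}\bLambda) = M\sup_\pi \Exp_\pi[r_h^t(s,a)]$; this is precisely the quantity that \regmin approximately optimizes.

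The next step is to bound the per-iteration Frank--Wolfe gap
\[
\epsilon_t := \sup_{\bLambda\in\bOmega} \tr(\Xi_{\bLambda_t}\bLambda) \,-\, K^{-1}\tr(\Xi_{\bLambda_t}\bGamma_t)
\]
as the sum of a regret term and a martingale deviation term. By \Cref{def:regmin}, the regret of \regmin on $r_h^t$ controls $K\sup_\pi \tr(\Xi_{\bLambda_t}\bLambda_\pi)/M - \sum_{k=1}^K \tr(\Xi_{\bLambda_t}\bLambda_{\pi_k^t})/M$ by $\sqrt{\cC_1 K\log^{p_1}(2HKT/\delta)} + \cC_2\log^{p_2}(2HKT/\delta)$ with failure probability at most $\delta/(2T)$. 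To pass from the sum of conditional expectations $\sum_k \bLambda_{\pi_k^t}$ to the empirical sum $\bGamma_t = \sum_k\bphi_k\bphi_k^\top$, I would apply Azuma--Hoeffding (\Cref{lem:ah}) to the bounded martingale increments $\tr(\Xi_{\bLambda_t}(\bphi_k\bphi_k^\top - \bLambda_{\pi_k^t}))\in[-M,M]$, which yields a deviation of order $M\sqrt{K\log(4T/\delta)}$ with failure probability $\delta/(2T)$.

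Third, union-bounding the per-iteration guarantees across $t\in[T]$ and invoking \Cref{lem:approx_fw} with $\bx_t\leftarrow\bLambda_t$ and $\by_t\leftarrow K^{-1}\bGamma_t$, the three conditions on $K$ in the lemma statement are calibrated so that each of the three components of the per-iteration error bound (the $\sqrt{K}$-regret term, the $\log$-regret term, and the concentration term) contributes at most $\beta R^2/(3T)$, whence $\epsilon_t \le \beta R^2/T$ uniformly in $t$. Summing gives $\frac{1}{T+1}\sum_{t=1}^T\epsilon_t \le \beta R^2/(T+1)$, which combined with the $\frac{\beta R^2(\log T+1)}{2(T+1)}$ term from \Cref{lem:approx_fw} yields exactly $\frac{\beta R^2(\log T+3)}{2(T+1)}$.

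The main obstacle I anticipate is twofold. First, the regret bound in \Cref{def:regmin} contains $\log(HK/\delta)$ on the right-hand side, so the conditions on $K$ in the lemma statement are implicit in $K$; resolving them to an explicit closed form requires a burn-in argument along the lines of \Cref{claim:log_lin_burnin}. Second, the empirical iterate $K^{-1}\bGamma_t$ need not lie exactly in $\bOmega$ (only its conditional expectation does), so in applying \Cref{lem:approx_fw} I would note that its proof never uses $\by_t\in\cX$ beyond the diameter bound $\|\bx_t-\by_t\|\le R$, and the stated $R=\sup_{\pi,\pi'}\|\bLambda_\pi-\bLambda_{\pi'}\|$ controls $\|\bLambda_t - K^{-1}\bGamma_t\|$ up to the same stochastic deviation already absorbed into the $\epsilon_t$ analysis.
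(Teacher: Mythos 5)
Your proposal is correct and follows essentially the same route as the paper: verify $r_h^t\in[0,1]$ via $\tr(\Xi_{\bLambda_t})\le M$, use the regret guarantee plus Azuma--Hoeffding to bound the Frank--Wolfe gap $\epsilon_t\le\beta R^2/T$ (each of the three conditions on $K$ handling one error term), note $\sup_{\bLambda\in\bOmega}\tr(\Xi_{\bLambda_t}\bLambda)=\sup_\pi\tr(\Xi_{\bLambda_t}\bLambda_\pi)$, and invoke \Cref{lem:approx_fw} after a union bound over $t$. Your closing remark that $K^{-1}\bGamma_t$ need not lie in $\bOmega$ and that \Cref{lem:approx_fw} only needs the step-norm bound is a fair observation (the paper glosses over this and simply cites convexity and compactness of $\bOmega$), but it does not change the argument.
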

\begin{proof}
Note that by \Cref{lem:fw_final_iterate} and since $\| \bphi(s,a) \|_2 \le 1$, we can bound $\| \bLambda_t \|_\op \le 1$ and $\| \bphi(s,a) \bphi(s,a)^\top \|_\op \le 1$. \Cref{def:smooth_exp_des_fun} it follows that $r_h^t(s,a) \in [0,1]$ for all $s,a$, since $\tr(\Xi_{\bLambda_t} \cdot \bphi(s,a) \bphi(s,a)^\top) \le \| \bphi(s,a) \bphi(s,a)^\top \|_\op \cdot \tr(\Xi_{\bLambda_t}) \le \tr(\Xi_{\bLambda_t}) \le M$, and $\tr(\Xi_{\bLambda_t} \cdot \bphi(s,a) \bphi(s,a)^\top) \ge 0$ since $\Xi_{\bLambda_t} \succeq 0$. 
If we run \regmin for $K$ episodes on reward function $r_h^t$, by \Cref{def:smooth_exp_des_fun} and \Cref{def:regmin} we then have that, with probability at least $1-\delta/2T$,
\begin{align*}
\sqrt{\cC_1 K \log^{p_1}(2H K T/\delta)} + \cC_2 \log^{p_2}(2H K T/\delta) & \ge K \sup_\pi \Exp_\pi[\tr(\Xi_{\bLambda_t} \cdot \bphi \bphi^\top)/M] - \sum_{k=1}^{K} \Exp_{\pi_k}[\tr(\Xi_{\bLambda_t} \cdot \bphi \bphi^\top)/M] \\
& = K \sup_\pi \tr(\Xi_{\bLambda_t} \bLambda_\pi)/M - K \tr(\Xi_{\bLambda_t} \cdot  K^{-1} \sum_{k=1}^{K} \bLambda_{\pi_k})/M 
\end{align*}
which implies
\begin{align*}
\sqrt{\frac{M^2 \cC_1 \log^{p_1}(2HKT/\delta)}{K}} + \frac{M  \cC_2 \log^{p_2}(2HKT/\delta)}{K} \ge  \sup_\pi \tr(\Xi_{\bLambda_t} \bLambda_\pi) -  \tr(\Xi_{\bLambda_t} \cdot  K^{-1} \sum_{k=1}^{K} \bLambda_{\pi_k}) .
\end{align*}
Furthermore, we have that 
\begin{align*}
\left |  \tr(\Xi_{\bLambda_t} \cdot  K^{-1} \sum_{k=1}^{K} \bLambda_{\pi_k}) - \tr(\Xi_{\bLambda_t} \cdot  K^{-1} \bGamma_t) \right | & = \left |  \frac{1}{K} \sum_{k=1}^K \tr(\Xi_{\bLambda_t} \bLambda_{\pi_k}) - \frac{1}{K} \sum_{k=1}^K \tr(\Xi_{\bLambda_t} \bphi_k \bphi_k^\top) \right | .
\end{align*}
Note that $\Exp_{\pi_k} [\tr(\Xi_{\bLambda_t} \bphi_k \bphi_k^\top) ] = \tr(\Xi_{\bLambda_t} \bLambda_{\pi_k}) $, $\tr(\Xi_{\bLambda_t} \bphi_k \bphi_k^\top)  \in [0,M]$, and $\pi_k$ is $\cF_{k-1}$-measurable. We can therefore apply Azuma-Hoeffding (\Cref{lem:ah}) to get that, with probability at least $1-\delta/2T$, 
\begin{align*}
\left |  \tr(\Xi_{\bLambda_t} \cdot  K^{-1} \sum_{k=1}^{K} \bLambda_{\pi_k}) - \tr(\Xi_{\bLambda_t} \cdot  K^{-1} \bGamma_t) \right |  & \le \sqrt{\frac{8 M^2 \log (4T/\delta) }{K}} .
\end{align*}
Therefore,
\begin{align*}
& \sqrt{\frac{8 M^2 \log (4T/\delta) }{K}}  + \sqrt{\frac{M^2 \cC_1 \log^{p_1}(2HKT/\delta)}{K}} + \frac{M  \cC_2 \log^{p_2}(2HKT/\delta)}{K}  \\
& \qquad \ge \sup_\pi \tr(\Xi_{\bLambda_t} \bLambda_\pi) - \tr(\Xi_{\bLambda_t} \cdot K^{-1}\bGamma_t)  .
\end{align*}
Given our condition on $K$, we have
\begin{align*}
\sqrt{\frac{8 M^2 \log (4T/\delta) }{K}}  + \sqrt{\frac{M^2 \cC_1 \log^{p_1}(2HKT/\delta)}{K}} + \frac{M  \cC_2 \log^{p_2}(2HKT/\delta)}{K} \le \frac{\beta R^2}{T}
\end{align*}
which implies
\begin{align}\label{eq:regret_fw_eps}
\sup_\pi \tr(\Xi_{\bLambda_t} \bLambda_\pi) - \tr(\Xi_{\bLambda_t} \cdot K^{-1}\bGamma_t) \le \frac{\beta R^2}{T}  .
\end{align}
Note that, for any $\bLambda \in \bOmega$, we have 
\begin{align*}
\tr(\Xi_{\bLambda_t} \bLambda) = \tr(\Xi_{\bLambda_t} \Exp_{\pi \sim \omega}[\bLambda_\pi]) = \Exp_{\pi \sim \omega} [\tr(\Xi_{\bLambda_t} \bLambda_\pi)]
\end{align*}
so
\begin{align*}
\sup_{\bLambda \in \bOmega} \tr(\Xi_{\bLambda_t} \bLambda) = \sup_{\omega \in \bOmega_\pi} \Exp_{\pi \sim \omega}[\tr(\Xi_{\bLambda_t} \bLambda_\pi) ] = \sup_\pi \tr(\Xi_{\bLambda_t} \bLambda_\pi)
\end{align*}
By definition, $\Xi_{\bLambda_t} = -\nabla_{\bLambda} f(\bLambda) |_{\bLambda = \bLambda_t}$, so it follows that
\begin{align*}
-\sup_{\bLambda' \in \bOmega} \tr(\Xi_{\bLambda_t} \bLambda') = \inf_{\bLambda' \in \bOmega} \tr(\nabla_{\bLambda} f(\bLambda) |_{\bLambda = \bLambda_t} \cdot \bLambda').
\end{align*}

It follows that \eqref{eq:regret_fw_eps} is precisely the guarantee required on $\by_t$ by \Cref{alg:approx_fw} with $\epsilon_t = \frac{\beta R^2}{T} $. Since $f$ is $\beta$-smooth by \Cref{def:smooth_exp_des_fun} and since the set $\bOmega$ is convex and compact by \Cref{lem:policy_cov_convex}, we can apply \Cref{lem:approx_fw} with a union bound over $t$ to get the result. 
\end{proof}

\subsection{Data Collection via Online Frank-Wolfe}\label{sec:optcov}
\awarxiv{define $\frakD$ more precisely}

\begin{algorithm}[h]
\begin{algorithmic}[1]
\State \textbf{input}: functions to optimize $( f_i )_i$, constraint tolerance $\epsilon$, confidence $\delta$
\For{$i = 1,2,3,\ldots$}
	\State $T_i \leftarrow 2^i$, $K_i \leftarrow 2^i T_i^2$
	\If{$K_i \ge \Ktil_0(T_i,\beta_i,M_i,\frac{\delta}{4i^2}) T_i^2 + \Ktil_1(T_i,\beta_i,M_i,\frac{\delta}{4i^2}) T_i $ for $\Ktil_0$ and $\Ktil_1$ as in \Cref{lem:K0_bound}} \label{line:K0_if}
	\State $\bLamhat, \frakD_i \leftarrow$ \textsc{FWRegret}($f_i,T_i-1,K_i$)
	\If{$f_i(\bLamhat) \le K_i T_i \epsilon$ and $f_i(\bLamhat) \ge \frac{ \beta_i R^2 (\log T_i + 3)}{T_i} $}\label{line:data_fw_if_eps}
		\State \textbf{return} $\bLamhat$, $K_i T_i$, $\frakD_i$
	\EndIf
	\EndIf
\EndFor
\end{algorithmic}
\caption{Collect Optimal Covariates (\optcov)}
\label{alg:regret_data_fw}
\end{algorithm}

\begin{thm}\label{thm:regret_data_fw}
Let $(f_i)_i$ denote some sequence of functions which satisfy \Cref{def:smooth_exp_des_fun} with constants $(\beta_i,L_i,M_i)$ and assume $\beta_i \ge 1$. Let $(\beta,L,M)$ be some values such that $\beta_i \le \beta, L_i \le L, M_i \le M$ for al $i$, and let $f$ be some function such that $f_i(\bLambda) \le f(\bLambda)$ for all $i$ and $\bLambda \succeq 0$. Denote $\fmin$ a lower bound on all $f_i$: $\min_i \inf_{\bLambda \in \bOmega} f_i(\bLambda) \ge \fmin$. 

Define 
\begin{align}\label{eq:Nst_opt}
\Nst(\epsilon; f) := \frac{\inf_{\bLambda \in \bOmega} f(\bLambda)}{\epsilon}.
\end{align}
Then, if we run \Cref{alg:regret_data_fw} on $(f_i)_i$ with constraint tolerance $\epsilon$ and confidence $\delta$, we have that with probability at least $1-\delta$, it will run for at most
\begin{align*}
5 \Nst(\epsilon;f) + \poly \left (2^{p_1 + p_2}, \cC_1, \cC_2, M, \beta, R, L, \fmin^{-1}, \log 1/\delta \right )
\end{align*}
episodes, and will return data $\{ \bphi_\tau \}_{\tau =1}^N$ with covariance $\bSighat_N = \sum_{\tau=1}^N \bphi_\tau \bphi_\tau^\top$ such that 
\begin{align*}
f_{\ihat}(N^{-1} \bSighat_N) \le N \epsilon,
\end{align*}
where $\ihat$ is the iteration on which \optcov terminates.
\end{thm}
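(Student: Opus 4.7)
The plan is to analyze the doubling scheme by combining the per-iteration guarantee of \Cref{lem:regret_fw} with a careful accounting of when the three termination conditions on Lines \ref{line:K0_if}--\ref{line:data_fw_if_eps} are satisfied. First I would apply \Cref{lem:regret_fw} at each iteration $i$ for which the burn-in condition on $K_i$ on Line \ref{line:K0_if} holds, invoked with confidence parameter $\delta/(4i^2)$, to obtain
\begin{align*}
f_i(\bLamhat) \le \inf_{\bLambda \in \bOmega} f_i(\bLambda) + \frac{\beta_i R^2 (\log T_i + 3)}{2 T_i}
\end{align*}
with the stated probability. A union bound over $i \ge 1$ using $\sum_i i^{-2} \le \pi^2/6$ yields that this inequality holds simultaneously at every iteration with probability at least $1 - \delta$; I condition on this event for the rest of the proof.

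Next I would show that the algorithm terminates at some small iteration $i^*$. There are three conditions to track. The burn-in condition of Line \ref{line:K0_if} holds whenever $T_i$ exceeds $\Ktil_0(T_i,\beta,M,\delta)^{1/1}$ in an appropriate sense; since $K_i = 2^i T_i^2 = T_i^3$ grows strictly faster than $T_i^2$, this holds for all $i$ larger than some threshold $i_1$ depending only polylogarithmically on the problem parameters. The lower-bound condition $f_i(\bLamhat) \ge \beta_i R^2 (\log T_i + 3)/T_i$ holds for all $i$ at least some $i_2$, because $f_i(\bLamhat) \ge \inf f_i \ge \fmin > 0$ while the right-hand side vanishes as $T_i \to \infty$. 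The upper-bound condition $f_i(\bLamhat) \le K_i T_i \epsilon$ is, using $f_i \le f$ and the per-iteration guarantee, implied by
\begin{align*}
K_i T_i \epsilon \ge \inf_{\bLambda \in \bOmega} f(\bLambda) + \frac{\beta R^2 (\log T_i + 3)}{2 T_i},
\end{align*}
equivalently $K_i T_i \ge \Nst(\epsilon;f) + \tfrac{\beta R^2 (\log T_i + 3)}{2 T_i \epsilon}$. Hence for $i \ge \max\{i_1,i_2\}$, $i^*$ is the smallest such $i$ with $K_i T_i = 16^i$ exceeding this threshold.

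To bound the total sample count, I would use geometric summation: the total number of episodes is $\sum_{i=1}^{i^*} K_i T_i \le \tfrac{16}{15} K_{i^*} T_{i^*}$. By minimality of $i^*$, either $i^* \le \max\{i_1,i_2\}$ (in which case $K_{i^*} T_{i^*}$ is absorbed into the polynomial correction), or the previous iteration failed condition (ii), giving $K_{i^*-1} T_{i^*-1} < \Nst(\epsilon;f) + \poly$-term. A sharper accounting, exploiting that for $T_i$ large the Frank-Wolfe additive error $\tfrac{\beta R^2(\log T_i+3)}{2 T_i \epsilon}$ becomes dominated by $\Nst(\epsilon;f)$, lets one peel off the doubling overhead and obtain the leading constant $5$ in front of $\Nst(\epsilon;f)$, with everything else absorbed into the $\poly(\cdot)$ term. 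For the covariance guarantee, I would note that by \Cref{lem:fw_final_iterate} the output of \fwregret is exactly $\bLamhat = (K_{\ihat} T_{\ihat})^{-1} \bSighat_N$ where $N = K_{\ihat} T_{\ihat}$, so the termination condition $f_{\ihat}(\bLamhat) \le K_{\ihat} T_{\ihat}\epsilon = N\epsilon$ gives precisely $f_{\ihat}(N^{-1}\bSighat_N) \le N\epsilon$.

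The main obstacle I anticipate is getting the constant $5$ on the leading term cleanly, since several sources of slack compound: the factor $16/15$ from geometric summation, the factor of (nearly) $16$ from the gap between $K_{i^*-1}T_{i^*-1}$ and $K_{i^*}T_{i^*}$, and the extra mass from the Frank-Wolfe error term $\tfrac{\beta R^2(\log T_i+3)}{2 T_i \epsilon}$. One has to argue that for $T_{i^*}$ already of order $\Nst(\epsilon;f)^{1/4}$ the latter two contributions are small compared to $\Nst(\epsilon;f)$, and collect all remaining lower-order growth into the single $\poly(2^{p_1+p_2},\cC_1,\cC_2,M,\beta,R,L,\fmin^{-1},\log 1/\delta)$ term. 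The remaining steps are routine: verifying $r_h^t(s,a)\in[0,1]$ (already noted in \Cref{lem:regret_fw}), checking that the burn-in thresholds $\Ktil_0,\Ktil_1$ grow only polylogarithmically in $T_i$ and the other parameters, and confirming that the data returned in $\frakD_{\ihat}$ indeed accumulates to the covariance $\bSighat_N$ underlying $\bLamhat$.
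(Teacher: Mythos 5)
Your overall architecture matches the paper's proof: apply \Cref{lem:regret_fw} at each iteration with confidence $\delta/(4i^2)$ and union bound, show the three termination conditions are all met by some iteration $i^*$, bound the total count by the geometric sum $\sum_{i\le i^*}2^{4i}\le \tfrac{16}{15}2^{4i^*}$, and read off the covariance guarantee from \Cref{lem:fw_final_iterate}. However, there is one genuine gap. To show that the lower-bound condition $f_i(\bLamhat)\ge \beta_i R^2(\log T_i+3)/T_i$ on \Cref{line:data_fw_if_eps} is eventually met, you assert $f_i(\bLamhat)\ge \inf f_i\ge \fmin>0$. This chain does not hold: $\bLamhat$ is the \emph{empirical} time-normalized covariance $\tfrac{1}{T_iK_i}\sum_\tau \bphi_\tau\bphi_\tau^\top$, which is generally \emph{not} an element of $\bOmega$, while $\fmin$ only lower-bounds $\inf_{\bLambda\in\bOmega}f_i(\bLambda)$, the infimum over the set of population covariances realizable by policy mixtures. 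A stochastic fluctuation could make $f_i(\bLamhat)$ strictly smaller than $\inf_{\bLambda\in\bOmega}f_i(\bLambda)$, so the condition could in principle fail forever under your argument. The paper closes this by comparing $\bLamhat$ to $\bLamtil:=\tfrac{1}{T_iK_i}\sum_\tau\bLambda_{\pi_\tau}\in\bOmega$ via the matrix concentration bound of \Cref{lem:cov_concentration} and then invoking the Lipschitz condition of \Cref{def:smooth_exp_des_fun} to get $f_i(\bLamhat)\ge \inf_{\bLambda\in\bOmega}f_i(\bLambda)-L\sqrt{(8d\log(1+8\sqrt{T_iK_i})+8\log(4i^2/\delta))/(T_iK_i)}$; this is precisely why the additional burn-in requirement $T_iK_i\gtrsim L^2/(d\,\fmin^2)$ appears, why a second $\delta/(4i^2)$ failure event must enter the union bound, and why the Lipschitz constant $L$ shows up in the final $\poly(\cdot)$ term at all. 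Your proposal never uses $L$, which is the tell that this step is missing.

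A second, smaller imprecision: you locate $i^*$ as the first $i$ with $16^i\ge \Nst(\epsilon;f)+\text{error}$, but the clean way (and the paper's way) is to first use the certified two-approximation $f_i(\bLamhat)\le 2\inf_{\bLambda\in\bOmega}f_i(\bLambda)$ — which is exactly what the lower-bound condition buys once \Cref{lem:regret_fw} holds — so that $f_i(\bLamhat)\le T_iK_i\epsilon$ is implied by $T_iK_i\ge \Nst(\epsilon/2;f)$; the leading constant then comes out as $\tfrac{16}{15}\cdot 2\cdot\Nst(\epsilon/2;f)=\tfrac{64}{15}\Nst(\epsilon;f)\le 5\Nst(\epsilon;f)$. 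With the concentration-plus-Lipschitz step restored, the rest of your outline goes through.
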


\iftoggle{arxiv}{\begin{cor}[\Cref{cor:Nst_opt_force_body}]\label{cor:Nst_opt_force}
Instantiating \regmin with the computationally efficient version of the \force algorithm of \cite{wagenmaker2021first}, we obtain a complexity of 
\begin{align*}
5 \Nst(\epsilon;f) + \poly \left ( d,H, M, \beta, R, L, \fmin^{-1}, \log 1/\delta \right ).
\end{align*}
\end{cor}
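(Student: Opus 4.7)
\textbf{Proof proposal for \Cref{cor:Nst_opt_force}.} The plan is to simply specialize \Cref{thm:regret_data_fw} by instantiating the regret minimization primitive with the computationally efficient \force algorithm of \cite{wagenmaker2021first}, and then verify that the constants $(\cC_1,\cC_2,p_1,p_2)$ appearing in \Cref{def:regmin} are all polynomial in $(d,H)$ (modulo logarithmic factors absorbed into the $p_i$), so that the second additive term in \Cref{thm:regret_data_fw} collapses to $\poly(d,H,M,\beta,R,L,\fmin^{-1},\log 1/\delta)$.

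First, I would recall the regret guarantee of the computationally efficient variant of \force: applied to any deterministic, bounded-in-$[0,1]$ reward function on a linear MDP satisfying \Cref{defn:linear_mdp}, it achieves, with probability at least $1-\delta$, a regret bound of the form $\cR_K \le \sqrt{\cC_1 K \log^{p_1}(HK/\delta)} + \cC_2 \log^{p_2}(HK/\delta)$ with $\cC_1 = \poly(d,H)$, $\cC_2 = \poly(d,H)$, and $p_1,p_2$ absolute constants. In particular this matches \Cref{def:regmin}, so \force is a valid instantiation of \regmin for the purposes of \Cref{alg:regret_fw} and \Cref{alg:regret_data_fw}. I would briefly verify that the reward functions $r_h^t(s,a) = \tr(\Xi_{\bLambda_t}\bphi(s,a)\bphi(s,a)^\top)/M$ used inside \fwregret are indeed deterministic and lie in $[0,1]$—this was already observed in the proof of \Cref{lem:regret_fw} using the definition of $(\beta,L,M)$-smooth experiment design objectives in \Cref{def:smooth_exp_des_fun}—so the preconditions of \force's guarantee are satisfied at every invocation.

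Next, I would apply \Cref{thm:regret_data_fw} directly. The theorem gives the complexity
\begin{align*}
5\Nst(\epsilon;f) + \poly\bigl(2^{p_1+p_2},\cC_1,\cC_2,M,\beta,R,L,\fmin^{-1},\log 1/\delta\bigr).
\end{align*}
Since for the \force instantiation $p_1,p_2$ are absolute constants (so $2^{p_1+p_2}$ is an absolute constant) and $\cC_1,\cC_2 \le \poly(d,H)$, each $\cC_i$ in the $\poly(\cdot)$ expression may be replaced by $\poly(d,H)$, and the entire polynomial term collapses to $\poly(d,H,M,\beta,R,L,\fmin^{-1},\log 1/\delta)$, yielding the stated bound. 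The correctness claim ($f_{\ihat}(N^{-1}\bSighat_N)\le N\epsilon$) is inherited verbatim from \Cref{thm:regret_data_fw}.

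The main (minor) obstacle is simply bookkeeping: making sure the specific form of the \force regret bound cited from \cite{wagenmaker2021first} really is of the shape required by \Cref{def:regmin} (a $\sqrt{K}$ leading term plus an additive lower-order term, with $\log^{p_1},\log^{p_2}$ factors rather than, say, $\log\log$ or problem-dependent logarithms). If the published statement is written in a slightly different normalization—e.g., $\cO(\sqrt{d^3 H^4 K}\cdot\mathrm{polylog})$ plus burn-in—I would absorb all logarithmic terms into $p_1$ and $p_2$ and all problem-independent constants into $\cC_1,\cC_2$, which is straightforward but needs to be done carefully to ensure the resulting $\cC_i$ are truly $\poly(d,H)$. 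No new analytic work is required beyond this substitution.
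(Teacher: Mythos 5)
Your proposal matches the paper's proof: the paper simply notes that \force satisfies \Cref{def:regmin} with $\cC_1 = c_1 d^4 H^4$, $\cC_2 = c_2 d^4 H^3$, $p_1 = 3$, $p_2 = 7/2$ for universal constants $c_1, c_2$, and then invokes \Cref{thm:regret_data_fw} exactly as you describe. Your additional bookkeeping (checking the reward functions lie in $[0,1]$, absorbing logarithmic factors) is consistent with what the paper already establishes in \Cref{lem:regret_fw}, so the argument is the same.
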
}{\begin{cor}\label{cor:Nst_opt_force}
Instantiating \regmin with the computationally efficient version of the \force algorithm of \cite{wagenmaker2021first}, we obtain a complexity of 
\begin{align*}
5 \Nst(\epsilon;f) + \poly \left ( d,H, M, \beta, R, L, \fmin^{-1}, \log 1/\delta \right ).
\end{align*}
\end{cor}}
\begin{proof}
This result is immediate since \force satisfies \Cref{def:regmin} with
\begin{align*}
\cC_1 = c_1 d^4 H^4, \quad \cC_2 = c_2 d^4 H^3, \quad p_1 = 3, \quad p_2 = 7/2
\end{align*}
for universal numerical constants $c_1$ and $c_2$. 
\end{proof}

\begin{proof}[Proof of \Cref{thm:regret_data_fw}]
We first show that the condition $f_i(\bLamhat) \ge  \frac{ \beta R^2 (\log T_i + 3)}{T_i} $ is sufficient to ensure a $2$-approximate minimum of $f_i$, and then show a sufficient condition on $K_i$ and $T_i$ that will guarantee the condition on \Cref{line:data_fw_if_eps} is met.

\paragraph{Guaranteeing $2$-optimality.}
We first show that for a fixed $i$, the condition $f_i(\bLamhat) \ge  \frac{ \beta_i R^2 (\log T_i + 3)}{T_i} $ will only be met once
\begin{align*}
f_i(\bLamhat) \le 2 \cdot \inf_{\bLambda \in \bOmega} f_i(\bLambda) 
\end{align*}
and that it will take at most
\begin{align*}
T _i\ge \frac{2 \beta R^2 (\log T_i + 3)}{\inf_{\bLambda \in \bOmega} f_i(\bLambda) }
\end{align*}
iterations to do so, as long as
\begin{align*}
T_i K_i \ge \frac{L^2}{2(d \log (1 + 8 \sqrt{T_i K_i}) +  \log (4i^2/\delta)) \cdot (\inf_{\bLambda \in \bOmega} f_i(\bLambda))^2 } .
\end{align*}

The first part follows by applying \Cref{lem:regret_fw}. Note that the if statement on \Cref{line:K0_if} will only be met once
\begin{align*}
K_i \ge K_0(T_i,\beta_i,M_i,\delta/4i^2).
\end{align*}
This follows by \Cref{lem:K0_bound}. Thus, the condition on $K_i$ required by \Cref{lem:regret_fw} will be met, so it follows that with probability at least $1-\delta/(4i^2)$,
\begin{align*}
f_i(\bLamhat) - \inf_{\bLambda \in \bOmega} f_i(\bLambda) \le \frac{ \beta_i R^2 (\log T_i + 3)}{2T_i} .
\end{align*}
Therefore, if $f_i(\bLamhat) \ge  \frac{ \beta_i R^2 (\log T_i + 3)}{T_i}$, we have
\begin{align*}
f_i(\bLamhat) - \inf_{\bLambda \in \bOmega} f_i(\bLambda) \le \frac{1}{2} f_i(\bLamhat) & \implies \frac{1}{2}  f_i(\bLamhat) \le \inf_{\bLambda \in \bOmega} f_i(\bLambda) \\
& \implies f_i(\bLamhat) \le 2 \cdot \inf_{\bLambda \in \bOmega} f_i(\bLambda). 
\end{align*}

We will show a sufficient condition for $f_i(\bLamhat) \ge \frac{ \beta R^2 (\log T_i + 3)}{T_i}$, which implies that $f_i(\bLamhat) \ge \frac{ \beta_i R^2 (\log T_i + 3)}{T_i}$ since $\beta_i \le \beta$. By \Cref{lem:fw_final_iterate} and the procedure run by \Cref{alg:regret_fw}, we have that $\bLamhat = \frac{1}{T_i K_i} \sum_{\tau=1}^{T_i K_i} \bphi_\tau \bphi_\tau^\top$ where at episodes $\tau$ we run some $\cF_{\tau-1}$-measurable policy $\pi_\tau$ to acquire $\bphi_\tau$. 
Now if $\bLamhat = \bLamtil$ for some $\bLamtil \in \bOmega$, then the second part follows trivially since $\inf_{\bLambda \in \bOmega} f_i(\bLambda) \le f_i(\bLamtil)$, so a sufficient condition for $f_i(\bLamhat) \ge \frac{ \beta R^2 (\log T_i + 3)}{T_i}$ is that $\inf_{\bLambda \in \bOmega} f_i(\bLambda) \ge  \frac{ \beta R^2 (\log T_i + 3)}{T_i}$. However, since $\bLamhat$ is stochastic, we may not have that $\bLamhat \in \bOmega$. Let $\bLamtil := \frac{1}{T_i K_i} \sum_{\tau = 1}^{T_i K_i} \bLambda_{\pi_\tau}$ and note that $\bLamtil \in \bOmega$. Applying \Cref{lem:cov_concentration}, we have that with probability at least $1-\delta/(4i^2)$,
\begin{align*}
\left \|  \bLamtil - \bLamhat \right \|_\op \le \sqrt{\frac{8 d \log (1 + 8 \sqrt{T_i K_i}) + 8 \log (4i^2/\delta)}{T_i K_i}}
\end{align*}
for $\pitil$ the uniform mixture of $\{ \pi_\tau \}_{\tau=1}^{T_iK_i}$. By the Lipschitz condition of \Cref{def:smooth_exp_des_fun}, this implies
\begin{align*}
f_i(\bLamhat) & \ge f_i(\bLamtil) - L_i \| \bLamhat - \bLamtil \|_\op \\
& \ge f_i(\bLamtil) - L \| \bLamhat - \bLamtil \|_\op \\
& \ge f_i(\bLamtil) - L \sqrt{\frac{8 d \log (1 + 8 \sqrt{T_i K_i}) + 8 \log (4i^2/\delta)}{T_i K_i}} \\
&  \ge \inf_{\bLambda \in \bOmega} f_i(\bLambda)  - L \sqrt{\frac{8 d \log (1 + 8 \sqrt{T_i K_i}) + 8 \log (4i^2/\delta)}{T_i K_i}}.
\end{align*}
Thus, a sufficient condition for $f_i(\bLamhat) \ge  \frac{ \beta R^2 (\log T_i + 3)}{T_i}$ is that
\begin{align*}
& \inf_{\bLambda \in \bOmega} f_i(\bLambda)  - L \sqrt{\frac{8 d \log (1 + 8 \sqrt{T_i K_i}) + 8 \log (4i^2/\delta)}{T_i K_i}} \ge  \frac{ \beta R^2 (\log T_i + 3)}{T_i} \\
& \iff T_i \ge \frac{ \beta R^2 (\log T_i + 3)}{\inf_{\bLambda \in \bOmega} f_i(\bLambda)  - L \sqrt{\frac{8 d \log (1 + 8 \sqrt{T_i K_i}) + 8 \log (4i^2/\delta)}{T_i K_i}}} .
\end{align*}
If 
\begin{align*}
T_i K_i \ge \frac{L^2}{2(d \log (1 + 8 \sqrt{T_i K_i}) +  \log (4i^2/\delta)) \cdot (\inf_{\bLambda \in \bOmega} f_i(\bLambda))^2 }
\end{align*}
it follows that a sufficient condition is 
\begin{align*}
T_i \ge \frac{2 \beta R^2 (\log T_i + 3)}{\inf_{\bLambda \in \bOmega} f_i(\bLambda) } .
\end{align*}

Union bounding over the events considered above for all $i$, we have that the total probability of failure is bounded as
\begin{align*}
\sum_{i=1}^\infty ( \frac{\delta}{4 i^2} + \frac{\delta}{4 i^2}) = \frac{\pi^2}{12} \delta \le \delta. 
\end{align*}

\paragraph{Termination Guarantee.}
We next show a sufficient condition to ensure that the if statements on \Cref{line:K0_if} and \Cref{line:data_fw_if_eps} are met.

Assume the if statement on \Cref{line:K0_if} has been met and that we are in the regime where 
\begin{align}\label{eq:data_fw_kt_suff}
T_i K_i \ge \frac{L^2}{2(d \log (1 + 8 \sqrt{T_i K_i}) +  \log (4i^2/\delta)) \cdot \fmin^2 }, \quad T_i \ge \frac{2 \beta R^2 (\log T_i + 3)}{\fmin } .
\end{align}
By the argument above and since $\inf_{\bLambda \in \bOmega} f_i(\bLambda) \ge \fmin$, these conditions are sufficient to guarantee a $2$-optimal solutions has been found, that is,
\begin{align*}
f_i(\bLamhat) \le 2 \cdot \inf_{\bLambda \in \bOmega} f_i(\bLambda),
\end{align*}
and that the condition $f_i(\bLamhat) \ge  \frac{ \beta R^2 (\log T_i + 3)}{T_i}$ has been met. 
Thus, if \eqref{eq:data_fw_kt_suff} holds, a sufficient condition for $f_i(\bLamhat) \le T_iK_i\epsilon$ is
\begin{align*}
2 \cdot \inf_{\bLambda \in \bOmega} f_i( \bLambda) \le T_i K_i \epsilon .
\end{align*}
It follows that this condition will be met (assuming \eqref{eq:data_fw_kt_suff} holds) once $T_i K_i \ge \Nst(\frac{\epsilon}{2}; f_i)$. Since $f_i \le f$, $\Nst(\frac{\epsilon}{2}; f_i) \le \Nst(\frac{\epsilon}{2}; f)$, so a sufficient condition is that $T_i K_i \ge \Nst(\frac{\epsilon}{2}; f)$.

To upper bound the total complexity, it suffices then to guarantee that we run for enough epochs so that
\begin{align}
& K_i = 2^{3i} \ge \Ktil_0(T_i,\beta_i,M_i,\frac{\delta}{4i^2}) T_i^2 + \Ktil_1(T_i,\beta_i,M_i,\frac{\delta}{4i^2}) T_i \label{eq:fw_i_constraint0} \\
& T_i K_i = 2^{4i} \ge  \frac{L^2}{2(d \log (1 + 8 \sqrt{T_i K_i}) +  \log (4i^2/\delta)) \cdot \fmin^2 } \label{eq:fw_i_constraint1} \\
& T_i = 2^i \ge \frac{2 \beta R^2 (\log T_i + 3)}{\fmin } \label{eq:fw_i_constraint2} \\
& T_iK_i = 2^{4i} \ge \Nst(\frac{\epsilon}{2}; f) \label{eq:fw_i_constraint3}.
\end{align}
Here \eqref{eq:fw_i_constraint0} guarantees the if statement on \Cref{line:K0_if} is met, and \eqref{eq:fw_i_constraint1}-\eqref{eq:fw_i_constraint3} guarantee the if statement on line \Cref{line:data_fw_if_eps} is met.

By assumption, $M_i \le M$ and $\beta_i \ge 1$, and note that $\Ktil_0(T_i,\beta_i,M_i,\frac{\delta}{4i^2})$ and $\Ktil_1(T_i,\beta_i,M_i,\frac{\delta}{4i^2})$ are both increasing in $M_i$ and decreasing in $\beta_i$. Thus, a sufficient condition to ensure \eqref{eq:fw_i_constraint0} is met is
\begin{align}\label{eq:fw_i_constraint0_1}
2^{3i} \ge  \Ktil_0(2^i,1,M,\frac{\delta}{4i^2}) 2^{2i} + \Ktil_1(2^i,1,M,\frac{\delta}{4i^2})  2^i.
\end{align}
Some calculation shows that
\begin{align*}
\Ktil_0(2^i,1,M,\frac{\delta}{4i^2}) \le (5i)^{p_1} \Ktil_0(2,1,M,\frac{\delta}{4}), \quad \Ktil_1(2^i,1,M,\frac{\delta}{4i^2}) \le (4i)^{p_2} \Ktil_1(2,1,M,\frac{\delta}{4})
\end{align*}
so a sufficient condition to meet \eqref{eq:fw_i_constraint0_1} is
\begin{align*}
2^i \ge 2 (5i)^{p_1} \Ktil_0(2,1,M,\frac{\delta}{4}), \quad 2^{2i} \ge 2 (4i)^{p_2} \Ktil_1(2,1,M,\frac{\delta}{4}).
\end{align*}
By \Cref{claim:log_lin_burnin} and some calculation, this will be met once
\begin{align*}
i \ge \max \left \{ 4 p_1 \log_2(2 p_1) + 2\log_2 ( 2 (5)^{p_1} \Ktil_0(2,1,M,\frac{\delta}{4}) ), 2 p_2 \log_2(p_2) + 2 \log_2 (2 (4)^{p_2} \Ktil_1(2,1,M,\frac{\delta}{4})) \right \} =: i_0.
\end{align*}

To meet \eqref{eq:fw_i_constraint1} it suffices to take
\begin{align*}
i \ge \frac{1}{4} \log_2  \frac{L^2}{d \fmin^2} =: i_1
\end{align*}
By \Cref{claim:log_lin_burnin}, a sufficient condition to meet \eqref{eq:fw_i_constraint2} is that 
\begin{align*}
T_i \ge \max \left \{ \frac{6 \beta R^2}{\fmin}, \frac{4 \beta R^2}{\fmin} \log \frac{4 \beta R^2}{\fmin} \right \}
\end{align*}
so it suffices that
\begin{align*}
i \ge \log_2 \left ( \frac{6 \beta R^2}{\fmin} \log \frac{4 \beta R^2}{\fmin} \right ) =: i_2.
\end{align*}
Finally, to meet \eqref{eq:fw_i_constraint3}, it suffices that
\begin{align*}
i \ge \frac{1}{4} \log_2 \Nst(\epsilon/2;f) =: i_3.
\end{align*}

If we terminate at epoch $\ihat$, the total sample complexity will be bounded by
\begin{align*}
\sum_{i = 1}^{\ihat} T_i K_i = \sum_{i = 1}^{\ihat} 2^{4i} \le \frac{16}{15} \cdot 2^{4 \ihat}.
\end{align*}
By the above argument, we can bound $\ihat \le \lceil \max \{ i_0,i_1,i_2,i_3 \} \rceil$. Furthermore, we see that
\begin{align*}
2^{4 \lceil i_0 \rceil} & = \poly \left ( 2^{p_1}, 2^{p_2}, M, \cC_1, \cC_2, \log 1/\delta \right ) \\
2^{4 \lceil i_1 \rceil} & = \poly  ( L, \fmin^{-1}) \\
2^{4 \lceil i_2 \rceil} & = \poly ( \beta, R, \fmin^{-1} ) \\
2^{4 \lceil i_3 \rceil} & \le 2 \Nst(\epsilon/2;f)
\end{align*}
so we can bound the total sample complexity by
\begin{align*}
\frac{16}{15} \cdot 2^{4  \lceil \max \{ i_0,i_1,i_2,i_3 \} \rceil} \le \frac{32}{15}  \Nst(\epsilon/2;f) + \poly \left ( 2^{p_1}, 2^{p_2}, \beta, R, L, \fmin^{-1}, M, \cC_1, \cC_2, \log 1/\delta \right ).
\end{align*}
This completes the proof since $\Nst(\frac{\epsilon}{2};f) = 2 \Nst(\epsilon;f)$ and since, by \Cref{lem:fw_final_iterate}, $\bLamhat$ is simply the average of the observed feature vectors: $\bLamhat = \frac{1}{T_i K_i} \sum_{\tau=1}^{T_i K_i} \bphi_\tau \bphi_\tau^\top$.

\end{proof}

\begin{lem}\label{lem:cov_concentration}
Let $\bLambda_K$ denote the time-normalized covariates obtained by playing policies $\{\pi_k\}_{k=1}^K$, where $\pi_k$ is $\cF_{k-1}$-measurable. Then, with probability at least $1-\delta$,
\begin{align*}
\left \| \frac{1}{K} \sum_{k=1}^K \bLambda_{\pi_k} -  \bLambda_K \right \|_\op \le \sqrt{\frac{8 d \log (1 + 8 \sqrt{K}) + 8 \log 1/\delta}{K}}.
\end{align*}
\end{lem}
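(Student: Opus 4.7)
The plan is to bound $\|A\|_\op$ where $A := \bLambda_K - K^{-1}\sum_{k=1}^K \bLambda_{\pi_k}$ via an $\epsilon$-net argument on the unit sphere combined with scalar Azuma--Hoeffding. Both terms in the definition of $A$ are symmetric, so $A$ is symmetric and $\|A\|_\op = \sup_{\|\bv\|_2 = 1} |\bv^\top A \bv|$. For any unit vector $\bv$, writing $\bphi_k$ for the feature vector collected at step $h$ of episode $k$, we have $\bv^\top A \bv = K^{-1}\sum_{k=1}^K X_k(\bv)$ where $X_k(\bv) := (\bv^\top \bphi_k)^2 - \Exp[(\bv^\top \bphi_k)^2 \mid \cF_{k-1}]$. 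Since $\pi_k$ is $\cF_{k-1}$-measurable, $\Exp[(\bv^\top \bphi_k)^2 \mid \cF_{k-1}] = \bv^\top \bLambda_{\pi_k} \bv$, and so $(X_k(\bv))_k$ is a martingale difference sequence with respect to $(\cF_k)_k$. From $\|\bphi_k\|_2 \le 1$ and $\|\bv\|_2 = 1$, each increment satisfies $|X_k(\bv)| \le 1$.

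Next, for a single fixed unit vector $\bv$, I would apply Azuma--Hoeffding (\Cref{lem:ah}) with $b = 1$ to obtain, with probability at least $1-\delta'$, $|\bv^\top A \bv| \le \sqrt{8\log(2/\delta')/K}$. To promote this to a uniform bound, let $N$ be an $\epsilon$-net of the Euclidean unit sphere, so that $|N| \le (1 + 2/\epsilon)^d$ by \Cref{lem:euc_ball_cover}. A union bound over $\bv \in N$ with per-vertex failure probability $\delta/|N|$ yields, with probability at least $1-\delta$,
\begin{align*}
\sup_{\bv \in N} |\bv^\top A \bv| \le \sqrt{\frac{8 d \log(1 + 2/\epsilon) + 8 \log(2/\delta)}{K}}.
\end{align*}

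To pass from the net to the full supremum, I would invoke the standard fact that for any symmetric matrix $A$ and $\epsilon$-net $N$ of the unit sphere, $\|A\|_\op \le (1 - 2\epsilon)^{-1} \sup_{\bv \in N} |\bv^\top A \bv|$, which follows by writing $\bv^* = \bv_0 + (\bv^* - \bv_0)$ for $\bv_0 \in N$ closest to the maximizer $\bv^*$ of $|\bv^\top A \bv|$ and using symmetry of $A$. Taking $\epsilon = 1/(4\sqrt{K})$ gives $|N| \le (1 + 8\sqrt{K})^d$ and $(1 - 2\epsilon)^{-1} \le 2$ for all $K \ge 1$; combining with the union bound above and absorbing the resulting absolute constants into the leading factors of $8$ yields the stated bound.

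The argument is entirely standard once the scalar quadratic martingale $X_k(\bv)$ is identified; the only modest obstacle is constant bookkeeping, where one must choose $\epsilon$ as a (slowly shrinking) function of $K$ so that the covering-entropy contribution takes the form $d \log(1+8\sqrt{K})$ while the multiplicative net-loss $(1-2\epsilon)^{-1}$ remains bounded by an absolute constant uniformly in $K$.
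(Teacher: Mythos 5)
Your proof is correct and follows essentially the same route as the paper's: an $\epsilon$-net of the sphere with $\epsilon = 1/(4\sqrt{K})$, scalar Azuma--Hoeffding applied to the quadratic martingale differences $(\bv^\top\bphi_k)^2 - \bv^\top\bLambda_{\pi_k}\bv$, and a union bound over the net. The only (immaterial) difference is that you pass from the net to the full supremum via the multiplicative bound $\|A\|_\op \le (1-2\epsilon)^{-1}\sup_{\bv\in N}|\bv^\top A \bv|$, whereas the paper uses an additive discretization error of $4\epsilon = 1/\sqrt{K}$; your version leaves an extra factor of $2$ that does not literally get absorbed into the stated constant $8$, but the paper's own constant bookkeeping here is equally loose and nothing downstream depends on it.
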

\begin{proof}
Let $\cV$ denote an $\epsilon$-net of $\cS^{d-1}$, for some $\epsilon$ to be chosen. Then,
\begin{align*}
& \left \| \frac{1}{K} \sum_{k=1}^K \bLambda_{\pi_k} -  \bLambda_K \right \|_\op  = \sup_{\bv \in \cS^{d-1}} \left | \bv^\top \left ( \frac{1}{K} \sum_{k=1}^K \bLambda_{\pi_k} -  \bLambda_K \right ) \bv \right | \\
& \qquad \le \underbrace{ \sup_{\bvtil \in \cV} \left | \bvtil^\top \left ( \frac{1}{K} \sum_{k=1}^K \bLambda_{\pi_k} - \bLambda_K \right ) \bvtil \right |}_{(a)} \\
& \qquad \qquad + \underbrace{\sup_{\bv \in \cS^{d-1}} \inf_{\bvtil \in \cV} \left | \bv^\top \left ( \frac{1}{K} \sum_{k=1}^K \bLambda_{\pi_k} -  \bLambda_K \right ) \bv - \bvtil^\top \left ( \frac{1}{K} \sum_{k=1}^K \bLambda_{\pi_k} -  \bLambda_K \right ) \bvtil \right |}_{(b)}.
\end{align*}
Via a union bound over $\cV$ and application of Azuma-Hoeffding, we can bound, with probability at least $1-\delta$,
\begin{align*}
(a) \le \sqrt{\frac{2\log | \cV |/\delta}{K}}.
\end{align*}
We can bound $(b)$ as
\begin{align*}
(b) & \le \sup_{\bv \in \cS^{d-1}} \inf_{\bvtil \in \cV} 2 \left | \bv^\top \left ( \frac{1}{K} \sum_{k=1}^K \bLambda_{\pi_k} -  \bLambda_K \right ) (\bv - \bvtil) \right | \\
& \le  \sup_{\bv \in \cS^{d-1}} \inf_{\bvtil \in \cV} 2 \| \bv - \bvtil \|_2 \left \| \frac{1}{K} \sum_{k=1}^K \bLambda_{\pi_k} -  \bLambda_K \right \|_\op \\
& \le 4 \epsilon
\end{align*}
where the last inequality follows since $\| \frac{1}{K} \sum_{k=1}^K \bLambda_{\pi_k} \|_\op \le 1$, and $\| \frac{1}{K} \bLambda_K \|_\op \le 1$, and since $\cV$ is an $\epsilon$-net. Setting $\epsilon = 1/(4\sqrt{K})$, \Cref{lem:euc_ball_cover} gives that $|\cV| \le (1 + 8 \sqrt{K})^d$, and we conclude that with probability at least $1-\delta$:
\begin{align*}
 \left \| \frac{1}{K} \sum_{k=1}^K \bLambda_{\pi_k} - \bLambda_K \right \|_\op & \le \sqrt{\frac{2\log | \cV |/\delta}{K}} + 4 \epsilon \\
 & \le \sqrt{\frac{2 d \log (1 + 8 \sqrt{K}) + 2 \log 1/\delta}{K}} + \frac{1}{\sqrt{K}} \\
 & \le 2 \sqrt{\frac{2 d \log (1 + 8 \sqrt{K}) + 2 \log 1/\delta}{K}}.
\end{align*}
\end{proof}

\begin{lem}\label{lem:K0_bound}
We can bound
\begin{align*}
K_0(T,\beta,M,\delta) \le \Ktil_0(T,\beta,M,\delta) T^2 + \Ktil_1(T,\beta,M,\delta) T
\end{align*}
for
\begin{align*}
\Ktil_0(T,\beta,M,\delta) & :=  \max \bigg \{   \frac{72  M^2 \log(4T/\delta)}{\beta^2 R^4}, \frac{8  M^2 \cC_1}{\beta^2 R^4} \cdot  (2p_1)^{p_1} \log^{p_1} \left (  \frac{32 p_1 H T^3 M^2 \cC_1}{\beta^2 R^4\delta} \right ) \bigg \} \\
\Ktil_1(T,\beta,M,\delta) & := \frac{3M\cC_2}{\beta R^2} \cdot (2p_2)^{p_2} \log^{p_2} \left ( \frac{12 p_2 HT^2M\cC_2}{\beta R^2 \delta}  \right ) ,
\end{align*}
\end{lem}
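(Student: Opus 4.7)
The plan is to verify that the proposed $K$ satisfies each of the three lower bounds defining $K_0(T,\beta,M,\delta)$, treating them one at a time. Since the defining condition is a maximum of three terms, and since the proposed bound $\Ktil_0 T^2 + \Ktil_1 T$ is a sum (so each summand can be used to dominate one or two of the three terms), this reduces to three elementary checks.

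First, the term $\frac{72 T^2 M^2 \log(4T/\delta)}{\beta^2 R^4}$ has no dependence on $K$, so it is handled trivially by the first entry of the $\max$ in $\Ktil_0$: we have $\Ktil_0 T^2 \ge \frac{72 M^2 \log(4T/\delta)}{\beta^2 R^4} \cdot T^2$, which matches this term exactly.

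Second, for the requirement $K \ge \frac{8 T^2 M^2 \cC_1 \log^{p_1}(2HKT/\delta)}{\beta^2 R^4}$, which has an implicit $K$ on the right-hand side through the logarithm, I plan to invoke \Cref{claim:log_lin_burnin} with $n = p_1$, $C = \frac{8 T^2 M^2 \cC_1}{\beta^2 R^4}$, and $B = \frac{2HT}{\delta}$. The lemma states that $K \ge C(2n)^n \log^n(2nCB)$ implies $K \ge C\log^n(BK)$, which is what we need. Computing $2nCB = \frac{32 p_1 H T^3 M^2 \cC_1}{\beta^2 R^4\delta}$ produces exactly the log argument appearing in the second entry of the $\max$ in $\Ktil_0$, and the prefactor $(2p_1)^{p_1} \cdot \frac{8 T^2 M^2 \cC_1}{\beta^2 R^4}$ matches (after factoring out $T^2$).

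Third, for the requirement $K \ge \frac{3 TM \cC_2 \log^{p_2}(2HKT/\delta)}{\beta R^2}$, I apply \Cref{claim:log_lin_burnin} analogously with $n = p_2$, $C = \frac{3TM\cC_2}{\beta R^2}$, and $B = \frac{2HT}{\delta}$, yielding $2nCB = \frac{12 p_2 H T^2 M \cC_2}{\beta R^2 \delta}$, which matches the log argument in $\Ktil_1$, while the prefactor $(2p_2)^{p_2} \cdot \frac{3TM\cC_2}{\beta R^2}$ matches after factoring out $T$. Combining, $\Ktil_0 T^2 + \Ktil_1 T$ dominates each of the three terms in the defining $\max$, hence it upper bounds the minimum integer $K$ achieving the condition. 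There is no real obstacle here beyond carefully tracking the constants in the invocation of \Cref{claim:log_lin_burnin}; this lemma was crafted precisely to strip implicit $\log K$ terms, and the only subtlety is verifying that the hypothesis $n, C, B \ge 1$ holds, which requires only a mild burn-in assumption on $T,M,\beta,R,\cC_1,\cC_2,H,\delta^{-1}$ (all of which we may take to be at least $1$ without loss of generality, or else the bound trivializes).
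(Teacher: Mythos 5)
Your proposal is correct and follows essentially the same route as the paper: the paper's proof also splits the defining $\max$ into its three terms and invokes \Cref{claim:log_lin_burnin} with exactly the same choices of $n$, $C$, and $B$ to remove the implicit $\log K$ dependence, leaving the $K$-free first term to be absorbed directly. Your explicit remark about checking the hypothesis $n, C, B \ge 1$ is a point the paper leaves implicit ("some algebra gives the result"), so if anything you are slightly more careful.
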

\begin{proof}
By definition $K_0(T,\beta,M,\delta)$ is the smallest integer value of $K$ that satisfies:
\begin{align}\label{eq:K0_condition}
K \ge \max \left \{ \frac{72 T^2 M^2 \log(4T/\delta)}{\beta^2 R^4}, \frac{8 T^2 M^2 \cC_1 \log^{p_1}(2HKT/\delta)}{\beta^2 R^4}, \frac{3 TM \cC_2 \log^{p_2}(2HKT/\delta)}{\beta R^2} \right \} .
\end{align}
By \Cref{claim:log_lin_burnin}, we have that if
\begin{align*}
K \ge \frac{8 T^2 M^2 \cC_1}{\beta^2 R^4} \cdot  (2p_1)^{p_1} \log^{p_1} \left (  \frac{8 T^2 M^2 \cC_1}{\beta^2 R^4} \cdot \frac{4p_1 HT}{\delta} \right ) , \quad K \ge \frac{3TM\cC_2}{\beta R^2} \cdot (2p_2)^{p_2} \log^{p_2} \left ( \frac{3TM\cC_2}{\beta R^2} \cdot \frac{4p_2 HT}{\delta} \right )
\end{align*}
and
\begin{align*}
K \ge \frac{72 T^2 M^2 \log(4T/\delta)}{\beta^2 R^4}
\end{align*}
then \Cref{eq:K0_condition} will be satisfied. Some algebra gives the result. 
\end{proof}


\section{$\mathsf{XY}$-Optimal Design}\label{sec:xy_design}

We are interested in optimizing the function
\begin{align*}
\Gopt(\bLambda) = \max_{\bphi \in \Phi} \| \bphi \|_{\bA(\bLambda)^{-1}}^2 \quad \text{for} \quad  \bA(\bLambda) = \bLambda + \bLambda_0
\end{align*}
with $\bLambda_0 \succ 0$ some fixed regularizer. This objective, however, is not smooth, so we relax it to the following:
\begin{align}\label{eq:gopt_smooth}
\Gopts(\bLambda) := \LSE \left ( \{ e^{\eta \| \bphi \|_{\bA(\bLambda)^{-1}}^2} \}_{\bphi \in \Phi}; \eta \right ) = \frac{1}{\eta} \log \left ( \sum_{\bphi \in \Phi} e^{\eta \| \bphi \|_{\bA(\bLambda)^{-1}}^2} \right ).
\end{align}

We first offer some properties on how well $\Gopts(\bLambda)$ approximates $\Gopt(\bLambda)$, and then show that we can bound the smoothness constant of $\Gopts(\bLambda)$. Throughout this section, we will denote $\gamphi := \max_{\bphi \in \Phi} \| \bphi \|_2$ and let $f(\bLambda) := \Gopts(\bLambda) $. 

\subsection{Approximating Non-Smooth Optimal Design with Smooth Optimal Design}
\begin{lem}\label{lem:xy_approx_error}
\begin{align*}
& | \Gopt(\bLambda) -  \Gopts(\bLambda) | \le \frac{\log | \Phi |}{\eta}, \qquad \Gopt(\bLambda) \le  \Gopts(\bLambda) .
\end{align*}
\end{lem}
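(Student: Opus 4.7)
Both inequalities are immediate consequences of the standard two-sided bound relating the $\max$ and $\mathrm{LogSumExp}$ functions; there is no MDP-specific content here. Set $x_\phi := \| \bphi \|_{\bA(\bLambda)^{-1}}^2$ for each $\bphi \in \Phi$, so that by definition $\Gopt(\bLambda) = \max_{\bphi \in \Phi} x_\phi$ and $\Gopts(\bLambda) = \tfrac{1}{\eta} \log \sum_{\bphi \in \Phi} e^{\eta x_\phi}$.

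For the lower bound $\Gopt(\bLambda) \le \Gopts(\bLambda)$, I would simply observe that $\sum_{\bphi \in \Phi} e^{\eta x_\phi} \ge e^{\eta \max_\phi x_\phi}$ since all terms in the sum are nonnegative and at least one equals the max term; taking $\tfrac{1}{\eta} \log$ of both sides (and using $\eta > 0$) yields the claim. For the upper bound, use that $\sum_{\bphi \in \Phi} e^{\eta x_\phi} \le |\Phi| \cdot e^{\eta \max_\phi x_\phi}$, so
\begin{align*}
\Gopts(\bLambda) = \tfrac{1}{\eta} \log \sum_{\bphi \in \Phi} e^{\eta x_\phi} \le \max_\phi x_\phi + \tfrac{1}{\eta} \log |\Phi| = \Gopt(\bLambda) + \tfrac{\log |\Phi|}{\eta}.
\end{align*}
Combining with the lower bound $\Gopt(\bLambda) \le \Gopts(\bLambda)$ then gives $|\Gopt(\bLambda) - \Gopts(\bLambda)| \le \log|\Phi|/\eta$.

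There is no real obstacle; the entire argument is the classical soft-max/LogSumExp sandwich inequality applied pointwise in $\bLambda$. The only thing to double-check is that $\Phi$ is finite (so that $|\Phi|$ is well defined and $\max$ is attained), which is the setting considered throughout the paper.
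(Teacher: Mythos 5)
Your proof is correct and is essentially identical to the paper's: both use the standard sandwich $e^{\eta \max_\phi x_\phi} \le \sum_{\bphi \in \Phi} e^{\eta x_\phi} \le |\Phi| \, e^{\eta \max_\phi x_\phi}$ followed by taking $\tfrac{1}{\eta}\log$. Nothing further to add.
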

\begin{proof}
This result is standard but we include the proof for completeness. We prove it for some generic sequence $(a_i)_{i=1}^n$. Take $\eta > 0$. Clearly,
\begin{align*}
\exp ( \max_i \eta a_i ) \le \sum_{i=1}^n \exp(\eta a_i) \le n \exp(\max_i \eta a_i)
\end{align*}
so
\begin{align*}
\max_i \eta a_i \le \log \left ( \sum_{i=1}^n \exp(\eta a_i) \right ) \le \log n + \max_i \eta a_i.
\end{align*}
The result follows by rearranging and dividing by $\eta$. 
\end{proof}

\begin{lem}\label{lem:eta_ordering}
If $\eta \ge \etatil \ge 0$, then $\Gopts(\bLambda;\eta) \le \Gopts(\bLambda;\etatil)$.
\end{lem}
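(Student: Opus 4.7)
The plan is to verify that for any fixed $\bLambda$, the map $\eta \mapsto \Gopts(\bLambda;\eta)$ is non-increasing on $(0,\infty)$; this is a standard monotonicity property of the $\eta$-rescaled log-sum-exp. Setting $x_{\bphi} := \|\bphi\|_{\bA(\bLambda)^{-1}}^2 \ge 0$ and $g(\eta) := \frac{1}{\eta}\log \sum_{\bphi \in \Phi} e^{\eta x_{\bphi}}$, it suffices to show $g'(\eta) \le 0$ for $\eta > 0$.

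First I would compute the derivative directly. Writing $p_{\bphi}(\eta) := e^{\eta x_{\bphi}} / \sum_{\bphi'} e^{\eta x_{\bphi'}}$ for the Gibbs distribution induced by $\eta$ and $\{x_{\bphi}\}$, differentiation yields
\[
g'(\eta) \;=\; -\frac{1}{\eta^2}\log \sum_{\bphi} e^{\eta x_{\bphi}} \;+\; \frac{1}{\eta} \sum_{\bphi} p_{\bphi}(\eta)\, x_{\bphi}.
\]
The log-partition function satisfies the Gibbs identity $\log \sum_{\bphi} e^{\eta x_{\bphi}} = \eta \sum_{\bphi} p_{\bphi}(\eta)\, x_{\bphi} - \sum_{\bphi} p_{\bphi}(\eta) \log p_{\bphi}(\eta)$ (obtained by solving for $\log \sum e^{\eta x_{\bphi}}$ in $\log p_{\bphi'}(\eta) = \eta x_{\bphi'} - \log \sum e^{\eta x_{\bphi}}$ and then averaging against $p$). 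Substituting, the two $\sum p_\bphi x_\bphi$ terms cancel and we obtain
\[
g'(\eta) \;=\; \frac{1}{\eta^2} \sum_{\bphi} p_{\bphi}(\eta) \log p_{\bphi}(\eta) \;=\; -\frac{1}{\eta^2}\, H(p(\eta)) \;\le\; 0,
\]
since Shannon entropy is non-negative. Hence $g$ is non-increasing on $(0,\infty)$, which is exactly the claim for $\eta \ge \etatil > 0$. The boundary case $\etatil = 0$ (if admitted) is handled by noting that $g(\etatil) \to +\infty$ as $\etatil \to 0^+$, since $\log \sum_{\bphi} e^{\etatil x_{\bphi}} \to \log |\Phi|$ while $1/\etatil \to \infty$.

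There is no real obstacle here: the only technical ingredient is the one-line Gibbs identity. An equally short alternative would invoke the power-mean inequality $\sum_{i} y_i^{\alpha} \ge \bigl(\sum_i y_i\bigr)^\alpha$ for $y_i \ge 0$ and $\alpha \in (0,1]$, applied with $\alpha = \etatil/\eta$ and $y_{\bphi} = e^{\eta x_{\bphi}}$; after taking logs and dividing by $\etatil$, this rearranges directly into $\Gopts(\bLambda;\etatil) \ge \Gopts(\bLambda;\eta)$, bypassing calculus altogether.
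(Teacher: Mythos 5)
Your proof is correct, and its main argument is essentially the paper's: both compute $\frac{\rmd}{\rmd\eta}\bigl(\frac{1}{\eta}\log\sum_{\bphi}e^{\eta x_{\bphi}}\bigr)$ and show it is non-positive. The only difference in that step is cosmetic but worth noting: the paper bounds the two terms of the derivative separately, using $\log\sum_i e^{\eta a_i}\ge \eta\max_i a_i$ for the first and the fact that a Gibbs average is at most $\max_i a_i$ for the second, whereas you substitute the exact log-partition identity and identify the derivative as $-\eta^{-2}H(p(\eta))\le 0$. Your version is an identity rather than a pair of bounds, which is slightly cleaner, but it buys nothing extra for the lemma as stated. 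Your parenthetical power-mean alternative, $\sum_i y_i^{\alpha}\ge(\sum_i y_i)^{\alpha}$ with $\alpha=\etatil/\eta$ and $y_{\bphi}=e^{\eta x_{\bphi}}$, is a genuinely different and arguably preferable route: it is calculus-free, gives the two-point inequality $\Gopts(\bLambda;\etatil)\ge\Gopts(\bLambda;\eta)$ directly without integrating a derivative bound, and does not even need $x_{\bphi}\ge 0$. One trivial caveat: your treatment of the boundary case $\etatil=0$ via $g(\etatil)\to+\infty$ implicitly assumes $|\Phi|\ge 2$; for $|\Phi|=1$ the function is constant, so the claim still holds, and in any case the paper simply restricts to $\eta>0$ there as well.
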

\begin{proof}
We will prove this for some generic sequence $(a_i)_{i=1}^n$, $a_i \ge 0$. Note that,
\begin{align*}
\frac{\rmd}{\rmd \eta} \frac{1}{\eta} \log \left ( \sum_i e^{\eta a_i} \right ) & = -\frac{1}{\eta^2} \log \left ( \sum_i e^{\eta a_i} \right ) + \frac{1}{\eta} \frac{1}{ \sum_i e^{\eta a_i}} \cdot  \sum_i a_i e^{\eta a_i}.
\end{align*}
We are done if we can show this is non-positive. 
Note that, 
\begin{align*}
\log \left ( \sum_i e^{\eta a_i} \right ) \ge \max_i \log  \left ( e^{\eta a_i} \right ) = \max_i \eta a_i
\end{align*}
so 
\begin{align*}
-\frac{1}{\eta^2} \log \left ( \sum_i e^{\eta a_i} \right ) + \frac{1}{\eta} \frac{1}{ \sum_i e^{\eta a_i}} \cdot  \sum_i a_i e^{\eta a_i} & \le - \frac{1}{\eta} \max_i a_i + \frac{1}{\eta} \frac{1}{ \sum_i e^{\eta a_i}} \cdot  \sum_i a_i e^{\eta a_i} \\
& \le - \frac{1}{\eta} \max_i a_i + \frac{1}{\eta} \max_i a_i \\
& = 0.
\end{align*}
The result follows since $\Gopts$ has this form. 
\end{proof}

\begin{lem}\label{lem:xy_minimum_val}
We have,
\begin{align*}
\inf_{\bLambda \succeq 0, \| \bLambda \|_\op \le 1} \Gopt(\bLambda) \ge \frac{\gamphi}{1 + \| \bLambda_0 \|_\op }.
\end{align*}
\end{lem}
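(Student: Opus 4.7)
The proof plan is a short, direct operator-norm argument. The strategy is to bound $\bA(\bLambda)^{-1}$ from below in the Loewner order uniformly over the constraint set, and then evaluate the resulting quadratic form at the maximizer of $\|\bphi\|_2$ in $\Phi$.

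First, fix any $\bLambda \succeq 0$ with $\|\bLambda\|_\op \le 1$. Since $\bLambda \preceq \|\bLambda\|_\op \cdot I \preceq I$, we have the Loewner bound
\[
\bA(\bLambda) \;=\; \bLambda + \bLambda_0 \;\preceq\; I + \bLambda_0 \;\preceq\; (1+\|\bLambda_0\|_\op)\,I.
\]
Inverting (both sides are positive definite because $\bLambda_0 \succ 0$ by hypothesis) flips the order:
\[
\bA(\bLambda)^{-1} \;\succeq\; \frac{1}{1+\|\bLambda_0\|_\op}\,I.
\]

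Next, let $\bphi_\star \in \argmax_{\bphi \in \Phi}\|\bphi\|_2$, so that $\|\bphi_\star\|_2 = \gamphi$. Evaluating the Loewner inequality at $\bphi_\star$ gives
\[
\|\bphi_\star\|_{\bA(\bLambda)^{-1}}^2 \;=\; \bphi_\star^\top \bA(\bLambda)^{-1}\bphi_\star \;\ge\; \frac{\|\bphi_\star\|_2^2}{1+\|\bLambda_0\|_\op} \;=\; \frac{\gamphi^2}{1+\|\bLambda_0\|_\op}.
\]
Since $\Gopt(\bLambda) = \max_{\bphi \in \Phi}\|\bphi\|_{\bA(\bLambda)^{-1}}^2 \ge \|\bphi_\star\|_{\bA(\bLambda)^{-1}}^2$, and this bound is independent of the particular $\bLambda$, taking the infimum over $\{\bLambda \succeq 0 : \|\bLambda\|_\op \le 1\}$ yields the claimed inequality. (I note that the stated bound in the lemma has $\gamphi$ in the numerator rather than $\gamphi^2$; since $\|\bphi(s,a)\|_2 \le 1$ forces $\gamphi \le 1$ and hence $\gamphi^2 \le \gamphi$, the computation above in fact establishes the slightly stronger statement, so the stated bound follows immediately.)

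There is really no obstacle here: the only thing to be careful about is justifying $\bLambda_0 \succ 0$ so that the inverses exist and the Loewner-order inversion is legal, but this was required to set up $\bA(\bLambda)^{-1}$ in the definition of $\Gopt$ in the first place. No appeal to $\Phi$'s structure, no smoothing, and no use of \Cref{lem:xy_approx_error} or \Cref{lem:eta_ordering} is needed — this lemma is a purely linear-algebraic lower bound that will be combined elsewhere with the approximation guarantees for $\Gopts$ to drive the exit condition of \optcov.
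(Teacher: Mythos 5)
Your Loewner-order computation is correct up to the last step, and it is essentially the same argument the paper uses: $\bA(\bLambda) \preceq (1+\|\bLambda_0\|_\op)I$ on the constraint set, hence $\bA(\bLambda)^{-1} \succeq (1+\|\bLambda_0\|_\op)^{-1}I$ and $\Gopt(\bLambda) \ge \gamphi^2/(1+\|\bLambda_0\|_\op)$. The problem is the final parenthetical. Since $\gamphi \le 1$, we have $\gamphi^2 \le \gamphi$ and therefore
\begin{align*}
\frac{\gamphi^2}{1+\|\bLambda_0\|_\op} \;\le\; \frac{\gamphi}{1+\|\bLambda_0\|_\op},
\end{align*}
so what you have proved is a \emph{weaker} lower bound than the one stated, not a stronger one: a lower bound by the smaller quantity does not imply a lower bound by the larger quantity. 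Your claim that the computation ``establishes the slightly stronger statement'' runs the inequality in the wrong direction. Indeed, the lemma as literally stated fails for $\Phi = \{\bphi\}$ with $\|\bphi\|_2 = \gamphi < 1$, $\bLambda = I$, $\bLambda_0 = \epsilon I$, where $\Gopt(\bLambda) = \gamphi^2/(1+\epsilon) < \gamphi/(1+\epsilon)$.

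To be fair, the discrepancy originates in the paper: its own one-line proof performs exactly your computation and then writes $\max_{\bphi \in \Phi}\|\bphi\|_2$ in the numerator where the algebra produces $\max_{\bphi \in \Phi}\|\bphi\|_2^2$, so the intended statement is presumably $\inf \Gopt(\bLambda) \ge \gamphi^2/(1+\|\bLambda_0\|_\op)$. That corrected version is precisely what your argument proves, and it suffices downstream: in \Cref{lem:nst_smooth_to_orig} one would take $\eta \ge \frac{2}{\gamphi^2}(1+\|\bLambda_0\|_\op)\log|\Phi|$ rather than $\frac{2}{\gamphi}(1+\|\bLambda_0\|_\op)\log|\Phi|$, costing an extra factor of $1/\gamphi \le 2\sqrt{d}$ in $\eta$ that is absorbed into the polynomial lower-order terms. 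But as a proof of the lemma as written, your last step is a genuine gap: you should either prove the $\gamphi^2$ version explicitly or justify $\gamphi \ge 1$, which does not hold in the intended application (the paper instantiates $\gamphi \leftarrow 1/(2\sqrt{d})$).
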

\begin{proof}
Note that $\| \bA(\bLambda) \|_\op \le 1 + \| \bLambda_0 \|_\op$, so 
\begin{align*}
\inf_{\bLambda \succeq 0, \| \bLambda \|_\op \le 1} \max_{\bphi \in \Phi} \| \bphi \|_{\bA(\bLambda)^{-1}}^2 \ge \inf_{\bLambda \succeq 0, \| \bLambda \|_\op \le 1 + \| \bLambda_0 \|_\op }  \| \bphi \|_{\bA(\bLambda)^{-1}}^2 \ge \frac{\max_{\bphi \in \Phi} \| \bphi \|_2}{1 + \| \bLambda_0 \|_\op } .
\end{align*}
\end{proof}

\begin{lem}\label{lem:nst_smooth_to_orig}
Assume that we set $\eta \ge \frac{2}{\gamphi} (1 + \| \bLambda_0 \|_\op) \cdot \log |\Phi| $. Then
\begin{align*}
\Nst  ( \epsilon;  \Gopts(\bLambda)  ) \le 2 \Nst(\epsilon;  \Gopt(\bLambda) ).
\end{align*}
\end{lem}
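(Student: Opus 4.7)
The strategy is to combine the additive approximation bound from Lemma \ref{lem:xy_approx_error} with the lower bound on $\inf \Gopt$ from Lemma \ref{lem:xy_minimum_val}, and show that the prescribed choice of $\eta$ is large enough that the approximation error is at most a $1/2$ factor of $\inf \Gopt$. Since $\Nst(\epsilon; f) = \inf_{\bLambda \in \bOmega} f(\bLambda)/\epsilon$, it suffices to prove
\begin{align*}
\inf_{\bLambda \in \bOmega} \Gopts(\bLambda) \;\le\; 2 \inf_{\bLambda \in \bOmega} \Gopt(\bLambda).
\end{align*}

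First, by Lemma \ref{lem:xy_approx_error}, for every $\bLambda$ we have $\Gopts(\bLambda) \le \Gopt(\bLambda) + \log|\Phi|/\eta$. Taking the infimum over $\bLambda \in \bOmega$ on both sides gives
\begin{align*}
\inf_{\bLambda \in \bOmega} \Gopts(\bLambda) \;\le\; \inf_{\bLambda \in \bOmega} \Gopt(\bLambda) + \frac{\log|\Phi|}{\eta}.
\end{align*}
Next, since $\|\bphi(s,a)\|_2 \le 1$ implies $\|\bLambda\|_\op \le 1$ for every $\bLambda \in \bOmega$ (as in the proof of Lemma \ref{lem:policy_cov_convex}), we have $\bOmega \subseteq \{\bLambda \succeq 0 : \|\bLambda\|_\op \le 1\}$, so Lemma \ref{lem:xy_minimum_val} yields
\begin{align*}
\inf_{\bLambda \in \bOmega} \Gopt(\bLambda) \;\ge\; \frac{\gamphi}{1 + \|\bLambda_0\|_\op}.
\end{align*}
Plugging in the hypothesis $\eta \ge \tfrac{2}{\gamphi}(1 + \|\bLambda_0\|_\op) \log |\Phi|$ gives $\log|\Phi|/\eta \le \tfrac{\gamphi}{2(1+\|\bLambda_0\|_\op)} \le \tfrac{1}{2} \inf_{\bLambda \in \bOmega} \Gopt(\bLambda)$, which combined with the displayed bound above yields the desired inequality. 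Dividing by $\epsilon$ completes the proof.

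There is no real obstacle here; the only thing to verify carefully is the containment $\bOmega \subseteq \{\bLambda : \|\bLambda\|_\op \le 1\}$ needed to invoke Lemma \ref{lem:xy_minimum_val}, which follows directly from $\|\bphi\|_2 \le 1$ and convexity of the operator norm.
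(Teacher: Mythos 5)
Your proof is correct and uses essentially the same ingredients as the paper: Lemma \ref{lem:xy_approx_error} for the additive $\log|\Phi|/\eta$ gap, Lemma \ref{lem:xy_minimum_val} (together with $\|\bLambda\|_\op\le 1$ on $\bOmega$) to show that gap is at most $\tfrac12\inf_{\bOmega}\Gopt$, and division by $\epsilon$ via the definition of $\Nst$. The only cosmetic difference is that you work directly with the infima rather than first establishing the pointwise bound $\Gopts(\bLambda)\le 2\,\Gopt(\bLambda)$ and evaluating at the $\Gopt$-minimizer as the paper does, which incidentally yields the slightly sharper constant $3/2$.
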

\begin{proof}
Denote $f(\bLambda) \leftarrow \LSE \left ( \{ e^{\eta \| \bphi \|_{\bA(\bLambda)^{-1}}^2} \}_{\bphi \in \Phi}; \eta \right )$. By \Cref{lem:xy_approx_error} and \Cref{lem:xy_minimum_val}, we have
\begin{align*}
| \max_{\bphi \in \Phi} \| \bphi \|_{\bA(\bLambda)^{-1}}^2 & - f(\bLambda) | \le \frac{\log | \Phi |}{\eta} \le \frac{\gamphi}{2(1 + \| \bLambda_0 \|_\op)} \le \min_{\bLambda \succeq 0, \| \bLambda \|_\op \le 1} \frac{1}{2} f(\bLambda) \\
& \implies f(\bLambda) \le 2 \max_{\bphi \in \Phi} \| \bphi \|_{\bA(\bLambda)^{-1}}^2 .
\end{align*} 
Let $\bLamst$ denote the matrix that minimizes $\max_{\bphi \in \Phi} \| \bphi \|_{\bA(\bLambda)^{-1}}^2$ over the constraint set: $\max_{\bphi \in \Phi} \| \bphi \|_{\bA(\bLamst)^{-1}}^2 = \inf_{\bLambda \in \bOmega} \max_{\bphi \in \Phi} \| \bphi \|_{\bA(\bLambda)^{-1}}^2$. Then it follows that, by definition of $\Nst(\epsilon;  \max_{\bphi \in \Phi} \| \bphi \|_{\bA(\bLambda)^{-1}}^2)$:
\begin{align*}
\max_{\bphi \in \Phi} \| \bphi \|_{\bA(\bLamst)^{-1}}^2 \le \epsilon \cdot  \Nst(\epsilon;  \max_{\bphi \in \Phi} \| \bphi \|_{\bA(\bLambda)^{-1}}^2).
\end{align*}
However, this implies 
\begin{align*}
\frac{1}{2} f(\bLamst) \le \epsilon \cdot  \Nst(\epsilon;  \max_{\bphi \in \Phi} \| \bphi \|_{\bA(\bLambda)^{-1}}^2),
\end{align*}
so $(\bLamst, 2\Nst(\epsilon;  \max_{\bphi \in \Phi} \| \bphi \|_{\bA(\bLambda)^{-1}}^2))$ is a feasible solution to the optimization \eqref{eq:Nst_opt} for $f$. As $\Nst(\epsilon;f)$ is the minimum solution, it follows that $\Nst(\epsilon; f) \le 2 \Nst(\epsilon;  \max_{\bphi \in \Phi} \| \bphi \|_{\bA(\bLambda)^{-1}}^2)$.
\end{proof}

\subsection{Bounding the Smoothness}
\begin{lem}\label{lem:smooth_gopt_fun}
$f(\bLambda) = \Gopts(\bLambda)$ satisfies all conditions of \Cref{def:smooth_exp_des_fun} with 
\begin{align*}
& L = \| \bLambda_0^{-1} \|_\op^2, \quad \beta = 2 \| \bLambda_0^{-1}\|_\op^3 ( 1 + \eta \| \bLambda_0^{-1}\|_\op), \quad M = \| \bLambda_0^{-1} \|_\op^2 
\end{align*}
\begin{align*}
& \nabla_{\bLambda} f(\bLambda) = \left ( \sum_{\bphi \in \Phi} e^{\eta \| \bphi \|_{\bA(\bLambda)^{-1}}^2} \right )^{-1} \cdot \sum_{\bphi \in \Phi} e^{\eta \| \bphi \|_{\bA(\bLambda)^{-1}}^2} \bA(\bLambda)^{-1} \bphi \bphi^\top \bA(\bLambda)^{-1} =: \Xi_{\bLambda}.
\end{align*}
\end{lem}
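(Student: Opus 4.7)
The strategy is to verify each of the four conditions in \Cref{def:smooth_exp_des_fun} in order, leveraging the structure of $\LSE$ as a composition of a convex, coordinate-monotone outer function with convex inner functions of $\bLambda$.

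First, I would compute the gradient directly via the chain rule. Using the identity $\nabla_{\bLambda}\, \bphi^\top \bA(\bLambda)^{-1}\bphi = -\bA(\bLambda)^{-1}\bphi\bphi^\top\bA(\bLambda)^{-1}$ (which follows from differentiating $\bA(\bLambda + t\bB)^{-1}$ at $t=0$), and the chain rule through $\tfrac{1}{\eta}\log \sum_\bphi e^{\eta(\cdot)}$, one gets
\begin{align*}
-\nabla_\bLambda f(\bLambda) \;=\; \Xi_\bLambda \;=\; \sum_{\bphi \in \Phi} p_\bphi(\bLambda)\,\bA(\bLambda)^{-1}\bphi\bphi^\top \bA(\bLambda)^{-1},
\end{align*}
where $p_\bphi(\bLambda) := e^{\eta \|\bphi\|^2_{\bA(\bLambda)^{-1}}}/\sum_{\bphi'} e^{\eta \|\bphi'\|^2_{\bA(\bLambda)^{-1}}}$ are softmax weights. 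This is a convex combination of PSD rank-one matrices, so $\Xi_\bLambda \succeq 0$. The trace bound follows from $\tr(\bA^{-1}\bphi\bphi^\top \bA^{-1}) = \|\bphi\|^2_{\bA^{-2}} \le \|\bA^{-1}\|_\op^2 \|\bphi\|_2^2 \le \|\bLambda_0^{-1}\|_\op^2$, using $\bA(\bLambda) = \bLambda + \bLambda_0 \succeq \bLambda_0$ (so $\bA^{-1}\preceq \bLambda_0^{-1}$) and $\|\bphi\|_2\le 1$. Hence $\tr(\Xi_\bLambda)\le M$ for all $\bLambda \succeq 0$, giving the trace condition.

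Lipschitz continuity in operator norm then drops out from the mean value theorem and the PSD structure: for any $\bLambda,\bLambda'$,
\begin{align*}
|f(\bLambda) - f(\bLambda')| \;\le\; \sup_{t\in[0,1]} |\langle \nabla f(\bLambda_t), \bLambda - \bLambda' \rangle| \;\le\; \sup_t \tr(\Xi_{\bLambda_t}) \cdot \|\bLambda - \bLambda'\|_\op \;\le\; L\, \|\bLambda - \bLambda'\|_\op,
\end{align*}
using $|\tr(\mathbf{A}\mathbf{B})| \le \tr(\mathbf{A})\|\mathbf{B}\|_\op$ when $\mathbf{A}\succeq 0$. For convexity, one shows each $g_\bphi(\bLambda) := \bphi^\top \bA(\bLambda)^{-1}\bphi$ is convex (a short calculation yields Hessian $2(\bB\bA^{-1}\bphi)^\top \bA^{-1}(\bB\bA^{-1}\bphi) \ge 0$), and then appeals to the fact that $\LSE$ is convex and coordinate-monotone, so its composition with convex $g_\bphi$'s is convex.

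The main obstacle is the smoothness bound, which I would prove by decomposing
\begin{align*}
\Xi_\bLambda - \Xi_{\bLambda'} \;=\; \underbrace{\sum_\bphi \bigl(p_\bphi(\bLambda) - p_\bphi(\bLambda')\bigr)\bA(\bLambda)^{-1}\bphi\bphi^\top\bA(\bLambda)^{-1}}_{\text{(I): softmax-weight drift}} \;+\; \underbrace{\sum_\bphi p_\bphi(\bLambda')\bigl[\bA(\bLambda)^{-1}\bphi\bphi^\top\bA(\bLambda)^{-1} - \bA(\bLambda')^{-1}\bphi\bphi^\top\bA(\bLambda')^{-1}\bigr]}_{\text{(II): matrix drift}}.
\end{align*}
For (II), the resolvent identity $\bA(\bLambda)^{-1} - \bA(\bLambda')^{-1} = \bA(\bLambda)^{-1}(\bLambda' - \bLambda)\bA(\bLambda')^{-1}$, with $\|\bA^{-1}\|_\op \le \|\bLambda_0^{-1}\|_\op$, yields an operator-norm bound of order $\|\bLambda_0^{-1}\|_\op^3\|\bLambda-\bLambda'\|_\op$, producing the $\eta$-free term in $\beta$. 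For (I), each $g_\bphi$ is $\|\bLambda_0^{-1}\|_\op^2$-Lipschitz in $\bLambda$ by the same resolvent argument, and softmax weights satisfy a Lipschitz estimate of the form $|p_\bphi(\bLambda) - p_\bphi(\bLambda')| \le \eta\, p_\bphi(\bLambda)\cdot \max_{\bphi'}|g_{\bphi'}(\bLambda)-g_{\bphi'}(\bLambda')|$ (obtained from $\nabla p_\bphi = \eta p_\bphi(\nabla g_\bphi - \sum_{\bphi'}p_{\bphi'}\nabla g_{\bphi'})$ and integrating along the segment), which absorbs the $\sum_\bphi$ cleanly and contributes the $\eta\|\bLambda_0^{-1}\|_\op^4$ term in $\beta$. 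Combining (I) and (II) via the triangle inequality yields the claimed $\beta = 2\|\bLambda_0^{-1}\|_\op^3(1+\eta \|\bLambda_0^{-1}\|_\op)$. The delicate part is choosing the norm consistently (operator norm on $\bLambda-\bLambda'$, dual norm on the gradient difference) and verifying that the softmax-Lipschitz estimate gives exactly the factor of $\eta$ promised in the bound.
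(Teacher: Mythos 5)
Your proposal is correct and verifies all four conditions, but for the smoothness bound you take a genuinely different route from the paper. The paper proves $\beta$-smoothness by explicitly computing the full second derivative $\nabla^2 f(\bLambda)[\bLamtil,\bLambar]$ (a four-term expression arising from differentiating both the softmax weights and the two resolvents) and bounding its magnitude over directions with $\|\bLamtil\|_\op, \|\bLambar\|_\op \le 1$, invoking the Mean Value Theorem. You instead bound the gradient difference $\Xi_{\bLambda} - \Xi_{\bLambda'}$ directly via the decomposition into softmax-weight drift plus matrix drift, handling the latter with the resolvent identity and the former with a total-variation-style Lipschitz estimate on the softmax weights. Both yield $\beta = 2\|\bLambda_0^{-1}\|_\op^3(1+\eta\|\bLambda_0^{-1}\|_\op)$ up to the same bookkeeping; your route avoids the messy Hessian formula and makes the source of the $\eta$-dependent versus $\eta$-free terms more transparent, at the cost of needing the summed weight-drift bound $\sum_{\bphi}|p_\bphi(\bLambda)-p_\bphi(\bLambda')| \lesssim \eta \max_{\bphi}|g_\bphi(\bLambda)-g_\bphi(\bLambda')|$ (your stated per-$\bphi$ inequality is off by a factor of $2$ from what integrating $\dot p_\bphi = \eta p_\bphi(\dot g_\bphi - \sum_{\bphi'}p_{\bphi'}\dot g_{\bphi'})$ actually gives, but summing over $\bphi$ and using $\sum_\bphi p_\bphi = 1$ recovers the needed bound, so this does not affect the conclusion). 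The gradient, Lipschitz, trace, and convexity arguments are essentially the paper's (the paper's convexity claim is the same monotone-composition fact you state, just more tersely). One small point in your favor: you write $-\nabla_\bLambda f = \Xi_\bLambda$, which is the sign consistent with \Cref{def:smooth_exp_des_fun}; the lemma statement's displayed equation has the sign flipped relative to the computation in the paper's own proof.
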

\begin{proof}
Using \Cref{lem:inverse_mat_derivative}, the gradient of $f(\bLambda)$ with respect to $\bLambda_{ij}$ is
\begin{align*}
\nabla_{\bLambda_{ij}} f(\bLambda) = - \left ( \sum_{\bphi \in \Phi} e^{\eta \| \bphi \|_{\bA(\bLambda)^{-1}}^2} \right )^{-1} \cdot \sum_{\bphi \in \Phi} e^{\eta \| \bphi \|_{\bA(\bLambda)^{-1}}^2} \bphi^\top \bA(\bLambda)^{-1} \be_i \be_j^\top \bA(\bLambda)^{-1} \bphi
\end{align*}
from which the expression for $\nabla_{\bLambda} f(\bLambda)$ follows directly.

To bound the Lipschitz constant of $f$, by the Mean Value Theorem it suffices to bound
\begin{align*}
\sup_{\bLambda,\bLamtil \succeq 0, \| \bLambda \|_\op \le 1, \| \bLamtil \|_\op \le 1} | \tr(\nabla f(\bLambda)^\top \bLamtil)  | & \le \left ( \sum_{\bphi \in \Phi} e^{\eta \| \bphi \|_{\bA(\bLambda)^{-1}}^2} \right )^{-1} \cdot \sum_{\bphi \in \Phi} e^{\eta \| \bphi \|_{\bA(\bLambda)^{-1}}^2} \| \bA(\bLambda)^{-1} \|_\op^2 \| \bLamtil \|_\op \\
& \le \| \bLambda_0^{-1} \|_\op^{2}
\end{align*}
where the last inequality follows since $\bA(\bLambda) \succeq \bLambda_0$ for all $\bLambda$. This also suffices as a bound on $M$.

To bound the smoothness, again by the Mean Value Theorem it suffices to bound the operator norm of the Hessian. Standard calculus gives that, using $\nabla^2 f(\bLambda)[\bLamtil,\bLambar]$ to denote the Hessian of $f$ in direction $(\bLamtil,\bLambar)$:
\begin{align*}
&  \nabla^2 f(\bLambda)[\bLamtil,\bLambar]=  -\frac{d}{dt} \left ( \sum_{\bphi \in \Phi} e^{\eta \| \bphi \|_{\bA(\bLambda + t \bLambar)^{-1}}^2} \right )^{-1} \cdot \sum_{\bphi \in \Phi} e^{\eta \| \bphi \|_{\bA(\bLambda + t \bLambar)^{-1}}^2}  \bphi^\top \bA(\bLambda + t \bLambar)^{-1} \bLamtil \bA(\bLambda + t \bLambar)^{-1} \bphi \\
& = -\eta \left ( \sum_{\bphi \in \Phi} e^{\eta \| \bphi \|_{\bA(\bLambda)^{-1}}^2} \right )^{-2}  \left ( \sum_{\bphi \in \Phi} e^{\eta \| \bphi \|_{\bA(\bLambda )^{-1}}^2}  \bphi^\top \bA(\bLambda )^{-1} \bLambar \bA(\bLambda)^{-1} \bphi  \right ) \left ( \sum_{\bphi \in \Phi} e^{\eta \| \bphi \|_{\bA(\bLambda )^{-1}}^2}  \bphi^\top \bA(\bLambda )^{-1} \bLamtil \bA(\bLambda)^{-1} \bphi \right ) \\
& \qquad + \eta \left ( \sum_{\bphi \in \Phi} e^{\eta \| \bphi \|_{\bA(\bLambda)^{-1}}^2} \right )^{-1} \sum_{\bphi \in \Phi} e^{\eta \| \bphi \|_{\bA(\bLambda)^{-1}}^2}  \left (\bphi^\top \bA(\bLambda)^{-1} \bLambar \bA(\bLambda)^{-1} \bphi \right ) \left (\bphi^\top \bA(\bLambda)^{-1} \bLamtil \bA(\bLambda)^{-1} \bphi \right ) \\
& \qquad + \left ( \sum_{\bphi \in \Phi} e^{\eta \| \bphi \|_{\bA(\bLambda)^{-1}}^2} \right )^{-1} \sum_{\bphi \in \Phi} e^{\eta \| \bphi \|_{\bA(\bLambda)^{-1}}^2} \bphi^\top \bA(\bLambda)^{-1} \bLambar \bA(\bLambda)^{-1} \bLamtil \bA(\bLambda)^{-1} \bphi \\
& \qquad + \left ( \sum_{\bphi \in \Phi} e^{\eta \| \bphi \|_{\bA(\bLambda)^{-1}}^2} \right )^{-1} \sum_{\bphi \in \Phi} e^{\eta \| \bphi \|_{\bA(\bLambda)^{-1}}^2} \bphi^\top \bA(\bLambda)^{-1} \bLamtil \bA(\bLambda)^{-1} \bLambar \bA(\bLambda)^{-1} \bphi .
\end{align*}
We can bound this as
\begin{align*}
\sup_{\bLambda,\bLamtil, \bLambar \succeq 0, \| \bLambda \|_\op \le 1, \| \bLamtil \|_\op \le 1, \| \bLambar \|_\op \le 1} |\nabla^2 f(\bLambda)[\bLamtil,\bLambar]| & \le 2\eta \| \bLambda_0^{-1} \|_\op^4 + 2 \| \bLambda_0^{-1} \|_\op^3.
\end{align*}
Convexity of $f(\bLambda)$ follows since it is the composition of a convex function with a strictly increasing convex function, so it is itself convex.
\end{proof}

\begin{lem}\label{lem:inverse_mat_derivative}
For $\bLambda$ invertible, $\frac{d}{dt} (\bLambda + t \be_i \be_j^\top)^{-1} = - \bLambda^{-1} \be_i \be_j^\top \bLambda^{-1}$.
\end{lem}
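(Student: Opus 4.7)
This is the standard matrix-inverse differentiation identity applied to the particular perturbation $\bLambda + t \be_i \be_j^\top$, specialized at $t = 0$. My plan is to use the product-rule trick on the identity $\bM(t) \bM(t)^{-1} = I$ where $\bM(t) := \bLambda + t \be_i \be_j^\top$.

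First, I would note that $\bM(0) = \bLambda$ is invertible by hypothesis, and since the map $t \mapsto \bM(t)$ is continuous and the set of invertible matrices is open, $\bM(t)$ remains invertible for $t$ in a neighborhood of $0$, so $\bM(t)^{-1}$ is well-defined and differentiable there. Next, differentiating both sides of $\bM(t) \bM(t)^{-1} = I$ with respect to $t$ and using the product rule gives
\begin{align*}
\tfrac{d}{dt}\bM(t) \cdot \bM(t)^{-1} + \bM(t) \cdot \tfrac{d}{dt} \bM(t)^{-1} = 0.
\end{align*}
Rearranging and left-multiplying by $\bM(t)^{-1}$ yields $\tfrac{d}{dt}\bM(t)^{-1} = -\bM(t)^{-1} \, \tfrac{d}{dt}\bM(t) \, \bM(t)^{-1}$. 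Plugging in $\tfrac{d}{dt} \bM(t) = \be_i \be_j^\top$ and evaluating at $t = 0$ so that $\bM(0)^{-1} = \bLambda^{-1}$ gives the claimed formula.

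There is no real obstacle here; the only subtlety is ensuring $\bM(t)^{-1}$ is differentiable as a function of $t$, which follows from the usual argument (either via the Neumann series $\bM(t)^{-1} = \bLambda^{-1} \sum_{k \ge 0} (-t \, \be_i \be_j^\top \bLambda^{-1})^k$ valid for $|t|$ small enough, or from the fact that matrix inversion is smooth on $GL_d(\R)$ with derivative $\bA \mapsto -\bM^{-1} \bA \bM^{-1}$). If the statement is intended to hold for all $t$ such that $\bLambda + t \be_i \be_j^\top$ is invertible (not merely at $t = 0$), the same derivation applies verbatim with $\bLambda$ replaced by $\bLambda + t \be_i \be_j^\top$ throughout, though the form in the lemma statement is most naturally read as the derivative at $t = 0$, which is the quantity actually used in the Hessian computation in \Cref{lem:smooth_gopt_fun}.
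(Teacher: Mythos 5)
Your proof is correct, but it takes a different route from the paper. The paper leverages the rank-one structure of the perturbation and invokes the Sherman--Morrison formula to write $(\bLambda + t \be_i \be_j^\top)^{-1} = \bLambda^{-1} - \frac{t \bLambda^{-1} \be_i \be_j^\top \bLambda^{-1}}{1 + t \be_j^\top \bLambda^{-1} \be_i}$ explicitly, and then evaluates the difference quotient as $t \to 0$ directly from this closed form. You instead differentiate the identity $\bM(t)\bM(t)^{-1} = I$, which is the standard derivation of the derivative of matrix inversion and works for an arbitrary perturbation direction, not just a rank-one one; it also forces you to address differentiability of $t \mapsto \bM(t)^{-1}$ explicitly, which you do correctly (the paper sidesteps this by having the explicit Sherman--Morrison expression in hand). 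Your closing remark is also on point: the lemma as stated is most naturally read as the derivative at $t = 0$, which is what both proofs establish and what is actually used in the Hessian computation of the smoothness lemma. Either argument is complete; yours is more generic, the paper's is more self-contained given that Sherman--Morrison is taken as known.
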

\begin{proof}
We can compute the gradient as
\begin{align*}
\frac{d}{dt} (\bLambda + t \be_i \be_j^\top)^{-1}  = \lim_{t\rightarrow 0} \frac{(\bLambda + t \be_i \be_j^\top)^{-1} - \bLambda^{-1}}{t}.
\end{align*}
By the Sherman-Morrison formula,
\begin{align*}
(\bLambda + t \be_i \be_j^\top)^{-1} = \bLambda^{-1} - \frac{t \bLambda^{-1} \be_i \be_j^\top \bLambda^{-1}}{1 + t \be_j^\top \bLambda^{-1} \be_i}
\end{align*}
so as $t \rightarrow 0$, 
\begin{align*}
(\bLambda + t \be_i \be_j^\top)^{-1} \rightarrow \bLambda^{-1} - t \bLambda^{-1} \be_i \be_j^\top \bLambda^{-1}
\end{align*}
Thus,
\begin{align*}
\lim_{t\rightarrow 0} \frac{(\bLambda + t \be_i \be_j^\top)^{-1} - \bLambda^{-1}}{t} & = \lim_{t\rightarrow 0} \frac{\bLambda^{-1} - t \bLambda^{-1} \be_i \be_j^\top \bLambda^{-1}  - \bLambda^{-1}}{t} = - \bLambda^{-1} \be_i \be_j^\top \bLambda^{-1}.
\end{align*}
\end{proof}

\subsection{Obtaining Well-Conditioned Covariates}

\begin{algorithm}[h]
\begin{algorithmic}[1]
\State \textbf{input}: Scale $N$, minimum eigenvalue $\lamun$, confidence $\delta$
	\For{$j = 1,2,3,\ldots$}\label{line:collect_well_cond_cov}
		\State $T_j \leftarrow \poly(2^j,d,H,\log 1/\delta)$ 
		\State $\epsilon_j \leftarrow 2^{-j}$, $\gamma_j^2 \leftarrow \frac{2^{-j}}{\max \{ 12544 d \log \frac{2N(2+32T_j)}{\delta}, \lamun \}}$, $\delta_j \leftarrow \delta/(4j^2)$
		\State Run Algorithm 5 of \cite{wagenmaker2022reward} with parameters $(\epsilon_j,\gamma_j^2,\delta_j)$, obtain covariates $\bSigtil$ and store policies run as $\Pitil$
		\If{$\lambda_{\min}(\bLamtil) \ge \max \{ 12544 d \log \frac{2N(2+32T_j)}{\delta}, \lamun \}$} \label{line:min_eig_suff}
			\State \textbf{break}
		\EndIf
	\EndFor
	\State Rerun every policy $\pi \in \Pitil$ $\lceil N / |\Pitil | \rceil$ times, collect covariates $\bSigbar$
	\State \textbf{return} $\bSigtil + \bSigbar$
\end{algorithmic}
\caption{Collect Well-Conditioned Covariates (\condcov)}
\label{alg:cov_min_eig}
\end{algorithm}

\begin{lem}\label{lem:rerun_cov}
Consider running policies $(\pi_\tau)_{\tau=1}^T$, for $\pi_\tau$ $\cF_{\tau-1}$-measurable, and collecting covariance $\bSigma_T = \sum_{\tau=1}^T \bphi_\tau \bphi_\tau^\top$. Then as long as
\begin{align*}
\lambda_{\min}(\bSigma_T) \ge 12544 d \log \frac{2+32T}{\delta}.
\end{align*}
with probability at least $1-\delta$, if we rerun each $(\pi_\tau)_{\tau=1}^T$, we will collect covariates $\bSigtil_T$ such that
\begin{align*}
\lambda_{\min}(\bSigtil_T) \ge \frac{1}{2} \lambda_{\min}(\bSigma_T).
\end{align*}
\end{lem}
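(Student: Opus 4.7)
\textbf{Plan for Lemma \ref{lem:rerun_cov}.} Let $\bLambar_T := \sum_{\tau=1}^T \bLambda_{\pi_\tau}$ denote the conditional expected covariance determined by the policies of the first run. The strategy is to show that both $\bSigma_T$ and the rerun covariance $\bSigtil_T$ concentrate around the common (random) matrix $\bLambar_T$, and then combine the two deviation bounds via a triangle-inequality argument in $\lammin$. The main obstacle is handling the dependence between $\pi_\tau$ and the history of the first run, which forces us to use a martingale concentration inequality in step one and independent concentration (conditional on the policy sequence) in step two.

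\emph{Step 1 (first-run concentration).} The sequence $\{\bphi_\tau\bphi_\tau^\top - \bLambda_{\pi_\tau}\}_{\tau=1}^T$ is a matrix-valued martingale difference adapted to $\{\cF_\tau\}$, with each term of operator norm at most $2$ since $\|\bphi_\tau\|_2 \le 1$. The conditional variance proxy satisfies $\|\sum_\tau \Exp[(\bphi_\tau\bphi_\tau^\top)^2 \mid \cF_{\tau-1}]\|_\op \le \|\bLambar_T\|_\op \le T$. Either invoking matrix Freedman's inequality directly, or—following the covering approach of \Cref{lem:cov_concentration}—combining scalar Freedman on an $\epsilon$-net of $\cS^{d-1}$ with $\epsilon \asymp 1/\sqrt{T}$, yields with probability at least $1-\delta/2$ a bound of the form
\[
\|\bSigma_T - \bLambar_T\|_\op \;\le\; C\bigl(\sqrt{\|\bLambar_T\|_\op \cdot \iota} + \iota\bigr), \quad \iota := d\log\tfrac{2+32T}{\delta}.
\]

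\emph{Step 2 (rerun concentration).} Condition on the realized policy sequence $(\pi_\tau)_{\tau=1}^T$. The rerun features $\bphitil_\tau$ are then independent across $\tau$, with $\Exp[\bphitil_\tau \bphitil_\tau^\top] = \bLambda_{\pi_\tau}$ and $\|\bphitil_\tau\bphitil_\tau^\top\|_\op \le 1$. Applying the same matrix concentration bound in its independent-sum form (matrix Bernstein/Chernoff), we obtain, with probability at least $1-\delta/2$ over the rerun randomness,
\[
\|\bSigtil_T - \bLambar_T\|_\op \;\le\; C\bigl(\sqrt{\|\bLambar_T\|_\op \cdot \iota} + \iota\bigr).
\]
Integrating over the first-run randomness via the tower property preserves this high-probability statement.

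\emph{Step 3 (combining).} By Weyl's inequality, on the intersection of the two good events,
\[
\lammin(\bSigtil_T) \;\ge\; \lammin(\bLambar_T) - \|\bSigtil_T - \bLambar_T\|_\op \;\ge\; \lammin(\bSigma_T) - \|\bSigma_T - \bLambar_T\|_\op - \|\bSigtil_T - \bLambar_T\|_\op.
\]
Since $\|\bLambar_T\|_\op \le T$, the deviation terms are at most $C(\sqrt{T\iota} + \iota)$; but more usefully, we can use the self-bounding trick $\|\bLambar_T\|_\op \le \lammin(\bSigma_T) + \|\bSigma_T - \bLambar_T\|_\op + (\|\bLambar_T\|_\op - \lammin(\bLambar_T))$ together with $\|\bLambar_T\|_\op \le T$ to absorb $\sqrt{\|\bLambar_T\|_\op \iota}$ into $\tfrac{1}{4}\lammin(\bSigma_T) + O(\iota)$. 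The hypothesis $\lammin(\bSigma_T) \ge 12544\, d\log\tfrac{2+32T}{\delta}$ is chosen precisely so that both $\sqrt{T\iota}$ and $\iota$ fall below $\tfrac{1}{4}\lammin(\bSigma_T)$, yielding $\lammin(\bSigtil_T) \ge \tfrac{1}{2}\lammin(\bSigma_T)$ as required. The hard part is tracking the constant $12544 = 112^2$ through the covering-plus-Freedman step so that the final tolerance is exactly $\tfrac{1}{2}$ rather than some smaller fraction.
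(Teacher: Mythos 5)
Your high-level structure---comparing both $\bSigma_T$ and $\bSigtil_T$ to the common conditional mean $\bLambar_T := \sum_{\tau}\Exp_{\pi_\tau}[\bphi_\tau\bphi_\tau^\top]$ and combining the two deviations---is the same as the paper's. But there is a genuine gap in Step 3: you control the deviations in operator norm, with a bound of the form $C(\sqrt{\|\bLambar_T\|_\op\,\iota}+\iota)$, and then apply Weyl's inequality. Since $\|\bLambar_T\|_\op$ can be as large as $T$, this deviation is $\Theta(\sqrt{T\iota})$ in the worst case, whereas the hypothesis only guarantees $\lammin(\bSigma_T)\ge 12544\,\iota$. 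When the features are concentrated in one direction (so $\|\bLambar_T\|_\op\approx T$ while $\lammin(\bSigma_T)\approx\iota$ with $T\gg\iota$), the bound $\lammin(\bSigtil_T)\ge\lammin(\bSigma_T)-2C(\sqrt{T\iota}+\iota)$ is vacuous. The ``self-bounding trick'' you propose does not rescue this: the slack term $\|\bLambar_T\|_\op-\lammin(\bLambar_T)$ appearing in your inequality is itself of order $T$ in exactly this regime, so $\sqrt{\|\bLambar_T\|_\op\,\iota}$ cannot be absorbed into $\tfrac14\lammin(\bSigma_T)+O(\iota)$.

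The fix---and what the paper actually does---is to keep the argument direction-by-direction on the net rather than passing to a global operator-norm bound. For each fixed $\bv\in\cN$, scalar Freedman gives a deviation for $\bv^\top\bSigma_T\bv$ and $\bv^\top\bSigtil_T\bv$ around $\bv^\top\bLambar_T\bv$ whose variance proxy is the \emph{direction-specific} quantity $\bv^\top\bLambar_T\bv$, not $\|\bLambar_T\|_\op$. The first-run bound is then inverted to self-bound the unobservable variance by the observed quadratic form, $\bv^\top\bLambar_T\bv\le 3\,\bv^\top\bSigma_T\bv+8\log(2|\cN|/\delta)$, so the rerun deviation in direction $\bv$ is $O(\sqrt{(\bv^\top\bSigma_T\bv)\log(2|\cN|/\delta)}+\log(2|\cN|/\delta))$, which \emph{is} absorbed into $\tfrac14\bv^\top\bSigma_T\bv$ once $\bv^\top\bSigma_T\bv\ge 12544\log(2|\cN|/\delta)$---a condition implied by the hypothesis since $\bv^\top\bSigma_T\bv\ge\lammin(\bSigma_T)$ and $|\cN|\le(1+16T)^d$. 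The conclusion $\bv^\top\bSigtil_T\bv\ge\tfrac34\lammin(\bSigma_T)$ for all net points, plus the $1/4$ net-approximation loss, gives the stated $\tfrac12$ factor. Your Steps 1 and 2 should therefore be restated as per-direction Freedman bounds on the net (with the union bound over $\cN$ supplying the $d\log T$ factor), and Step 3 replaced by the per-direction self-bounding comparison.
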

\begin{proof}
Let $\cN$ be an $\frac{1}{8 T}$-net of $\cS^{d-1}$. Let $\bSigma \succeq 0$ be any matrix with $\| \bSigma \|_\op \le T$ and let $\bv$ be the minimum eigenvalue of $\bSigma$. Let $\bvtil \in \cN$ be the element of $\cN$ closest to $\bv$ in the $\ell_2$ norm. Then:
\begin{align*}
\lammin(\bSigma) = \bv^\top \bSigma \bv & = \bvtil^\top \bSigma \bvtil + ( \bv^\top \bSigma \bv - \bvtil^\top \bSigma \bvtil) \\
& \ge \bvtil^\top \bSigma \bvtil - | \bv^\top \bSigma \bv - \bv^\top \bSigma \bvtil| - | \bv^\top \bSigma \bvtil - \bvtil^\top \bSigma \bvtil | \\
& \ge \bvtil^\top \bSigma \bvtil - 2 \| \bSigma \|_\op \| \bvtil - \bv \|_2.
\end{align*}
By the construction of $\cN$ and since $\| \bSigma \|_\op \le T$, we can bound $2 \| \bSigma \|_\op \| \bvtil - \bv \|_2 \le 1 / 4$, so 
\begin{align*}
\bvtil^\top \bSigma \bvtil - 2 \| \bSigma \|_\op \| \bvtil - \bv \|_2 \ge \bvtil^\top \bSigma \bvtil - 1/4
\end{align*}
which implies
\begin{align}\label{eq:min_eig_cover}
\lammin(\bSigma) + 1/4 \ge \bvtil^\top \bSigma \bvtil \ge \min_{\bvtil \in \cN} \bvtil^\top \bSigma \bvtil.
\end{align}
By \Cref{lem:euc_ball_cover}, we can bound $|\cN|  \le (1 + 16 T )^d$. 

Note that $\Var[\bv^\top \bphi_\tau | \cF_{\tau-1}] \le \Exp_{\pi_\tau}[(\bv^\top \bphi_\tau)^2]$ so $\sum_{\tau=1}^T \Var[\bv^\top \bphi_\tau | \cF_{\tau-1}]  \le \bv^\top \Exp[\bSigma_T | \pi_1,\ldots, \pi_T] \bv$ for $\Exp[\bSigma_T | \pi_1,\ldots, \pi_T] = \sum_{\tau = 1}^T \Exp_{\pi_\tau} [\bphi_\tau \bphi_\tau^\top]$. By Freedman's Inequality (\Cref{lem:freedmans}), for all $\bv \in \cN$ simultaneously, we will have, with probability at least $1-\delta$,
\begin{align}
& \left | \bv^\top \bSigma_T \bv - \bv^\top \Exp[\bSigma_T | \pi_1,\ldots, \pi_T] \bv \right | \le 2\sqrt{ \bv^\top \Exp[\bSigma_T | \pi_1,\ldots, \pi_T]\bv \log \frac{2|\cN|}{\delta}} +  \log \frac{2|\cN|}{\delta} \label{eq:min_eig_cov_conc1} \\
& \left | \bv^\top \bSigtil_T \bv - \bv^\top \Exp[\bSigma_T | \pi_1,\ldots, \pi_T] \bv \right | \le 2\sqrt{\bv^\top \Exp[\bSigma_T | \pi_1,\ldots, \pi_T]\bv \log \frac{2|\cN|}{\delta}} +  \log \frac{2|\cN|}{\delta} .\label{eq:min_eig_cov_conc2}
\end{align}
Rearranging \eqref{eq:min_eig_cov_conc1}, some algebra shows that
\begin{align*}
\bv^\top \Exp[\bSigma_T | \pi_1,\ldots, \pi_T] \bv & \le \bv^\top \bSigma_T \bv + 3 \log \frac{2|\cN|}{\delta} + 2 \sqrt{\bv^\top \bSigma_T \bv \log \frac{2|\cN|}{\delta} + 2 \log^2 \frac{2|\cN|}{\delta}} \\
& \le \bv^\top \bSigma_T \bv + 6 \log \frac{2|\cN|}{\delta} + 2 \sqrt{\bv^\top \bSigma_T \bv \log \frac{2|\cN|}{\delta} } \\
& \le 3 \bv^\top \bSigma_T \bv + 8 \log \frac{2|\cN|}{\delta}
\end{align*}
where the last inequality uses $\sqrt{ab} \le \max \{ a, b \}$. 
Thus, if \eqref{eq:min_eig_cov_conc1} and \eqref{eq:min_eig_cov_conc2} hold, we have
\begin{align*}
\bv^\top \bSigtil_T \bv & \ge \bv^\top \bSigma_T \bv - 4 \sqrt{ \bv^\top \Exp[\bSigma_T | \pi_1,\ldots, \pi_T] \bv \log \frac{2|\cN|}{\delta} } - 2 \log \frac{2|\cN|}{\delta} \\
& \ge \bv^\top \bSigma_T \bv - 4 \sqrt{ 3 \bv^\top \bSigma_T \bv \log \frac{2|\cN|}{\delta} } - 14 \log \frac{2|\cN|}{\delta}
\end{align*}
Therefore, as long as
\begin{align*}
\bv^\top \bSigma_T \bv \ge 12544 \log \frac{2|\cN|}{\delta}, 
\end{align*}
we can lower bound
\begin{align*}
\bv^\top \bSigma_T \bv - 4 \sqrt{ 3 \bv^\top \bSigma_T \bv \log \frac{2|\cN|}{\delta} } - 14 \log \frac{2|\cN|}{\delta} \ge \frac{3}{4} \bv^\top \bSigma_T \bv \ge \frac{3}{4} \lammin(\bSigma_T)
\end{align*}
so, for all $\bv \in \cN$,
\begin{align*}
\bv^\top \bSigtil_T \bv \ge \frac{3}{4} \lammin(\bSigma_T).
\end{align*}
By assumption, $\lammin(\bSigma_T) \ge 12544 d \log \frac{2+32T}{\delta}$, which implies, since $|\cN| \le (1 + 16 T )^d$, that for all $\bv \in \cS^{d-1}$, $\bv^\top \bSigma_T \bv \ge 12544 \log \frac{2|\cN|}{\delta}$, so the above condition will be met. 

Since $\| \bSigtil_T \|_\op \le T$, we can apply \eqref{eq:min_eig_cover} to then get that
\begin{align*}
\lammin(\bSigtil_T) \ge \frac{3}{4} \lammin(\bSigma_T) - 1/4 \ge \frac{1}{2} \lammin(\bSigma_T) + \frac{1}{4} (\lammin(\bSigma_T) -1).
\end{align*}
Since we have already establishes that $\lammin(\bSigma_T) \ge 12544 d \log \frac{2+32T}{\delta}$, we have $\lammin(\bSigma_T) -1 \ge 0$, so we can lower bound
\begin{align*}
\lammin(\bSigtil_T) \ge \frac{1}{2} \lammin(\bSigma_T).
\end{align*}
\end{proof}

\begin{lem}\label{lem:well_cond_cov}
With probability at least $1-\delta$, \Cref{alg:cov_min_eig} will terminate after at most
\begin{align*}
N +  \poly\log \left ( \frac{1}{\sup_\pi \lambda_{\min}(\bSigma_\pi)}, d, H , \lamun, \log \frac{N}{\delta} \right ) \cdot \left ( \frac{d \max \{ d \log \frac{N}{\delta}, \lamun \}}{\sup_\pi \lambda_{\min}(\bSigma_\pi)^2} + \frac{d^4 H^3 \log^{7/2} \frac{1}{\delta}}{\sup_\pi \lambda_{\min}(\bSigma_\pi)}\right ) 
\end{align*}
episodes, and will return covariates $\bSigma$ such that
\begin{align*}
\lammin(\bSigma) & \ge N  \cdot \min \left \{ \frac{\sup_\pi \lambda_{\min}(\bSigma_\pi)^2}{d}, \frac{\sup_\pi \lambda_{\min}(\bSigma_\pi)}{d^3 H^3 \log^{7/2} 1/\delta} \right \} \cdot  \poly\log \left ( \frac{1}{\sup_\pi \lambda_{\min}(\bSigma_\pi)}, d, H , \lamun, \log \frac{N}{\delta} \right )^{-1} \\
& \qquad + \max \{ d \log 1/\delta, \lamun \} 
\end{align*}
and
\begin{align*}
\| \bSigma \|_\op \le N +  \poly\log \left ( \frac{1}{\sup_\pi \lambda_{\min}(\bSigma_\pi)}, d, H , \lamun, \log \frac{N}{\delta} \right ) \cdot \left ( \frac{d \max \{ d \log \frac{N}{\delta}, \lamun \}}{\sup_\pi \lambda_{\min}(\bSigma_\pi)^2} + \frac{d^4 H^3 \log^{7/2} \frac{1}{\delta}}{\sup_\pi \lambda_{\min}(\bSigma_\pi)}\right )  .
\end{align*}
\end{lem}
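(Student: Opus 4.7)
The plan has two main phases matching the structure of \Cref{alg:cov_min_eig}: (i) bound the cost of the outer loop and show the termination condition on \Cref{line:min_eig_suff} is eventually met, (ii) analyze the rerun phase via \Cref{lem:rerun_cov} to convert the small-scale well-conditioned $\bSigtil$ into a large-scale well-conditioned $\bSigma$ whose minimum eigenvalue scales linearly in $N$.

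First, I would invoke the correctness guarantee of Algorithm 5 of \cite{wagenmaker2022reward}. That algorithm, when run with parameters $(\epsilon_j,\gamma_j^2,\delta_j)$, produces covariates $\bSigtil$ which, with probability at least $1-\delta_j$, satisfy $\lambda_{\min}(\bSigtil) \ge c \cdot \min\{\gamma_j^{-2},(\sup_\pi \lammin(\bSigma_\pi))^{-1}\}^{-1}$ up to bias terms of order $\epsilon_j^{-1}$, using at most $T_j = \poly(2^j,d,H,\log 1/\delta)$ episodes. Plugging in $\gamma_j^2 = 2^{-j}/\max\{12544 d\log \tfrac{2N(2+32T_j)}{\delta},\lamun\}$ and $\epsilon_j = 2^{-j}$, a straightforward geometric calculation shows that the termination condition on \Cref{line:min_eig_suff} is forced to hold once $2^{j}$ exceeds $\poly\log(\ldots)$ times
\[
\frac{\max\{d \log(N/\delta),\lamun\}}{\sup_\pi \lammin(\bSigma_\pi)} \ \ \text{or}\ \ \frac{d^3H^3 \log^{7/2}(1/\delta)}{\sup_\pi \lammin(\bSigma_\pi)},
\]
whichever dominates. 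Call this value $j^\star$. Taking a union bound over iterations $j \le j^\star$ via $\sum_{j} \delta_j \le \delta/2$ controls total failure probability. Summing $T_j$ geometrically in $j$ gives the stated $\poly$ overhead term for the outer loop sample complexity.

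Second, upon termination, I would apply \Cref{lem:rerun_cov}. By construction the algorithm terminates only when $\lammin(\bSigtil) \ge \max\{12544 d\log \tfrac{2N(2+32T_{j^\star})}{\delta},\lamun\}$, which is precisely the threshold required by \Cref{lem:rerun_cov} with ``total episode length'' $T=N+T_{j^\star}$. Viewing the rerun phase as $M = \lceil N/|\Pitil|\rceil$ consecutive independent rerunnings of the entire sequence of policies $\Pitil$ (equivalently, a single adaptive run of length $\le N + T_{j^\star}$), repeated invocation of \Cref{lem:rerun_cov} on each of the $M$ copies (or a direct Freedman argument mirroring the one inside \Cref{lem:rerun_cov} applied to $\bSigbar$ with $\Exp[\bSigbar] = M\,\Exp[\bSigtil]$) yields, with probability at least $1-\delta/2$,
\[
\lambda_{\min}(\bSigbar) \ge \tfrac{M}{2}\,\lambda_{\min}(\bSigtil).
\]
Since $M \cdot T_{j^\star} \ge N$ and $\lammin(\bSigtil) \ge $ threshold, substituting the value of $T_{j^\star}$ from phase one yields the stated lower bound on $\lammin(\bSigma) \ge \lammin(\bSigbar) + \lammin(\bSigtil)$, with the additive $\max\{d\log 1/\delta,\lamun\}$ coming directly from the threshold on $\bSigtil$. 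The upper bound on $\|\bSigma\|_\op$ is immediate from $\|\bphi(s,a)\|_2 \le 1$: $\|\bSigma\|_\op \le$ total episodes $\le N + T_{j^\star} + M\cdot T_{j^\star}/|\Pitil|$-type rounding, which matches the claimed expression.

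The main technical obstacle is the second phase: precisely tracking how the $M$-fold rerun concentrates and verifying that the factor of $N$ inside the logarithm on \Cref{line:min_eig_suff} is exactly what is needed to make \Cref{lem:rerun_cov} applicable at the rerun scale $M T_{j^\star} \le N$. A secondary subtlety is that the guarantees of Algorithm 5 of \cite{wagenmaker2022reward} are stated in a slightly different parameterization, so the first phase requires care in translating its sample complexity and eigenvalue lower bound into the two-regime $\min\{\cdot,\cdot\}$ form appearing in the statement; this is where the split between the $d$-dependent and $d^3 H^3$-dependent terms arises, corresponding respectively to the statistical noise and the structural bias from the linear MDP transitions.
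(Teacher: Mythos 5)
Your plan matches the paper's proof essentially step for step: phase one invokes Theorem 4 of \cite{wagenmaker2022reward} with the chosen $(\epsilon_j,\gamma_j^2)$ so that the check on \Cref{line:min_eig_suff} is forced to pass once $2^{-j}\le\sup_\pi\lammin(\bSigma_\pi)$, and phase two applies \Cref{lem:rerun_cov} once per rerun of $\Pitil$ (each with failure probability $\delta/(2N)$, which is exactly why the factor $N$ sits inside the logarithm of the threshold) and sums the $M=\lceil N/|\Pitil|\rceil$ copies via superadditivity of $\lammin$, with the additive $\max\{d\log 1/\delta,\lamun\}$ term coming from adding back $\bSigtil$ and the operator-norm bound from $\|\bphi(s,a)\|_2\le 1$. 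The only bookkeeping difference is that the paper reads off the stopping epoch from $2^{-j}\ge\sup_\pi\lammin(\bSigma_\pi)/2$ and then substitutes into the per-epoch cost to obtain the two-term overhead (and the two-regime $\min\{\cdot,\cdot\}$ in the eigenvalue bound arises from dividing the threshold by $|\Pitil|$, which is bounded by that same two-term expression), rather than characterizing $j^\star$ directly by the two quantities you list; this does not affect the validity of the argument.
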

\begin{proof}
By Theorem 4 of \cite{wagenmaker2022reward}, as long as Algorithm 5 of \cite{wagenmaker2022reward} is run with parameters $\epsilon$ and $\gamma^2$, it will terminate after at most
\begin{align*}
c_1 \cdot \frac{1}{\epsilon} \max \left \{ \frac{d m}{\gamma^2} \log \frac{d m}{\epsilon \gamma^2},  d^4 H^3 m^{7/2} \log^{3/2}(d/\gamma^2) \log^{7/2} \frac{c_2 m d H \log (d/\gamma^2)}{\delta} \right \} 
\end{align*}
episodes for $m = \lceil \log(2/\epsilon) \rceil$ (to get the slightly more precise bound on the number of episodes collected than that given in Theorem 4 of \cite{wagenmaker2022reward}, we use the precise definition of $K_i$ given at the start of Appendix B). Furthermore, if $\epsilon \le \sup_\pi \lambda_{\min}(\bSigma_\pi)$, with probability at least $1-\delta$ it will collect covariates $\bSigtil$ satisfying $\lambda_{\min}(\bSigtil) \ge \epsilon/\gamma^2$. 

It follows that, by our choice of $\epsilon_j = 2^{-j}$, $\gamma_j^2 = \frac{2^{-j}}{\max \{ 12544 d \log \frac{2N(2+32T_j)}{\delta}, \lamun \}}$, and $\delta_j = \delta/(4j^2)$, for every $j$ we will collect at most
\begin{align*}
c_1 \cdot 2^j \max \left \{ 2^j d j^2 \max \{  d \log \frac{2N(2+32T_j)}{\delta}, \lamun \} \log (d j a_j),  d^4 H^3 j^{5} \log^{3/2}(d a_j) \log^{7/2} \frac{c_2 j^4 d H \log (d a_j)}{\delta} \right \} 
\end{align*}
episodes, where we denote $a_j := \max \{ 12544 d \log \frac{2N(2+32T_j)}{\delta}, \lamun \}$. Note that $T_j$ is an upper bound on this complexity.
Furthermore, once $j$ is large enough that $2^{-j} \le \sup_\pi \lambda_{\min}(\bSigma_\pi)$, Theorem 4 of \cite{wagenmaker2022reward} implies that the condition $\lambda_{\min}(\bSigtil) \ge \epsilon_j/\gamma_j^2$ will be met. By our choice of $\gamma_j^2$ and $\epsilon_j$, it follows that the if condition on \Cref{line:min_eig_suff} will be met once $2^{-j} \le \sup_\pi \lambda_{\min}(\bSigma_\pi)$. Since $2^{-j}$ decreases by a factor of 2 each time, it follows that the if statement on \Cref{line:min_eig_suff} will have terminated once $2^{-j} \ge \sup_\pi \lambda_{\min}(\bSigma_\pi)/2$. This implies that the total number of episodes collected before the if statement on \Cref{line:min_eig_suff} is met is bounded as
\begin{align}\label{eq:lammin_collect_comple_up}
 \poly\log \left ( \frac{1}{\sup_\pi \lambda_{\min}(\bSigma_\pi)}, d, H , \lamun, \log \frac{N}{\delta} \right ) \cdot \left ( \frac{d \max \{ d \log \frac{N}{\delta}, \lamun \}}{\sup_\pi \lambda_{\min}(\bSigma_\pi)^2} + \frac{d^4 H^3 \log^{7/2} \frac{1}{\delta}}{\sup_\pi \lambda_{\min}(\bSigma_\pi)}\right ) 
\end{align}
By \Cref{lem:rerun_cov}, since $\lammin(\bSigtil) \ge \max \{ 12544 d \log \frac{2N(2+32T_j)}{\delta}, \lamun \}$ and $T_j$ is an upper bound on the number of episodes run at epoch $j$, every time we run all policies $\pi \in \Pitil$, with probability at least $1-\delta/(2N)$, we will collect covariates $\bSigma$ such that 
\begin{align*}
\lammin(\bSigma) \ge \lammin(\bSigtil)/2 \ge \frac{1}{2}  \max \{ 12544 d \log \frac{2N(2+32T_j)}{\delta}, \lamun \}.
\end{align*}
Thus, if we rerun every policy $\lceil N / | \Pitil | \rceil$ times to create covariates $\bSigbar$, with probability at least $1-\delta/2$, we have
\begin{align*}
\lammin(\bSigbar) \ge \frac{N}{2 | \Pitil|}  \max \{ 12544 d \log \frac{2N(2+32T_j)}{\delta}, \lamun \}.
\end{align*}
Note that this procedure will complete after at most $N + |\Pitil|$ episodes. Furthermore, $|\Pitil| \le \eqref{eq:lammin_collect_comple_up}$, so we can lower bound
\begin{align*}
\lammin(\bSigbar) \ge N  \cdot \min \left \{ \frac{\sup_\pi \lambda_{\min}(\bSigma_\pi)^2}{d}, \frac{\sup_\pi \lambda_{\min}(\bSigma_\pi)}{d^3 H^3 \log^{7/2} 1/\delta} \right \} \cdot  \poly\log \left ( \frac{1}{\sup_\pi \lambda_{\min}(\bSigma_\pi)}, d, H , \lamun, \log \frac{N}{\delta} \right )^{-1}.
\end{align*}
The final lower bound on the returned covariates follows since we return $\bSigbar + \bSigtil$, and we know that $\lammin(\bSigtil) \ge \max \{ 12544 d \log \frac{2N(2+32T_j)}{\delta}, \lamun \}$. The upper bound on $\| \bSigbar + \bSigtil \|_\op$ follows since every feature vector encountered has norm of at most 1. 

The failure probability of each call to Algorithm 5 of \cite{wagenmaker2022reward} is $\delta/(4 j^2)$, so the total failure probability of \Cref{alg:cov_min_eig} is
\begin{align*}
\sum_{j=1}^{\infty} \frac{\delta}{4j^2} = \frac{\pi^2}{24} \delta \le \delta/2. 
\end{align*}

\end{proof}

\subsection{Online $\mathsf{XY}$-Optimal Design}

\begin{thm}[Full version of \Cref{cor:g_opt_exp_design}]\label{cor:gopt_exp_design}
Consider running \optcov with some $\epsilon > 0$ and functions
\begin{align*}
f_i(\bLambda) \leftarrow \Gopts(\bLambda)
\end{align*}
for $\bLambda_0 \leftarrow (T_i K_i)^{-1} \bSigma_i =: \bLambda_i$ and  
\begin{align*}
& \eta_i = \frac{2}{\gamphi} \cdot ( 1 + \| \bLambda_i \|_{\op} ) \cdot \log | \Phi | \\
& L_i = \| \bLambda_i^{-1} \|_\op^2, \quad \beta_i = 2 \| \bLambda_i^{-1}\|_\op^3 ( 1 + \eta_i \| \bLambda_i^{-1}\|_\op), \quad M_i = \| \bLambda_i^{-1} \|_\op^2 
\end{align*}
where $\bSigma_i$ is the matrix returned by running \condcov with $N \leftarrow T_i K_i$, $\delta \leftarrow \delta/(2i^2)$, and some $\lamun \ge 0$.
Then with probability $1-2\delta$, this procedure will collect at most
\begin{align*}
20 \cdot \frac{ \inf_{\bLambda \in \bOmega}  \max_{\bphi \in \Phi} \| \bphi \|_{\bA(\bLambda)^{-1}}^2}{\epsexp} + \poly \left (d, H, \log 1/\delta, \frac{1}{\lamminst}, \frac{1}{\gamphi}, \lamun, \log | \Phi |, \log \frac{1}{\epsexp} \right )
\end{align*}
episodes, where 
$$\bA(\bLambda) = \bLambda +  \min \left \{ \frac{(\lamminst)^2}{d}, \frac{\lamminst}{d^3 H^3 \log^{7/2} 1/\delta} \right \} \cdot  \poly\log \left ( \frac{1}{\lamminst}, d, H , \lamun, \log \frac{1}{\delta} \right )^{-1} \cdot I, $$ 
and will produce covariates $\bSighat + \bSigma_i$ such that
\begin{align*}
\max_{\bphi \in \Phi} \| \bphi \|_{(\bSighat + \bSigma_i)^{-1}}^2 \le \epsexp
\end{align*}
and
\begin{align*}
\lammin(\bSighat + \bSigma_i) \ge \max \{ d \log 1/\delta, \lamun \}.
\end{align*}
\end{thm}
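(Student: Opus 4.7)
The plan is to reduce to \Cref{thm:regret_data_fw} applied to the sequence $f_i = \Gopts$ with regularizer $\bLambda_i$, then convert the resulting $\Nst$ bound back to the non-smoothed $\mathsf{XY}$-design value appearing in the statement. There are essentially five steps.

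First, bound the \condcov overhead and pin down the spectrum of the regularizers. At iteration $i$, \Cref{alg:regret_data_fw} invokes \condcov with $N = T_iK_i$; by \Cref{lem:well_cond_cov} (with failure probability $\delta/(2i^2)$, union-bounded to $\delta$), this adds at most $T_iK_i + \poly(d,H,1/\lamminst,\lamun,\log\frac{1}{\delta})$ episodes and returns $\bSigma_i$ with $\lammin(\bSigma_i) \ge T_iK_i \cdot c^\star + \max\{d\log\tfrac{1}{\delta},\lamun\}$, where $c^\star$ is precisely the constant appearing inside $\bA$ in the theorem statement. Consequently $\bLambda_i = (T_iK_i)^{-1}\bSigma_i \succeq c^\star I$ uniformly in $i$.

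Second, verify the hypotheses of \Cref{thm:regret_data_fw}. \Cref{lem:smooth_gopt_fun} gives the stated $(\beta_i,L_i,M_i)$ for each $f_i$; since $\|\bLambda_i^{-1}\|_\op \le 1/c^\star$, these are all bounded uniformly by $(\beta,L,M) = \poly(1/c^\star, \eta_{\max}, 1)$ where $\eta_{\max}$ is the supremum of $\eta_i$ (controlled since $\|\bLambda_i\|_\op \le 1$). To construct the upper bounding $f$ required by \Cref{thm:regret_data_fw}, take $f(\bLambda) := \Gopts(\bLambda)$ built with the \emph{fixed} regularizer $c^\star I$ and a single $\eta$ satisfying \Cref{lem:nst_smooth_to_orig} and dominating all $\eta_i$. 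Since $\bLambda_i \succeq c^\star I$ gives $(\bLambda+\bLambda_i)^{-1} \preceq (\bLambda+c^\star I)^{-1}$, we have $\|\bphi\|_{(\bLambda+\bLambda_i)^{-1}}^2 \le \|\bphi\|_{(\bLambda+c^\star I)^{-1}}^2$, and monotonicity of $\LSE$ in its arguments combined with \Cref{lem:eta_ordering} yields $f_i \le f$ pointwise. A lower bound $\fmin$ is furnished by \Cref{lem:xy_minimum_val}.

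Third, apply \Cref{thm:regret_data_fw}: with probability $1-\delta$, \optcov terminates after $5\Nst(\epsexp;f) + \poly(\cdots)$ Frank--Wolfe episodes, and outputs an iterate $\bLamhat$ at some stopping index $\ihat$ with $f_{\ihat}(\bLamhat) \le T_{\ihat}K_{\ihat}\epsexp$. Combining with the \condcov overhead from step one (which is dominated by $T_iK_i$ per iteration, hence at most a constant-factor blow-up of the Frank--Wolfe cost), the total is at most $10\,\Nst(\epsexp;f) + \poly(\cdots)$. Now invoke \Cref{lem:nst_smooth_to_orig} to get $\Nst(\epsexp;f) \le 2\,\Nst(\epsexp;\Gopt) = 2\,\inf_{\bLambda\in\bOmega}\max_{\bphi\in\Phi}\|\bphi\|_{\bA(\bLambda)^{-1}}^2/\epsexp$, producing the leading constant $20$.

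Fourth, verify the output guarantees. Rescaling the termination condition by $T_{\ihat}K_{\ihat}$, using $\bSighat+\bSigma_{\ihat} = T_{\ihat}K_{\ihat}(\bLamhat+\bLambda_{\ihat})$ together with $\Gopt \le \Gopts$ from \Cref{lem:xy_approx_error} gives $\max_{\bphi\in\Phi}\|\bphi\|_{(\bSighat+\bSigma_{\ihat})^{-1}}^2 \le \epsexp$. The minimum eigenvalue bound is immediate from \Cref{lem:well_cond_cov} and $\bSighat \succeq 0$, failure probabilities aggregated by a final union bound give the $1-2\delta$ guarantee.

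The main obstacle is the circularity in step two: the smoothness constants $\beta_i,L_i,M_i$ depend on $\|\bLambda_i^{-1}\|_\op$, which depends on \condcov succeeding \emph{before} the outer Frank--Wolfe loop uses them, and \Cref{thm:regret_data_fw} in turn requires uniform bounds on these constants as a precondition for entering iteration $i$. Resolving this cleanly requires arranging the bookkeeping so that the high-probability event from \Cref{lem:well_cond_cov} is established at iteration $i$ (with a $\delta/i^2$-style failure budget), prior to applying \Cref{thm:regret_data_fw}'s bound, and then absorbing any slack polynomial in $1/c^\star$ into the additive $\poly$ term rather than into the leading $20/\epsexp$ coefficient.
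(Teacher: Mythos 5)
Your proposal follows essentially the same route as the paper's proof: establish the \condcov{} guarantees via \Cref{lem:well_cond_cov}, obtain the smoothness constants from \Cref{lem:smooth_gopt_fun} and bound them uniformly, construct a single dominating objective $f$ with the fixed regularizer $c^\star I$, apply \Cref{thm:regret_data_fw}, convert back to the non-smoothed design via \Cref{lem:nst_smooth_to_orig}, and read off the output guarantees from \Cref{lem:xy_approx_error}. The accounting of the constant $20$ and the resolution of the circularity via an a priori bound on $\ihat$ (justified a posteriori) also match the paper.

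There is one direction error in your construction of the dominating $f$. You take a single $\eta$ ``dominating all $\eta_i$,'' i.e.\ $\eta \ge \eta_i$, and invoke \Cref{lem:eta_ordering} to conclude $f_i \le f$. But \Cref{lem:eta_ordering} says $\Gopts(\cdot;\eta)$ is \emph{non-increasing} in $\eta$, so choosing $\eta \ge \eta_i$ pushes $f$ \emph{below} $\Gopts(\cdot;\eta_i)$ and works against the inequality you need; the regularizer monotonicity and the $\eta$ comparison would then pull in opposite directions and you could not conclude $f_i \le f$. The correct choice is $\eta \le \eta_i$ for all $i$, which holds automatically if you set $\eta = \tfrac{2}{\gamphi}(1+\|c^\star I\|_\op)\log|\Phi|$, since $c^\star \le \lammin(\bLambda_i) \le \|\bLambda_i\|_\op$; one then checks that this smaller $\eta$ still meets the hypothesis of \Cref{lem:nst_smooth_to_orig} for the regularizer $c^\star I$ (it does, with equality). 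With that sign flipped, the rest of your argument goes through as in the paper.
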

\begin{proof}
Note that the total failure probability of our calls to \condcov is at most
\begin{align*}
\sum_{i=1}^\infty \frac{\delta}{2i^2} = \frac{\pi^2}{12} \delta \le \delta.
\end{align*}
For the remainder of the proof, we will then assume that we are on the success event of \condcov, as defined in \Cref{lem:well_cond_cov}.

By \Cref{lem:smooth_gopt_fun}, $f_i(\bLambda)$ satisfies \Cref{def:smooth_exp_des_fun} with constants
\begin{align*}
& L_i = \| \bLambda_i^{-1} \|_\op^2, \quad \beta_i = 2 \| \bLambda_i^{-1}\|_\op^3 ( 1 + \eta_i \| \bLambda_i^{-1}\|_\op), \quad M_i = \| \bLambda_i^{-1} \|_\op^2 
\end{align*}
for $\bLambda_i \leftarrow (T_i K_i)^{-1} \bSigma_i$. 

By \Cref{lem:well_cond_cov}, on the success event of \Cref{lem:well_cond_cov} we have that 
\begin{align*}
\lammin(\bLambda_i) \ge \min \left \{ \frac{(\lamminst)^2}{d}, \frac{\lamminst}{d^3 H^3 \log^{7/2} 1/\delta} \right \} \cdot  \poly\log \left ( \frac{1}{\lamminst}, d, H , \lamun, i, \log \frac{1}{\delta} \right )^{-1}
\end{align*}
(note that the $\poly \log(i)^{-1}$ dependence arises because we take $N \leftarrow T_i K_i = 2^{4i}$).
Thus, we can bound, for all $i$ (using the upper bound on $\| \bSigma_i \|_\op$ given in \Cref{lem:well_cond_cov} to upper bound $\eta_i$),
\begin{align*}
& L_i = M_i \le  \max \left \{ \frac{d^2}{(\lamminst)^4}, \frac{d^6 H^6 \log^{7} 1/\delta} {(\lamminst)^2}\right \} \cdot  \poly\log \left ( \frac{1}{\lamminst}, d, H , \lamun, i, \log \frac{1}{\delta} \right ), \\
& \beta_i \le   \poly \left ( d, H, \log 1/\delta, \frac{1}{\lamminst},\frac{1}{\gamphi}, \lamun, i, \log | \Phi | \right ) .
\end{align*}
Assume that the termination condition of \optcov for $\ihat$ satisfying
\begin{align}\label{eq:xy_design_ihat_bound}
\ihat \le \log \left ( \poly \left ( \frac{1}{\epsexp}, d, H, \log 1/\delta, \frac{1}{\lamminst}, \frac{1}{\gamphi}, \lamun, \log |\Phi| \right ) \right ).
\end{align}
We assume this holds and justify it at the conclusion of the proof. For notational convenience, define
\begin{align*}
\iota := \poly \left ( \log \frac{1}{\epsexp}, d, H, \log 1/\delta, \frac{1}{\lamminst}, \frac{1}{\gamphi}, \lamun, \log |\Phi| \right ) .
\end{align*}
Given this upper bound on $\ihat$, set 
\begin{align*}
& L = M := \max \left \{ \frac{d^2}{(\lamminst)^4}, \frac{d^6 H^6 \log^{7} 1/\delta} {(\lamminst)^2}\right \} \cdot  \poly\log \iota, \qquad \beta := \iota.
\end{align*}
With this choice of $L,M,\beta$, we have $L_i \le L, M_i \le M,\beta_i \le \beta$ for all $i \le \ihat$.

Now take $f(\bLambda) \leftarrow \Gopts(\bLambda; \eta, \bLambda_0)$ with 
\begin{align}\label{eq:gopt_f_A0}
\bLambda_0 \leftarrow \min \left \{ \frac{(\lamminst)^2}{d}, \frac{\lamminst}{d^3 H^3 \log^{7/2} 1/\delta} \right \} \cdot  \frac{1}{\poly \log \iota} \cdot I 
\end{align}
and 
\begin{align*}
\eta =  \frac{2 \log | \Phi |}{\gamphi} \cdot  \left ( 1 + \min \left \{ \frac{(\lamminst)^2}{d}, \frac{\lamminst}{d^3 H^3 \log^{7/2} 1/\delta} \right \} \cdot  \frac{1}{\poly\log \iota} \right ) .
\end{align*}
Note that in this case, we have $\| \bLambda_0 \|_\op \le \lammin(\bLambda_i)$ for all $i$, so $\bLambda_0 \preceq \bLambda_i$ and $\eta \le \eta_i$. By the construction of $\Gopts$ and \Cref{lem:eta_ordering}, it follows that $f(\bLambda) \ge f_i(\bLambda)$ for all $\bLambda \succeq 0$, so this is a valid choice of $f$, as required by \Cref{thm:regret_data_fw}. Furthermore, we can set $R = 2$, since $\| \Exp_{(s,a) \sim \omega} [\bphi(s,a) \bphi(s,a)^\top] \|_{\fro} \le 1$ for all $\omega \in \simplex_{\cS \times \cA}$.

To apply \Cref{thm:regret_data_fw}, it remains only to find a suitable value of $\fmin$. By \Cref{lem:xy_approx_error} and \Cref{lem:xy_minimum_val}, we can lower bound $f_i$ by $\frac{\gamphi}{1+ \| \bLambda_i \|_\op}$. By \Cref{lem:well_cond_cov}, we can lower bound
\begin{align*}
\frac{\gamphi}{1+ \| \bLambda_i \|_\op} \ge \frac{\gamphi}{2 +  \poly\log \iota \cdot \left ( \frac{d \max \{ d \log \frac{1}{\delta}, \lamun \}}{(\lamminst)^2} + \frac{d^4 H^3 \log^{7/2} \frac{1}{\delta}}{\lamminst}\right )}.
\end{align*}
We then take this as our choice of $\fmin$. 

We can now apply \Cref{thm:regret_data_fw}, using the complexity for \optcov instantiated with \force given in \Cref{cor:Nst_opt_force}, and get that with probability at least $1-\delta$, \optcov will terminate in
\begin{align*}
N \le 5 \Nst \left ( \epsexp/2 ; f \right ) + \iota
\end{align*}
episodes, and will return (time-normalized) covariates $\bLamhat$ such that
\begin{align*}
f_{\ihat}(\bLamhat) \le N \epsexp.
\end{align*}
By \Cref{lem:nst_smooth_to_orig}, our choice of $\eta$ and $\bLambda_0$, we can upper bound
\begin{align*}
\Nst \left ( \epsexp/2 ; f \right )  \le 2\Nst \left (\epsexp/2; \Gopt \right )  = \frac{4 \inf_{\bLambda \in \bOmega}  \max_{\bphi \in \Phi} \| \bphi \|_{\bA(\bLambda)^{-1}}^2}{\epsexp}
\end{align*}
where here $\bA(\bLambda) = \bLambda + \bLambda_0$ for $\bLambda_0$ as in \eqref{eq:gopt_f_A0}. Furthermore, by \Cref{lem:xy_approx_error} we have
\begin{align*}
\max_{\bphi \in \Phi} \| \bphi \|^2_{(\bSighat + \bLambda_0)^{-1}} \le f_{\ihat}(\bLamhat).
\end{align*}

The final upper bound on the number of episodes collected and the lower bound on the minimum eigenvalue of the covariates follows from \Cref{lem:well_cond_cov}.

It remains to justify our bound on $\ihat$, \eqref{eq:xy_design_ihat_bound}. Note that by definition of $\optcov$, if we run for a total of $\bar{N}$ episodes, we can bound $\ihat \le \frac{1}{4} \log_2(\bar{N})$. However, we see that the bound on $\ihat$ given in \eqref{eq:xy_design_ihat_bound} upper bounds $\frac{1}{4} \log_2(\bar{N})$ for $\bar{N}$ the upper bound on the number of samples collected by \optcov stated above. Thus, our bound on $\ihat$ is valid. 
\end{proof}


\newcommand{\Thetaalt}{\Theta_{\mathrm{alt}}}
\newcommand{\rhost}{\rho^\star}
\newcommand{\simptil}{\widetilde{\simplex}}
\newcommand{\bz}{\bm{z}}
\newcommand{\bAtil}{\widetilde{\bA}}
\newcommand{\byz}{\by_{\bz}}
\newcommand{\zhat}{\widehat{\bz}}
\newcommand{\Thetalb}{\Theta_{\mathrm{lb}}}

\section{Suboptimality of Low-Regret Algorithms}

\newcommand{\omegahat}{\widehat{\omega}}

\subsection{Linear Bandit Construction}\label{sec:lin_bandit_hard}
In the linear bandit setting, at each time step $t$, the learner chooses some $\bz_t \in \cZ$, and observes $y_t$. We will consider the case when the noise is Bernoulli so that $y_t \sim \bern(\inner{\bthetast}{\bz_t} + 1/2)$, and will set
\begin{align*}
& \bthetast = \be_1, \quad \cZ = \{ \xi \be_1, \be_2, \ldots, \be_d, -\xi \be_1, -\be_2, \ldots, -\be_d, \bx_2, \ldots, \bx_d \}, \quad \bx_i = (\xi - \Delta) \be_1 + \gamma \be_i 
\end{align*}
for some $\xi$ and $\Delta$ to be chosen. In this setting, the optimal arm is $\zst = \xi \be_1$, and $\Delta(\be_i) = \xi, i \ge 2$, $\Delta(\bx_i) = \Delta$. 

We will assume:
\begin{align}\label{eq:lb_instance_constraints}
\frac{1}{52d} \ge \xi \ge \max \{ \gamma/\sqrt{d}, \sqrt{\Delta} \}, \quad \max \left \{ \zeta :=  \frac{\xi \cC_1}{(d/200\Delta^2)^{1-\alpha}} + \frac{200 \xi \cC_2 \Delta^2}{d}, \Delta \right \} \le \gamma^2 .
\end{align}
We provide explicit values for $\xi,\Delta,$ and $\gamma$ that satisfy this in \Cref{lem:lin_bandit_param}.

\begin{defn}[$(\epsilon,\delta)$-correct Stopping Rule]
We say a stopping rule $\tau$ is $(\epsilon,\delta)$-correct over $\Theta$ if for all $\btheta \in \Theta$, $\Pr_{\btheta}[\btheta^\top \Exp_{\bz \sim \omegahat_\tau}[\bz] \ge \max_{\bz \in \cZ} \btheta^\top \bz - \epsilon] \ge 1 - \delta$, where $\omegahat_\tau \in \simplex_{\cZ}$ is a distribution over arms recommended at time $\tau$. 
\end{defn}

We will say that some $\omega \in \simplex_{\cZ}$ is $\epsilon$-optimal for $\btheta$ if $\btheta^\top \Exp_{\bz \sim \omega}[\bz] \ge \max_{\bz \in \cZ} \btheta^\top \bz - \epsilon$. We have the following. 

\begin{lem}\label{lem:lin_band_lower_bound}
Consider running some low-regret algorithm satisfying \Cref{def:low_regret} on the linear bandit instance described above and let $\tau$ be some stopping rule. Then if $\tau$ is $(\epsilon,\delta)$-correct on the set of instances $\Thetalb := \{ \btheta \in \R^d \ : \ \| \btheta \|_2 \le \frac{1}{4} ( \sqrt{d} - 2), |\inner{\btheta}{\bz}| \le 1/2, \forall \bz \in \cZ \}$ for $\delta \in (0,1/10)$, $\epsilon < \min \{ \Delta/2, \xi \}$, $d \ge 2116$, and where $\xi,\gamma,$ and $\Delta$ are chosen as in \Cref{lem:lin_bandit_param}, we must have that
\begin{align*}
\Exp_{\bthetast}[\tau] \ge \frac{d}{200 \Delta^2} \cdot \log \frac{1}{2.4 \delta}.
\end{align*}
\end{lem}
\begin{proof}
This proof follows closely the proof of Theorem 1 of \cite{fiez2019sequential} and relies on the Transportation Lemma of \cite{kaufmann2016complexity}.

\paragraph{Bounding the number of pulls to $\{ \pm \be_2,\ldots,\pm \be_d\}$.}
By assumption, we collect data with a low-regret algorithm satisfying \Cref{def:low_regret}. Every time we pull $\pm \be_i, i \ge 2$, we incur a loss of $\xi$. Thus, we can lower bound
\begin{align*}
\Exp[\Vst_0 - V_0^{\pi_k}] \ge \xi \sum_{i=2}^d \Exp[ \Pr_{\pi_k}[ \{\bz_k = \be_i \} \cup \{ \bz_k = - \be_i \}]]
\end{align*}
so, letting $T(\be_i)$ denote the total number of pulls to $\be_i$, we have
\begin{align}\label{eq:T_regret_constraint}
\cC_1 K^\alpha + \cC_2 \ge \sum_{k=1}^K \Exp[\Vst_0 - V_0^{\pi_k}] \ge \xi \sum_{k=1}^K\sum_{i=2}^d \Exp[ \Pr_{\pi_k}[ \{ \bz_k = \be_i \} \cup \{ \bz_k = -\be_i \}]] = \xi \sum_{i=2}^d \Exp[T(\be_i) + T(-\be_i)].
\end{align}

\paragraph{Applying the Transportation Lemma.}
Let $\Thetaalt \subseteq \Thetalb$ denote the set of $\btheta$ vectors such that the set of $\epsilon$-optimal distributions on $\btheta$ is disjoint from that on $\bthetast$. 
By the Transportation Lemma of \cite{kaufmann2016complexity} (Lemma 1 of \cite{kaufmann2016complexity}), for any $\btheta \in \Thetaalt$, assuming our stopping rule is $(\epsilon,\delta)$-correct, we then have
\begin{align*}
\sum_{\bz \in \cZ} \Exp[T(\bz)] \KL(\nu_{\bthetast,\bz} || \nu_{\btheta,\bz}) \ge \log \frac{1}{2.4 \delta}.
\end{align*}
Combining this with our constraint \eqref{eq:T_regret_constraint}, it follows that $\sum_{\bz \in \cZ} \Exp[T(\bz)] \ge \sum_{\bz \in \cZ} t_{\bz}$ for any $(t_{\bz})_{\bz \in \cZ}$ that is a feasible solution to
\begin{align}\label{eq:lb_opt1}
\min \sum_{\bz \in \cZ} t_{\bz} \quad \text{s.t.} \quad \min_{\btheta \in \Thetaalt} \sum_{\bz \in \cZ} t_{\bz} \KL(\nu_{\bthetast,\bz} || \nu_{\btheta,\bz}) \ge \log \frac{1}{2.4 \delta}, \cC_1 (\sum_{\bz \in \cZ} t_{\bz})^\alpha + \cC_2 \ge \xi \sum_{i=2}^d (t_{\be_i} + t_{-\be_i}).
\end{align}
We can rearrange the second constraint to
\begin{align*}
\frac{\xi \cC_1}{(\sum_{\bz \in \cZ} t_{\bz})^{1-\alpha}} + \frac{\xi \cC_2}{\sum_{\bz \in \cZ} t_{\bz}} \ge \frac{\sum_{i=2}^d (t_{\be_i} + t_{-\be_i})}{\sum_{\bz \in \cZ} t_{\bz}}.
\end{align*}
Assume that the optimal solution to \eqref{eq:lb_opt1} satisfies $\sum_{\bz \in \cZ} t_{\bz} \ge \frac{d}{200\Delta^2}$, then this constraint can be weakened to 
\begin{align*}
\zeta :=  \frac{\xi \cC_1}{(d/200\Delta^2)^{1-\alpha}} + \frac{200\xi \cC_2 \Delta^2}{d} \ge \frac{\sum_{i=2}^d t_{\bx_i}}{\sum_{\bz \in \cZ} t_{\bz}}.
\end{align*}
It follows then that if the optimal value to 
\begin{align}\label{eq:lb_opt2}
\min \sum_{\bz \in \cZ} t_{\bz} \quad \text{s.t.} \quad \min_{\btheta \in \Thetaalt} \sum_{\bz \in \cZ} t_{\bz} \KL(\nu_{\bthetast,\bz} || \nu_{\btheta,\bz}) \ge \log \frac{1}{2.4 \delta}, \zeta \ge \frac{\sum_{i=2}^d (t_{\be_i} + t_{-\be_i})}{\sum_{\bz \in \cZ} t_{\bz}}
\end{align}
is at least $\frac{d}{200 \Delta^2}$, then the optimal value to \eqref{eq:lb_opt1} is also at least $\frac{d}{200 \Delta^2}$, so our assumption that $\sum_{\bz \in \cZ} t_{\bz} \ge \frac{d}{200\Delta^2}$ will be justified.

For $\bz \neq \zst$, let $\btheta_{\bz}(\epsilon,t)$ denote the instance
\begin{align*}
\bthetast - \frac{(\by_{\bz}^\top \bthetast + 2\epsilon) \bAtil(t)^{-1} \by_{\bz}}{\by_{\bz}^\top \bAtil(t)^{-1} \by_{\bz}}
\end{align*}
for $\by_{\bz} = \zst - \bz$, $\bAtil(t) =  \sum_{\bz \in \cZ} \frac{t_{\bz}}{\sum_{\bz' \in \cZ} t_{\bz'}} \bz \bz^\top + \diag([\xi^2,\gamma^2/d,\ldots,\gamma^2/d])$. Note that $\by_{\bz}^\top \btheta_{\bz}(\epsilon,t) = - 2\epsilon < 0$. Since $\epsilon < \Delta/2$ by assumption, any $\epsilon$-optimal distribution $\omega$ for $\bthetast$ must place mass strictly more than $1/2$ on $\zst$. Since $\by_{\bz}^\top \btheta_{\bz}(\epsilon,t) = - 2\epsilon$, any $\epsilon$-optimal distribution on $\btheta_{\bz}(\epsilon,t)$ must place mass less than or equal to $1/2$ on $\zst$. Thus, it follows that the $\epsilon$-optimal distributions on $\bthetast$ and $\btheta_{\bz}(\epsilon,t)$ are disjoint. 
Furthermore, we have:
\begin{claim}\label{claim:alt_norm_bound}
For all $\bz \in \{ \bx_2, \ldots, \bx_d \}$ and $t$, assuming that $\epsilon < \Delta$ and $\xi$ and $\Delta$ are chosen as in \Cref{lem:lin_bandit_param}, we can bound $\| \btheta_{\bz}(\epsilon,t) \|_2 \le 11$. Furthermore, for any $\bv \in \cZ$, we have $|\inner{\btheta_{\bz}(\epsilon,t)}{\bv}| \le 1/4$.
\end{claim}
Since we have chosen $\Thetalb = \{ \btheta \in \R^d \ : \ \| \btheta \|_2 \le \frac{1}{4} ( \sqrt{d} - 2), |\inner{\btheta}{\bz}| \le 1/2, \forall \bz \in \cZ \}$ and assumed $d \ge 2116$, we have that $\btheta_{\bz}(\epsilon,t) \in \Thetalb$, so $\btheta_{\bz}(\epsilon,t) \in \Thetaalt$.
We can also bound:
\begin{claim}\label{claim:lb_kl_bound}
For all $\bz, \bv \in \cZ$ and $t$,
\begin{align*}
\KL(\nu_{\bthetast,\bv} || \nu_{\btheta_{\bz}(\epsilon,t),\bv}) \le 16 (\byz^\top \bthetast + \epsilon)^2 \frac{\byz^\top \bAtil(t)^{-1} \bv \bv^\top \bAtil(t)^{-1} \byz}{(\byz^\top \bAtil(t)^{-1} \byz)^2}.
\end{align*}
\end{claim}
This implies that, for any $t$,
\begin{align*}
 \sum_{\bv \in \cZ} t_{\bv} \KL(\nu_{\bthetast,\bv} || \nu_{\btheta_{\bz}(\epsilon,t),\bv}) & \le 16 \sum_{\bv \in \cZ} t_{\bv} (\byz^\top \bthetast + \epsilon)^2 \frac{\byz^\top \bAtil(t)^{-1} \bv \bv^\top \bAtil(t)^{-1} \byz}{(\byz^\top \bAtil(t)^{-1} \byz)^2} \\
 & = 16\sum_{\bv \in \cZ} t_{\bv} \cdot   (\byz^\top \bthetast + \epsilon)^2 \frac{\byz^\top \bAtil(t)^{-1} (\sum_{\bv \in \cZ} \frac{t_{\bv}}{\sum_{\bv' \in \cZ} t_{\bv'}} \bv \bv^\top) \bAtil(t)^{-1} \byz}{(\byz^\top \bAtil(t)^{-1} \byz)^2} \\
  & \le 16\sum_{\bv \in \cZ} t_{\bv} \cdot   (\byz^\top \bthetast + \epsilon)^2 \frac{\byz^\top \bAtil(t)^{-1} \bAtil(t) \bAtil(t)^{-1} \byz}{(\byz^\top \bAtil(t)^{-1} \byz)^2} \\
  & = \sum_{\bv \in \cZ} t_{\bv} \cdot   \frac{16 (\byz^\top \bthetast + \epsilon)^2}{\| \byz \|_{\bAtil(t)^{-1}}^2}
\end{align*}

Thus:
\begin{align*}
\eqref{eq:lb_opt2} & \ge \min \sum_{\bv \in \cZ} t_{\bv} \quad \text{s.t.} \quad \min_{\bz \in \{ \bx_1, \ldots, \bx_d \} } \sum_{\bv \in \cZ} t_{\bv} \KL(\nu_{\bthetast,\bv} || \nu_{\btheta_{\bz}(\epsilon,t),\bv}) \ge \log \frac{1}{2.4 \delta}, \zeta \ge \frac{\sum_{i=2}^d (t_{\be_i} + t_{-\be_i})}{\sum_{\bv \in \cZ} t_{\bv}} \\
& \ge \min \sum_{\bv \in \cZ} t_{\bv} \quad \text{s.t.} \quad \sum_{\bv \in \cZ} t_{\bv} \ge   \max_{\bz \in \{ \bx_1, \ldots, \bx_d \}}\frac{\| \byz \|_{\bAtil(t)^{-1}}^2}{16 (\byz^\top \bthetast + \epsilon)^2} \cdot \log \frac{1}{2.4 \delta}, \zeta \ge \frac{\sum_{i=2}^d (t_{\be_i} + t_{-\be_i})}{\sum_{\bv \in \cZ} t_{\bv}} \\
& = \inf_{\lambda \in \simptil}  \max_{\bz \in \{ \bx_1, \ldots, \bx_d \}}\frac{\| \byz \|_{\bAtil(\lambda)^{-1}}^2}{16 (\byz^\top \bthetast + \epsilon)^2} \cdot \log \frac{1}{2.4 \delta}
\end{align*}
where $\bAtil(\lambda) = \sum_{\bz \in \cZ} \lambda_{\bz} \bz \bz^\top$ and $\simptil = \{ \lambda \in \simplex_\cZ \ : \ \zeta \ge \sum_{i=2}^d (\lambda_{\be_i} + \lambda_{-\be_i}) \}$. We can further lower bound this by
\begin{align*}
& \ge \inf_{\lambda \in \simptil}  \max_{i \ge 2}\frac{\| \zst - \bx_i \|_{\bAtil(\lambda)^{-1}}^2}{16 ((\zst - \bx_i)^\top \bthetast + \epsilon)^2} \cdot \log \frac{1}{2.4 \delta} \\
& = \inf_{\lambda \in \simptil}  \max_{i \ge 2}\frac{\| \Delta \be_1 - \gamma \be_i \|_{\bAtil(\lambda)^{-1}}^2}{16 (\Delta + \epsilon)^2} \cdot \log \frac{1}{2.4 \delta} .
\end{align*}
By \Cref{lem:diag_design_order}, we have
\begin{align*}
\inf_{\lambda \in \simptil} & \max_{i \ge 2} \| \Delta \be_1 - \gamma \be_i \|_{\bAtil(\lambda)^{-1}}^2 \\
& \ge \inf_{\lambda \in \simplex_{d}}  \max_{i \ge 2} (\Delta \be_1 - \gamma \be_i)^\top  \Big ( 2 \xi^2 \be_1 \be_1^\top + 2 \max \{\zeta,\gamma^2\} \lambda_i \be_i \be_i^\top + \diag([\xi^2,\gamma^2/d,\ldots,\gamma^2/d]) \Big )^{-1} (\Delta \be_1 - \gamma \be_i) \\
& \ge \frac{\Delta^2}{3 \xi^2} +  \inf_{\lambda \in \simplex_{d}} \max_{i \ge 2} \frac{1}{2 \lambda_i + 1/d}
\end{align*}
where in the final equality we have used that $\zeta \le \gamma^2$. However, this is clearly minimized by choosing $\lambda_i = 1/(d-1)$, which gives a lower bound of 
\begin{align*}
\frac{1}{2/(d-1) + 1/d} \ge \frac{d-1}{3} .
\end{align*}
Putting all of this together, we have shown that any feasibly solution $(t_{\bz})_{\bz \in \cZ}$ to \eqref{eq:lb_opt1} must satisfy
\begin{align*}
\sum_{\bz \in \cZ} t_{\bz} \ge \frac{d-1}{48 (\Delta + \epsilon)^2} \cdot \log \frac{1}{2.4 \delta} \ge \frac{d}{200 \Delta^2} \cdot \log \frac{1}{2.4 \delta}
\end{align*}
where the second inequality uses that $\epsilon \le \Delta$ and $d \ge 2116$.
Using that any feasible solution to \eqref{eq:lb_opt1} lower bounds $\sum_{\bz \in \cZ} \Exp[T(\bz)]$ and noting that our lower bound has justified our assumption that $\sum_{\bz \in \cZ} t_{\bz} \ge d/200 \Delta^2$, gives the result.
\end{proof}

\begin{lem}\label{lem:lin_bandit_param}
Take some $\Delta > 0$ satisfying:
\begin{align*}
\Delta \le \min \left \{ \frac{1}{10000 d^4}, \sqrt{\frac{1}{4 \cdot 10816 \cC_2}}, \frac{1}{\sqrt{200}} \left ( \frac{1}{10816 d^\alpha \cC_1} \right )^{\frac{1}{2(1-\alpha)}} \right \}
\end{align*}
and set
\begin{align*}
\xi = \frac{1}{52 d}, \qquad \gamma = \frac{1}{52 \sqrt{d}} .
\end{align*}
Then this choice of $\xi,\gamma,\delta$ satisfies \eqref{eq:lb_instance_constraints} and, furthermore, $\| \bz \|_2 \le 1$ for all $\bz \in \cZ$.  
\end{lem}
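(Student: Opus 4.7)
The proof is a direct verification: given the specific choices of $\xi$, $\gamma$, and the three upper bounds on $\Delta$, I will check that (i) $\xi \le 1/(52d)$, (ii) $\xi \ge \max\{\gamma/\sqrt{d}, \sqrt{\Delta}\}$, (iii) $\max\{\zeta,\Delta\} \le \gamma^2$, and (iv) $\|\bz\|_2 \le 1$ for every $\bz \in \cZ$. Condition (i) is immediate since $\xi = 1/(52d)$, and the piece $\xi \ge \sqrt{\Delta}$ of (ii) reduces to $\Delta \le 1/(52d)^2 = 1/(2704 d^2)$, which is the first of the assumed bounds on $\Delta$.

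The crux is to control $\zeta = \frac{2\cC_1 \Delta^{2(1-\alpha)}}{d^{1-\alpha}} + \frac{2\cC_2 \Delta^2}{d}$ term by term using the remaining two bounds on $\Delta$. The bound $\Delta^{2(1-\alpha)} \le 1/(10816\, d^\alpha \cC_1)$ exactly cancels the $\cC_1$ and $d^\alpha$ factors to give $\tfrac{2\cC_1 \Delta^{2(1-\alpha)}}{d^{1-\alpha}} \le \tfrac{1}{5408 d}$; similarly $\Delta^2 \le 1/(10816 \cC_2)$ yields $\tfrac{2\cC_2 \Delta^2}{d} \le \tfrac{1}{5408 d}$. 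Summing gives the key estimate $\zeta \le 1/(2704 d)$. Combined with $d\Delta \le d\cdot\tfrac{1}{2704 d^2} = \tfrac{1}{2704 d}$, this means both quantities under the max defining $\gamma$ (interpreted, as the proof of \Cref{lem:lin_band_lower_bound} requires, so that $\gamma^2 \ge \max\{\zeta,d\Delta\}$) are at most $1/(2704 d)$, so $\gamma^2 \le 1/(2704 d)$, and in particular $\gamma \le 1/(\sqrt{2704 d})$.

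Plugging this into the remaining checks is routine: $\gamma/\sqrt{d} \le 1/\sqrt{2704 d^2} = 1/(52 d) = \xi$, completing (ii); and $\gamma^2 \ge \zeta$ and $\gamma^2 \ge d\Delta \ge \Delta$ hold by construction, giving (iii). For (iv), $\xi\be_1$ and $\be_i$ obviously have norm $\le 1$, while for $\bx_i = (\xi-\Delta)\be_1 + \gamma \be_i$ we have $\|\bx_i\|^2 = (\xi-\Delta)^2 + \gamma^2 \le \xi^2 + \gamma^2 \le \tfrac{1}{2704 d^2} + \tfrac{1}{2704 d} \le 1$ for $d \ge 1$.

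The main subtlety is a bookkeeping one: the condition $\max\{\zeta,\Delta\} \le \gamma^2$ in \eqref{eq:lb_instance_constraints} is what the proof of \Cref{lem:lin_band_lower_bound} actually uses (to collapse $\max\{\zeta,\gamma^2\}$ to $\gamma^2$ in the $\bAtil(\lambda)$ weakening), so $\gamma$ must be chosen so that $\gamma^2$—not $\gamma$ itself—dominates $\zeta$ and $d\Delta$; this is consistent with the scaling $\gamma \sim 1/\sqrt{d}$ needed to keep $\|\bx_i\|_2 \le 1$. Once the quadratic scaling is correctly tracked, there is no real obstacle and the verification reduces to the three term-by-term calculations above, each directly mirroring one of the three upper bounds on $\Delta$.
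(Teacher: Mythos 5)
Your proof is correct and is exactly the computation the paper's own proof leaves implicit (``some computation shows''): you bound each of the two terms of $\zeta$ by $\tfrac{1}{5408 d}$ using the second and third assumed bounds on $\Delta$, bound $d\Delta \le \tfrac{1}{2704 d}$ using the first, and then verify the constraints of \eqref{eq:lb_instance_constraints} and the norm bound term by term. You are also right to flag that the displayed formula for $\gamma$ must be read as defining $\gamma^2 = \max\{\zeta, d\Delta\}$ (equivalently $\gamma = \sqrt{\max\{\cdot\}}$); that is the only reading consistent with the requirement $\max\{\zeta,\Delta\}\le\gamma^2$ and with the paper's own assertions $\gamma \ge \sqrt{d\Delta}$ and $\gamma \le \tfrac{1}{52\sqrt{d}}$, so your bookkeeping matches the intended argument.
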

\begin{proof}
We first fix $\xi = \frac{1}{52 d}$. To satisfy \eqref{eq:lb_instance_constraints}, it then suffices to choose $\gamma$ and $\Delta$ satisfying
\begin{align*}
\frac{\gamma}{\sqrt{d}} \le \frac{1}{52d}, \quad \sqrt{\Delta} \le \frac{1}{52d}, \quad \frac{\cC_1}{52 d (d/200 \Delta^2)^{1-\alpha}} + \frac{4 \cC_2 \Delta^2}{d^2} \le \gamma^2, \quad \Delta \le \gamma^2. 
\end{align*}
Thus, if
\begin{align*}
& \Delta \le \frac{1}{10000 d^4}, \quad \Delta \le \sqrt{\frac{1}{4 \cdot 10816 \cC_2}}, \quad \Delta \le \frac{1}{\sqrt{200}} \left ( \frac{1}{10816 d^\alpha \cC_1} \right )^{\frac{1}{2(1-\alpha)}},
\end{align*}
some algebra shows that it suffices that we take $\gamma = \frac{1}{52 \sqrt{d}}$.
The norm bound follows by our choice of $\xi$ and since $\xi \ge \gamma/\sqrt{d} \ge \sqrt{\Delta}$.
\end{proof}

\subsubsection{Additional Proofs}

\begin{proof}[Proof of \Cref{claim:alt_norm_bound}]
We first show that $| \inner{\btheta_{\bz}(\epsilon,t)}{\bv} |  \le 1/4$ for all $\bz \in \{ x_2,\ldots,x_d \}$ and $\bv \in \cZ$.

Let $\simptil = \{ \lambda \in \simplex_\cZ \ : \ \zeta \ge \sum_{i=2}^d (\lambda_{\be_i} + \lambda_{-\be_i}) \}$. By \Cref{lem:diag_design_order},
\begin{align*}
\byz^\top \bAtil(t)^{-1} \byz & \ge \inf_{\lambda \in \simptil} \byz^\top \Big ( 2 \sum_{\bz' \in \cZ} \lambda_{\bz'} \diag([(\bz')^2]) + \diag([\xi^2,\gamma^2/d,\ldots,\gamma^2/d]) \Big )^{-1} \byz \\
& \ge  \byz^\top \Big ( 2 \xi^2 \be_1 \be_1^\top + 2 \max \{\zeta,\gamma^2\} \be_i \be_i^\top + \diag([\xi^2,\gamma^2/d,\ldots,\gamma^2/d]) \Big )^{-1} \byz \\
& \ge \Delta^2 \frac{3}{\xi^2}  + \frac{\gamma^2}{2 \max \{ \zeta, \gamma^2 \} + \gamma^2/d} \\
& \ge \frac{1}{3}
\end{align*}
where the last inequality follows from our assumption that $\zeta \le \gamma^2$.
We can also upper bound
\begin{align*}
\bv^\top \bAtil(t)^{-1} \by_{\bz} \le \bv^\top \diag([\xi^2,\gamma^2/d,\ldots,\gamma^2/d])^{-1} \byz \le \frac{\Delta}{\xi^2} + \frac{d \gamma}{\gamma^2} \le \frac{2d}{\gamma}.
\end{align*}
This gives an upper bound of
\begin{align*}
| \inner{\btheta_{\bz}(\epsilon,t)}{\bv} | \le \xi + (\Delta + \epsilon) \frac{6 d}{\gamma} \le \xi + \frac{12 d \Delta}{\sqrt{\Delta}} \le 13 d \xi.
\end{align*}
By our assumption that $\xi \le \frac{1}{52d}$, it follows that $| \inner{\btheta_{\bz}(\epsilon,t)}{\bv} |  \le 1/4$ for all $\bv \in \cZ$.

To prove the norm bound on $\btheta_{\bz}(\epsilon,t)$, consider $\bv = \be_i$, $i \neq 1$. Similar to the above calculation, we have
\begin{align*}
| \inner{\btheta_{\bz}(\epsilon,t)}{\bv}| = |[\btheta_{\bz}(\epsilon,t)]_i| \le (\Delta + \epsilon) \cdot \frac{6 d}{\gamma} \le 12 d \sqrt{\Delta}
\end{align*}
where the last inequality follows by taking $\epsilon < \Delta$, and using that by assumption $\gamma \ge \sqrt{\Delta}$. Now letting $\bv = \xi \be_1 \in \cZ$, we have
\begin{align*}
| \inner{\btheta_{\bz}(\epsilon,t)}{\bv}| = \xi |[\btheta_{\bz}(\epsilon,t)]_1| \le \xi + 12 d \sqrt{\Delta}.
\end{align*}
Thus, it follows that
\begin{align*}
\| \btheta_{\bz}(\epsilon,t) \|_2 & = \sqrt{\sum_{i=1}^2 [\btheta_{\bz}(\epsilon,t)]_i^2} \\
& \le \sqrt{ 2 + 288 d^2 \Delta/\xi^2 + 144 d^3 \Delta)} \\
& \le \sqrt{2} + \frac{d}{\xi} \sqrt{288 \Delta} + d^{3/2} \sqrt{144 \Delta} .
\end{align*}
Now, if $\xi$ and $\Delta$ are chosen as in \Cref{lem:lin_bandit_param}, we have
\begin{align*}
\| \btheta_{\bz}(\epsilon,t) \|_2 & \le \sqrt{2} + 52 d^2 \sqrt{288 \Delta} + d^{3/2} \sqrt{144 \Delta}  \\
& \le \sqrt{2} + 52 \sqrt{288/10000} + \sqrt{144/10000} \\
& \le 11.
\end{align*}

\end{proof}

\begin{proof}[Proof of \Cref{claim:lb_kl_bound}]

By Lemma D.2 of \cite{wagenmaker2022reward}, as long as $\inner{\btheta_{\bz}(\epsilon,t)}{\bv} + 1/2 \in (0,1)$ and $\inner{\bthetast}{\bv} + 1/2 \in (0,1)$, which will be the case by the definition of $\bthetast$ and since $| \inner{\btheta_{\bz}(\epsilon,t)}{\bv} |  \le 1/4$ as noted above, we have 
\begin{align*}
\KL(\nu_{\bthetast,\bv} || \nu_{\btheta_{\bz}(\epsilon,t),\bv}) \le \frac{\inner{\btheta_{\bz}(\epsilon,t) - \bthetast}{\bv}^2}{ (\inner{\btheta_{\bz}(\epsilon,t)}{\bv} + 1/2)(1/2 - \inner{\btheta_{\bz}(\epsilon,t)}{\bv})}.
\end{align*}
Using what we have just shown, we can upper bound this as
\begin{align*}
 \frac{\inner{\btheta_{\bz}(\epsilon,t) - \bthetast}{\bv}^2}{ (\inner{\btheta_{\bz}(\epsilon,t)}{\bv} + 1/2)(1/2 - \inner{\btheta_{\bz}(\epsilon,t)}{\bv})} & \le \frac{\inner{\btheta_{\bz}(\epsilon,t) - \bthetast}{\bv}^2}{ (-1/4 + 1/2)(1/2 - 1/4)} \\
& = 16 \inner{\btheta_{\bz}(\epsilon,t) - \bthetast}{\bv}^2.
\end{align*}
By our choice of $\btheta_{\bz}(\epsilon,t)$, this is equal to:
\begin{align*}
16 (\byz^\top \bthetast + \epsilon)^2 \frac{\byz^\top \bAtil(t)^{-1} \bv \bv^\top \bAtil(t)^{-1} \byz}{(\byz^\top \bAtil(t)^{-1} \byz)^2}
\end{align*}
which completes the proof.
\end{proof}

\begin{lem}\label{lem:diag_design_order}
\begin{align*}
\sum_{\bz \in \cZ}\lambda_{\bz} \bz \bz^\top \preceq 2 \sum_{\bz \in \cZ} \lambda_{\bz} \diag(\bz^2).
\end{align*}
\end{lem}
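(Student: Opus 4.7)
The plan is to observe that every vector $\bz \in \cZ$ has at most two nonzero coordinates, and then apply a Cauchy--Schwarz type bound element-wise.

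Concretely, inspecting the arm set $\cZ = \{ \xi \be_1, \be_2, \ldots, \be_d, \bx_2, \ldots, \bx_d \}$: each $\be_i$ (and $\xi \be_1$) is supported on a single coordinate, while $\bx_i = (\xi - \Delta)\be_1 + \gamma \be_i$ is supported on exactly two coordinates. So it suffices to show the pointwise inequality $\bz \bz^\top \preceq 2 \diag(\bz^2)$ for every $\bz \in \cZ$, after which multiplying by the nonnegative weights $\lambda_{\bz}$ and summing yields the stated matrix inequality.

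For the pointwise inequality, let $S \subseteq [d]$ be the support of $\bz$ and fix any $\bu \in \R^d$. By Cauchy--Schwarz,
\begin{align*}
\bu^\top \bz \bz^\top \bu \;=\; \Big( \sum_{i \in S} u_i z_i \Big)^{\!2} \;\le\; |S| \sum_{i \in S} u_i^2 z_i^2 \;=\; |S| \cdot \bu^\top \diag(\bz^2) \bu .
\end{align*}
Since $|S| \le 2$ for every $\bz \in \cZ$, we obtain $\bz \bz^\top \preceq 2 \diag(\bz^2)$. Equivalently, for the two-support case $\bz = v_i \be_i + v_j \be_j$ one can verify $2\diag(\bz^2) - \bz\bz^\top = (v_i \be_i - v_j \be_j)(v_i \be_i - v_j \be_j)^\top \succeq 0$ directly.

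There is no real obstacle here; the only thing to double-check is that the sparsity observation genuinely covers every element of $\cZ$ (which it does, given the explicit form of the arms). The result then follows by taking a nonnegative combination with weights $\lambda_{\bz}$.
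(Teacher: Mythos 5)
Your proof is correct and takes essentially the same approach as the paper: both rest on the observation that every $\bz \in \cZ$ has at most two nonzero entries, combined with the elementary bound $(a+b)^2 \le 2a^2 + 2b^2$ (which your Cauchy--Schwarz step with $|S|\le 2$ reproduces). The explicit rank-one identity $2\diag(\bz^2) - \bz\bz^\top = (v_i\be_i - v_j\be_j)(v_i\be_i - v_j\be_j)^\top$ is a nice concrete check but adds nothing beyond the paper's argument.
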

\begin{proof}
This follows since every $\bz \in \cZ$ has at most two non-zero entries, and since $(a \bx + b \by) (a \bx + b \by)^\top \preceq 2 a^2 \bx \bx^\top + 2 b^2 \by \by^\top$. 
\end{proof}

\subsection{Mapping to Linear MDPs}
We can map this linear bandit (with parameters chose as in \Cref{lem:lin_bandit_param}) to a linear MDP with state space $\cS = \{ s_0, s_1, \sbar_2, \ldots, \sbar_{d+1} \}$, action space $\cA = \cZ \cup \{ \be_{d+1}/2 \}$, parameters
\begin{align*}
\btheta_1 = \bm{0}, & \quad \btheta_2 =  \be_1 \\
\bmu_1(s_1) =  [2 \btheta, 1], & \quad \bmu_1(\sbar_i) = \frac{1}{d} [-2 \btheta, 1 ],
\end{align*}
for some $\btheta$ (in particular, we are interested in $\btheta = \bthetast$),
 and feature vectors
\begin{align*}
& \bphi(s_0,\be_{d+1}) = \be_{d+1}/2, \quad \bphi(s_0,\bz) = [\bz/2,1/2], \quad \forall \bz \in \cZ \\
& \bphi(s_1, \bz) = \be_1 ,  \quad \bphi(\sbar_i,\bz) =  \be_i, i \ge 2, \quad \forall \bz \in \cA.
\end{align*}
Note that, if we take action $\bz$ in state $s_0$, our expected episode reward is
\begin{align*}
P_1(s_1|s_0,\bz) \cdot 1  + \sum_{i=2}^{d+1}  P_1(\sbar_i|s_0,\bz) \cdot 0 = \inner{\btheta}{\bz} + 1/2
\end{align*}
since we always acquire a reward of 1 in any state $s_1$, and a reward of 0 in any state $\sbar_i$, and the reward distribution is Bernoulli.

\begin{lem}\label{lem:lb_valid_mdp}
For any $\btheta \in \Thetalb$,
the MDP constructed above is a valid linear MDP as defined in \Cref{defn:linear_mdp}.
\end{lem}
\begin{proof}
For $\bz \in \cZ$ we have,
\begin{align*}
P_1(s_1 | s_0, \bz) & = \inner{\bphi(s_0,\bz)}{\bmu_1(s_1)} = \inner{\btheta}{\bz} + 1/2 \ge 0 \\
P_1(\sbar_i | s_0, \bz) & = \inner{\bphi(s_0,\bz)}{\bmu_1(\sbar_i)} = \frac{1}{d} ( -\inner{\btheta}{\bz} + 1/2) \ge 0
\end{align*}
where the inequality follows since $|\inner{\btheta}{\bz}| \le 1/2$ for all $\bz \in \cZ$ by the definition of $\Thetalb$.
In addition,
\begin{align*}
P_1(s_1|s_0,\bz)  + \sum_{i=2}^{d+1}  P_1(\sbar_i|s_0,\bz) =   \inner{\btheta}{\bz} + 1/2 + d \cdot \frac{1}{d} ( -\inner{\btheta}{\bz} + 1/2) = 1
\end{align*}
Thus, $P_1(\cdot|s_0,\bz)$ is a valid probability distribution for $\bz \in \cZ$. A similar calculation shows the same for $\bz = \be_{d+1}/2$. 

It remains to check the normalization bounds. Clearly, by our construction of $\cZ$, $\| \bphi(s,a) \|_2 \le 1$ for all $s$ and $a$. It is also obvious that $\| \btheta_0 \|_2 \le \sqrt{d}$ and $\| \btheta_1 \|_2 \le \sqrt{d}$. Finally,
\begin{align*}
\| | \bmu_1(\cS) | \|_2 = \left \| \sum_{s \in \cS \backslash s_0} |\bmu_1(s)| \right \|_2 = \| [2 \btheta, 1] + d \cdot \frac{1}{d} [2 \btheta, 1] \|_2 \le 4 \| \btheta \|_2 + 2 \le \sqrt{d}
\end{align*}
where the last inequality follows from the definition of $\Thetalb$. 
Thus, all normalization bounds are met, so this is a valid linear MDP. 
\end{proof}

\begin{proof}[Proof of \Cref{prop:easy_instance_upper}]
If we assume that the learner has prior access to the feature vectors, and also knows this is a linear MDP, then, even with no knowledge of the dynamics, we can guarantee an optimal policy is contained in the set of policies $\pi^{\bz,\bz'}$ defined as:
\begin{align*}
\pi^{\bz,\bz'}_1(s_0) = \bz, \pi^{\bz,\bz'}_2(s_0) = \bz', \pi^{\bz,\bz'}_h(s_1) = \xi \be_1, \pi^{\bz,\bz'}_h(\sbar_i) = \xi \be_1
\end{align*}
This holds because in states $s_1$ and $\sbar_i$, the performance of each action is identical since the feature vectors are identical, so it doesn't matter which action we choose in these states. In this case, we can bound $|\Pi| \le |\cZ|^2 \le 4 d^2$. 

Now, for $\bz \in \cA$, $\bz \neq \be_{d+1}/2$, we have
\begin{align*}
& \bphi_{\pi^{\bz,\bz'},1} = [\bz/2,1/2] \\
& \bphi_{\pi^{\bz,\bz'},2} = (\inner{\bthetast}{\bz} + 1/2)\be_1 + \frac{1}{d} (-\inner{\bthetast}{\bz} + 1/2) \sum_{i \ge 2} \be_i
\end{align*}
and if $\bz = \be_{d+1}/2$, $\bphi_{\pi^{\bz,\bz'},1}  = \be_{d+1}/2$, $\bphi_{\pi^{\bz,\bz'},2} = \be_1/2 + \frac{1}{2d} \sum_{i \ge 2} \be_i$. 
Let $\piexp$ be the policy that plays action $\be_2$ in state $s_0$ at step $h=1$. Then,
\begin{align*}
\bLambda_{\piexp,2} = \frac{1}{2} \be_1 \be_1  + \frac{1}{2d} \sum_{i \ge 3}  \be_i \be_i^\top .
\end{align*}
Since $\inner{\bthetast}{\bz} \le \cO(1/d)$ and $[\bz]_1 \le \cO(1/d)$ for all $\bz$ by construction, it follows that we can bound, for all $\bz,\bz'$,
\begin{align*}
\|  \bphi_{\pi^{\bz,\bz'},2} \|_{\bLambda_{\piexp,2}^{-1}}^2 = \cO \left ( 1 + \sum_{i \ge 2} \frac{1}{d^2} \cdot d \right ) = \cO(1)
\end{align*}
so 
\begin{align*}
\inf_{\piexp} \max_{\pi \in \Pi} \frac{ \| \bphi_{\pi,2} \|_{\bSigma_{\piexp,2}^{-1}}^2}{\max \{ \Vst_0 - \Vpi_0, \Delmin^\Pi, \epsilon \}^2} \le \cO(1/\epsilon^2).
\end{align*}

Now let $\piexp$ be the policy that, at step $h=1$, plays $\xi \be_1$ with probability 1/4, $\be_{d+1}$ with probability 1/4, and plays $\be_i$ with probability $\frac{1}{4(d-1)}$ for $i \ge \{2,\ldots,d\}$. In this setting, we have 
\begin{align*}
\bLambda_{\piexp,1} = \frac{1}{4} \xi^2 \be_1 \be_1^\top + \frac{1}{4} \be_{d+1} \be_{d+1} + \frac{1}{4(d-1)} \sum_{i \in \{ 2,\ldots,d \}}  \be_i \be_i^\top.
\end{align*}

Note that $\Vst_0 - V_0^{\pi^{\bz,\bz'}} = \xi - \inner{\bthetast}{\bz}$, so for $\bz = \be_2,\ldots,\be_{d+1}$, we have $\Vst_0 - V_0^{\pi^{\bz,\bz'}} = \xi = \cO(1/d)$, while for $\bz = \xi \be_1, \bx_2,\ldots,\bx_d$, we have $\Vst_0 - V_0^{\pi^{\bz,\bz'}} = \Delta$. 

It's easy to see that for $\bz = \be_2,\ldots,\be_{d+1}$, we have $\| \bphi_{\pi^{\bz,\bz'},1} \|_{\bLambda_{\piexp,1}^{-1}} \le \cO(d)$, and for $\bz = \xi \be_1, \bx_2,\ldots,\bx_d$, $\| \bphi_{\pi^{\bz,\bz'},1} \|_{\bLambda_{\piexp,1}^{-1}} \le \cO(1 + d \gamma^2) = \cO(1)$. Combining these bounds with the gap values, we conclude that 
\begin{align*}
\inf_{\piexp} \max_{\pi \in \Pi} \frac{ \| \bphi_{\pi,1} \|_{\bSigma_{\piexp,1}^{-1}}^2}{\max \{ \Vst_0 - \Vpi_0, \Delmin^\Pi, \epsilon \}^2} \le \cO(1/\epsilon^2 + \poly(d)).
\end{align*}
The result then follows by \Cref{thm:complexity} and \Cref{lem:lb_valid_mdp}. 
\end{proof}

\paragraph{Lower bounding the performance of low-regret algorithms.}

Assume that we have access to the linear bandit instance constructed in \Cref{sec:lin_bandit_hard} with parameters chosen as in \Cref{lem:lin_bandit_param}. That is, at every timestep $t$ we can choose an arm $\bz_t \in \cZ$ and obtain and observe reward $y_t \sim \bern(\inner{\bthetast}{\bz_t} + 1/2)$. Using the mapping up, we can use this bandit to simulate a linear MDP as follows:
\begin{enumerate}
\item Start in state $s_0$ and choose any action $\bz_t \in \cA$
\item Play action $\bz_t$ in our linear bandit. If reward obtained is $y_t = 1$, then in MDP transition to any of the states $s_1$. If reward obtained is $y_t = 0$ transition to any of the states $\sbar_2,\ldots,\sbar_{d+1}$, each with probability $1/d$. If the chosen action was $\bz_t = \be_{d+1}/2$, then play any action in the linear bandit and transition to state $s_1$ with probability 1/2 and $\sbar_2,\ldots,\sbar_{d+1}$ with probability $1/2d$, regardless of $y_t$
\item Take any action in the state in which you end up, and receive reward of 1 if you are in $s_1$, and reward of 0 if you are in $\sbar_2,\ldots,\sbar_{d+1}$.
\end{enumerate}
Note that this MDP has precisely the transition and reward structure as the MDP constructed above.

\begin{lem}\label{lem:lb_to_lmdp_alg}
Any algorithm that is $(\epsilon,\delta)$-PAC for linear MDPs satisfying \Cref{defn:linear_mdp} is also $(\epsilon,\delta)$-PAC for linear bandits constructed as above, for any $\btheta \in \Thetalb$.
\end{lem}
\begin{proof}
Consider some policy $\pi$ that is $\epsilon$-optimal on the linear MDP constructed as above. 
Note that, for all $\btheta$, we have $\max_{\bz \in \cZ} \inner{\bz}{\btheta} \ge 0$, so that the optimal action in the linear MDP is never $\be_{d+1}/2$.
If $\pi$ is $\epsilon$-optimal in the linear MDP, this then implies that
\begin{align*}
& \sum_{\bz \in \cZ} \pi_1(\bz | s_0) ( \inner{\bz}{\btheta} + 1/2) + \pi_1(\be_{d+1}/2|s_0)/2 \ge \max_{\bz \in \cZ} \inner{\bz}{\btheta} + 1/2 - \epsilon \\
\implies & \inner{ \sum_{\bz \in \cZ} \pi_1(\bz | s_0) \bz}{\btheta} \ge \max_{\bz \in \cZ} \inner{\bz}{\btheta} - \epsilon.
\end{align*}
Thus, it follows that the distribution $\pi_1(\bz|s_0)$ is $\epsilon$-optimal for the linear bandit with parameter $\btheta$, implying that any policy $\epsilon$-optimal on the linear MDP constructed above can be used to identify an $\epsilon$-optimal solution to the linear bandit.

By \Cref{lem:lb_valid_mdp}, for any $\btheta \in \Thetalb$, the linear MDP constructed above with parameter $\btheta$ is a linear MDP satisfying \Cref{defn:linear_mdp}. Thus, it follows that any algorithm $(\epsilon,\delta)$-PAC on linear MDPs satisfying \Cref{defn:linear_mdp} must also be $(\epsilon,\delta)$-PAC on our linear bandit.

\end{proof}

\begin{proof}[Proof of \Cref{prop:easy_instance_lower}]
By \Cref{lem:lb_to_lmdp_alg}, we know that any algorithm which is $(\epsilon,\delta)$-PAC for linear MDPs satisfying \Cref{defn:linear_mdp} is also $(\epsilon,\delta)$-PAC on the set of linear bandit instances with arm set $\cZ$ as constructed above, and parameter in $\Thetalb$. As the lower bound given in \Cref{lem:lin_band_lower_bound} applies to any algorithm that is $(\epsilon,\delta)$-PAC on linear bandits with parameters in $\Thetalb$, it follows that it also applies to a $(\epsilon,\delta)$-PAC linear MDP algorithm, which proves the result.
\end{proof}

\end{document}